\newtheorem*{rep@theorem}{\rep@title}
\newcommand{\newreptheorem}[2]{%
\newenvironment{rep#1}[1]{%
 \def\rep@title{#2 \ref{##1}}%
 \begin{rep@theorem}}%
 {\end{rep@theorem}}}
\theoremstyle{plain}
\newtheorem{theorem}{Theorem}
\newtheorem{lemma}[theorem]{Lemma}
\newtheorem{corollary}[theorem]{Corollary}
\newtheorem{proposition}[theorem]{Proposition}
\newtheorem{claim}[theorem]{Claim}
\theoremstyle{definition}
\newtheorem{definition}{Definition}
\newtheorem{defn}[definition]{Definition}
\numberwithin{theorem}{section}
\numberwithin{definition}{section}
\newcommand{\nc}{\newcommand}
\nc{\DMO}{\DeclareMathOperator}
\DeclareMathOperator*{\argmin}{arg\,min} %
\DMO{\prox}{prox}
\DMO{\Span}{span}
\DMO{\diag}{Diag}
\nc{\wtil}{\widetilde}
\nc{\mix}[1]{{\rm mix}({#1})}
\nc{\LL}{D}
\nc{\disc}[2]{\left \lfloor {#1} \right\rfloor_{#2}}
\nc{\todo}[1]{\ifnum\Comments=1 {\color{red}  [TODO: #1]}\fi}
\nc{\old}[1]{\ifnum\Comments=1 {\color{brown}  [OLD: #1]}\fi}
\nc{\noah}[1]{\ifnum\Comments=1 {\color{purple} [ng: #1]}\fi}
\nc{\costis}[1]{\ifnum\Comments=1 {\color{red} [cd: #1]}\fi}
\nc{\tvnorm}[1]{\left\| {#1} \right\|_1}
\nc{\infnorm}[1]{\left\| {#1} \right\|_\infty}
\nc{\infnorms}[2]{\left\| {#1} \right\|_{\infty, {#2}}}
\nc{\can}[2]{{\rm can}_{#2}({#1})}
\nc{\lam}{\lambda}
\nc{\lamp}{{\lam'}}
\nc{\lampp}{{\lam''}}
\nc{\Lam}{\Lambda}
\nc{\Lamdim}{{\Lam-6}}
\nc{\Lampdim}{{\Lam-3}}
\nc{\hvconst}{C_\Lambda} %
\nc{\hvval}{3072\Lambda^3}
\nc{\til}{\tilde}
\nc{\tvd}[2]{\Delta({#1},{#2})}
\nc{\Delc}{\Delta^\circ}
\nc{\Algfo}{\mathcal{A}^{\rm fo}}
\nc{\wsc}[2]{[{#1},{#2}]}
\nc{\mbproper}[1]{\texttt{MB-PROP}({#1})}
\nc{\OSE}{\texttt{Optimistic SOA-Experts}\xspace}
\nc{\OSEgame}{\texttt{Independent Learning in a Game}\xspace}
\nc{\HIL}{\texttt{Multi-scale Improper Learner}\xspace}
\nc{\HPL}{\texttt{Multi-scale Proper Learner}\xspace}
\nc{\OH}{\texttt{Optimistic Hedge}\xspace}
\nc{\PSR}{\texttt{PSR-Learner}\xspace}
\nc{\etaOH}{\eta_{\texttt{OH}}}
\nc{\etaPSR}{\eta_{\texttt{PSR}}}
\nc{\CA}{\mathscr{A}}
\nc{\lgame}[3]{{#1}_{#2}^{#3}}
\DMO{\Hagg}{HAgg}
\DMO{\TERR}{TErr}
\nc{\Terr}[1]{\TERR({#1})}
\nc{\cutoff}[1]{\bar \lambda({#1})}
\nc{\hagg}[1]{\Hagg({#1})}
\DMO{\Ldim}{Ldim}
\DMO{\VCdim}{VCdim}
\DMO{\VOTE}{Vote}
\DMO{\RAND}{Rand}
\nc{\Vote}[2]{\VOTE_{#1}({#2})}
\nc{\Votef}[1]{\RAND({#1})}
\DMO{\Median}{Med}
\DMO{\sfat}{sfat}
\DMO{\rfat}{rsfat}
\DMO{\fat}{fat}
\DMO{\SOA}{SOA}
\DMO{\ROA}{rSOA}
\nc{\soa}[2]{\SOA({#1})({#2})}
\nc{\soaf}[1]{\SOA({#1})}
\nc{\soal}[3]{\SOA({#1},{#2})({#3})}
\nc{\soalf}[2]{\SOA({#1},{#2})}
\nc{\roa}[2]{\ROA_{#1}(#2)}
\nc{\roaf}[1]{\ROA_{#1}}
\nc{\roal}[3]{\ROA_{#1}^{#2}(#3)}
\nc{\roalf}[2]{\ROA_{#1}^{#2}}
\DMO{\MAJ}{Maj}
\nc{\Maj}[2]{\MAJ({#1})({#2})}
\nc{\Majf}[1]{\MAJ({#1})}
\DMO{\HIGHVOTE}{HighVote}
\nc{\Highvote}[1]{\HIGHVOTE({#1})}
\DMO{\conv}{conv}
\nc{\st}{\star}
\nc{\lng}{\langle}
\nc{\rng}{\rangle}
\nc{\grad}{\nabla}
\nc{\MG}{\mathcal{G}}
\nc{\MP}{\mathcal{P}}
\nc{\MQ}{\mathcal{Q}}
\nc{\MV}{\mathcal{V}}
\nc{\MM}{\mathcal{M}}
\nc{\PP}{\mathbb{P}}
\nc{\TT}{\mathbb{T}}
\nc{\TTmax}{\TT_{\max}}
\DMO{\Reg}{Reg}
\DMO{\Ham}{Ham}
\DMO{\Gap}{Gap}
\DMO{\GD}{GD}
\DMO{\GDA}{GDA}
\DMO{\EG}{EG}
\DMO{\OGDA}{OGDA}
\DMO{\Unif}{Unif}
\DMO{\Tr}{Tr}
\nc{\CF}{\mathscr{F}}
\nc{\CE}{\mathscr{E}}
\nc{\algnst}[1]{\begin{align*}#1\end{align*}}
\nc{\algn}[1]{\begin{align}#1\end{align}}
\nc{\matx}[1]{\left(\begin{matrix}#1\end{matrix}\right)}
\nc{\nuu}{\nu}
\nc{\bv}{\mathbf{v}}
\nc{\bt}{\mathbf{t}}
\nc{\bone}{\mathbf{1}}
\nc{\bX}{\mathbf{X}}
\nc{\bY}{\mathbf{Y}}
\nc{\bG}{\mathbf{G}}
\nc{\bz}{\mathbf{z}}
\nc{\bw}{\mathbf{w}}
\nc{\bB}{\mathbf{B}}
\nc{\bA}{\mathbf{A}}
\nc{\bJ}{\mathbf{J}}
\nc{\bK}{\mathbf{K}}
\nc{\bb}{\mathbf{b}}
\nc{\ba}{\mathbf{a}}
\nc{\bc}{\mathbf{c}}
\nc{\bC}{\mathbf{C}}
\nc{\BR}{\mathbb R}
\nc{\BA}{\mathbb{A}}
\nc{\BP}{\mathbb{P}}
\nc{\BC}{\mathbb C}
\nc{\bx}{\mathbf{x}}
\nc{\bs}{\mathbf{s}}
\nc{\bp}{\mathbf{p}}
\nc{\bS}{\mathbf{S}}
\nc{\bM}{\mathbf{M}}
\nc{\bR}{\mathbf{R}}
\nc{\bN}{\mathbf{N}}
\nc{\by}{\mathbf{y}}
\nc{\sy}{y}
\nc{\sx}{x}
\nc{\MO}{\mathcal O}
\nc{\MU}{\mathcal{U}}
\nc{\ME}{\mathcal{E}}
\nc{\MN}{\mathcal{N}}
\nc{\MC}{\mathcal{C}}
\nc{\MK}{\mathcal{K}}
\nc{\MS}{\mathcal{S}}
\nc{\MT}{\mathcal{T}}
\nc{\BF}{\mathbb F}
\nc{\BQ}{\mathbb Q}
\nc{\MX}{\mathcal{X}}
\nc{\MI}{\mathcal{I}}
\nc{\MA}{\mathcal{A}}
\nc{\MD}{\mathcal{D}}
\nc{\MB}{\mathcal{B}}
\nc{\MZ}{\mathcal{Z}}
\nc{\MW}{\mathcal{W}}
\nc{\MY}{\mathcal{Y}}
\nc{\BZ}{\mathbb Z}
\nc{\BN}{\mathbb N}
\nc{\ep}{\epsilon}
\nc{\BH}{\mathbb H}
\nc{\BG}{\mathbb{G}}
\nc{\MF}{\mathcal{F}}
\nc{\MFabs}{\MF_{{\rm abs}}}
\nc{\ML}{\mathcal{L}}
\nc{\MH}{\mathcal{H}}
\nc{\One}{\mathbbm{1}}
\nc{\bOne}{\mathbf{1}}
\nc{\cumls}{cumulative loss\xspace}
\nc{\SP}{\mathsf P}
\nc{\SQ}{\mathsf Q}
\nc{\DO}{\accentset{\circ}{\D}}
\nc{\mf}{\mathfrak}
\nc{\mfp}{\mathfrak{p}}
\nc{\mfq}{\mf{q}}
\nc{\Sp}{\mbox{Spec}}
\nc{\Spm}{\mbox{Specm}}
\nc{\hookuparrow}{\mathrel{\rotatebox[origin=c]{90}{$\hookrightarrow$}}}
\nc{\hookdownarrow}{\mathrel{\rotatebox[origin=c]{-90}{$\hookrightarrow$}}}
\nc{\hra}{\hookrightarrow}
\nc{\tra}{\twoheadrightarrow}
\nc{\sgn}{{\rm sgn}}
\nc{\aut}{{\rm Aut}}
\nc{\Hom}{{\rm Hom}}
\nc{\img}{{\rm Im}}
\DMO{\id}{Id}
\DMO{\supp}{supp}
\DMO{\KL}{KL}
\DMO{\BSS}{BSS}
\DMO{\BES}{BES}
\DMO{\BGS}{BGS}
\DMO{\poly}{poly}
\nc{\indep}{\perp}
\nc{\p}{\mathbb{P}}
\renewcommand{\Pr}{\p}
\nc{\E}{\mathbb{E}}
\nc{\ra}{\rightarrow}
\renewcommand{\t}{\top}
\title{Fast Rates for Nonparametric Online Learning:\\ From Realizability to Learning in Games}
\author{Constantinos Daskalakis\thanks{MIT CSAIL. \url{costis@csail.mit.edu}. Supported by NSF Awards CCF-1901292, DMS-2022448 and DMS-2134108, by a Simons Investigator Award, by the Simons Collaboration on the Theory of Algorithmic Fairness, by a DSTA grant, and by the DOE PhILMs project (No. DE-AC05-76RL01830).} \and Noah Golowich\thanks{MIT CSAIL. \url{nzg@mit.edu}. Supported by a Fannie \& John Hertz Foundation Fellowship and an NSF Graduate Fellowship.}}
\date{\today}
\begin{document}
\maketitle
\thispagestyle{empty}

\begin{abstract}
We study fast rates of convergence in the setting of nonparametric online regression, namely where regret is defined with respect to an arbitrary function class which has bounded complexity. Our contributions are two-fold:
\begin{itemize}
\item In the realizable setting of nonparametric online regression with the absolute loss, we propose a randomized proper learning algorithm which gets a near-optimal \cumls in terms of the sequential fat-shattering dimension of the hypothesis class. In the setting of online classification with a class of Littlestone dimension $d$, our bound reduces to $d \cdot \poly \log T$. This result answers a question as to whether proper learners could achieve near-optimal \cumls; previously, even for online classification, the best known \cumls was $\tilde O( \sqrt{dT})$. Further, for the real-valued (regression) setting, a \cumls bound with near-optimal scaling on sequential fat-shattering dimension was not even known for \emph{improper} learners, prior to this work.
\item Using the above result, we exhibit an independent learning algorithm for general-sum binary games of Littlestone dimension $d$, for which each player achieves regret $\tilde O(d^{3/4} \cdot T^{1/4})$. This result generalizes analogous results of Syrgkanis et al.~(2015) who showed that in finite games the optimal regret can be accelerated from $O(\sqrt{T})$ in the adversarial setting to $O(T^{1/4})$ in the game setting.
  \end{itemize}
To establish the above results, we introduce several new techniques, including: a hierarchical aggregation rule to achieve the optimal \cumls for real-valued classes, a multi-scale extension of the proper online realizable learner of Hanneke et al.~(2021), an approach to show that the output of such nonparametric learning algorithms is stable, and a proof that the minimax theorem holds in all online learnable games.
\end{abstract}

\newpage

\tableofcontents
\thispagestyle{empty}
\newpage
\setcounter{page}{1}

\section{Introduction}

The success of deep learning has increased the importance of studying the learnability of nonparametric and high-dimensional models across all  areas within learning theory and its applications. In this paper our goal is to advance our understanding of learning such models in two prominent settings, online learning and games.

In the classical setting of online learning \cite{cesa-bianchi_prediction_2006,shalev-shwartz_online_2011}, a learner observes a  sequence of labeled examples $(x_t, y_t)$, generated adaptively by an adversary, and, at each round $t \geq 1$, is asked to make a prediction $f_t(x_t)$ about the true label $y_t$, by choosing a \emph{hypothesis} $f_t$ that depends only on the history of previous examples. A common goal is to minimize \emph{regret}: for a loss function $\ell(\hat y,y)$ giving the penalty for predicting $\hat y$ when the true label is $y$, and a class $\MF$ of hypotheses, the regret is the difference between the learner's total prediction loss, $\sum_{t=1}^T \ell(f_t(x_t),y_t)$, and the best possible loss in hindsight the learner could have obtained by choosing a single $f^\st \in \MF$ over all $T$ rounds. %

Online learning has been studied for various instantiations of $\MF$ and $\ell$ as well as various constraints on the learner and the adversary, %
drawing its importance from its versatility and intimate connections to other learning settings. Indeed, given the adversarial nature of the sequence of examples $(x_t,y_t)$, online learning generalizes supervised learning, where these pairs are i.i.d., while beautiful connections have been forged between online learning and private learning \cite{bun_equivalence_2020,alon_private_2019}, contextual bandits \cite{foster_beyond_2020}, reinforcement learning \cite{dong_provable_2021}, adversarial sampling \cite{alon_adversarial_2021}, learning of quantum states \cite{aaronson_online_2019}, and learning in games \cite{rakhlin_optimization_2013,syrgkanis_fast_2015}. %
In particular, online learning is a central primitive whose study  unlocks understanding in many other learning-theoretic settings.

The starting point for our work is that, while the optimal no-regret algorithms are very well understood when the hypothesis class ${\cal F}$ is finite, low-dimensional, or parametric,
our understanding of the optimal regret bounds and the algorithms achieving them is much more limited for nonparametric %
classes. 
For example, while a celebrated paper by Littlestone \cite{littlestone_learning_1988} determines the optimal regret bound of online classification in the \emph{realizable setting}, namely when $f^\st$ achieves 0 loss, his algorithm is not \emph{proper}, namely the hypotheses $f_t$ may not belong to the model class $\MF$ (see also \cite{hanneke_online_2021,angluin_queries_1988}); the optimal regret for proper realizable learning (by a randomized algorithm) remains elusive. In the non-Boolean setting (i.e., of regression) much less is known.

The contributions of our work, overviewed in the next section, are two-fold. First, we answer several outstanding questions, obtaining near-optimal regret bounds for proper online  learning (for both classification and regression) in the realizable setting. %
Second, we use our new results to advance our understanding of learning in games in the nonparametric setting, which has become increasingly important due to the applications of adversarial training in robust learning, generative adversarial networks, and multi-agent reinforcement learning. Here, our results are the first to obtain fast rates for regret of independent learning in two-player zero-sum nonparametric games and more generally in multi-player general-sum nonparametric games. %

\subsection{Model and overview of results}
\label{sec:model-results}
We consider the standard setting of online learning with absolute loss: two agents, a \emph{learner} and an \emph{adversary}, interact over a total of $T$ rounds, for some $T \in \BN$. The learner and adversary are given at the onset a set $\MX$ and a set $\MF$ consisting of $[0,1]$-valued functions on $\MX$, known as \emph{hypotheses}. In the setting of \emph{proper online learning}, the players perform the following for each round $1 \leq t \leq T$: the learner chooses a hypothesis $f_t \in \MF$ (which may be random), and the adversary picks $(x_t, y_t) \in \MX \times [0,1]$ (which may be random) denoting a \emph{feature} $x_t$ together with its \emph{label} $y_t$. Then the example $(x_t, y_t)$ is revealed to the learner, who suffers loss $|f_t(x_t) - y_t|$. We allow the adversary to be \emph{adaptive}, meaning that it can choose each example $(x_t, y_t)$ based on the history of moves $f_1, \ldots, f_{t-1}$ and $(x_1, y_1), \ldots, (x_{t-1}, y_{t-1})$. 
In general, the goal of the learner is to minimize its expected \emph{regret}, namely
\begin{align}
\Reg_T := \E \left[\sum_{t=1}^T |f_t(x_t) - y_t| - \inf_{f \in \MF} \sum_{t=1}^T |f(x_t) - y_t|\right] \label{eq:regret}. 
\end{align}
In this paper we are concerned with the case when the function class $\MF$ is \emph{nonparametric} in nature, meaning that it is infinite or extremeley large; thus regret guarantees in terms of $\log |\MF|$ are insufficient, and we instead aim for guarantees in terms of combinatorial complexity measures of $\MF$.\footnote{Note that our informal usage of the term ``nonparametric'' differs slightly from some other instances in the literature, such as \cite{rakhlin_empirical_2017,foster_contextual_2018}, in which it is used to refer specifically to classes with inverse-polynomial growth in the empirical entropy numbers. While we consider, for example, generic binary classes of finite Littlestone dimension to be nonparametric, works such as \cite{rakhlin_empirical_2017,foster_contextual_2018} would not do so.}  %
\paragraph{Near-optimal \cumls for the realizable setting.} A fundamental setting in which online learning is studied is the \emph{realizable setting}, which means that the adversary is constrained to choose the sequence $(x_t, y_t)$, $1 \leq t \leq T$ so that there is some $f^\st \in \MF$ so that $f^\st(x_t) = y_t$ for all $t$. In this case, the regret of the learner (\ref{eq:regret}) reduces to $\E\left[\sum_{t=1}^T |f_t(x_t) - y_t|\right]$, namely the total expected error made by the learner over all $T$ rounds; we call this quantity the \emph{\cumls} of the learner.

The study of the optimal \cumls for an online learner dates back to the seminal work of Littlestone \cite{littlestone_learning_1988}, who showed that in the case of \emph{online classification} (namely, where hypotheses $f \in \MF$ map to $\{0,1\}$, and $y_t \in \{0,1\}$ for all $t$), the optimal \cumls (also known as the \emph{mistake bound} in the binary setting) for a hypothesis class $\MF$ is given by a combinatorial parameter of $\MF$ known as the \emph{Littlestone dimension}, denoted $\Ldim(\MF) \in \BN$. One limitation of the result of \cite{littlestone_learning_1988} is that this mistake bound was only shown for an \emph{improper} learner, meaning that the learner's hypotheses $f_t$ may not belong to the class $\MF$. In many settings, such as the setting of learning in games discussed below, %
 an improper learning algorithm is insufficient
to solve the task at hand: for instance, for learning in games, the hypothesis class $\MF$ denotes the set of actions available to the learning agent, who must choose a valid action at each time step. However, there are many settings in which improper learners have better statistical or computational properties than proper learners (such as \cite{hazan_classification_2015,hazan_non-generative_2016,hanneke_optimal_2016,hazan_computational_2016,foster_logistic_2018,angluin_queries_1988,daniely_optimal_2014,montasser_vc_2019}; see \cite{hanneke_online_2021} for further examples). %
It is therefore natural to ask whether there is a similar proper-improper gap in the setting of online realizable classification: \emph{is there a near-optimal (randomized) proper learner for online classification in the realizable setting}?

We further consider the generalization of the above question to the setting of \emph{regression}, i.e., the general case where hypotheses $f \in \MF$ and labels $y_t$ are real-valued. In this case the natural generalization of the Littlestone dimension is the \emph{sequential fat-shattering dimension} (Definition \ref{def:sfat}). Surprisingly, prior work has not characterized the optimal \cumls for real-valued hypothesis classes in terms of the sequential fat-shattering dimension, even for improper learners. Therefore, our fully general question for the realizable setting is the following:
\begin{equation}
  \tag{$\star$}\label{eq:q-rlz}
  \parbox{15cm}{\centering\text{\it What is the optimal \cumls (in terms of sequential fat-shattering dimension) }\\ \text{\it for realizable online regression? Can it be achieved by a (randomized) proper algorithm?}}
\end{equation}
There appears to be some confusion in the literature regarding the latter part (proper learnability) of the question (\ref{eq:q-rlz}), even for the special case of binary classification: \cite{hanneke_online_2021} states (without proof) that ``unlike the realizable setting, in the agnostic setting nearly optimal randomized proper learners can exist.''\footnote{See the bottom of page 4 in \cite{hanneke_online_2021}.}  %
We show that a randomized proper learner can obtain a cumulative loss bound in the realizable setting that is off from the optimal bound (of $\Ldim(\MF)$) by only a $\poly \log T$ factor.\footnote{Whether such a $\poly \log T$ factor can be removed remains an open question.} Furthermore, we can extend our upper bound to the more general setting of online regression:
\begin{theorem}[Informal version of Theorem \ref{thm:proper-stable-main}] %
  \label{thm:proper-rlz-informal}
There is a randomized proper learner that achieves \cumls of $O \left(\inf_{\alpha \in [0,1]} \left\{ \alpha T + \int_{\alpha}^1 \sfat_\delta(\MF) d\delta \right\}\right) \cdot \poly \log T$ in the realizable setting. In the special case of online classification, this bound becomes $O(\Ldim(\MF)) \cdot \poly \log T$. 
\end{theorem}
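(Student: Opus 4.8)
The plan is to run an aggregating (exponential-weights) algorithm whose experts are the hypotheses of $\MF$ themselves: each round the learner samples $f_t$ from its current mixture $q_t$ over $\MF$ and plays it --- so the learner is \emph{proper} by construction --- and updates $q_{t+1}(f)\propto q_t(f)\,e^{-\eta|f(x_t)-y_t|}$ from an initial prior $\pi$. In the realizable setting the comparator $f^\star$ satisfies $|f^\star(x_t)-y_t|=0$ at every round, and since the losses lie in $[0,1]$ (so $\ell_t^2\le\ell_t$), the standard exponential-weights inequality with $\eta$ a small constant collapses to: the learner's expected cumulative loss is $O(-\log\pi(f^\star))$. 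Hence the entire problem reduces to building, in an online fashion, a prior $\pi$ on $\MF$ for which $-\log\pi(f^\star)\le\tilde O(\inf_{\alpha\in[0,1]}\{\alpha T+\int_\alpha^1\sfat_\delta(\MF)\,d\delta\})$ for every $f^\star\in\MF$. (Feeding the SOA-predicted losses as optimistic hints is not needed for this bound, but renders $q_t$ slowly varying, which is what the learning-in-games application exploits.)

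To construct such a prior I would take the proper realizable learner of Hanneke et al.\ and lift it to a \emph{multi-scale, real-valued} setting driven by the real-valued analogue $\ROA$ of Littlestone's SOA. Maintain, as the features $x_1,x_2,\dots$ are revealed, an ever-refining partition of $\MF$ into weighted cells (starting from the single cell $\MF$ with weight $1$); $\pi(f^\star)$ will be the limiting weight of the cell containing $f^\star$. On round $t$, split each cell $C$, with version space $V_C$, according to the value $f(x_t)$, $f\in V_C$: compute $\hat y=\ROA_\delta(V_C,x_t)$, place most of $w(C)$ on the child contained in a small ball around $\hat y$, and distribute the remainder over dyadic ``deviation bands'' $\delta_{j+1}<|f(x_t)-\hat y|\le\delta_j$, $\delta_j=2^{-j}$, $0\le j\le\lceil\log_2 T\rceil$, apportioning the band weights so that committing to resolution $\delta_j$ is paid for only in proportion to $\delta_j$. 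The crucial property inherited from $\ROA$ is the real-valued, scale-slipped version of SOA's single-coordinate dimension drop: if some $f\in V_C$ has $|f(x_t)-\hat y|$ larger than a constant multiple of $\delta$, then restricting $V_C$ to the $\delta$-ball around $f(x_t)$ has strictly smaller sequential fat-shattering dimension at a scale $\Theta(\delta)$. Consequently, along $f^\star$'s trajectory, for each $\delta$ the number of rounds on which $\ROA$ is $\delta$-inaccurate for $f^\star$'s cell is $\tilde O(\sfat_{\Theta(\delta)}(\MF))$.

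Combining these per-scale counts is the role of the hierarchical aggregation rule. Writing $-\log\pi(f^\star)$ as a sum over rounds of $\log$ of the parent-to-child weight ratio, the rounds on which $\ROA$ is accurate contribute negligibly, while a round whose deviation falls in band $j$ contributes $\tilde O(1)$ and there are $\tilde O(\sfat_{\Theta(\delta_j)}(\MF))$ of them; an Abel-summation estimate over the $O(\log T)$ scales, together with the monotonicity of $\delta\mapsto\sfat_\delta(\MF)$ and the fact that refinement of a cell stops once its fat-shattering dimension has collapsed, then gives exactly $-\log\pi(f^\star)\le\tilde O(\inf_\alpha\{\alpha T+\int_\alpha^1\sfat_\delta(\MF)\,d\delta\})$ --- the $\alpha T$ term being the residual per-round loss when refinement is cut off at scale $\alpha$, and the integral being the chaining cost of the coarse-to-fine refinement above scale $\alpha$. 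For a binary class the whole hierarchy degenerates to the single scale $\delta=1/2$, where $\ROA$ is literally SOA and $\sfat_{1/2}(\MF)=\Ldim(\MF)$, yielding $-\log\pi(f^\star)=O(\Ldim(\MF)\log T)$ and hence cumulative loss $O(\Ldim(\MF))\cdot\poly\log T$.

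The main obstacle is the dimension-drop lemma for $\ROA$: unlike Littlestone's SOA, where at any feature exactly one label preserves the Littlestone dimension, for sequential fat-shattering one cannot in general concentrate all of the dimension into a single $\delta$-ball of label values, so the ``dimension strictly drops'' step survives only with scale slippage (the $\delta\leftrightarrow\Theta(\delta)$ losses, and additive losses in the dimension, visible throughout the bookkeeping); making this slippage uniform across all $O(\log T)$ scales --- so that the per-scale deviation counts can all be charged against one prior and still sum to the stated bound --- is the technical heart of the argument. Two further points are routine: the partition has only polynomially many cells at each round and is maintained online, and an adaptive adversary causes no difficulty, since the weight-potential argument underlying the exponential-weights bound holds verbatim for every realizable play sequence $(x_t,y_t)$, however chosen.
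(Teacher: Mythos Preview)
Your proposal conflates two distinct objects and, as a result, sidesteps the central obstacle of proper online learning. The exponential-weights bound $O(-\log\pi(f^\star))$ you invoke requires a fixed prior on individual hypotheses (or at least on experts that are complete prediction strategies); what your partition-refinement scheme actually produces is a weight on \emph{cells}, i.e., subsets of $\MF$, and within a cell hypotheses need only agree approximately on the \emph{past} features $x_1,\dots,x_{t-1}$. In the proper protocol the learner commits to $f_t\in\MF$ \emph{before} $x_t$ is revealed, so when you sample a cell containing $f^\star$ and play some representative $f_t$ from it, you have no control over $f_t(x_t)$: the loss $|f_t(x_t)-y_t|$ can be $\Omega(1)$ even when the cell has vanishing diameter on all previous rounds. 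Your sketch never explains how ``$\pi(f^\star)=$ limiting weight of $f^\star$'s cell'' translates into a sampling rule over individual $f\in\MF$ with the claimed loss, and indeed this translation is the entire difficulty.

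This is precisely the step the paper (and Hanneke et al.\ in the binary case) confronts head-on, and not by building a prior. The paper maintains a weighted collection of subclasses, forms an \emph{improper} voting prediction from their hierarchically aggregated SOA hypotheses, and then --- via the minimax theorem for online-learnable real-valued games (Section~\ref{sec:minimax}, Lemma~\ref{lem:apply-minmax}) --- either exhibits a finite-support $\bar f\in\Delc(\MF)$ whose expectation tracks the vote \emph{at every scale simultaneously} on all high-vote points, or extracts a dataset that forces a potential drop. The minimax step is exactly what converts ``an improper prediction that is good once $x_t$ is seen'' into ``a proper randomized $f_t$ chosen before $x_t$,'' and your sketch has no analogue of it. Note also that Hanneke et al.'s single-scale version of this argument only yields $\tilde O(\sqrt{\Ldim(\MF)\,T})$ even for binary classes; getting $\Ldim(\MF)\cdot\poly\log T$ requires the new \emph{all-scales} closeness condition in step~\ref{it:ms-voting-2} of Algorithm~\ref{alg:hpl} and the correspondingly sharper potential analysis (Lemmas~\ref{lem:2suc-w-dec}--\ref{lem:2fail-q-grow}), which your Abel-summation estimate does not supply.
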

We remark that randomization is necessary for proper realizable learning: there are trivial classes, such as the class of point functions on an infinite domain, which have Littlestone dimension 1 but for which any deterministic proper learner cannot achieve any finite \cumls bound. Nevertheless, we show in Proposition \ref{prop:reg-realizable} that there is a deterministic \emph{improper} learner that achieves the \cumls bound of Theorem \ref{thm:proper-rlz-informal}.

As alluded to above, we further show that the \cumls bound of Theorem \ref{thm:proper-rlz-informal} is optimal (up to a $\poly \log T$ factor) among any bound that depends only on sequential fat-shattering dimension:
\begin{proposition}[Lower bound]
  \label{prop:sfat-lb}
  For any non-increasing function $s : [0,1] \ra \BZ_{\geq 0}$ and $T \in \BN$, there is some function class $\MF$ so that $\sfat_\alpha(\MF) \leq s(\alpha)$ for all $\alpha \in [0,1]$, but for which any algorithm (not necessarily proper) has \cumls at least
  \begin{align}
\Omega \left( \frac{1}{\log T} \cdot \inf_{\alpha \in [1/T,1]} \left\{ \alpha T + \int_\alpha^1 s(\eta) d\eta \right\} \right)\label{eq:sfat-lower-bound}
  \end{align}
\end{proposition}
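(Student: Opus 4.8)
The plan is to realize the prescribed shattering profile by a single ``direct-sum'' hypothesis class, and then to defeat an arbitrary learner by walking down the associated shattering trees one scale at a time; the extra $1/\log T$ in the bound will come from the fact that we must spread the budget $s(\alpha)$ across $\approx \log T$ dyadic scales.

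\textbf{Construction.} Set $L := \lceil \log_2 T\rceil$, and for $1\le j\le L$ put $\alpha_j := 2^{-j}$ and $m_j := \lfloor s(\alpha_j)/L\rfloor$. Let $\MX_j$ be a fresh copy of the internal-node set of the complete binary tree of depth $m_j$, and let $\MG_j$ consist of the $2^{m_j}$ ``branch functions'' $g_\varepsilon:\MX_j\to\{\tfrac12-\alpha_j,\ \tfrac12+\alpha_j\}$ (indexed by leaves $\varepsilon$): $g_\varepsilon(v)=\tfrac12+\alpha_j$ if the root-to-$\varepsilon$ path turns right at $v$, $g_\varepsilon(v)=\tfrac12-\alpha_j$ if it turns left at $v$, and $g_\varepsilon(v)=\tfrac12+\alpha_j$ at nodes off that path. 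Then $\MG_j$ sequentially $\alpha_j$-shatters its depth-$m_j$ tree with all witnesses equal to $\tfrac12$, so $\sfat_{\alpha_j}(\MG_j)=m_j$; conversely, since the $2^d$ leaf-witnesses of any $\delta$-shattered tree are pairwise distinct (they differ by $\ge 2\delta$ at their divergence node), $\sfat_\delta(\MG_j)\le\log_2|\MG_j|=m_j$ for every $\delta>0$, and $\sfat_\delta(\MG_j)=0$ once $\delta>\alpha_j$ (the range of each $g_\varepsilon$ spans only $2\alpha_j$). Finally let $\MX:=\bigsqcup_{j=1}^L\MX_j$ and let $\MF:=\bigoplus_{j=1}^L\MG_j$ be all $f:\MX\to[0,1]$ with $f|_{\MX_j}\in\MG_j$ for every $j$. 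Using subadditivity of sequential fat-shattering dimension for classes on disjoint domains together with the two facts above and monotonicity of $s$, for every $\delta\in(0,1]$
\[
\sfat_\delta(\MF)\ \le\ \sum_{j:\alpha_j\ge\delta}\sfat_\delta(\MG_j)\ \le\ \sum_{j:\alpha_j\ge\delta}\Big\lfloor\tfrac{s(\alpha_j)}{L}\Big\rfloor\ \le\ \frac{\lvert\{j:\alpha_j\ge\delta\}\rvert}{L}\,s(\delta)\ \le\ s(\delta),
\]
since at most $L$ dyadic scales exceed $\delta$; the $1/L$ rescaling of $s$ is exactly what absorbs this overcount.

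\textbf{Adversary.} Fix any learner, possibly randomized and improper. The adversary runs in phases $j=1,\dots,L$, devoting $r_j\le m_j$ rounds to scale $j$ with $\sum_j r_j=T$. During phase $j$ it walks down the shattering tree of $\MG_j$: at the current node it presents the corresponding point $x_t\in\MX_j$; after the learner reveals its (random) $f_t$, writing $p_t:=\E[f_t(x_t)]$, convexity of $|\cdot|$ gives $\E|f_t(x_t)-(\tfrac12+\alpha_j)|+\E|f_t(x_t)-(\tfrac12-\alpha_j)|\ge 2\alpha_j$, so one choice of $y_t\in\{\tfrac12\pm\alpha_j\}$ forces $\E|f_t(x_t)-y_t|\ge\alpha_j$; the adversary picks that label and moves to the matching child. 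Each phase reveals only fresh points of $\MX_j$ and the $\MX_j$ are disjoint, so at every round there is an $f^\st\in\MF$ consistent with all labels (on $\MX_j$ take any branch function extending the path walked in phase $j$, and any branch function on untouched scales), i.e.\ the interaction is realizable. Hence the learner's expected cumulative loss is at least $\sum_{j=1}^L\alpha_j r_j$, which, since $\alpha_j$ is decreasing, the adversary maximizes by exhausting the coarsest scales first: it is at least
\[
\Phi\ :=\ \max_{k}\Big\{\ \sum_{j<k}\alpha_j m_j\ +\ \alpha_k\big(T-\textstyle\sum_{j<k}m_j\big)_+\ \Big\}.
\]

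\textbf{Matching.} Let $\alpha^\st\in[1/T,1]$ attain $\inf_\alpha\{\alpha T+\int_\alpha^1 s(\eta)\,d\eta\}$ and pick $k^\st$ with $\alpha_{k^\st}\in[\alpha^\st/2,\alpha^\st]$; since $g(\alpha):=\alpha T+\int_\alpha^1 s$ satisfies $g(2\alpha)\le 2g(\alpha)$, rounding to a dyadic scale costs only a constant. Because $g$ is continuous and piecewise linear with slope $T-s(\alpha)$, global optimality of $\alpha^\st$ forces $s(\alpha)\le T$ for all $\alpha>\alpha^\st$, hence $\sum_{j<k^\st}m_j\le\tfrac1L\sum_{j<k^\st}s(\alpha_j)\le\tfrac{k^\st-1}{L}T\le T$, so exhausting scales $1,\dots,k^\st$ is feasible and $\Phi$ is at least the value of that allocation. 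Bounding each dyadic block of the integral by its left endpoint, $\alpha_j m_j\ge\alpha_j(\tfrac{s(\alpha_j)}{L}-1)\ge\tfrac1L\int_{\alpha_j}^{\alpha_{j-1}}s-\alpha_j$, so $\sum_{j\le k^\st}\alpha_j m_j\ge\tfrac1L\int_{\alpha^\st}^1 s-O(1)$; and when $s(\alpha^\st)\ge T$ we also have $m_{k^\st}\ge\tfrac{s(\alpha_{k^\st})}{L}-1\ge\tfrac{T}{L}-1$, so putting $\min(m_{k^\st},T)$ rounds on scale $k^\st$ alone forces $\gtrsim\alpha_{k^\st}\cdot\tfrac{T}{L}\gtrsim\tfrac{\alpha^\st T}{L}$ (if instead $s(\alpha^\st)<T$ then optimality forces $\alpha^\st=1/T$, so $\alpha^\st T=1$ is negligible). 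Taking the better of these two allocations, $\Phi\gtrsim\tfrac1L\big(\int_{\alpha^\st}^1 s+\alpha^\st T\big)-O(1)=\tfrac{1}{\log T}\inf_\alpha\{\alpha T+\int_\alpha^1 s\}-O(1)$, which is the claimed bound whenever the infimum is $\Omega(\log T)$; the remaining corner, where every $\sfat_\delta(\MF)$ we can afford is $O(\log T)$, is degenerate (either $\MF$ already contains a copy of a point-function class forcing $\Omega(1)$ expected loss, or $\MF$ is essentially constant and the infimum itself is $O(1)$).

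\textbf{Main obstacle.} The routine but delicate part is the last step: converting the greedy ``water-filling'' value $\Phi$ into $\tfrac{1}{\log T}\inf_\alpha\{\alpha T+\int_\alpha^1 s\}$ requires a careful case analysis around the scale where $s$ crosses $T$ (whether the integral term or the $\alpha T$ term dominates, and how far below $\alpha^\st$ the leftover rounds spill), plus bookkeeping of the $\lfloor\cdot\rfloor$ losses and of the low-complexity regime. The only other nontrivial ingredient is the subadditivity $\sfat_\delta(\bigoplus_j\MG_j)\le\sum_j\sfat_\delta(\MG_j)$ for disjoint-domain direct sums, which follows from the same leaf-distinctness argument (or by peeling a $\delta$-shattered tree scale by scale).
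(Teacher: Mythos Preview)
Your proposal is essentially correct and genuinely different from the paper's argument, though considerably more elaborate than necessary.

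The paper's proof is a one-scale construction driven by pigeonhole: write the Riemann-sum bound $\alpha_0 T+\int_{\alpha_0}^1 s\le \alpha_0 T+\sum_{i\ge 0:\,2^i\alpha_0\le 1}2^i\alpha_0\,s(2^i\alpha_0)$, observe that one of the $\lceil\log T\rceil+1$ summands must be at least $M/(2\lceil\log T\rceil)$, and for that single scale $\alpha'$ and dimension $d$ take $\MX=\{1,\dots,d\}$ with the class of all $\{(1-\alpha')/2,(1+\alpha')/2\}$-valued functions. The adversary then forces $\alpha' d/2$ loss by the obvious label-flipping argument. No direct sums, no subadditivity of $\sfat$, no greedy water-filling, and essentially no case analysis beyond ``which term dominates.''

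Your multi-scale direct-sum construction has the aesthetic advantage that a single class witnesses the entire profile (rather than depending on which dyadic term pigeonhole selects), but it costs you extra machinery: you need the disjoint-domain subadditivity $\sfat_\delta(\bigoplus_j\MG_j)\le\sum_j\sfat_\delta(\MG_j)$ (which is fine here via the cardinality bound you indicate, once you restrict to scales with nonzero shattering), and your matching step requires real bookkeeping. Two small imprecisions to flag: (i) with values $\tfrac12\pm\alpha_j$ the range is $2\alpha_j$, so in fact $\sfat_\delta(\MG_j)=0$ only for $\delta>2\alpha_j$, not $\delta>\alpha_j$; this propagates a factor of two into your $\sfat_\delta(\MF)\le s(\delta)$ bound and is most cleanly fixed by taking values $\tfrac12\pm\alpha_j/2$ instead. (ii) In the matching, your sentence ``$\sum_{j\le k^\star}\alpha_j m_j\ge\tfrac1L\int_{\alpha^\star}^1 s-O(1)$'' presumes $\sum_{j\le k^\star}m_j\le T$, which you only verified for $j<k^\star$; the fix is to split on whether the budget is exhausted at $k^\star$ (then the greedy allocation already gives $\ge\alpha_{k^\star}T\ge\alpha^\star T/2$) or not (then the full sum is achievable), and combine the two bounds. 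With these patches your argument goes through, but the paper's single-scale route avoids all of it.
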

We remark that unlike in the case of classification, the matching bound of Theorem \ref{thm:proper-rlz-informal} and Proposition \ref{prop:sfat-lb} for online regression is not \emph{instance optimal}: there may be some classes $\MF$ for which an algorithm can achieve a \cumls much smaller than (\ref{eq:sfat-lower-bound}).\footnote{This could be the case, for instance, if for each $x \in \MX$, the values $\{ f(x) : f \in \MF \}$ are all distinct. Thus, after seeing a single example $(x_1, f^\st(x_1))$, the algorithm knows the identity of $f^\st$ and can predict correctly at all rounds $t > 1$.} We leave the question of determining a quantity that characterizes the optimal \cumls in an instance-dependent manner to future work. Nevertheless, we believe that the bound of Theorem \ref{thm:proper-rlz-informal} is of interest for the following reasons: first, in \cite{block_majorizing_2021}, under a mild growth condition, sequential fat-shattering dimension is shown to characterize the minimax regret in the agnostic (non-realizable) setting, meaning it is natural to ask what its relationship to the optimal \cumls is in the related realizable setting; second, the result of Theorem \ref{thm:proper-rlz-informal}, for the real-valued setting (regression), is a crucial component in the proof of our result for learning in games (Theorem \ref{thm:games-informal} below), \emph{even for binary-valued games} (at a high level, this is the case because players can randomize their actions). 

\paragraph{A stable proper learner and applications.}
Next, we describe some applications of Theorem \ref{thm:proper-rlz-informal}, culminating in our result giving fast rates for learning in games in a nonparametric setting (Theorem \ref{thm:games-informal}). First, it is necessary to describe a strengthening of Theorem \ref{thm:proper-rlz-informal}, namely that the \cumls bound of Theorem \ref{thm:proper-rlz-informal} holds for a learner that produces \emph{stable} predictions. Traditionally, \emph{stability} of the predictions produced by an online learner, in the sense that the predictions do not change much from round to round, has been a hallmark of online learning algorithms. In the finite-dimensional setting, such stability is classically achieved via the use of an appropriate regularizer \cite{beck_mirror_2003,cesa-bianchi_prediction_2006}, but can also be obtained from the use of more unorthodox methods such as follow-the-perturbed-leader \cite{kalai_efficient_2005}. Further, such stability has inspired connections between online learning and other areas in learning theory, such as differentially private learning \cite{bun_equivalence_2020}, and the study of generalization in deep neural networks \cite{hardt_train_2015}.

Despite the recent growth of work on online learning in nonparametric settings, we are not aware of any results establishing stability of the predictions. Moreover, many of the techniques in nonparametric settings, such as the nonconstructive approach that proceeds via application of a minimax theorem together with symmetrization \cite{rakhlin_online_2015,rakhlin_sequential_2015}, seem fundamentally unable to establish such stability bounds (see Section \ref{sec:related}). Proposition \ref{prop:stability-rlz} below address this deficiency of existing work; to state the result, we introduce the following notation. For a hypothesis class $\MF$, we denote the set of finite-support distributions on $\MF$ by $\Delc(\MF)$; elements of $\Delc(\MF)$ will typically be denoted with bars, e.g., $\bar f \in \Delc(\MF)$. For $\bar f_1, \bar f_2 \in \Delc(\MF)$, let $\frac 12 \tvnorm{\bar f_1 - \bar f_2}$ denote the total variation distance between $\bar f_1, \bar f_2$, which is well-defined by the finite-supportedness of $\bar f_1, \bar f_2$. We remark that the proper randomized learner of Theorem \ref{thm:proper-rlz-informal} outputs, for each round $t$, a hypothesis distributed according to a finite-support distribution $\bar f_t \in \Delc(\MF)$.
\begin{proposition}[Stability; informal version of Theorem \ref{thm:proper-stable-main}]
  \label{prop:stability-rlz}
Fix any $\eta > 0$. The proper randomized learner of Theorem \ref{thm:proper-rlz-informal}, which chooses $\bar f_t \in \Delc(\MF)$ for each round $t$, may be modified to satisfy $\tvnorm{\bar f_t - \bar f_{t+1}} \leq \eta$ for all $t$, at the cost of a \cumls of $\frac{\poly \log T}{\eta} \cdot O \left(\inf_{\alpha \in [0,1]} \left\{ \alpha T + \int_{\alpha}^1 \sfat_\delta(\MF) d\delta \right\}\right)$. %
\end{proposition}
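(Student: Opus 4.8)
The plan is to take the proper randomized learner $\MA$ from Theorem~\ref{thm:proper-rlz-informal} as a black box and wrap it in a standard \emph{lazy/switching} meta-algorithm that only updates its hypothesis distribution occasionally, so that between updates the output is constant (hence trivially stable), and at update times we control the total variation jump directly. Concretely, partition the $T$ rounds into consecutive blocks; within each block the meta-learner plays a single frozen distribution $\bar f_t \equiv \bar g$, and only at block boundaries does it move toward the distribution that $\MA$ would currently recommend. The two competing effects to balance are: (i) freezing for a whole block costs at most (block length) $\times$ 1 in \cumls relative to following $\MA$ each round, since losses are in $[0,1]$ and $\MA$'s regret guarantee is monotone; and (ii) the TV-jump at a boundary must be at most $\eta$, which we enforce either by mixing $\bar g_{\text{new}} = (1-\eta)\bar g_{\text{old}} + \eta \cdot \bar f^{\MA}$ (a convex step of TV-length exactly $\eta$) or by amortizing one ``full switch'' of TV-length $\le 1$ over roughly $1/\eta$ block boundaries. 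The first mechanism is cleaner: a single mixing step of size $\eta$ changes the played distribution by $\le \eta$ in TV, and after $\Theta(\log(T)/\eta)$ such steps the played distribution is $\le 1/T$-close to $\bar f^{\MA}$, which is negligible for \cumls purposes.

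The key steps, in order, are as follows. First, instantiate $\MA$ and recall its per-round guarantee $\E[\sum_{s\le t}|f_s(x_s)-y_s|] \le R(t)$ with $R(T) = O(\inf_\alpha\{\alpha T + \int_\alpha^1 \sfat_\delta(\MF)d\delta\})\cdot\poly\log T$; note $R$ is nondecreasing. Second, define the meta-learner: maintain a ``target'' distribution $\bar f^{\MA}_t$ that $\MA$ would output on the realized history (feed $\MA$ the true examples $(x_t,y_t)$, which it sees anyway), and maintain the played distribution by the lazy mixing rule above with block length $L := \lceil 1/\eta\rceil \cdot \poly\log T$ chosen at the end to make the arithmetic close. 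Third, bound the stability: every round, $\bar f_{t+1}$ is either equal to $\bar f_t$ (inside a block) or a single $\eta$-mixing step away, so $\tvnorm{\bar f_t - \bar f_{t+1}} \le \eta$ always. Fourth, bound the \cumls of the meta-learner by comparing, round by round, to the loss $\MA$ itself would incur: the difference in expected loss at round $t$ is at most $\tvnorm{\bar f_t - \bar f^{\MA}_t}$ (since loss is $1$-bounded and both distributions induce an expectation of $|f(x_t)-y_t|\in[0,1]$), and this TV gap is $\le 1$ immediately after a ``reset'' and then decays geometrically as $(1-\eta)^{k}$ over the next $k$ steps, so it is $\ge 1/T$ for only $O(\log(T)/\eta)$ rounds out of every cycle. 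Summing, the extra cost is $O(\text{(number of such rounds)}) = O(T\eta \cdot \log(T)/\eta / L)$-type bookkeeping, which after choosing $L$ appropriately gives total extra \cumls $\le R(T) \cdot \poly\log(T)/\eta$, dominated by the claimed $\frac{\poly\log T}{\eta}\cdot O(\inf_\alpha\{\dots\})$ bound. Fifth, conclude that the meta-learner is proper (it is a finite mixture of outputs of the proper learner $\MA$, hence supported on $\MF$) and outputs finite-support distributions in $\Delc(\MF)$, so the notation of the proposition applies.

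The main obstacle I anticipate is not the TV/stability bookkeeping — that is genuinely routine once the mixing rule is fixed — but rather making sure the \cumls comparison to $\MA$ is valid \emph{against an adaptive adversary}. The subtlety is that the adversary in the meta-game observes the meta-learner's lazy plays $\bar f_1,\dots,\bar f_{t-1}$, not $\MA$'s internal $\bar f^{\MA}_s$, so when we ``feed $\MA$ the realized history'' we are running $\MA$ against a sequence of examples that were chosen adversarially with respect to a \emph{different} algorithm. However, $\MA$'s guarantee from Theorem~\ref{thm:proper-rlz-informal} holds against an arbitrary adaptive adversary and in the realizable setting (the sequence $(x_t,y_t)$ here is still realized by the same $f^\st$, since realizability is a property of the sequence, not of who generated it), so the bound $R(t)$ still applies verbatim to the examples actually seen — the identity of the opponent is irrelevant to a worst-case regret bound. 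One should state this carefully: we are \emph{not} claiming the meta-learner's examples are worst-case for $\MA$, only that $\MA$'s bound is worst-case and hence an upper bound here. A secondary point to handle cleanly is that $\bar f^{\MA}_t$ may change every round (with full TV-distance up to $1$), so the lazy learner can never perfectly track it; this is exactly why the $(1-\eta)^k$ geometric lag, rather than exact tracking, is the right accounting, and why a $\poly\log T / \eta$ (rather than $1/\eta$) overhead is what falls out.
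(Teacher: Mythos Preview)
There is a genuine gap: the black-box wrapper cannot work, because the \cumls guarantee of Theorem~\ref{thm:proper-rlz-informal} does not control the path length $\sum_t\tvnorm{\bar f^{\MA}_t-\bar f^{\MA}_{t+1}}$ of $\MA$, and your accounting ultimately needs exactly this. Whatever lazy or mixing rule you use, after bounding the per-round extra loss by $\tvnorm{\bar f_t-\bar f^{\MA}_t}$ and unrolling, the total extra loss is of order $\frac{1}{\eta}$ times the path length of $\MA$; your $(1-\eta)^k$ decay holds only toward a \emph{fixed} target. But a proper learner can meet the bound of Theorem~\ref{thm:proper-rlz-informal} and still have path length $\Theta(T)$. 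Take $\MX=[T{+}1]$, $\MF=\{g_1,\dots,g_{T+1}\}$ with $g_i(j)=\One[j\ge i]$ (so $\Ldim(\MF)=\Theta(\log T)$), let the adversary play $(x_t,y_t)=(t,0)$, and let $\MA$ output $\delta_{g_{t+1}}$ at round $t$ on this history (and run any good proper learner on other histories). Then $\MA$ has \cumls $0$ here and satisfies the guarantee in general, yet $\tvnorm{\bar f^{\MA}_t-\bar f^{\MA}_{t+1}}=2$ for all $t$, and the sliding-window average $\eta\sum_{s<1/\eta}\bar f^{\MA}_{t-s}$ suffers loss $1-\eta$ every round, i.e.\ $\Theta(T)$ in total.

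The paper's proof is therefore not black-box. It sets $\bar h_t:=\eta\sum_{s=0}^{1/\eta-1}\bar f_{t-s}$ (so $\tvnorm{\bar h_t-\bar h_{t+1}}\le 2\eta$ is immediate) and controls $\E_{f\sim\bar f_{t-s}}[|f(x_t)-y_t|]$ by opening up \HPL. The key is Lemma~\ref{lem:vote-slow-change-2}: the error of $\bar f_{t(n+1)}$ on any $(x,y)$ is at most $O(\hvconst)\cdot\Terr{\MT^{n+n_0-1},x,y}$ plus a sum of potential decrements $\sum_{n'}\sum_\lam \alpha_\lam\ln(W_{\lam,n'-1}/W_{\lam,n'})$ over the intervening rounds. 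Taking $(x,y)=(x_t,y_t)$ with $n_0$ reaching round $t$ makes the first term $O(\hvconst)\cdot\delta_t$, and Lemma~\ref{lem:bound-potentials} bounds the grand total of the decrements by $\tilde O(\sum_\lam\alpha_\lam\sfat_{\alpha_\lam}(\MF))$. The enabling structural fact is that the weighted subclass collection $\MT$ is updated only when an internal SOA hypothesis errs, so its movement is tied to the potential (Lemmas~\ref{lem:single-level-wchange} and~\ref{lem:vote-slow-change-2}); this substitutes for the path-length bound that the black-box approach needs but cannot obtain.
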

\noah{mention finite games as special case?} \noah{, it follows immediately from first-order \noah{cite fo} or second-order \noah{cite so} regret bounds that standard online algorithms such as multiplicative weights achieve logarithmic regret in the case of a finite hypothesis class $\MF$}

While Proposition \ref{prop:stability-rlz}, which applies only to the realizable setting, is of some interest in its own right, we believe it is most notable for its applications: broadly speaking, we use Proposition \ref{prop:stability-rlz} to establish that many guarantees of online learning in the finite-dimensional \emph{non-realizable} (i.e., agnostic) setting that make use of stability extend to the nonparametric case as well. %

\paragraph{Application: fast rates for learning in games.}
An extensive line of work over the last decade (starting with \cite{daskalakis_near-optimal_2011}; see Section \ref{sec:related}) has shown that minimax $\Omega(\sqrt T)$ lower bounds on regret can be circumvented if multiple agents implement learning algorithms from a particular family in the context of repeatedly playing a (finite) game. In Theorem \ref{thm:games-informal} below, we show that such results hold true in the nonparametric setting as well. To state Theorem \ref{thm:games-informal}, we introduce the following preliminaries: we consider general-sum games with $K$ players, who have action sets $\MF_1, \ldots, \MF_K$. For simplicity, we restrict our attention to binary-valued games, namely where each player $k$'s payoff function is of the form $\ell_k : \MF_1 \times \cdots \times \MF_K \ra \{0,1\}$. %
We only assume that for each player $k$, the class $\lgame{\MF}{k}{\ell_k} := \{f_{-k} \mapsto \ell_k(f_k, f_{-k}) : f_k \in \MF_k\}$ has finite Littlestone dimension.\footnote{Some assumption on the game is necessary to guarantee existence of Nash equilibria: there is a 2-player zero-sum game, ``Guess the larger number'' (GTLN), which has infinite Littlestone dimension, but which has no $\ep$-approximate Nash equilibrium for any $\ep < 1$ (see \cite{hanneke_online_2021}). A necessary and sufficient condition for a binary-valued game and all its subgames to contain approximate Nash equilibria is that the game not contain an embedded copy of GTLN; we leave it as an interesting future direction to extend our results in some form to such games.} %

In the setting of \emph{independent learning algorithms for repeated game playing} \cite{daskalakis_near-optimal_2011,rakhlin_optimization_2013}, the following procedure occurs over $T$ rounds: for each round $t \leq T$, each player $k \in [K]$ plays a (finite-support) distribution over actions, denoted $\bar f_k^t \in \Delc(\MF_k)$. Then each player $k$ suffers loss $\E_{(f_1, \ldots, f_K) \sim (\bar f_1^t, \ldots, \bar f_K^t)} [ \ell_k(f_1, \ldots, f_K)]$. Further, each player $k$ observes the function mapping each of its actions $f_k \in \MF_k$ to its expected loss (under $\bar f_1^t, \ldots, \bar f_{k-1}^t, \bar f_{k+1}^t, \ldots, \bar f_K^t$) had it played $f_k$; it uses this information to adapt its play in future rounds. In the setting of independent learning algorithms in \emph{finite} normal form games, the foundational work of \cite{syrgkanis_fast_2015} showed that if each player implements the algorithm Optimistic Exponential Weights (also known as Optimistic Hedge), then each player $k$ can achieve regret $O (\log^{3/4}|\MF_k| \cdot \sqrt{K} \cdot T^{1/4})$. This result has since been improved multiple times, culminating in \cite{daskalakis_near-optimal_2021} which obtains regret $O(K \cdot \poly(\log |\MF_k|, \log T))$. Our main result for independent learning in games is an analogue of the result of \cite{syrgkanis_fast_2015} for the nonparametric setting:
\begin{theorem}[Informal version of Theorem \ref{thm:games-formal}]
  \label{thm:games-informal}
There is an independent learning algorithm (\OSE, Algorithm \ref{alg:ose}), so that the following holds.  Fix a game $G$ of finite Littlestone dimension, as above. 
 If the players repeatedly play $G$ with each player using \OSE, then each player $k$ suffers regret $\tilde O(\Ldim(\lgame{\MF}{k}{\ell_k})^{3/4} \cdot \sqrt{K} \cdot T^{1/4})$. 
\end{theorem}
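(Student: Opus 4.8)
The plan is to follow the ``regret bounded by variation in utilities'' (RVU) template of Syrgkanis et al.~(2015). Write $d_k := \Ldim(\lgame{\MF}{k}{\ell_k})$. I would first design \OSE so that, run as a stand-alone online learner over a class $\MF$ against an \emph{arbitrary} sequence of loss functions, with a self-chosen step size $\eta$ and stability scale $\eta_{\rm stab}$, it guarantees a regret bound of the shape
\begin{align*}
  \Reg_T \;\le\; \tilde O(\Ldim(\MF)) \cdot \Big( 1 + \tfrac 1\eta + \tfrac{1}{\eta_{\rm stab}} \Big) \;+\; \eta \sum_{t=2}^T \infnorm{\ell_t - \ell_{t-1}}^2 ,
\end{align*}
where $\ell_t$ is the vector of losses assigned to the internal ``experts'' of \OSE at round $t$, and additionally moves slowly: its mixed prediction $\bar f_t \in \Delc(\MF)$ satisfies $\tvnorm{\bar f_t - \bar f_{t-1}} = O(\eta + \eta_{\rm stab})$ at every round. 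Given such a learner, I would cast player $k$'s repeated-game problem as online learning with expert set $\MF_k$ and per-round loss vector $f_k \mapsto \E_{f_{-k} \sim \bar f_{-k}^t}[\ell_k(f_k, f_{-k})]$ --- a linear functional, in $\bar f^t_{-k}$, of the functions in the class $\lgame{\MF}{k}{\ell_k}$ --- and then invoke the cross-player coupling described below.

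For the construction of \OSE and the proof of its regret bound --- the heart of the argument --- I would combine the stable proper realizable learner of Proposition~\ref{prop:stability-rlz} (itself built on Theorem~\ref{thm:proper-rlz-informal}) with an optimistic exponential-weights meta-algorithm. Concretely: (i) a standard realizable-to-agnostic reduction turns the realizable learner into a finite pool of $\exp(\tilde O(\Ldim(\MF)))$ candidate learners, one of which has cumulative loss within $\tilde O(\Ldim(\MF))$ of $\inf_{f^\st \in \MF} \sum_t \ell_t(f^\st)$ for every loss sequence --- it is essential that Theorem~\ref{thm:proper-rlz-informal} delivers a near-optimal $\tilde O(\Ldim)$ cumulative-loss bound rather than the $\sqrt{\Ldim \cdot T}$ bound of a generic agnostic learner, since this quantity enters additively into the final regret; (ii) each pool member is instantiated as the $\eta_{\rm stab}$-stable learner of Proposition~\ref{prop:stability-rlz}, at a multiplicative cost of $\poly\log T/\eta_{\rm stab}$ to its cumulative loss, so that the prediction of each member --- hence the loss it is assigned --- drifts by at most $O(\eta_{\rm stab})$ per round; (iii) \OSE then runs Optimistic Hedge over this pool, using the previous round's loss vector as the optimistic hint. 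Optimistic Hedge over $\exp(\tilde O(\Ldim))$ experts contributes $\tilde O(\Ldim)/\eta$ plus the variation term $\eta \sum_t \infnorm{\ell_t - \ell_{t-1}}^2$, and --- because the expert losses lie in $[0,1]$ --- its own weights move by only $O(\eta)$ per round; combined with (ii), this yields $\tvnorm{\bar f_t - \bar f_{t-1}} = O(\eta + \eta_{\rm stab})$.

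The cross-player argument is then bookkeeping. Since each $\ell_k$ is $[0,1]$-bounded and multilinear over product distributions, player $k$'s expert-loss vector changes between consecutive rounds by at most $\sum_{j \ne k} \tvnorm{\bar f^t_j - \bar f^{t-1}_j} \le \sum_{j\ne k} O(\eta + \eta_{\rm stab}) = O(K(\eta + \eta_{\rm stab}))$ in $\ell_\infty$, using the slow-movement property for every opponent $j$. Plugging this into the variation term of \OSE gives $\eta \sum_t \infnorm{\ell^k_t - \ell^k_{t-1}}^2 = O(\eta^3 K^2 T + \eta K^2 T \eta_{\rm stab}^2)$, so player $k$'s regret is at most $\tilde O(d_k)\big(1 + \tfrac 1\eta + \tfrac 1{\eta_{\rm stab}}\big) + O(\eta^3 K^2 T + \eta K^2 T \eta_{\rm stab}^2)$. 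Choosing $\eta \asymp (d_k / (K^2 T))^{1/4}$ balances $d_k/\eta$ against $\eta^3 K^2 T$, and $\eta_{\rm stab} \asymp d_k^{1/4}/(\sqrt K\, T^{1/4})$ (both up to $\poly\log T$ factors) balances $d_k/\eta_{\rm stab}$ against $\eta K^2 T \eta_{\rm stab}^2$; all terms then equal $\tilde O(d_k^{3/4} \sqrt K\, T^{1/4})$, which is the claimed per-player regret of Theorem~\ref{thm:games-formal} (equivalently Theorem~\ref{thm:games-informal}). One should also check the ``small-$\eta$'' regime $\eta = O(1/K)$ needed so opponents' movement is genuinely controlled; this holds for the chosen $\eta$ once $T \gtrsim d_k$.

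The step I expect to be the main obstacle is the construction and analysis of \OSE in the second paragraph: equipping a genuinely nonparametric learner with \emph{both} a near-optimal ($\tilde O(\Ldim)$, not $\sqrt{\Ldim \cdot T}$) complexity dependence \emph{and} the quantitative slow-movement property, simultaneously. Proposition~\ref{prop:stability-rlz} only offers a one-sided, worst-case stability promise (``drift $\le \eta_{\rm stab}$ at a $1/\eta_{\rm stab}$ multiplicative loss cost''), not a regret bound that itself decreases with the learner's own movement, so the delicate part is routing all remaining instability through the finite Optimistic-Hedge layer --- whose movement is controllably $O(\eta)$ --- while keeping the base learners nearly static, and then tracking how the three interacting scales (the Hedge step $\eta$, the stability scale $\eta_{\rm stab}$, and the $\exp(\tilde O(\Ldim))$ pool size) propagate through both the realizable-to-agnostic reduction and the cross-player coupling without inflating the $\poly\log T$ factors. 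A secondary subtlety worth flagging: even though $G$ is binary-valued, players' mixed strategies induce $[0,1]$-valued observed losses, so the real-valued (multi-scale) forms of Theorem~\ref{thm:proper-rlz-informal} and Proposition~\ref{prop:stability-rlz} must be used throughout, not merely their classification specializations.
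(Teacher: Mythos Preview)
Your proposal is correct and follows essentially the same approach as the paper: a pool of $\exp(\tilde O(\Ldim))$ SOA-type experts, each running the stable proper realizable learner of Proposition~\ref{prop:stability-rlz}, aggregated by Optimistic Hedge, with the cross-player coupling via stability exactly as you describe and the same balancing $\eta \asymp \eta_{\rm stab} \asymp (d_k/(K^2T))^{1/4}$. The one ingredient you flag as a ``secondary subtlety'' but do not resolve---that the real-valued class $\{P \mapsto \E_{x\sim P}[f(x)] : f\in\MF\}$ induced by mixed strategies has sequential fat-shattering dimension $\tilde O(\Ldim(\MF))$ at every scale---is precisely the paper's Lemma~\ref{lem:sfat-mix}, and is what keeps the expert count at $\exp(\tilde O(\Ldim))$ and lets the real-valued realizable bound feed through with the right complexity dependence.
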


\noah{\paragraph{Application: optimal regret with movement costs.}}

\subsection{Related work}
\label{sec:related}
The present paper lies at the confluence of many distinct lines of work on both statistical (i.i.d.) and adversarial (online) learning, as well as game theory, which we summarize below.

\paragraph{Fast rates in online \& offline learning.}
Our two main results, Theorems \ref{thm:proper-rlz-informal} and  \ref{thm:games-informal}, both beat a $\Omega(\sqrt T)$ lower bound on regret for many hypothesis classes of interest, such as classes of finite Littlestone dimension, by making additional assumptions about the adversary. A multitude of such results on \emph{fast rates} has been established, for both offline and online problems, over the past two decades. In finite-dimensional online settings, fast rates (in many cases, on the order of $\log T$) can be obtained if the loss function has special structure, such as if it is exp-concave \cite{cesa-bianchi_prediction_2006}, or more generally satisfies a \emph{mixability} \cite{vovk_game_1995,haussler_sequential_1998,vovk_competitive_2001} or \emph{stochastic mixability} \cite{van_erven_fast_2015} condition. Such results have been extended to the nonparametric setting for several special cases of exp-concave losses, including the square loss \cite{rakhlin_online_2014} and log loss \cite{rakhlin_sequential_2015,bilodeau_tight_2020,foster_logistic_2018}. Unlike our results, these works often allow for a (arbitrary) non-realizable adversary. In the case of a realizable adversary but for the harder case of absolute loss, the halving algorithm \cite{shalev-shwartz_online_2011} obtains logarithmic regret for finite classes $\MF$ in the case of binary classification; it is generalized by the Standard Optimal Algorithm (SOA) of \cite{littlestone_learning_1988} for the infinite case.

A similarly extensive line of work has pursued fast rates in the offline setting (i.e., where the examples $(x_t, y_t)$, $1 \leq t \leq T$, are i.i.d.~according to some distribution). It has long been known that in the realizable setting for binary classification, excess risk of $\tilde O(\VCdim(\MF)/T)$ is achievable\footnote{In the offline case, statistical rates are usually normalized by the number of samples $T$; we follow this convention, noting that in the online case we do not normalize by $T$.} by a proper learning algorithm \cite{vapnik_estimation_2006,blumer_learnability_1989}, such as empirical risk minimization. This bound has been improved by logarithmic factors several times \cite{haussler_predicting_1994,simon_almost_2015,hanneke_optimal_2016}. This work was generalized to the real-valued (regression) setting in \cite{mendelson_improving_2002}, in which an analogue of Theorem \ref{thm:proper-rlz-informal} for the offline realizable setting was established. \noah{Rather surprisingly, the statistical rates for the offline and online setting under realizability are different in their dependence on the growth of the relevant fat-shattering parameter.}
If the (non-sequential) $\alpha$-fat-shattering dimension grows as $\alpha^{-p}$, $p \in (0,2)$, the rate obtained by \cite[Theorem 4.1]{mendelson_improving_2002} for the \emph{offline} setting is $\tilde O(T^{-2/(2+p)})$, whereas if the sequential fat-shattering dimension grows as $\alpha^{-p}$, the (normalized) rate of Theorem \ref{thm:proper-rlz-informal} for the \emph{online} setting is $\tilde O(T^{-\min\{1,1/p\}})$. While Proposition  \ref{prop:sfat-lb} shows that the bound of Theorem \ref{thm:proper-rlz-informal} is best possible, it is unclear if this is the case for the offline rates of \cite{mendelson_improving_2002}; we note, though, that \cite{mendelson_improving_2002} conjectured that their rates were best-possible in the offline settting. \noah{Thus, though the online setting generalizes the offline setting, the stronger nature of sequential fat-shattering dimension leads to faster rates.} \noah{lower bound to mendelson?} %

\paragraph{Local Rademacher complexities and fast rates.}
Extending the techniques of \cite{mendelson_improving_2002}, several works in the offline setting introduced
\emph{local Rademacher complexities} \cite{koltchinskii_oracle_2011,koltchinskii_rademacher_2004,bousquet_local_2004,bartlett_local_2005,mendelson_learning_2014,rakhlin_empirical_2017}; these works derive fast rates, which are often data-dependent in nature, under a wider spectrum of assumptions generalizing realizability. In particular, the rates are generally phrased in terms of a fixed point of the modulus of continuity of the local Rademacher complexities around an optimal hypothesis. These results on local Rademacher complexities generalized and unified many previous papers (such as \cite{srebro_optimistic_2012}) which showed that, as in the online setting, fast rates are attainable in the offline setting under additional restrictions on the loss function, such as smoothness. %
As such, it would be of great interest to have a similarly powerful theory of local Rademacher complexities in the online setting. Initial steps toward this objective were made in \cite{rakhlin_relax_2012}, but the notion of local sequential Rademacher complexity from \cite{rakhlin_relax_2012} seems quite limited in nature, as it does not recover most of the existing nonparametric results on fast online rates mentioned above, as well as our own results. In the online setting, the effect of localization can be obtained in some special cases, such as learning with square loss, by using \emph{offset Rademacher complexities} \cite{rakhlin_online_2014,liang_learning_2015}; extending such techniques to our setting of realizability with absolute loss is an interesting open problem.

\paragraph{Fast rates for learning in games.}
A parallel line of work proving fast rates for regret of online 
learning assumes that the adversary for the online learning algorithm is itself the output of another 
online learner, in the context of repeated game-playing.
The seminal result in this direction was that of \cite{daskalakis_near-optimal_2011}, which described an algorithm for learning with $d$ experts that achieves the minimax regret of $O(\sqrt{T \log d})$ for a (worst-case) adversary, but which obtains regret $\poly(\log T, \log d)$ when it plays against itself in a two-player zero-sum game with $d$ actions per player. Similar results have since been shown for various other algorithms in two-player, zero-sum games \cite{hsieh_adaptive_2021,rakhlin_optimization_2013}. For the more challenging case of multi-player general-sum games, \cite{syrgkanis_fast_2015} showed that when all players use any algorithm from the family of Optimistic Mirror Descent (OMD) algorithms, each player has regret $O(T^{1/4} \cdot \log^{3/4} d)$. This was subsequently improved by \cite{chen_hedging_2020} who showed a regret bound of $O(T^{1/6} \cdot \log^{5/6} d)$ for each player when there are only 2 players and both use Optimistic Hedge (a special case of OMD), and then by \cite{daskalakis_near-optimal_2021} which obtained a near-optimal regret bound of $O(\log d \cdot \log^4 T)$ for any number of players under Optimistic Hedge. The techniques used to achieve these results have been successfully extended to achieve fast rates in various other settings, including learning in games with bandit feedback \cite{wei_more_2018,bubeck_improved_2019,wei_taking_2020} and learning in extensive-form games \cite{farina_optimistic_2019}. %
All existing works in this direction are parametric in nature, considering a finite expert (i.e., hypothesis) class. Our Theorem \ref{thm:games-informal} is the first to consider such results in the nonparametric setting. %

\paragraph{Relation to existing work on constrained adversaries.} Several of our results, such as Theorem \ref{thm:games-informal} and our stable path-length regret bound in Theorem \ref{thm:path-stable} (which is used to prove Theorem \ref{thm:games-informal}) may be seen as showing that minimax regret lower bounds can be broken if certain constraints are placed on the adversary. The work \cite{rakhlin_online_2011} %
develops a version of sequential Rademacher complexity to characterize the optimal rates for online learning with constrained adversaries in a general setting. It is shown in \cite[Proposition 13]{rakhlin_online_2011} that this generic technique recovers the path-length regret bound of \cite{syrgkanis_fast_2015} for learning with $d$ experts. This result for finite hypothesis classes is \emph{not} extended in \cite{rakhlin_online_2011} to the more general nonparametric setting of Theorem \ref{thm:path-stable}, though such an extension appears possible in principle, if an appropriate Bernstein-type uniform convergence lemma for trees could be shown. %
However, there is a more significant limitation of the framework of  \cite{rakhlin_online_2011}, which is that this framework is nonconstructive in nature and thus the implied learning algorithm is not shown to be stable in the sense of Proposition \ref{prop:stability-rlz}. If the learner is not stable, then if used in a game, it does not produce stable losses for the other agents and thus it is impossible to use the framework of \cite{rakhlin_online_2011} to derive Theorem \ref{thm:games-informal}.\footnote{One might hope that the constructive relaxation-based approach of \cite{rakhlin_relax_2012} would allow one to use the framework of \cite{rakhlin_online_2011} to produce stable learners. While this is the case in some finite-dimensional settings (see Section 10 of \cite{rakhlin_relax_2012}), this strategy fails in the general nonparametric setting since the basic Meta-algorithm of \cite{rakhlin_relax_2012} requires the computation of a fixed point each iteration, which may be non-stable. Our analysis runs into a similar challenge involving a per-round fixed-point operation, but we are able to overcome it using the particular structure of our algorithm, and this technique does not appear to extend to the setting of \cite{rakhlin_relax_2012}; see Section \ref{sec:techniques}.}

\noah{Stuff to include:
  \begin{itemize}
  \item Stability overview?
  \item Learning with movement costs (if we include it)
  \end{itemize}
}

\section{Preliminaries}
\paragraph{Notation} We use the following generic notation. 
Given a sequence $X_1, \ldots, X_n$ (e.g., of sets, or elements of sets), we will denote it by $X_{1:n}$. For a set $\MS$, let $\Delc(\MS)$ denote the set of finite support measures on $\MS$. For $n \in \BN$, let $[n] = \{1, 2, \ldots, n\}$. For integers $\lam \in \BZ$, we set $\alpha_\lam := 2^{-\lam}$ to denote the various scales our algorithms will operate at; typically (but not always) we will have $\lam \geq 1$. For a function $f : \MX \ra \BR$, let $\infnorms{f}{\MX} = \sup_{x \in \MX} |f(x)|$, and for finite-support distributions $\bar f_1, \bar f_2 \in \Delc(\MF)$, let $\tvnorm{\bar f_1 - \bar f_2}$ denote twice their total variation distance.

\subsection{Online learning: combinatorial quantities}
We introduce some notation and definitions regarding the setting of online nonparametric regression. Consider sets $\MX, \MY$ and let $\MY^\MX$ denote the set of $\MY$-valued functions on $\MX$; usually we will have either $\MY = \{0,1\}$ or $\MY = [0,1]$. %
Throughout the paper we will assume that $\MF \subset [0,1]^\MX$ is a known hypothesis class. We first define the sequential fat-shattering dimension, which is a combinatorial quantity that characterizes online learnability in the real-valued setting. To do so, we review some notation (from \cite{rakhlin_online_2015}) regarding binary trees. For a set $\MZ$, a \emph{$\MZ$-valued tree $\bz$} is a complete rooted binary tree each of whose nodes are labeled by an element of $\MZ$. Let $d$ denote the depth of the tree. For each $1 \leq t \leq d$, we identify the $2^{t-1}$ nodes of $\bz$ at depth $t$ with the sequences  $\ep_{1:t-1} = (\ep_1, \ldots, \ep_{t-1}) \in \{-1,1\}^{t-1}$; the value $\ep_i$ determines whether one must take the left or right child at the $i$th step on the path from the root of $\bz$ to the given vertex. We denote the label of the vertex $\ep_{1:t-1}$ by $\bz_t(\ep_{1:t-1})$, so that $\bz_t$ is a function mapping $\{-1,1\}^{t-1} \ra \MZ$. The data of the tree $\bz$ consists of the $d$-tuple $(\bz_1, \ldots, \bz_d)$. Nodes at the final level, namely of the form $(\ep_1, \ldots, \ep_d)$, are called \emph{leaves}.
\begin{defn}[Sequential fat-shattering dimension]
  \label{def:sfat}
For a class $\MF \subset [0,1]^\MX$ and $\alpha > 0$, its \emph{$\alpha$-sequential fat-shattering dimension}, denoted $\sfat_\alpha(\MF)$, is the largest positive integer $d$ so that there is a complete $\MX$-valued binary tree $\bx$ and a complete $[0,1]$-valued binary tree $\bs$, both of depth $d$, so that for all $k_{1:d} \in \{-1,1\}^d$, there is some $f \in \MF$ so that $k_t \cdot (f(\bx_t(k_{1:t-1})) - \bs_t(k_{1:t-1})) \geq \alpha / 2$ for all $t \in [d]$. In such a case, the class $\MF$ is said to \emph{$\alpha$-shatter} the tree $\bx$, as \emph{witnessed} by $\bs$.
\end{defn}
To work with the sequential fat-shattering dimension at a given scale $\alpha$, it is often useful to discretize the class $\MF$ in the sense given in the below definition:
\begin{defn}[Scale-sensitive restrictions]
  \label{def:ss-res}
  Fix a class $\MF \subset [0,1]^\MX$ and $\alpha \in (0,1)$. Fix $(x,y) \in \MX \times [0,1]$. We define the \emph{$\alpha$-restriction of $\MF$ to $(x,y)$}, denoted $\MF|^\alpha_{(x,y)}$, to be the set:
  $$
\MF|^\alpha_{(x,y)} := \left\{ f \in \MF : \lfloor y / \alpha \rfloor = \lfloor f(x) / \alpha \rfloor \right\}.
$$
Equivalently, we have that $\MF|^\alpha_{(x,y)} = \{ f \in \MF : f(x) \in [j\alpha, (j+1)\alpha) \}$, where $j = \lfloor y/\alpha \rfloor$.
\end{defn}

In the case where $\MF$ is $\{0,1\}$-valued (in which the sequential fat-shattering dimension reduces to the Littlestone dimension), the well-known \emph{standard optimal algorithm (SOA)} \cite{littlestone_learning_1988} gives the optimal \cumls (i.e., mistake bound) in the realizable setting. The SOA is an improper learning algorithm, but the hypotheses it outputs nevertheless have a certain structure which will prove useful in our setting as well; Definition \ref{def:soa-hypothesis} below generalizes such ``SOA hypotheses'' to the real-valued setting. 
\begin{defn}[SOA hypothesis]
  \label{def:soa-hypothesis}
  Fix a class $\MF \subset [0,1]^\MX$ and a parameter $\alpha \in (0,1)$. For each $x \in \MX$, set $s_x := \sfat_\alpha(\MF)$, and for $0 \leq j <\lfloor 1/\alpha \rfloor + 1$, set
  $
s_{x,j} := \sfat_\alpha \left( \MF|_{(x,j\alpha)}^\alpha \right). %
$

The \emph{SOA hypothesis for $\MF$ at scale $\alpha$}, denoted $\soalf{\MF}{\alpha} \in [0,1]^\MX$, is defined as follows. Fix any $x\in \MX$, and let $j^\st$ to be chosen as small as possible so that $s_{x,j^\st} \geq s_{x,j}$ for all $0 \leq j < \lfloor 1/\alpha \rfloor + 1$. Then set $\soal{\MF}{\alpha}{x} := (j^\st+1)\alpha$.
\end{defn}
Roughly speaking, the SOA hypothesis discretizes $\MF$ using the scale parameter $\alpha$ and maps each $x$ into the bucket such that the restricted class has maximum sequential fat-shattering dimension. 
Finally we introduce the notion of dual classes: for $\MF \subset [0,1]^\MX$, its \emph{dual class}, denoted $\MF^\st \subset [0,1]^\MF$, is the class $\{ f \mapsto f(x) \ : \ x \in \MX\}$; thus $\MF^\st$ is in bijection with $\MX$. 

\paragraph{The binary case: Littlestone classes.} Finally, we mention the specialization of the above concepts to the binary-valued case, namely when $\MF \subset \{0,1\}^\MX$. Here, $\sfat_\alpha(\MF)$ is constant as a function of $\alpha \in [0,1]$, and this constant value is called the \emph{Littlestone dimension} of $\MF$, denoted $\Ldim(\MF)$. The scale parameter $\alpha$ in Definitions \ref{def:ss-res} and \ref{def:soa-hypothesis} is unnecessary, so restrictions are denoted $\MF|_{(x,y)}$ and the SOA hypothesis is denoted by $\soaf{\MF}$. 

\section{Overview of techniques}
\label{sec:techniques}
The main technical innovation in our paper is a stable proper learning algorithm in the realizable setting, \HPL (Algorithm \ref{alg:hpl}), which obtains the guarantees of Theorem \ref{thm:proper-rlz-informal} and Proposition \ref{prop:stability-rlz} (stated formally in Theorem \ref{thm:proper-stable-main}). As mentioned previously, the guarantee of Theorem \ref{thm:proper-rlz-informal} is new even for an improper learning algorithm in the setting of realizable regression. We therefore begin by describing a simple improper learning algorithm,  \HIL (Algorithm \ref{alg:hil}), which obtains the optimal \cumls of $O \left( \min_\alpha \left\{ \alpha T + \int_0^1  \sfat_\eta(\MF) d\eta \right\} \right)$, as stated in Proposition \ref{prop:reg-realizable}.

\subsection{A multi-scale improper learner}
The starting point for the algorithm \HIL is the following simple algorithm which generalizes the Standard Optimal Algorithm of \cite{littlestone_learning_1988}: fix some $\alpha > 0$ at the beginning of the learning procedure, and set $\MF^1 = \MF$. For each $t \geq 1$, predict the hypothesis $\soalf{\MF^t}{\alpha}$. After observing each example $(x_t, y_t)$, if it is the case that $|\soal{\MF^t}{\alpha}{x_t} - y_t| > \alpha$, then set $\MF^{t+1} \gets \MF^t|^\alpha_{(x_t, y_t)}$, and otherwise set $\MF^{t+1} \gets \MF^t$. It is straightforward to show (see Lemma \ref{lem:far-sfat-dec}) that if $|\soal{\MF}{\alpha}{x_t} - y_t| > \alpha$, then $\sfat_\alpha(\MF^t|^\alpha_{(x_t, y_t)}) < \sfat_\alpha(\MF^t)$, meaning that for each round $t$ at which this algorithm makes a mistake larger than $\alpha$, we have $\sfat_\alpha(\MF^{t+1}) < \sfat_\alpha(\MF^t)$. Thus the \cumls for the algorithm is at most $\alpha T + \sfat_\alpha(\MF)$. Even if we optimize over $\alpha$, thus obtaining a \cumls of $\min_{\alpha \in [0,1]} \{ \alpha T + \sfat_\alpha(\MF)\}$, we still do not get close to the optimal \cumls. For instance, if $\sfat_\alpha(\MF) =\Theta(\alpha^{-p})$ %
for some $p \in (0,1)$, then the bound of Proposition \ref{prop:reg-realizable} is constant in the horizon $T$, whereas $\min_\alpha \{ \alpha T + \alpha^{-p} \} =\Theta(T^{p/(1+p)})$.

The key to obtaining better rates is to understand how to aggregate the predictions of SOA hypotheses at multiple scales $\alpha$. This is similar in spirit to the technique of \emph{chaining} \cite{dudley_central_1978}, which can be used to bound excess risk with an integral of (empirical) entropies by constructing a multi-scale cover. In our setting, though, it is the actual \emph{predictions} of an algorithm which we wish to aggregate over multiple scales, and doing so appears to be quite different from chaining \emph{covers} at multiple scales. %

In \HIL, we address this challenge as follows: for an appropriate parameter $\Lambda \leq \log T$, we maintain a total of $\Lambda$ subclasses of $\MF$, denoted $\MF_1, \ldots, \MF_\Lambda$, at each round $t$. Letting $\alpha_\lam = 2^{-\lam}$ for each $\lam \in [\Lam]$, each subclass $\MF_\lam$ is updated in response to the examples $(x_t, y_t)$ as described above, for the scale $\alpha_\lam$. For each round $t$, and each possible point $x_t \in \MX$, the $\Lambda$ subclasses each produce a prediction, $\soal{\MF_\lam}{\alpha_\lam}{x_t} \in [0,1]$, for $\lam \in [\Lam]$. The main difficulty one faces is: \emph{which of these $\Lam$ options should be chosen as the algorithm's prediction for $x_t$?}

\HIL answers this question using a simple aggregation rule we call the \emph{hierarchical aggregation rule} (see Definitions \ref{def:hagg} and \ref{def:soa-chains}). For any given point $x_t$, this rule chooses a single prediction out of the $\Lam$ elements $g_\lam := \soal{\MF_\lam}{\alpha_\lam}{x_t}$, $1 \leq \lam \leq \Lam$, as follows: it chooses $g_{\bar \lam}$, where $\bar \lam \geq 1$ is as small as possible so that $|g_{\bar \lam} - g_{\bar \lam + 1} | > 2 \alpha_{\bar \lam}$ (if no such $\bar \lam < \Lam$ exists, set $\bar \lam = \Lam$). 
This aggregation rule satisfies the following key property (see Lemma \ref{lem:real-pot-dec}): fix any choice of the true label $y_t$ for the point $x_t$, and set $\delta_t := |y_t - \bar g_\lam|$ to be the algorithm's error. Then there is some $\lamp \in [\Lam]$ so that $\alpha_\lamp \geq \Omega(\delta_t)$ and $|\soal{\MF_\lamp}{\alpha_\lamp}{x_t} - y_t| > \alpha_\lamp$; the proof of this fact requires some delicate case-work. Thus, the potential function $\sum_{\lam=1}^\Lam \alpha_\lam \cdot \sfat_{\alpha_\lam}(\MF_\lam)$ decreases by $\Omega(\delta_t)$ at each round $t$, which allows us to bound the total error over all $T$ rounds by the integral $\alpha_\Lam T + \int_{\alpha_\Lam}^1 \sfat_\eta (\MF) d\eta$.

\subsection{Obtaining the optimal \cumls for a proper learner}
We proceed to describe our proper learner (\HPL, Algorithm \ref{alg:hpl}) which obtains the same \cumls (up to a $\poly \log T$ factor) as \HIL. At a high level, \HPL uses the constructive framework of \cite{hanneke_online_2021} to ``make proper'' our improper learning algorithm. However, the algorithm and its analysis is not merely a case of generalizing that of \cite{hanneke_online_2021}, which only treated the case of classification, to the real-valued (regression) setting. Rather, as mentioned in Section \ref{sec:model-results}, our proper learner improves quantitatively upon the state of the art even in the special case of classification: we manage to obtain a poly-logarithmic (in $T$) \cumls for general Littlestone classes, and the previously best known bound was $O(\sqrt T)$ \cite{ben-david_agnostic_2009,rakhlin_online_2015,hanneke_online_2021}.

Thus, we begin by describing how we can obtain an improved \cumls bound for a (randomized) proper learning algorithm for binary classification. At a high level, we build off the approach of \cite{hanneke_online_2021}: roughly speaking, this approach maintains a multiset $\MT$ of subclasses $\MF^i$ of $\MF$, each accompanied by a weight $w^i \geq 0$. At each iteration, it considers the distribution $Q$ over the hypotheses $\soaf{\MF^i}$ weighted according to the values $w^i$, and tries to find a finite-support distribution $\bar f$ over hypotheses in $\MF$, whose expectation is close to that of $Q$. If it can find such a distribution $\bar f \in \Delc(\MF)$, it uses $\bar f_t := \bar f$ as its output on the next iteration $t$. If such $\bar f$ does not exist, an application of the minimax theorem implies the existence of a sequence of elements $(x_j, y_j)$ in $\MX \times \{0,1\}$, such that, when we replace the $\MF^i$ by the restrictions $\MF^i|_{(x_j,y_j)}$ for all $i$ and $j$, a certain potential function of $\MT$ is decreased by an appreciable amount. This potential function can only decrease a bounded number of times, which implies that we must eventually come to a point at which a desired $\bar f$ can be found.

The main limitation of the above approach that precludes a $\poly\log T$ \cumls bound is the notion of closeness of the weighted average (improper) hypothesis $h$ to the randomized (proper) hypothesis $\bar f \in \Delc(\MF)$. In \cite{hanneke_online_2021}, a certain fixed scale $\alpha$ was chosen, and it was shown that we can find $\bar f$ so that for all $(x,y)$ satisfying $\E_{h \sim Q}[|h(x) - y|] < \alpha$, then $\E_{f \sim \bar f}[|\bar f(x) - y|] \leq O(\alpha)$. %
This approach leads to \cumls of $\alpha T + \tilde O(\Ldim(\MF)/\alpha)$, which is never less than $O(\sqrt{\Ldim(\MF) \cdot T})$. To improve upon this bound, we have to find $\bar f$ so that for \emph{all} scales $\alpha \in [1/T, 1]$, if $\E_{h \sim Q}[|h(x) - y|] < \alpha$, then $\E_{f \sim \bar f}[|f(x) - y|] \leq O(\alpha)$ (step \ref{it:ms-voting-2} of Algorithm \ref{alg:hpl}). In the case that there does not exist a $\bar f$ satisfying this \emph{stronger} condition, then when we apply the minimax theorem, we end with a sequence in $\MX \times \{0,1\}$ satisfying a \emph{weaker} condition (Lemma \ref{lem:apply-minmax}). Via a careful analysis of the potential function alluded to above, it turns out that this weaker condition is still sufficient to ensure a decrease in the potential (Lemma \ref{lem:2fail-w-ub}).

Furthermore, because of the multi-scale nature of this argument, our application of the minimax theorem is to a general real-valued function class, even in the case when $\MF$ is binary-valued. Of course, it is necessary to prove that the minimax theorem actually holds in such settings. We show that it is sufficient for our needs to establish that the minimax theorem holds in general for real-valued classes which are online learnable (i.e., have sequential fat-shattering dimension finite at all scales). This fact, in turn, is proven in Section \ref{sec:minimax}. 

\subsection{A multi-scale proper learner for regression}
The proof of Theorem \ref{thm:proper-rlz-informal} (obtained by \HPL, Algorithm \ref{alg:hpl}) follows, roughly speaking, by combining the hierarchical aggregation of SOA hypotheses (from the improper learner for realizable regression) with the insights from the previous section needed to obtain the optimal \cumls for binary classes. In particular, the weighted average hypothesis $h$ formed each round from the previous section is replaced by a weighted average of hierarchically aggregated SOA hypotheses in the sense of Definition \ref{def:soa-chains}; the SOA hypotheses to be aggregated are collected in a data structure we call a \emph{weighted subclass collection} (Definition \ref{def:wsc}). The resulting algorithm is ``doubly multi-scale'' in the following sense: we need to use multiple scales in the sense described in the previous paragraph to characterize the closeness of $h$ and $\bar f$, but we also need multiple scales to deal with the growth of $\sfat_\alpha(\MF)$ as $\alpha \ra 0$. This creates additional technical challenges; see Section \ref{sec:proper-realizable} for details.

\subsection{Making the proper learner stable} Next we address the stability property of \HPL, namely the proof of Proposition \ref{prop:stability-rlz}. We begin with the case of improper learning for binary classification, in which case the Standard Optimal Algorithm simply outputs the hypothesis $\soaf{\MF^t}$ at each round $t$, and updates $\MF^{t+1} \gets \MF^t|_{(x_t,y_t)}$ if it incorrectly predicts $(x_t, y_t)$ (and otherwise sets $\MF^{t+1} \gets \MF^t$). The key insight that allows a stable improper learner here is that $\MF^t$ is only updated in the event of a mistake\footnote{Some instantions of the Standard Optimal Algorithm restrict $\MF^{t+1} \gets \MF^t|_{(x_t, y_t)}$ even if there is not a mistake at step $t$, though this is not necessary.}, and there are only $\Ldim(\MF)$ mistakes overall. Thus, for any $\eta > 0$, if we instead output the uniform distribution over the past $1/\eta$ hypotheses, $\soaf{\MF^t}, \ldots, \soaf{\MF^{t-(1/\eta)}}$, each original mistake will incur at most $1/\eta$ new ones, leading to a \cumls of $\Ldim(\MF)/\eta$. Further, the total variation distance between consecutive averages of $1/\eta$ hypotheses is at most $2\eta$. Thus, for \emph{improper learning for classification}, we immediately obtain the guarantee of Proposition \ref{prop:stability-rlz}. To obtain the same \cumls for a proper learner (and in the regression setting), we essentially pass the above insight into the machinery described in the previous sections. In particular, we show that due to the fact that we only make updates to subclasses $\MF^i$ of the weighted subclass collection $\MT$ when $\MF^i$ makes a mistake, the collection $\MT$ changes slowly (Lemmas \ref{lem:single-level-wchange} and \ref{lem:vote-slow-change-2}), which allows us to show that averaging over a window of $1/\eta$ rounds only degrades the \cumls by a factor of $1/\eta$. 

\subsection{Application: fast rates for learning in games}
At last we can overview the proof of Theorem \ref{thm:games-informal}, which leans heavily on our stable proper learner (Theorem \ref{thm:proper-rlz-informal} and Proposition \ref{prop:stability-rlz}). The main technical component of Theorem \ref{thm:games-informal} is a \emph{path-length} regret bound for a stable proper learner (Theorem \ref{thm:path-stable}), which shows (for a stable learning algorithm) that if consecutive losses fed by the adversary are close, then we can obtain improved regret (i.e., beating $O(\sqrt T)$). 
At a high level, the idea of the proof of Theorem \ref{thm:path-stable} is to use the ``SOA-experts'' technique of \cite{ben-david_agnostic_2009,rakhlin_online_2015}\footnote{For the latter reference \cite{rakhlin_online_2015}, see in particular the version at \url{https://arxiv.org/pdf/1006.1138v1.pdf}.} which uses the existence of an online cover of bounded size for the hypothesis class $\MF$ for any data sequence $x_1, x_2, \ldots, x_T$.\footnote{See also the online version of the Sauer-Shelah lemma, \cite[Theorem 13.7]{rakhlin_statistical_2014}.} Each element of this online cover is interpreted as an expert, which runs an instance of our proper realizeable learner (\HPL). Typically one uses an online experts algorithm (such as exponential weights, i.e., Hedge) to learn the best expert in this cover. In order to obtain path-length regret bounds, we replace Hedge with Optimistic Hedge \cite{rakhlin_optimization_2013,syrgkanis_fast_2015} and use (as a black-box) the path-length regret bound of \cite{syrgkanis_fast_2015}. %
Crucially, the stability property of the output of \HPL (Proposition \ref{prop:stability-rlz}) implies that (a) the outputs of the experts produce slowly-changing losses for the Optimistic Hedge algorithm, which is necessary to get strong path-length regret bounds, and (b) the outputs of the Optimistic Hedge are therefore slowly changing, meaning that in the game setting, other agents' losses are slowly changing. One additional challenge that occurs in the proof is that because each agent is playing randomized strategies, the function class we must work with is that which takes as input a distribution over examples $\MX$, and thus is real-valued (even though we are in the setting of a binary game). In Lemma \ref{lem:sfat-mix}, we nevertheless show that its sequential fat-shattering dimension can be bounded in terms of the Littlestone dimension of the original binary-valued class, which allows us to use our results for proper realizable learning in the real-valued setting. The full proof may be found in Sections \ref{sec:stable-proper} and \ref{sec:games}.

\noah{
  \subsection{Application: optimal regret for nonparametric online learning with movement costs}
  }

\section{A near-optimal improper \cumls bound}
\label{sec:improper-realizable}
As a warm-up, we derive an optimal \cumls bound in the realizable setting for the easier case of improper learning, which remained open prior to this work. %
As noted previously, a \cumls bound of $O \left(\min_{\alpha \in [0,1]} \left\{ \alpha T + \sfat_\alpha(\MF) \right\}\right)$ is immediate from the definition of $\sfat_\alpha(\cdot)$, but this regret bound is suboptimal in many cases, for instance when the sequential fat-shattering dimension exhibits growth $\sfat_\alpha(\MF) \asymp \alpha^{-p}$ for some $p \in (0,1)$.

To improve upon this trivial bound, it is necessary to consider the sequential fat-shattering dimension at multiple scales $\alpha$, somewhat analogously to how chaining is used to improve statistical rates in the agnostic setting. Our techniques for doing so differ substantially from chaining since rather than considering covers at different scales, we consider different hypotheses at different scales. To aggregate the predictions of the hypotheses at varying scales, we introduce \emph{hierarchical aggregation rules} in Definition \ref{def:hagg} below. First, we define the scales we will consider: for $\lambda \in \BZ$, define $\alpha_\lambda := 2^{-\lambda}$. Throughout this section, we will fix some $\Lambda$ (which ultimately will depend on the growth of $\sfat_\alpha(\MF)$ as $\alpha \ra 0$) and consider scales $\alpha_1, \alpha_2, \ldots, \alpha_\Lambda$. 
\begin{defn}[Hierarchical aggregation]
\label{def:hagg}
For a sequence of real numbers $g_1, \ldots, g_\Lambda \in [0,1]$, we define the \emph{hierarchical aggregation rule} $\hagg{g_1, \ldots, g_\Lambda} \in [0,1]$ to be $g_{\bar \lambda}$, where $\bar \lambda$ is chosen so that for $2 \leq \lambda' \leq \bar \lambda$, it holds that $|g_{\lambda'} - g_{\lambda'-1}| \leq 2\alpha_{\lambda'-1}$, yet $|g_{\bar \lambda} - g_{\bar \lambda+1}| > 2 \alpha_{\bar \lambda}$, if such $\bar \lambda$ exists; if no such $\bar \lambda$ exists, set $\bar \lambda = \Lambda$. We will call this value of $\bar \lambda$ the \emph{cutoff point} and denote it by $\bar \lambda = \cutoff{g_1, \ldots, g_\Lambda}$.  
\end{defn}

The individual hypotheses referred to above (to which a hierarchical aggregation rule is applied) will be the SOA hypotheses at differing scales (Definition \ref{def:soa-hypothesis}), namely $\soalf{\MF_\lambda}{\alpha_\lambda}$, for various classes $\MF_\lambda$. We next define the \emph{SOA hypothesis for a sequence}:
\begin{defn}[SOA hypotheses for sequences]
  \label{def:soa-chains}
  Given a sequence $\MF_{1:\lambda} = (\MF_1, \ldots, \MF_\lambda)$ of hypothesis classes, define its \emph{SOA hypothesis}, denoted $\soaf{\MF_{1:\Lambda}}$, as
  $$
 \soa{\MF_{1:\Lambda}}{x} = \hagg{\soal{\MF_1}{\alpha_1}{x}, \soal{\MF_2}{\alpha_2}{x}, \ldots, \soal{\MF_\Lambda}{\alpha_\Lambda}{x}}.
 $$
 We also denote the \emph{cutoff point} for the sequence $(\soal{\MF_1}{\alpha_1}{x}, \ldots, \soal{\MF_\Lambda}{\alpha_\Lambda}{x})$ by
 $$
\cutoff{\MF_{1:\Lambda}, x} := \cutoff{\soal{\MF_1}{\alpha_1}{x}, \ldots, \soal{\MF_\Lambda}{\alpha_\Lambda}{x}}.
 $$
\end{defn}

\begin{algorithm}[!htp]
  \caption{\bf \HIL}\label{alg:hil}
  \KwIn{Function class $\MF \subset [0,1]^\MX$, time horizon $T \in \BN$, scale parameter $\Lambda \in \BN$.} 
  \begin{enumerate}[leftmargin=14pt,rightmargin=20pt,itemsep=1pt,topsep=1.5pt]
  \item For $1 \leq \lambda \leq \Lambda$, initialize $\MF_\lambda \gets \MF$. %
  \item For $1 \leq t \leq T$:
    \begin{enumerate}
    \item Observe  $x_t$, and predict $\hat y_t := \soa{\MF_{1:\Lambda}}{x_t}$.
    \item  Observe $y_t$, suffer loss $\delta_t := |y_t - \soa{\MF_{1:\Lambda}}{t}|$.
      \item Set $\lambda_t$, $1 \leq \lambda_t \leq \Lambda+1$, to be $\Lambda+1$ if $\delta_t \leq \alpha_\Lambda$, and otherwise as small as possible so that $\delta_t > \alpha_{\lambda_t}$.\label{it:def-ht}
      \item Set $\bar \lambda_t$, $1 \leq \bar \lambda_t \leq \Lambda$, to be the cutoff point $\bar \lambda_t := \cutoff{\MF_{1:\Lambda}, x_t}$. %
        \label{it:def-bar-ht}
    \item Update $\MF_{\lambda'} \gets \MF_{\lambda'} |^{\alpha_{\lambda'}}_{(x_t, y_t)}$ for all $\lambda' \geq \min\{\lambda_t, \bar \lambda_t+1 \}$ such that $\sfat_{\alpha_{\lambda'}}(\MF_{\lambda'}|^{\alpha_{\lambda'}}_{(x_t,y_t)}) < \sfat_{\alpha_{\lambda'}}(\MF_{\lambda'})$. %
    \end{enumerate}
  \end{enumerate}
\end{algorithm}

Algorithm \ref{alg:hil}, \HIL, presents an improper proper learner that uses the SOA hypothesis for sequences presented in Definition \ref{def:soa-chains}. The following proposition upper bounds the number of mistakes made by \HIL.
\begin{proposition}[Optimal \cumls bound for improper learning]
  \label{prop:reg-realizable}
  Suppose $(x_t, y_t) \in \MX \times [0,1]$ and $y_t = f^\st(x_t)$ for some $f^\st \in \MF$ for all $t \in [T]$. Then the predictions $\hat y_t$, $t \in [T]$ of \HIL (Algorithm \ref{alg:hil}) satisfy
  \begin{align}
\sum_{t=1}^T |\hat y_t - y_t| \leq C \cdot \inf_{\alpha \in [0,1]} \left\{ \alpha T + \int_\alpha^1 \sfat_\eta (\MF) d\eta\right\}.\label{eq:improper-integral}
  \end{align}
for some constant $C$.
\end{proposition}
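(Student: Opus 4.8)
The plan is a potential-function argument over the $\Lambda$ maintained subclasses, but tracking a whole family of \emph{truncated} potentials at once, so that a single run of \HIL simultaneously competes against every dyadic scale. For $\Lambda_0 \in \{1,\ldots,\Lambda\}$ set $\Phi^{(\Lambda_0)}_t := \sum_{\lambda=1}^{\Lambda_0} \alpha_\lambda \cdot \sfat_{\alpha_\lambda}(\MF_\lambda^t)$, where $\MF_\lambda^t$ denotes the class $\MF_\lambda$ at the start of round $t$ (so $\MF_\lambda^1=\MF$). Two observations are immediate: (i) every update replaces a class by a subset and $\sfat_\alpha(\cdot)$ is monotone under inclusion, so $\Phi^{(\Lambda_0)}_{t+1}\le\Phi^{(\Lambda_0)}_t$ for all $t$; and (ii) comparing the dyadic sum to an integral — using $\sfat_\eta(\MF)\ge\sfat_{\alpha_\lambda}(\MF)$ for $\eta\in[\alpha_{\lambda+1},\alpha_\lambda]$, which has length $\alpha_\lambda/2$ — $\Phi^{(\Lambda_0)}_1 = \sum_{\lambda=1}^{\Lambda_0}\alpha_\lambda\sfat_{\alpha_\lambda}(\MF) \le 2\int_{\alpha_{\Lambda_0}/2}^1 \sfat_\eta(\MF)\,d\eta$. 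I will also use the two building blocks flagged in Section~\ref{sec:techniques}: Lemma~\ref{lem:far-sfat-dec} (if $|\soal{\MF}{\alpha}{x}-y|>\alpha$ then $\sfat_\alpha(\MF|^\alpha_{(x,y)})<\sfat_\alpha(\MF)$ — immediate since $\soalf{\MF}{\alpha}$ points into the bucket of maximal restricted $\sfat$, at most two buckets can simultaneously attain that maximum and then only adjacent ones, so a label more than $\alpha$ away lands in a strictly smaller bucket), and the aggregation-rule property, which is Lemma~\ref{lem:real-pot-dec}.

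The engine is a per-round claim: on any round $t$ with $\delta_t>32\,\alpha_{\Lambda_0}$ there is an index $\lambda^\st\ge\min\{\lambda_t,\bar\lambda_t+1\}$ with $\alpha_{\lambda^\st}\ge\delta_t/16$ and $|\soal{\MF_{\lambda^\st}^t}{\alpha_{\lambda^\st}}{x_t}-y_t|>\alpha_{\lambda^\st}$. Granting it: the update rule \emph{does} perform the step at $\lambda^\st$ (its index is $\ge\min\{\lambda_t,\bar\lambda_t+1\}$, and by Lemma~\ref{lem:far-sfat-dec} the $\sfat$ there strictly drops), so $\sfat_{\alpha_{\lambda^\st}}(\MF_{\lambda^\st}^{t+1})\le\sfat_{\alpha_{\lambda^\st}}(\MF_{\lambda^\st}^t)-1$; and $\alpha_{\lambda^\st}\ge\delta_t/16>2\alpha_{\Lambda_0}$ forces $\lambda^\st<\Lambda_0$, so this drop is recorded inside $\Phi^{(\Lambda_0)}$. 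Together with (i) for the other indices this gives $\Phi^{(\Lambda_0)}_{t+1}\le\Phi^{(\Lambda_0)}_t-\delta_t/16$. Telescoping over the rounds with $\delta_t>32\alpha_{\Lambda_0}$ and using (i) for the rest yields $\sum_{t:\,\delta_t>32\alpha_{\Lambda_0}}\delta_t\le 16\,\Phi^{(\Lambda_0)}_1\le 32\int_{\alpha_{\Lambda_0}/2}^1\sfat_\eta(\MF)\,d\eta$, while trivially $\sum_{t:\,\delta_t\le 32\alpha_{\Lambda_0}}\delta_t\le 32\alpha_{\Lambda_0}T$. Hence, for every $\Lambda_0\le\Lambda$, $\sum_{t=1}^T\delta_t\le 32\alpha_{\Lambda_0}T+32\int_{\alpha_{\Lambda_0}/2}^1\sfat_\eta(\MF)\,d\eta$.

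Proving the per-round claim — this is Lemma~\ref{lem:real-pot-dec} — is the step I expect to be the main obstacle; it is the promised ``delicate case-work''. Write $g_\lambda:=\soal{\MF_\lambda^t}{\alpha_\lambda}{x_t}$ and $\bar\lambda:=\bar\lambda_t=\cutoff{g_1,\ldots,g_\Lambda}$, so $\hat y_t=g_{\bar\lambda}$, $\delta_t=|g_{\bar\lambda}-y_t|$, and by definition of the cutoff $|g_\mu-g_{\mu-1}|\le 2\alpha_{\mu-1}$ for $2\le\mu\le\bar\lambda$, with $|g_{\bar\lambda}-g_{\bar\lambda+1}|>2\alpha_{\bar\lambda}$ whenever $\bar\lambda<\Lambda$; also $\alpha_{\lambda_t}<\delta_t\le 2\alpha_{\lambda_t}$. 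Split into three cases. (a) $\bar\lambda<\lambda_t$: then $\delta_t\le\alpha_{\bar\lambda}$, and the cutoff gap gives $|g_{\bar\lambda+1}-y_t|>2\alpha_{\bar\lambda}-\delta_t\ge\alpha_{\bar\lambda}>\alpha_{\bar\lambda+1}$, so $\lambda^\st=\bar\lambda+1=\min\{\lambda_t,\bar\lambda+1\}$ works. (b) $\lambda_t\le\bar\lambda\le\lambda_t+3$: here $g_{\bar\lambda}$ itself is far, $|g_{\bar\lambda}-y_t|=\delta_t>\alpha_{\lambda_t}\ge\alpha_{\bar\lambda}$, and $\alpha_{\bar\lambda}\ge\alpha_{\lambda_t}/8\ge\delta_t/16$, so $\lambda^\st=\bar\lambda$ works. (c) $\bar\lambda\ge\lambda_t+4$: if $|g_\lambda-y_t|\le\alpha_\lambda$ held for every $\lambda\in\{\lambda_t,\ldots,\lambda_t+3\}$, then telescoping the cutoff bounds from $\lambda_t+3$ up to $\bar\lambda$ gives $|g_{\lambda_t+3}-g_{\bar\lambda}|<4\alpha_{\lambda_t+3}=\alpha_{\lambda_t}/2$, whence $\delta_t<\alpha_{\lambda_t}/2+\alpha_{\lambda_t+3}<\alpha_{\lambda_t}$, contradicting $\delta_t>\alpha_{\lambda_t}$; so some $\lambda^\st\in\{\lambda_t,\ldots,\lambda_t+3\}$ works, with $\alpha_{\lambda^\st}\ge\alpha_{\lambda_t+3}\ge\delta_t/16$. (In every case the indices stay in $[1,\Lambda]$ since $\bar\lambda\le\Lambda$.) Keeping the signs, the off-by-one in the cutoff convention, and the constant slack between $\delta_t$ and the ``right'' scale all consistent is the fiddly part.

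Finally, to convert the dyadic family of bounds into the stated infimum — routine bookkeeping, not a genuine obstacle — instantiate \HIL with $\Lambda=\lceil\log_2 T\rceil$, so $\alpha_\Lambda\le 1/T$. Combining the previous paragraph (read over $\Lambda_0=1,\ldots,\Lambda$) with the trivial bound $\sum_t\delta_t\le T$ for the regime where $\alpha T+\int_\alpha^1\sfat_\eta(\MF)\,d\eta=\Omega(T)$, we get $\sum_t\delta_t\le O(1)\cdot\min_{1\le\Lambda_0\le\Lambda}\{\alpha_{\Lambda_0}T+\int_{\alpha_{\Lambda_0}/2}^1\sfat_\eta(\MF)\,d\eta\}$. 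For a given $\alpha\in[\alpha_\Lambda,1/2]$ pick the index $\Lambda_0$ with $\alpha_{\Lambda_0}\in[\alpha,2\alpha]$; substituting $\beta=\alpha/2$ to absorb the halved lower limit, and using $\sfat_1(\MF)\ge 1$ (degenerate classes are covered by the elementary bound $\min_\alpha\{\alpha T+\sfat_\alpha(\MF)\}$) to discard additive $O(1)$ terms, this is at most $C\inf_{\alpha\in[0,1]}\{\alpha T+\int_\alpha^1\sfat_\eta(\MF)\,d\eta\}$ for an absolute constant $C$, which is exactly (\ref{eq:improper-integral}).
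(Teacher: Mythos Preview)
Your proof is correct and follows essentially the same approach as the paper: the hierarchical aggregation rule, the key per-round Lemma~\ref{lem:real-pot-dec} (your case analysis is a valid variant of the paper's, which in case (c) directly takes $\lambda^\st=\lambda_t+3$ rather than arguing by contradiction over $\{\lambda_t,\ldots,\lambda_t+3\}$), and a potential-drop argument via Lemma~\ref{lem:far-sfat-dec}.

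The one organizational difference worth noting: the paper first chooses $\alpha$ to minimize the right-hand side of~(\ref{eq:improper-integral}), sets $\Lambda=\lfloor\log 1/(2\alpha)\rfloor$, and tracks the single potential $\Phi_t=(T+1-t)\alpha_\Lambda+16\sum_{\lambda=1}^\Lambda\alpha_\lambda\sfat_{\alpha_\lambda}(\MF_\lambda)$, showing $\Phi_t-\Phi_{t+1}\ge\delta_t$ in every round. You instead fix $\Lambda=\lceil\log_2 T\rceil$ once and track the whole family $\{\Phi^{(\Lambda_0)}\}_{\Lambda_0\le\Lambda}$ simultaneously. Your version has the pleasant consequence that a single run of \HIL with $\Lambda$ depending only on $T$ (not on $\MF$) achieves the bound for every scale at once; the paper's version needs to choose $\Lambda$ from knowledge of $\sfat_\alpha(\MF)$. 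Conversely, the paper's single-potential bookkeeping is tidier (no family of potentials, no separate minimization over $\Lambda_0$ at the end). Both rest on identical structural lemmas.
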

\begin{proof}
  
  Choose $\alpha \in [0,1]$ which minimizes $\alpha T + \int_\alpha^1 \sfat_\eta(\MF) d\eta$; since we assume that $\sfat_c(\MF) \geq 1$ for a constant $c$, we can assume that $\alpha \geq 1/T$ with the loss of a constant factor. %
  Set $\Lambda = \lfloor \log 1/(2\alpha) \rfloor$. By bounding the integral in (\ref{eq:improper-integral}) below by the appropriate Riemann sum, it suffices to show that for some constant $C > 0$, we have
    $$
\sum_{t=1}^T |\hat y_t - y_t| \leq T \alpha_\Lambda  + C \sum_{\lambda=0}^\Lambda \sfat_{\alpha_\lambda}(\MF) \cdot \alpha_\lambda.
$$
  Define, for $1 \leq t \leq T+1$,
  $$
\Phi_t(\MF_{1:\Lambda}) := (T+1-t) \cdot \alpha_\Lambda + 16 \sum_{\lambda=1}^\Lambda \alpha_{\lambda} \cdot \sfat_{\alpha_\lambda}(\MF_\lambda).
$$
Below we will abbreviate $\Phi_t$ for the value $\Phi_t(\MF_{1:\Lambda})$, where $\MF_{1:\Lambda}$ is the sequence maintained by the algorithm at the beginning of round $t$. It is straightforward that $\Phi_1 = T\alpha_\Lam + 16 \sum_{\lambda=1}^\Lambda \alpha_\lambda \cdot \sfat_{\alpha_\lambda}(\MF)$. %
Moreover, $\Phi_t$ is non-negative for all $t \leq T+1$. We will show that $\Phi_t - \Phi_{t+1} \geq \delta_t$ for all $t$, which will imply the statement of the lemma.

Fix any $t \leq T$, and let $\MF_1, \ldots, \MF_\Lambda$ denote the subclasses maintained by \HIL at the beginning of round $t$. We apply Lemma \ref{lem:real-pot-dec} for the sequence $\MF_1, \ldots, \MF_\Lam$, $\delta = \delta_t$, and $(x,y) = (x_t, y_t)$. Note that the parameter $\lambda$ in the statement of Lemma \ref{lem:real-pot-dec} is $\lambda_t$, and $\cutoff{\MF_{1:\Lambda}, x} = \bar \lambda_t$. Lemma \ref{lem:real-pot-dec} then implies that at least one of the following holds:
\begin{itemize}
\item Either $\delta_t \leq \alpha_\Lam$, which implies that $\Phi_t - \Phi_{t+1} \geq \alpha_\Lam \geq \delta_t$, as desired; or
\item There is some $\lambda' \in [\Lam]$ satisfying $\lambda' \geq \min \{ \bar \lambda_t+1, \lambda_t \}$ so that $|\soal{\MF_{\lam'}}{\alpha_{\lam'}}{x_t} - y_t| > \alpha_\lamp \geq \delta_t/16$. By Lemma \ref{lem:far-sfat-dec}, it follows that $\sfat_{\alpha_\lamp}(\MF_{\lamp}|^{\alpha_\lamp}_{(x_t,y_t)}) < \sfat_{\alpha_\lamp}(\MF_{\lamp})$, which implies that $\Phi_t - \Phi_{t+1} \geq 16 \alpha_\lamp \geq \delta_t$, as desired. %
\end{itemize}
In both cases, we thus get a decrease in the potential of at least $\delta_t$, completing the proof of the proposition.
\end{proof}

Lemma \ref{lem:real-pot-dec} is the main technical lemma used in the proof of Proposition \ref{prop:reg-realizable}, used to show a decrease in the potential function therein.
\begin{lemma}
  \label{lem:real-pot-dec}
  Fix any $\Lambda \in \BN$, any sequence of subclasses $\MF_1, \ldots, \MF_\Lambda \subset \MF$, and consider any $(x,y) \in \MX \times [0,1]$. Set $\delta := |y  - \soa{\MF_{1:\Lambda}}{x}|$, and define $\lambda \in \{ 1, 2, \ldots, \Lambda \}$ to be $\Lambda + 1$ if $\delta \leq \alpha_\Lambda$, and otherwise as small as possible so that $\delta > \alpha_\lambda$. Then at least one of the below holds:
  \begin{itemize}
  \item $\delta \leq \alpha_\Lambda$ and $ \cutoff{\MF_{1:\Lambda}, x} = \Lambda$; or
  \item For some $\lambda'$ satisfying $\min\{ \cutoff{\MF_{1:\Lambda},x}+1, \lambda \} \leq \lambda ' \leq \min \{\cutoff{\MF_{1:\Lambda},x} + 1, \Lam\}$, we have $|\soal{\MF_{\lambda'}}{\alpha_{\lambda'}}{x} - y| > \alpha_{\lambda'} \geq \delta/16$. 
  \end{itemize}
\end{lemma}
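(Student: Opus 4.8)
The plan is to argue by cases on the cutoff point $\bar\lambda := \cutoff{\MF_{1:\Lambda},x}$ and the index $\lambda$ defined from the error $\delta$. Write $g_\mu := \soal{\MF_\mu}{\alpha_\mu}{x}$ for each $\mu \in [\Lambda]$, so that $\soa{\MF_{1:\Lambda}}{x} = g_{\bar\lambda}$ by Definition \ref{def:soa-chains}, and $\delta = |y - g_{\bar\lambda}|$. Observe first that each individual SOA hypothesis $g_\mu = \soal{\MF_\mu}{\alpha_\mu}{x}$ has the form $(j^\st+1)\alpha_\mu$ for an integer $j^\st$, so that for any label $y$ we always have $|g_\mu - y| \leq \alpha_\mu$ whenever $y$ lies in the bucket $[j^\st\alpha_\mu, (j^\st+1)\alpha_\mu)$ selected by the SOA rule — a fact I will use to relate ``the error of $g_\mu$ is more than $\alpha_\mu$'' to ``$\MF_\mu$ misclassifies $(x,y)$ at scale $\alpha_\mu$.''

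\textbf{Case 1: $\delta \leq \alpha_\Lambda$.} If also $\bar\lambda = \Lambda$, we are in the first bullet and done. Otherwise $\bar\lambda < \Lambda$, which by Definition \ref{def:hagg} means $|g_{\bar\lambda} - g_{\bar\lambda+1}| > 2\alpha_{\bar\lambda}$. Then by the triangle inequality $|g_{\bar\lambda+1} - y| \geq |g_{\bar\lambda} - g_{\bar\lambda+1}| - |g_{\bar\lambda} - y| > 2\alpha_{\bar\lambda} - \alpha_\Lambda \geq \alpha_{\bar\lambda} \geq \alpha_{\bar\lambda+1}$, so $\lambda' := \bar\lambda+1$ witnesses the second bullet with room to spare (here $\alpha_{\bar\lambda+1} \geq \alpha_\Lambda \geq \delta \geq \delta/16$, and $\lambda' = \bar\lambda+1 \leq \Lambda$, and trivially $\lambda' \geq \min\{\bar\lambda+1,\lambda\}$).

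\textbf{Case 2: $\delta > \alpha_\Lambda$,} so $\lambda \leq \Lambda$ is the smallest index with $\delta > \alpha_\lambda$; note $\alpha_{\lambda-1} \geq \delta$, i.e.\ $\alpha_\lambda \geq \delta/2$ (interpreting $\alpha_0 = 1 \geq \delta$ if $\lambda = 1$). The goal is to produce $\lambda'$ with $\min\{\bar\lambda+1,\lambda\} \leq \lambda' \leq \min\{\bar\lambda+1,\Lambda\}$ and $|g_{\lambda'} - y| > \alpha_{\lambda'} \geq \delta/16$. Split further on whether $\lambda \leq \bar\lambda+1$ or $\lambda > \bar\lambda+1$. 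If $\lambda > \bar\lambda+1$: then for every $\mu$ with $\bar\lambda < \mu \leq \lambda - 1$ we have (by definition of $\lambda$) that $|g_\mu - y| \leq \delta \leq \alpha_{\mu}$ fails to be useful directly, so instead I walk down the chain from $g_{\bar\lambda}$ using the two facts available in this regime: (i) since $\bar\lambda$ is the cutoff, the chain is ``stable'' above $\bar\lambda$ in the sense $|g_{\mu+1} - g_\mu| \leq 2\alpha_\mu$ is \emph{false} exactly at $\mu = \bar\lambda$ but true for $1 < \mu \leq \bar\lambda$ — wait, one must be careful: stability holds for indices \emph{at most} $\bar\lambda$, and the first violation is at $\bar\lambda$ itself. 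The key move is therefore: $|g_{\bar\lambda} - y| = \delta$ but $|g_{\bar\lambda+1} - g_{\bar\lambda}| > 2\alpha_{\bar\lambda}$ (as $\bar\lambda < \Lambda$, which holds here since $\bar\lambda + 1 < \lambda \leq \Lambda$), giving $|g_{\bar\lambda+1} - y| > 2\alpha_{\bar\lambda} - \delta$. Since $\bar\lambda+1 < \lambda$ forces $\alpha_{\bar\lambda} > \alpha_{\lambda-1} \geq \delta$, we get $|g_{\bar\lambda+1} - y| > 2\delta - \delta = \delta > \alpha_{\bar\lambda+1}$ once $\alpha_{\bar\lambda+1} \leq \delta$ — but $\bar\lambda + 1 \leq \lambda - 1$ gives $\alpha_{\bar\lambda+1} \geq \alpha_{\lambda-1} \geq \delta$, so this last step is the delicate one and needs the sharper bound $|g_{\bar\lambda+1}-y| > 2\alpha_{\bar\lambda} - \delta \geq 2\cdot 2\alpha_{\bar\lambda+1}\cdot\tfrac12 - \delta$; since $\alpha_{\bar\lambda} = 2\alpha_{\bar\lambda+1}$ this is $4\alpha_{\bar\lambda+1} - \delta$, wait $2\alpha_{\bar\lambda} = 4\alpha_{\bar\lambda+1}$, so $|g_{\bar\lambda+1}-y| > 4\alpha_{\bar\lambda+1} - \delta \geq 4\alpha_{\bar\lambda+1} - \alpha_{\bar\lambda+1} = 3\alpha_{\bar\lambda+1} > \alpha_{\bar\lambda+1}$ using $\delta \leq \alpha_{\bar\lambda+1}$ from $\bar\lambda+1\leq\lambda-1$. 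And $\alpha_{\bar\lambda+1}\geq \delta \geq \delta/16$. So $\lambda' = \bar\lambda+1$ works, and it satisfies $\min\{\bar\lambda+1,\lambda\} = \bar\lambda+1 \leq \lambda' \leq \bar\lambda+1 \leq \Lambda$. If instead $\lambda \leq \bar\lambda+1$: then the required window is $\lambda \leq \lambda' \leq \min\{\bar\lambda+1,\Lambda\}$; I will take $\lambda' = \lambda$ and show $|g_\lambda - y| > \alpha_\lambda$. This is where the stability of the chain below the cutoff enters: because $\lambda \leq \bar\lambda$, stability $|g_{\mu}-g_{\mu-1}|\leq 2\alpha_{\mu-1}$ holds for all $2\leq\mu\leq\lambda$, so by a telescoping triangle-inequality argument $|g_\lambda - g_{\bar\lambda}| \leq \sum_{\mu=\lambda+1}^{\bar\lambda} 2\alpha_{\mu-1} \leq \sum_{\mu\geq\lambda} 2\alpha_\mu = 4\alpha_\lambda$ (this bound holds vacuously, as $0 \leq 4\alpha_\lambda$, when $\lambda = \bar\lambda$ or $\lambda = \bar\lambda+1$ after adjusting the first stable index). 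Hence $|g_\lambda - y| \geq |g_{\bar\lambda} - y| - |g_\lambda - g_{\bar\lambda}| \geq \delta - 4\alpha_\lambda$. Since $\delta > \alpha_\lambda$ this is not yet enough; the constant $16$ in the statement is exactly the slack that lets me instead argue at a coarser scale. So here I would actually pick $\lambda'$ to be the \emph{largest} index $\leq \min\{\bar\lambda+1,\Lambda\}$ for which $\alpha_{\lambda'} \geq \delta/16$ — call it $\lambda''$ — and show the error at $\lambda''$ exceeds $\alpha_{\lambda''}$; by choice of $\lambda''$, $\alpha_{\lambda''} < \delta/8$ fails, i.e.\ we keep $\alpha_{\lambda''}\geq\delta/16$, while the telescoped deviation $|g_{\lambda''} - g_{\bar\lambda}| \leq 4\alpha_{\lambda''+1}=2\alpha_{\lambda''}$ and then $|g_{\lambda''}-y|\geq\delta-2\alpha_{\lambda''}$; combined with $\alpha_{\lambda''} < \delta/8$ (which one gets by pushing $\lambda''$ maximal) this yields $|g_{\lambda''}-y| > \delta - \delta/4 = 3\delta/4 > \alpha_{\lambda''}$. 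One must finally check $\lambda'' \geq \lambda = \min\{\bar\lambda+1,\lambda\}$ in this subcase, which follows because $\alpha_\lambda \geq \delta/2 \geq \delta/16$ puts $\lambda$ in the eligible range, so the maximal such index is at least $\lambda$.

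\textbf{Main obstacle.} The delicate part is the bookkeeping of scales and the constant $16$: I expect the genuine difficulty to be in \textbf{Case 2 with $\lambda\leq\bar\lambda+1$}, where one cannot use the single ``stability violation at $\bar\lambda$'' trick and instead must telescope the at-most-$2\alpha$ increments down the chain and then trade a factor of roughly $4$--$8$ against $\delta$, which is precisely why the statement asks only for $\alpha_{\lambda'} \geq \delta/16$ rather than $\alpha_{\lambda'} \geq \delta$. Getting the window constraint $\min\{\bar\lambda+1,\lambda\} \leq \lambda' \leq \min\{\bar\lambda+1,\Lambda\}$ to be simultaneously satisfied by the index chosen from the scale argument requires care, and I would handle it by the explicit ``largest eligible index'' choice of $\lambda'$ described above and then verifying both endpoint inequalities separately. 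Everything else — Case 1 and the $\lambda > \bar\lambda+1$ subcase — is a direct triangle-inequality computation once the cutoff property and the bucketing property of SOA hypotheses are in hand.
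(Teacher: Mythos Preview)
Your overall case structure matches the paper, and Case~1 and the subcase $\lambda > \bar\lambda+1$ are essentially correct (these are the paper's first two cases). The gap is in your handling of $\lambda \leq \bar\lambda$, where your ``largest eligible index'' rule for $\lambda''$ does not close.

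The problem is that maximality of $\lambda''$ only forces $\alpha_{\lambda''+1} < \delta/16$ (hence $\alpha_{\lambda''} < \delta/8$) when $\lambda''+1$ still lies in the window $[\lambda,\min\{\bar\lambda+1,\Lambda\}]$. If $\lambda''$ lands at the right endpoint $\bar\lambda+1$ --- which happens whenever $\alpha_{\bar\lambda+1} \geq \delta/16$, i.e.\ $\delta \leq 8\alpha_{\bar\lambda}$, perfectly compatible with $\lambda \leq \bar\lambda$ --- you get neither the bound $\alpha_{\lambda''} < \delta/8$ nor a usable telescoping estimate, since the step $g_{\bar\lambda} \to g_{\bar\lambda+1}$ is precisely the unstable one. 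And $|g_{\bar\lambda+1}-y|$ can genuinely be below $\alpha_{\bar\lambda+1}$: e.g.\ $\bar\lambda=3$, $g_3=1/2$, $g_4=7/8$, $y=27/32$ gives $\delta=11/32$, $\lambda=2$, your rule picks $\lambda''=4$, but $|g_4-y|=1/32<\alpha_4=1/16$. (Separately, your telescoping constant is off by a factor of $2$: $\sum_{\nu\geq\lambda''}2\alpha_\nu = 4\alpha_{\lambda''}$, not $2\alpha_{\lambda''}$; this is survivable but worth fixing.)

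The paper avoids this by choosing $\lambda'$ \emph{explicitly} rather than via maximality, splitting the case $\lambda\leq\bar\lambda$ once more. If $\lambda\geq\bar\lambda-3$, take $\lambda'=\bar\lambda$: then $|g_{\bar\lambda}-y|=\delta>\alpha_\lambda\geq\alpha_{\bar\lambda}$ and $\alpha_{\bar\lambda}\geq\alpha_\lambda/8\geq\delta/16$, with no telescoping needed. If $\lambda<\bar\lambda-3$, take $\lambda'=\lambda+3$: telescoping from $\lambda+3$ to $\bar\lambda$ gives $|g_{\lambda+3}-g_{\bar\lambda}|\leq 4\alpha_{\lambda+3}=\alpha_\lambda/2$, so $|g_{\lambda+3}-y|>\delta-\alpha_\lambda/2>\alpha_\lambda/2>\alpha_{\lambda+3}\geq\delta/16$. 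Either way $\lambda'\leq\bar\lambda$, so the boundary index $\bar\lambda+1$ is never selected in this case and the pathology disappears.
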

\begin{proof}
  Set $\bar \lambda = \cutoff{\MF_{1:\Lambda},x}$.   Note that the choice of $\lambda$ in the statement of the lemma ensures that $\delta \leq 2 \alpha_{\lambda}$.     Further, by Definition \ref{def:soa-chains}, we have that $\soal{\MF_{\bar \lambda}}{\alpha_{\bar \lambda}}{x} = \soa{\MF_{1:\Lambda}}{x}$.

  We consider the following cases:
  \begin{itemize}
  \item First suppose that $\delta \leq \alpha_\Lam$ (i.e., $\lam = \Lam+1$). %
    If $\bar \lam = \Lam$, then we are done; otherwise, it must hold that $| \soal{\MF_{\bar \lam}}{\alpha_{\bar \lam}}{x} - \soal{\MF_{\bar \lam +1}}{\alpha_{\bar \lam+1}}{x} | > 2 \alpha_{\bar \lam}$.  But $\delta = |y - \soal{\MF_{\bar \lam}}{\alpha_{\bar \lam}}{x}| \leq \alpha_\Lam \leq \alpha_{\bar \lam}$, meaning that $|\soal{\MF_{\lamp}}{\alpha_{\lamp}}{x} - y| > \alpha_{\bar \lam} > \alpha_\lamp \geq \delta/2$ with $\lamp = \bar \lam + 1$, thus verifying the second item in the lemma's statement.
  \item In the next case, suppose that $\Lam \geq \lam \geq \bar \lam + 1$.  If it is not the case that $|\soal{\MF_{\bar \lambda}}{\alpha_{\bar \lambda}}{x} - \soal{\MF_{\bar \lambda +1}}{\alpha_{\bar \lambda+1}}{x}| > 2 \alpha_{\bar \lambda}$, then we must have $\bar \lambda = \Lambda$ and so $\lambda =\Lambda+1$, which contradicts our assumption of $\lam \leq \Lam$ in this case. %
    Otherwise, $|\soal{\MF_{\bar \lambda}}{\alpha_{\bar \lambda}}{x} - \soal{\MF_{\bar \lambda +1}}{\alpha_{\bar \lambda+1}}{x}| > 2 \alpha_{\bar \lambda}$ holds. Moreover we have
      $$
    |\soal{\MF_{\bar \lambda}}{\alpha_{\bar \lambda}}{x} - y|=  \delta \leq 2 \alpha_{\lambda} \leq 2 \alpha_{\bar \lambda+1} = \alpha_{\bar \lambda},$$

    Hence $|\soal{\MF_{\bar \lambda_{t}+1}}{\alpha_{\bar \lambda+1}}{x} - y| > \alpha_{\bar \lambda} > \alpha_{\bar \lambda+1} \geq \delta/2$, so in this case we may again choose $\lamp = \bar \lam+1$.  %
  \item In the final case, $\lambda \leq \bar \lambda$ (and the previous cases do not apply).
    Thus here $|y - \soal{\MF_{\bar \lambda}}{\alpha_{\bar \lambda}}{x}| = \delta > \alpha_{\lambda} \geq \alpha_{\bar \lambda}$. %
    We consider two sub-cases:
    \begin{itemize}
      \item In the event that $\lambda \geq \bar \lambda - 3$ (i.e., $\lambda \in \{ \bar \lambda-3,\bar \lambda-2, \bar \lambda-1, \bar \lambda \}$), we therefore have that
    $
\delta \leq 2 \alpha_{\lambda} \leq 16 \alpha_{\bar \lambda} ,
$
meaning that, with $\lamp = \bar \lam$, $|y - \soal{\MF_\lamp}{\alpha_\lamp}{x}| > \alpha_{\lamp} \geq \delta / 16$. %
\item In the other subcase, we have $\lambda < \bar \lambda - 3$; then we have
\begin{align}
  & |\soal{\MF_{\bar \lambda}}{\alpha_{\bar \lambda}}{x} - \soal{\MF_{\lambda+3}}{\alpha_{\lambda+3}}{x} | \nonumber\\
  \leq & \sum_{\lambda'=\lambda +3}^{\bar \lambda-1} | \soal{\MF_{\lambda'}}{\alpha_{\lambda'}}{x} - \soal{\MF_{\lambda'+1}}{\alpha_{\lambda'+1}}{x}| \nonumber\\
  \leq & \sum_{\lambda'=\lambda+3}^{\bar \lambda-1} 2 \alpha_{\lambda'} \nonumber\\
  \leq & 4 \alpha_{\lambda+3} = \alpha_{\lambda}/2\nonumber.
\end{align}
It follows that $|\soal{\MF_{\lambda+3}}{\alpha_{\lambda+3}}{x} - y| > \alpha_{\lambda}/2 > \alpha_{\lambda+3} \geq \delta/16$. %
\end{itemize}
\end{itemize}
\end{proof}

\section{A near-optimal proper \cumls bound}
\label{sec:proper-realizable}
Throughout this section, we consider a real-valued class $\MF \subset [0,1]^\MX$, so that $\sfat_\alpha(\MF) < \infty$ for all $\alpha > 0$. 
Having established a \cumls bound in Proposition \ref{prop:reg-realizable} for improper learning of $\MF$ in the realizable setting, we turn to the more challenging case of proper learning of $\MF$. In addition to the ideas on hierarchical aggregation used in the case of improper learner (Section \ref{sec:improper-realizable} above), a key tool we use is a generalization of the proper online realizable learner of \cite{hanneke_online_2021}. It is necessary, however, to make substantial modifications to the algorithm of \cite{hanneke_online_2021}: for one, it only applies in the setting of binary classification, but even in that setting it provides a (significantly) suboptimal \cumls of $\tilde O(\sqrt{\Ldim(\MF) \cdot T})$; our proper algorithm gives \cumls of $O(\Ldim(\MF) \cdot \poly \log(T))$. Thus, we introduce new techniques to correct both of these shortcomings. 

We begin by introducing some notation. Fix some scale parameter $\Lambda \in \BN$ (ultimately, $\Lambda$ will be chosen identically as in the proof of Proposition \ref{prop:reg-realizable}). We will consider scales $\alpha_\lambda = 2^{-\lam}$ for $\lam \in \BZ$. We will primarily be considering values of $\lambda$ in the set $[\Lam] = \{1, 2, \ldots, \Lam\}$ but occasionally will refer to $\alpha_\lam$ for other (integral) values of $\lam$. %
\begin{defn}[Weighted subclass collection]
  \label{def:wsc}
  A \emph{weighted subclass collection} $\MT$ is a tuple $\MT = (\MT_1, \ldots,\MT_\Lambda)$, where for each $\lambda \in [\Lambda]$, $\MT_\lambda$ is a multiset of tuples of the form
  $\MT_\lam = \{ (\MG_\lambda^1, w_\lambda^1), \ldots, (\MG_\lambda^{|\MT_\lambda|}, w_\lambda^{|\MT_\lambda|})\}$, where for each $1 \leq v_\lam \leq |\MT_\lam|$, %
  we have $w_\lambda^{v_\lambda} \geq 0$ and $\MG_\lambda^{v_\lambda} \subset \MF$. %

  We will use the letter $w$ to denote the collection of all $w_\lambda^{v_\lambda}$, for $\lambda \in [\Lambda]$ and $1 \leq v_\lambda \leq |\MT_\lambda|$, and the letter $\MG$ to denote the collection of all $\MG_\lam^{v_\lam}$, for $\lam \in [\Lam]$ and $1 \leq v_\lam \leq |\MT_\lam|$. We introduce the following notation to denote a weighted subclass collection $\MT$: we will write $\MT = \wsc{\MG}{w}$.\footnote{This notation emphasizes the use of the letters $\MG, w$ to denote the subclasses and weights, respectively, belonging to $\MT$. If we wish to describe another weighted subclass collection, we might notate it as $\MS = \wsc{\MH}{z}$, replacing the pairs $(\MG_\lam^{v_\lam}, w_\lam^{v_\lam})$ with the pairs $(\MH_\lam^{v_\lam}, z_\lam^{v_\lam})$.}
\end{defn}
In words, the weighted subclass collection $\MT$ denotes a collection of subclasses of $\MF$ together with non-negative weights for each scale $\lambda$; our algorithm will use a weighted aggregation of the SOA hypotheses of these subclasses, according to the weights $w_\lam^{v_\lam}$. For a weighted subclass collection $\MT$, let $\MN_\Lambda(\MT)$ denote the set of sequences $(v_1, \ldots, v_\Lambda)$, where for $\lambda \in [\Lambda]$, $1 \leq v_\lam \leq |\MT_\lam|$ (i.e.,  $(\MG_\lambda^{v_\lambda}, w^{v_\lambda}) \in \MT_\lambda$). %
We will abbreviate the sequence $(v_1, \ldots, v_\Lambda)$ with the letter $v$, so that we have $v \in \MN_\Lambda(\MT)$. We also abbreviate the sequence  $(\MG_1^{v_1}, \ldots, \MG_\Lambda^{v_\Lambda})$ as $ \MG_{1:\Lambda}^v $ and the sequence $(w_1^{v_1}, \ldots, w_\Lambda^{v_\Lambda})$ as $ w_{1:\Lambda}^v $.  For each $v \in \MN_\Lambda(\MT)$, we next define
$$
P_w(v) := \prod_{\lambda=1}^\Lambda \frac{w^{v_\lambda}_\lambda}{\sum_{u_\lambda = 1}^{|\MT_\lambda|} w^{u_\lambda}_\lambda}.
$$
It is evident from the above definition that $\sum_{v \in \MN_\Lambda(\MT)} P_w(v) = 1$.

For a sequence $ \MG_{1:\Lambda} $ and $(x,y) \in \MX \times [0,1]$, define its \emph{truncated error at the point $(x,y)$} as
$$
\Terr{ \MG_{1:\Lambda} , x, y} := \max \left\{ |\soa{ \MG_{1:\Lambda} }{x} - y |, \alpha_{\cutoff{ \MG_{1:\Lambda} , x}}\right\}.
$$
The intuition behind the truncated error is as follows: recall that $\soa{\MG_{1:\Lam}}{x} = \soal{\MG_{1:\Lam}}{\alpha_{\cutoff{\MG_{1:\Lam},x}}}{x}$, meaning that $\soa{\MG_{1:\Lam}}{x}$ is, in general, only accurate up to an additive $\alpha_{\cutoff{\MG_{1:\Lam},x}}$. Thus, if it happens that $| \soa{\MG_{1:\Lam}}{x} - y| \ll \alpha_{\cutoff{\MG_{1:\Lam},x}}$, then this is due to ``luck''; it turns out that in order to ensure that certain potential functions always decrease it is convenient to  still force us to pay $\alpha_{\cutoff{\MG_{1:\Lam},x}}$ in our error bounds when such ``lucky'' situations occur.

Now fix a weighted subclass collection $\MT = \wsc{\MG}{w}$; we will write 
$$
\Terr{\MT, x, y} := \sum_{v \in \MN_\Lam(\MT)} P_w(v) \cdot \Terr{\MG_{1:\Lam}^v, x, y}
$$
to denote the average truncated error of a element $\MG_{1:\Lam}^v$, drawn according to the distribution $P_w(v)$.
Next define for $x \in \MX$, 
$$
\Vote{\MT}{x} := %
\sum_{v \in \MN_{\Lambda}(\MT)} P_w(v) \cdot \soa{\MG_{1:\Lambda}^v}{x}. %
$$
For $\ep \in [0,1]$, define
\begin{align}
  \label{eq:def-highvote}
\Highvote{\MT, \ep} := \left\{ (x, \Vote{\MT}{x}) : x \in \MX, \ %
  \Terr{\MT, x, \Vote{\MT}{x}} \leq \ep
\right\}. %
\end{align}
In words, $\Highvote{\MT, \ep}$ is the set of tuples $(x, \Vote{\MT}{x})$ for which the truncated error of $\Terr{\MG_{1:\Lambda}^v, x, \Vote{\MT}{x}}$ is at most $\ep$, when $v$ is drawn from the distribution induced by $P_w(v)$, for $v \in \MN_{\Lambda}(\MT)$. The set $\Highvote{\MT, \ep}$ should be interpreted as the set of tuples $(x, \Vote{\MT}{x})$ about which the hypotheses $\soaf{\MG_{1:\Lam}^v}$, weighted according to $v \sim P_w(\cdot)$, are ``nearly unanimous'' (up to error $\ep$) about the label of the point $x$. The quantities $\Vote{\MT}{x}$, $\Highvote{\MT, \ep}$ are generalizations of the analogous quantities defined in \cite{hanneke_online_2021} to the real-valued case.

For $f \in \MF$, let $\delta_f \in \Delc(\MF)$ denote the point mass at $f$. For $N \in \BN$ we define the class
\begin{align}
\Votef{\MF^N} = \left\{ \frac 1N \cdot \left( \delta_{f_1} + \cdots + \delta_{f_n} \right) \ : \ f_1, \ldots, f_N \in \MF \right\} \subset \Delc(\MF)\label{eq:votef-def},
\end{align}
to be the collection of (uniform) averages of $N$ hypotheses in $\MF$. %
We will often denote elements of $\Votef{\MF^N}$ with  bars, e.g., given $f_1, \ldots, f_N$, we will denote the corresponding element of $\Votef{\MF^N}$ by $\bar f$, so that $\bar f = (\delta_{f_1} + \cdots + \delta_{f_N}) / N$. The algorithm \HPL outputs elements of $\Votef{\MF^N}$ for an appropriate integer $N$.

\subsection{Some results on weighted subclass collections}
We begin by proving some results on weighted subclass collections (Definition \ref{def:wsc}) and their relation to the notions of truncated error and Highvote defined above. 
Lemma \ref{lem:mean-2approx} shows that the prediction made by the voting hypothesis, $\Vote{\MT}{x}$, achieves the optimal truncated error up to a constant factor (of 2).
\begin{lemma}
  \label{lem:mean-2approx}
Fix a weighted subclass collection $\MT = \wsc{\MG}{w}$. For any $x \in \MX$, it holds that
  \begin{align}
    \Terr{\MT, x, \Vote{\MT}{x}} \leq 2 \cdot \min_{y \in [0,1]} \left\{ \Terr{\MT, x, y } \right\} \nonumber.
    \end{align}
  \end{lemma}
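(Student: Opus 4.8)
The plan is to exploit the fact that $\Vote{\MT}{x}$ is an average (under $P_w$) of the individual SOA predictions $\soa{\MG_{1:\Lam}^v}{x}$, and compare it to the minimizer of the truncated error. Fix $x$ and let $y^\st \in [0,1]$ achieve $\min_y \Terr{\MT,x,y}$, writing $m := \Terr{\MT,x,y^\st} = \sum_v P_w(v) \max\{|\soa{\MG_{1:\Lam}^v}{x} - y^\st|, \alpha_{\cutoff{\MG_{1:\Lam}^v,x}}\}$. I want to bound $\Terr{\MT,x,\Vote{\MT}{x}} = \sum_v P_w(v)\max\{|\soa{\MG_{1:\Lam}^v}{x} - \Vote{\MT}{x}|, \alpha_{\cutoff{\MG_{1:\Lam}^v,x}}\}$ by $2m$. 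Since $\max\{a+b,c\} \le \max\{a,c\} + b$ for $b \ge 0$, and $|\soa{\MG_{1:\Lam}^v}{x} - \Vote{\MT}{x}| \le |\soa{\MG_{1:\Lam}^v}{x} - y^\st| + |y^\st - \Vote{\MT}{x}|$, it suffices to bound $\sum_v P_w(v)\big(\max\{|\soa{\MG_{1:\Lam}^v}{x} - y^\st|, \alpha_{\cutoff{\MG_{1:\Lam}^v,x}}\} + |y^\st - \Vote{\MT}{x}|\big) = m + |y^\st - \Vote{\MT}{x}|$. So it remains to show $|y^\st - \Vote{\MT}{x}| \le m$.

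For that last inequality, observe that $\Vote{\MT}{x} = \E_{v\sim P_w}[\soa{\MG_{1:\Lam}^v}{x}]$ is a convex combination of the $\soa{\MG_{1:\Lam}^v}{x}$, so by Jensen (or just the triangle inequality for expectations), $|y^\st - \Vote{\MT}{x}| = |\E_v[y^\st - \soa{\MG_{1:\Lam}^v}{x}]| \le \E_v|y^\st - \soa{\MG_{1:\Lam}^v}{x}| \le \E_v \max\{|y^\st - \soa{\MG_{1:\Lam}^v}{x}|, \alpha_{\cutoff{\MG_{1:\Lam}^v,x}}\} = m$. Combining, $\Terr{\MT,x,\Vote{\MT}{x}} \le m + m = 2m$, which is exactly the claim.

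I do not expect a genuine obstacle here; the only thing to be careful about is the elementary inequality $\max\{a+b,c\} \le \max\{a,c\}+b$ for $b\ge 0$ (true since $a+b \le \max\{a,c\}+b$ and $c \le \max\{a,c\} \le \max\{a,c\}+b$), and keeping straight that the $\alpha_{\cutoff{\cdot}}$ term inside the max does not depend on the label argument $y$ — so it is identical in $\Terr{\MT,x,y^\st}$ and $\Terr{\MT,x,\Vote{\MT}{x}}$, which is what makes the "$+b$" bookkeeping go through cleanly. A one-line alternative proof: $\Terr{\MT,x,\Vote{\MT}{x}} \le \Terr{\MT,x,y^\st} + |y^\st-\Vote{\MT}{x}|$ by $1$-Lipschitzness of $y\mapsto\Terr{\MT,x,y}$ (an average of functions each of the form $y\mapsto\max\{|c-y|,\alpha\}$, which are $1$-Lipschitz), and then $|y^\st - \Vote{\MT}{x}| \le \Terr{\MT,x,y^\st}$ as above; either packaging works.
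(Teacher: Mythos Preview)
Your proof is correct, and it is genuinely simpler than the paper's. The paper argues by assuming without loss of generality that $\Vote{\MT}{x}\ge y_0$, splitting $\MN_\Lambda(\MT)$ into $\MS_-$ and $\MS_+$ according to whether $\soa{\MG_{1:\Lambda}^v}{x}$ lies below or above the mean, invoking the balance identity
\[
\sum_{v\in\MS_-}P_w(v)\,|\Vote{\MT}{x}-\soa{\MG_{1:\Lambda}^v}{x}|=\sum_{v\in\MS_+}P_w(v)\,|\Vote{\MT}{x}-\soa{\MG_{1:\Lambda}^v}{x}|,
\]
and then treating the two pieces with separate monotonicity and truncation arguments. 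Your route bypasses all of this: you use only (i) that $y\mapsto\Terr{\MT,x,y}$ is $1$-Lipschitz, being an average of functions $y\mapsto\max\{|c-y|,\alpha\}$, and (ii) that $|y^\st-\Vote{\MT}{x}|\le \E_v|y^\st-\soa{\MG_{1:\Lambda}^v}{x}|\le \Terr{\MT,x,y^\st}$ by Jensen. This avoids the case split and the mean-balance equation entirely, and makes it transparent that the factor~$2$ is simply ``Lipschitz constant $1$'' plus ``distance to minimizer at most the minimum value.'' The paper's decomposition does not buy anything extra here; your packaging is strictly cleaner for this lemma.
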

  Note that the conclusion of Lemma \ref{lem:mean-2approx} can be rewritten as:
  $$
\sum_{v \in \MN_\Lambda(\MT)} P_w(v) \cdot \Terr{\MG_{1:\Lambda}^v, x, \Vote{\MT}{x}} \leq 2 \cdot \min_{y \in [0,1]} \sum_{v \in \MN_\Lambda(\MT)} P_w(v) \cdot \Terr{\MG_{1:\Lambda}^v, x, y}.
  $$
\begin{proof}[Proof of Lemma \ref{lem:mean-2approx}]
Set $y_0 := \argmin_{y \in [0,1]} \sum_{v \in \MN_\Lambda(\MT)} P_w(v) \cdot \Terr{\MG_{1:\Lambda}^v, x, y}$ and $y_1 = \Vote{\MT}{x}$. We assume that $y_1 \geq y_0$ (the other case $y_1 \leq y_0$ is treated in a symmetric manner). Set
  \begin{align}
\MS_-(x) :=& \left\{ v \in \MN_\Lambda(\MT) :\ \soa{\MG_{1:\Lambda}^v}{x} < y_1  \right\}\nonumber\\
\MS_+(x) :=& \left\{ v \in \MN_\Lambda(\MT) :\ \soa{\MG_{1:\Lambda}^v}{x} \geq y_1 \right\}\nonumber.
  \end{align}
  Since $y_1$ is the weighted mean of the quantities $\soa{\MG_{1:\Lam}^v}{x}$ (according to the weights $P_w(v)$), over $v \in \MN_\Lam(\MT)$, it holds that
  \begin{align}
    \sum_{v \in \MS_-(x)} P_w(v) \cdot |y_1 - \soa{\MG_{1:\Lambda}^v}{x}| = \sum_{v \in \MS_+(x)} P_w(v) \cdot |y_1 - \soa{\MG_{1:\Lambda}^v}{x}|.\label{eq:com}
  \end{align}
  For $v \in \MS_+(x)$, since $\soa{\MG_{1:\Lam}^v}{x} \geq y_1 \geq y_0$, we have that $|\soa{\MG_{1:\Lam}^v}{x} - y_0| \geq |\soa{\MG_{1:\Lam}^v}{x} - y_1|$, and thus
  \begin{align}
    \label{eq:terr-y1-y0}
    \Terr{\MG_{1:\Lambda}^v, x, y_0} \geq \Terr{\MG_{1:\Lambda}^v, x, y_1}.
  \end{align}
  Hence
  \begin{align}
\sum_{v \in \MS_+(x)} P_w(v) \cdot \Terr{\MG_{1:\Lam}^v, x, y_1} \leq \sum_{v \in \MS_+(x)} P_w(v) \cdot \Terr{\MG_{1:\Lam}^v, x, y_0}.\label{eq:bound-splus}
  \end{align}
  
  Note that for any $v \in \MS_-(x)$, if $\alpha_{\cutoff{\MG_{1:\Lambda}^v, x}} > |\soa{\MG_{1:\Lambda}^v}{x} -y_1|$, we again have that
  \begin{align}
\Terr{\MG_{1:\Lambda}^v, x, y_0} \geq \Terr{\MG_{1:\Lambda}^v, x, y_1} = \alpha_{\cutoff{\MG_{1:\Lambda}^v, x}}\nonumber,
  \end{align}
  since $\alpha_{\cutoff{\MG_{1:\Lam}^v, x}}$ is the minimum, for all $y \in [0,1]$, of $\Terr{\MG_{1:\Lam}^v, x, y}$. 
  Thus, using (\ref{eq:com}) for the first inequality below,
  \begin{align}
    \sum_{v \in \MS_-(x)} P_w(v) \cdot \Terr{\MG_{1:\Lambda}^v, x, y_1} \leq & \sum_{v \in \MS_+(x)} P_w(v) \cdot |\soa{\MG_{1:\Lam}^v}{x} - y_1| + \sum_{v \in \MS_-(x)} P_w(v) \cdot \Terr{\MG_{1:\Lambda}^v, x, y_0} \nonumber\\
    \leq &  \sum_{v \in \MS_+(x)} P_w(v) \cdot \Terr{\MG_{1:\Lambda}^v, x, y_0} + \sum_{v \in \MS_-(x)} P_w(v) \cdot \Terr{\MG_{1:\Lambda}^v, x, y_0}\label{eq:use-y1-y0-shift}\\
    = & \sum_{v \in \MN_\Lambda(\MT)} P_w(v) \cdot \Terr{\MG_{1:\Lambda}^v, x, y_0}\label{eq:bound-sminus},
  \end{align}
  where (\ref{eq:use-y1-y0-shift}) uses (\ref{eq:terr-y1-y0}). Combining (\ref{eq:bound-splus}) and (\ref{eq:bound-sminus}) gives the desired conclusion.
\end{proof}

Lemma \ref{lem:qdist-lb} shows that if the truncated error $\Terr{\MT, x, y}$ is large for some $\MT$ and $(x,y)$, then we can get a lower bound on the weight of hypotheses $(\MG_\lam^{v_\lam}, w_\lam^{v_\lam})$ in the multisets $\MT_\lam$ for which $\soal{\MG_\lam^{v_\lam}}{\alpha_\lam}{x}$ is not close to $y$. This lemma will be used to show that if the truncated error for some example $(x_t, y_t)$ is large, then we can update the classes $\MG_\lam^{v_\lam}$ and the weights $w_\lam^{v_\lam}$ in a way that decreases a certain potential function. %
\begin{lemma}
  \label{lem:qdist-lb}
  Fix a weighted subclass collection $\MT = \wsc{\MG}{w}$. For any example $(x,y) \in \MX \times [0,1]$, it holds that %
  \begin{align}
    \label{eq:pwuv-inequality}
 \alpha_\Lambda +    \sum_{\lambda=1}^{\Lambda} \alpha_\lam \cdot \frac{\sum_{v_\lam=1}^{|\MT_\lam|} w^{v_\lam}_\lam \cdot \One[|\soal{\MG_{\lambda}^{v_\lam}}{\alpha_\lambda}{x} - y | > \alpha_\lambda]}{\sum_{v_\lam=1}^{|\MT_\lam|} w_\lam^{v_\lam}} %
    \geq \frac{1}{16} \cdot {\sum_{v\in \MN_{\Lambda}(\MT)} P_w(v) \cdot \Terr{\MG_{1:\Lambda}^v,x,y}}. %
  \end{align}
\end{lemma}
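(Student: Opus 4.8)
The plan is to reduce (\ref{eq:pwuv-inequality}) to a purely ``pointwise'' inequality holding for a single sequence of subclasses, and then average. Concretely, the first step is to show that for every fixed sequence $\MG_{1:\Lambda}$ of subclasses of $\MF$ and every $(x,y) \in \MX \times [0,1]$,
\[
\Terr{\MG_{1:\Lambda}, x, y} \;\le\; 16\left( \alpha_\Lambda + \sum_{\lambda=1}^\Lambda \alpha_\lambda \cdot \One\big[\, |\soal{\MG_\lambda}{\alpha_\lambda}{x} - y| > \alpha_\lambda \,\big] \right) \qquad (\star).
\]
Granting $(\star)$, the lemma follows quickly: apply $(\star)$ with $\MG_{1:\Lambda} = \MG_{1:\Lambda}^v$ for each $v \in \MN_\Lambda(\MT)$, multiply by $P_w(v)$, and sum over $v$. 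The left-hand side becomes $\sum_{v \in \MN_\Lambda(\MT)} P_w(v) \cdot \Terr{\MG_{1:\Lambda}^v, x, y}$, which is exactly the quantity appearing on the right of (\ref{eq:pwuv-inequality}). For the right-hand side, the indicator $\One[\,|\soal{\MG_\lambda^{v_\lambda}}{\alpha_\lambda}{x} - y| > \alpha_\lambda\,]$ depends on $v$ only through its $\lambda$-th coordinate $v_\lambda$, so the product structure $P_w(v) = \prod_{\mu} w_\mu^{v_\mu}/(\sum_u w_\mu^u)$ allows us to marginalize out all coordinates other than $\lambda$, turning $\sum_v P_w(v)\,\One[\cdots]$ into $\big(\sum_{v_\lambda} w_\lambda^{v_\lambda}\,\One[\cdots]\big)/\big(\sum_{v_\lambda} w_\lambda^{v_\lambda}\big)$. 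Dividing through by $16$ then yields (\ref{eq:pwuv-inequality}).

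It remains to prove $(\star)$. Abbreviate $g_\lambda := \soal{\MG_\lambda}{\alpha_\lambda}{x}$, $\bar\lambda := \cutoff{\MG_{1:\Lambda}, x}$, and $\delta := |g_{\bar\lambda} - y| = |\soa{\MG_{1:\Lambda}}{x} - y|$, so that by definition $\Terr{\MG_{1:\Lambda}, x, y} = \max\{\delta, \alpha_{\bar\lambda}\}$; I would split on which term attains the maximum. If $\delta \ge \alpha_{\bar\lambda}$, so $\Terr = \delta$, apply Lemma \ref{lem:real-pot-dec} to this sequence and $(x,y)$: either $\delta \le \alpha_\Lambda$, in which case $\Terr = \delta \le \alpha_\Lambda$ is absorbed by the $\alpha_\Lambda$ summand of $(\star)$, or there is some $\lambda' \in [\Lambda]$ with $|g_{\lambda'} - y| > \alpha_{\lambda'}$ and $\alpha_{\lambda'} \ge \delta/16 = \Terr/16$, in which case the single term $\alpha_{\lambda'} \cdot \One[|g_{\lambda'}-y| > \alpha_{\lambda'}] = \alpha_{\lambda'}$ already bounds $\Terr/16$ from below. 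If instead $\delta < \alpha_{\bar\lambda}$, so $\Terr = \alpha_{\bar\lambda}$: when $\bar\lambda = \Lambda$ we have $\Terr = \alpha_\Lambda$ and are done; when $\bar\lambda < \Lambda$, the definition of the cutoff point (Definition \ref{def:hagg}) forces $|g_{\bar\lambda} - g_{\bar\lambda+1}| > 2\alpha_{\bar\lambda}$, so the triangle inequality gives $|g_{\bar\lambda+1} - y| \ge 2\alpha_{\bar\lambda} - \delta > \alpha_{\bar\lambda} > \alpha_{\bar\lambda+1}$, whence the $\lambda' = \bar\lambda+1$ term of $(\star)$ equals $\alpha_{\bar\lambda+1} = \alpha_{\bar\lambda}/2 = \Terr/2$. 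In every case $\Terr{\MG_{1:\Lambda}, x, y}$ is at most $16$ times the bracketed quantity in $(\star)$.

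The genuinely delicate part is $(\star)$, and within it the need to dovetail the behaviour of the hierarchical-aggregation cutoff point with the truncation by $\alpha_{\bar\lambda}$ in the definition of $\Terr$; fortunately the bulk of that case analysis is already packaged in Lemma \ref{lem:real-pot-dec}, leaving only the short triangle-inequality argument that handles the ``lucky'' regime $\delta < \alpha_{\bar\lambda}$ where $\Terr$ is dominated by the truncation term. Once $(\star)$ is in hand, the averaging-and-marginalization step is entirely routine, so I do not anticipate any obstacle there.
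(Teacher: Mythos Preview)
Your proof is correct and follows essentially the same approach as the paper's. Both arguments reduce to a per-$v$ inequality obtained from Lemma~\ref{lem:real-pot-dec} and then average using the product structure of $P_w$; the paper frames the averaging as a ``mass assignment'' of $P_w(v)$ to a single level $\lambda'(v)$ and bounds the total mass landing at each tuple by its marginal, while you state the cleaner pointwise bound $(\star)$ with the full sum over $\lambda$ and marginalize directly. The only cosmetic difference is that the paper extracts both the $\delta/16$ and the $\alpha_{\bar\lambda}/2$ bounds from the single $\lambda'$ produced by Lemma~\ref{lem:real-pot-dec} (using the constraint $\lambda' \le \bar\lambda+1$), whereas you split off the ``truncation-dominated'' regime $\delta < \alpha_{\bar\lambda}$ and handle it with a short direct triangle-inequality argument; both routes are equally valid.
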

\begin{proof}
  Consider any $v \in \MN_{\Lambda}(\MT)$. We will assign the mass $P_w(v)$ to some tuple $(\MG_\lam^{v_\lam}, w_\lam^{v_\lam}) \in \MT_\lam$, for some $\lambda \in [\Lambda]$ which satisfies $\alpha_\lambda + \alpha_\Lambda \geq \Omega(\Terr{\MG_{1:\Lambda}^v, x,y})$, in the following manner. %
  Set $\bar \lam_v := \cutoff{\MG_{1:\Lambda}^v, x}$. Then 
by Lemma \ref{lem:real-pot-dec} with $\MF_{1:\Lam} = \MG_{1:\Lam}^v = (\MG_1^{v_1}, \ldots, \MG_\Lam^{v_\Lam})$, at least one of the below is the case:
  \begin{itemize}
  \item $|\soa{\MG_{1:\Lambda}^{v}}{x} - y | \leq \alpha_\Lambda$ and $\bar \lam_v = \Lambda$; or
  \item For some $\lambda' \leq \bar \lam_v + 1$, we have $|\soal{\MG_{\lambda'}^{v_{\lam'}}}{\alpha_{\lambda'}}{x} - y | > \alpha_{\lambda'} \geq | \soa{ \MG_{1:\Lambda}^{v} }{x} - y |/16$.
  \end{itemize}
  Set $\lambda'$ to be the value guaranteed by the second item above, in the event that it holds, and $\lambda' = \perp$ otherwise. 
  Then %
    $$
\alpha_\Lambda + \alpha_{\lambda'} \cdot \One[| \soal{\MG_{\lambda'}^{v_\lamp}}{\alpha_{\lambda'}}{x} - y | > \alpha_{\lambda'}] \geq \max \left\{\frac{1}{16} \cdot | \soa{ \MG_{1:\Lambda}^v }{x} - y |, \frac{\alpha_{\bar \lam_v}}{2} \right\} \geq \frac{1}{16}\cdot \Terr{\MG_{1:\Lambda}^v, x, y}.
$$
(Note that in the event $\lambda' = \perp$, we have that $\One[|\soal{\MG_\lamp^{v_\lamp}}{\alpha_\lamp}{x} - y| > \alpha_\lamp] = 0$, meaning that the left-hand side of the above expression is well-defined.)
For the element $v$, we now assign weight $P_w(v)$ to the tuple $(\MG_\lamp^{v_\lamp}, v_\lamp^{v_\lamp}) \in \MT_\lamp$, in the event that $\lamp \neq \perp$ (and do not assign the weight $P_w(v)$ to any tuple in the event that $\lamp = \perp$).

Note that total weight (taken over all $v \in \MN_\Lam(\MT)$) that could be assigned to any tuple $(\MG_\lam^{v_\lam}, w_\lam^{v_\lam}) \in \MT_\lam$, for any $\lam \in [\Lambda]$, is at most
$$
\sum_{u \in \MN_\Lambda(\MT) : u_\lam = v_\lam} P_w(u) = \frac{w_\lam^{v_\lam}}{\sum_{u_\lam=1}^{|\MT_\lam|} w_\lam^{u_\lam}}.
$$
Thus (\ref{eq:pwuv-inequality}) follows. 
\end{proof}

Lemma \ref{lem:terr-lb} shows that if some SOA hypothesis corresponding to a sequence has large error on a point $(x,y)$, then the SOA hypothesis for the sequence must have large truncated error on $(x,y)$. 
\begin{lemma}
  \label{lem:terr-lb}
Fix a sequence $\MF_{1:\Lambda}$ and a point $(x,y) \in \MX \times [0,1]$. If, for some $\lambda \in [\Lambda]$, $|\soal{\MF_\lam}{\alpha_\lam}{x} - y| > 5 \alpha_\lam$, then $\Terr{\MF_{1:\Lambda},x,y} > \alpha_\lam$.
\end{lemma}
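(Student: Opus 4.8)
Write $g_\mu := \soal{\MF_\mu}{\alpha_\mu}{x}$ for $\mu \in [\Lambda]$ and let $\bar\lambda := \cutoff{\MF_{1:\Lambda},x}$ be the cutoff point of the sequence $(g_1,\dots,g_\Lambda)$. By Definitions \ref{def:hagg} and \ref{def:soa-chains} we have $\soa{\MF_{1:\Lambda}}{x} = g_{\bar\lambda}$, so that $\Terr{\MF_{1:\Lambda},x,y} = \max\{\,|g_{\bar\lambda}-y|,\ \alpha_{\bar\lambda}\,\}$. The plan is simply to compare the index $\lambda$ from the hypothesis (for which $|g_\lambda - y| > 5\alpha_\lambda$) with $\bar\lambda$ and case accordingly.

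First I would dispose of the easy cases. If $\bar\lambda < \lambda$, then $\alpha_{\bar\lambda} > \alpha_\lambda$, so $\Terr{\MF_{1:\Lambda},x,y} \ge \alpha_{\bar\lambda} > \alpha_\lambda$ and we are done. If $\bar\lambda = \lambda$, then $\soa{\MF_{1:\Lambda}}{x} = g_\lambda$, so $\Terr{\MF_{1:\Lambda},x,y} \ge |g_\lambda - y| > 5\alpha_\lambda > \alpha_\lambda$. The substantive case is $\bar\lambda > \lambda$, where I would invoke the defining property of the cutoff point: since $\lambda \ge 1$, every index $\lambda'$ with $\lambda+1 \le \lambda' \le \bar\lambda$ lies in the range $2 \le \lambda' \le \bar\lambda$, so $|g_{\lambda'} - g_{\lambda'-1}| \le 2\alpha_{\lambda'-1}$ for all such $\lambda'$. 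Telescoping and bounding by a geometric series gives
\[
|g_{\bar\lambda} - g_\lambda| \le \sum_{\lambda'=\lambda+1}^{\bar\lambda} 2\alpha_{\lambda'-1} = 2\sum_{\mu=\lambda}^{\bar\lambda-1} 2^{-\mu} < 4\alpha_\lambda,
\]
and then the triangle inequality yields $|g_{\bar\lambda} - y| \ge |g_\lambda - y| - |g_{\bar\lambda}-g_\lambda| > 5\alpha_\lambda - 4\alpha_\lambda = \alpha_\lambda$, whence $\Terr{\MF_{1:\Lambda},x,y} \ge |g_{\bar\lambda}-y| > \alpha_\lambda$.

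There is no real obstacle here; the proof is a two-line computation once the case split is set up. The only points needing care are (i) checking that the inequality $|g_{\lambda'}-g_{\lambda'-1}| \le 2\alpha_{\lambda'-1}$ from Definition \ref{def:hagg} is genuinely available for the whole range $\lambda+1 \le \lambda' \le \bar\lambda$ — including the degenerate branch $\bar\lambda = \Lambda$, in which no ``jump'' occurs and the bound holds for all $2 \le \lambda' \le \Lambda$ — and (ii) tracking the constant: the $4\alpha_\lambda$ drift accumulated along the chain from level $\lambda$ to level $\bar\lambda$ is exactly what forces the hypothesis to demand a separation of $5\alpha_\lambda$, leaving a strict surplus of $\alpha_\lambda$.
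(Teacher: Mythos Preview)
Your proof is correct and follows essentially the same approach as the paper: split on the position of the cutoff $\bar\lambda$ relative to $\lambda$, use $\Terr \ge \alpha_{\bar\lambda}$ when $\bar\lambda \le \lambda$, and telescope the chain bound $|g_{\lambda'}-g_{\lambda'-1}|\le 2\alpha_{\lambda'-1}$ to control $|g_{\bar\lambda}-g_\lambda|$ when $\bar\lambda > \lambda$. Your treatment is in fact slightly more careful than the paper's, which lumps $\bar\lambda \le \lambda$ into a single sentence via $\Terr \ge \alpha_{\bar\lambda}$ (which only gives a non-strict inequality at $\bar\lambda = \lambda$), whereas you correctly isolate $\bar\lambda = \lambda$ and obtain the strict bound from $|g_\lambda - y| > 5\alpha_\lambda$.
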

\begin{proof}
  Fix $\lam \in [\Lambda]$ so that $|\soal{\MF_\lam}{\alpha_\lam}{x} - y| > 5 \alpha_\lam$, and set $\bar \lam := \cutoff{\MF_{1:\Lambda}, x}$. Since $\Terr{\MF_{1:\Lambda},x,y} \geq \alpha_{\bar \lam}$, the lemma clearly holds if $\bar \lam \leq \lam$. Otherwise, we have that for all $\lam'$ satisfying $\lam \leq \lam' < \bar \lam$, $|\soal{\MF_{\lam'}}{\alpha_{\lam'}}{x} - \soal{\MF_{\lam'+1}}{\alpha_{\lam'+1}}{x}| \leq 2 \alpha_{\lam'}$. Therefore
  \begin{align}
    | \soa{\MF_{1:\Lambda}}{x} - \soal{\MF_\lam}{\alpha_\lam}{x}| \leq  \sum_{\lam'=\lam}^{\bar \lam-1} |\soal{\MF_{\lam'}}{\alpha_{\lam'}}{x} - \soal{\MF_{\lam'+1}}{\alpha_{\lam'+1}}{x}|
    \leq  \sum_{\lam'=\lam}^{\bar \lam-1} 2 \alpha_{\lam'}
    \leq 4 \alpha_\lam\nonumber,
  \end{align}
  and it follows that $\Terr{\MF_{1:\Lambda},x,y} \geq |y - \soa{\MF_{1:\Lambda}}{x}| \geq \alpha_\lam$. 
\end{proof}
Notice that Lemma \ref{lem:terr-lb} is not true if the truncated error $\Terr{\MF_{1:\Lam}, x,y}$ is replaced with the absolute loss $|y - \soa{\MF_{1:\Lam}}{x}|$: it could be the case that for the given $\lambda$ in the lemma statement, the cutoff point $\cutoff{\MF_{1:\Lam}, x}$ is much smaller than $\lam$ but $\soa{\MF_{1:\Lam}}{x} = \soal{\MF_{\bar \lam}}{\alpha_{\bar \lam}}{x}$ happens to be very close to $y$. 

The next lemma shows that if a weighted subclass collection $\MT = \wsc{\MG}{w}$ is ``nearly unanimous'' about the label $y$ of a point $x$ (in the sense of Highvote, defined in (\ref{eq:def-highvote})), then most of the individual (single-scale) hypotheses $\soalf{\MG_\lam^{v_\lam}}{\alpha_\lam}$ from $\MT$ must predict $(x,y)$ approximately correctly. It may be seen as a sort of converse to Lemma \ref{lem:qdist-lb}, which shows that if the truncated error $\Terr{\MT, x,y}$ is small, then many of the SOA hypotheses at individual scales must be inaccurate on $(x,y)$. 
  \begin{lemma}
    \label{lem:indic-terr-lb}
    Consider any weighted subclass collection $\MT = \wsc{\MG}{w}$, and 
  $(x,y) \in \MX \times [0,1]$. If $(x,y) \in \Highvote{\MT, \alpha}$ for some $\alpha \geq 0$, then for all $\lam\in [\Lam]$, %
  \begin{equation}
    \label{eq:indic-terr-lb}
 \frac{\sum_{v_\lambda = 1}^{|\MT_\lambda|} w_\lam^{v_\lam} \cdot\One[|\soal{\MG_\lambda^{v_\lambda}}{\alpha_\lambda}{x} - y| > 5\alpha_\lambda] }{\sum_{u_\lambda=1}^{|\MT_\lambda|} w_\lam^{u_\lam}} \leq \frac{\alpha}{\alpha_\lam}.
  \end{equation}
\end{lemma}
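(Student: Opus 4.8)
The plan is to unpack the definition of $\Highvote$ and then reduce the claim to the pointwise lower bound on truncated error supplied by Lemma \ref{lem:terr-lb}. First I would note that, since $(x,y) \in \Highvote{\MT, \alpha}$, the definition (\ref{eq:def-highvote}) forces $y = \Vote{\MT}{x}$ and $\Terr{\MT, x, y} \leq \alpha$; by the definition of $\Terr{\MT, x, y}$ this means
$$
\sum_{v \in \MN_\Lambda(\MT)} P_w(v) \cdot \Terr{\MG_{1:\Lambda}^v, x, y} \leq \alpha .
$$

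Next, I would fix $\lambda \in [\Lambda]$ and set $B_\lambda := \{ v_\lambda \in [|\MT_\lambda|] : |\soal{\MG_\lambda^{v_\lambda}}{\alpha_\lambda}{x} - y| > 5\alpha_\lambda \}$ to be the ``bad'' indices at scale $\lambda$. For any $v \in \MN_\Lambda(\MT)$ whose $\lambda$-th coordinate lies in $B_\lambda$, the component $\MG_\lambda^{v_\lambda}$ of the sequence $\MG_{1:\Lambda}^v$ satisfies the hypothesis of Lemma \ref{lem:terr-lb}, which then yields $\Terr{\MG_{1:\Lambda}^v, x, y} > \alpha_\lambda$. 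Restricting the sum in the previous display to such $v$ and applying this pointwise bound gives
$$
\alpha \ \geq\ \sum_{v \in \MN_\Lambda(\MT) : v_\lambda \in B_\lambda} P_w(v) \cdot \Terr{\MG_{1:\Lambda}^v, x, y} \ \geq\ \alpha_\lambda \cdot \sum_{v \in \MN_\Lambda(\MT) : v_\lambda \in B_\lambda} P_w(v) .
$$
Finally I would rewrite the last sum using the marginalization identity for the product distribution $P_w$ — exactly the identity $\sum_{u \in \MN_\Lambda(\MT) : u_\lambda = v_\lambda} P_w(u) = w_\lambda^{v_\lambda} / \sum_{u_\lambda=1}^{|\MT_\lambda|} w_\lambda^{u_\lambda}$ already used in the proof of Lemma \ref{lem:qdist-lb} — to get $\sum_{v : v_\lambda \in B_\lambda} P_w(v) = \left(\sum_{v_\lambda \in B_\lambda} w_\lambda^{v_\lambda}\right) \big/ \sum_{u_\lambda} w_\lambda^{u_\lambda}$. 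Dividing through by $\alpha_\lambda$ produces (\ref{eq:indic-terr-lb}), and since $\lambda$ was arbitrary this holds for all $\lambda \in [\Lambda]$.

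There is no serious obstacle here; the argument is short. The only points requiring care are: matching the constant $5$ appearing in the indicator of (\ref{eq:indic-terr-lb}) with the threshold $5\alpha_\lambda$ in the hypothesis of Lemma \ref{lem:terr-lb} (this is precisely why that lemma is stated with that constant), and verifying that summing $P_w$ over all $v$ with a fixed $\lambda$-coordinate collapses to the correct $\lambda$-marginal $w_\lambda^{v_\lambda}/\sum_{u_\lambda} w_\lambda^{u_\lambda}$, which is immediate from the product form of $P_w(v)$.
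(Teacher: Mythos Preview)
Your proposal is correct and follows essentially the same approach as the paper's proof: both invoke Lemma~\ref{lem:terr-lb} to show that any $v$ whose $\lambda$-coordinate is ``bad'' contributes $\Terr{\MG_{1:\Lambda}^v,x,y} > \alpha_\lambda$, and then use the product form of $P_w$ to collapse to the $\lambda$-marginal. The paper phrases the final step as Markov's inequality (and separately disposes of the trivial case $\alpha \geq \alpha_\lambda$), whereas you bound the restricted sum directly, but the computations are identical.
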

\begin{proof}
The conclusion of the lemma is immediate if $\alpha \geq \alpha_\lam$, so we may assume from here on that $\alpha < \alpha_\lam$. %
  
  By Lemma \ref{lem:terr-lb}, for any $v \in \MN_\Lambda(\MT)$ and $\lam \in [\Lambda]$ for which  $| \soal{\MG_\lam^{v_\lam}}{\alpha_\lam}{x} - y | > 5 \alpha_\lam$, it holds that $\Terr{\MG_{1:\Lambda}^v, x, y} > \alpha_\lam$. Thus, for any tuple $(\MG_\lam^{v_\lam}, w_\lam^{v_\lam}) \in \MT_\lam$ for which $| \soal{\MG_\lam^{v_\lam}}{\alpha_\lam}{x} - y | > 5\alpha_\lam$ there is a set $\MS_{v_\lam} \subset \MN_\Lambda(\MT)$ (namely, the set of all $u$ for which $u_\lam = v_\lam$) so that for all $u \in \MS_{v_\lam}$, $\Terr{\MG_{1:\Lambda}^u, x,y} > \alpha_\lambda$ and so that $\sum_{u \in \MS_{v_\lam}} P_w(u) = \frac{w_\lam^{v_\lam}}{\sum_{u_\lam=1}^{|\MT_\lam|} w_\lam^{u_\lam}}$.

  That $(x,y) \in \Highvote{\MT, \alpha}$ means that $\sum_{v \in \MN_\Lambda(\MT)} P_w(v) \cdot \Terr{\MG_{1:\Lambda}^v, x, y} \leq \alpha$.  By Markov's inequality, for any $\lam$ for which $\alpha_\lam > \alpha$,
  $$
  \sum_{v \in \MN_\Lambda(\MT)} P_w(v) \cdot \One[\Terr{\MG_{1:\Lam}^v, x,y} > \alpha_\lam] \leq \frac{ \alpha}{\alpha_\lam}.
  $$
  Thus, the mass (under $P_w(\cdot)$) of the union of all the sets $\MS_{v_\lam}$ is at most $\frac{\alpha}{\alpha_\lam}$. 
  Since the sets $\MS_{v_\lam}$ are pairwise disjoint, (\ref{eq:indic-terr-lb}) follows.%
\end{proof}

\begin{lemma}
  \label{lem:find-lam}
Suppose $P$ is a distribution supported on $[0,1]$ so that $\E_{Z \sim P}[Z] > \alpha$ for some $\alpha \in [2\alpha_\Lam,1]$. Then there is some $\lam \in [\Lam]$ so that $\E_{Z \sim P}[\One[Z > \alpha_\lam]] > \frac{\alpha}{4\Lam \alpha_\lam}$. 
\end{lemma}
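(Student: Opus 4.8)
The plan is to decompose $\E_{Z\sim P}[Z]$ dyadically according to the scales $\alpha_\lam = 2^{-\lam}$ and then extract the desired index $\lam$ by an averaging (pigeonhole) argument.

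First I would observe that the intervals $(\alpha_\lam, \alpha_{\lam-1}]$ for $\lam \in [\Lam]$, together with the interval $[0, \alpha_\Lam]$, partition $[0,1]$ (here $\alpha_0 = 1$). Hence
$$\E_{Z\sim P}[Z] = \E[Z\cdot\One[Z\le\alpha_\Lam]] + \sum_{\lam=1}^{\Lam}\E[Z\cdot\One[\alpha_\lam < Z\le\alpha_{\lam-1}]].$$
The first term on the right is at most $\alpha_\Lam$, which is at most $\alpha/2$ by the hypothesis $\alpha\ge 2\alpha_\Lam$; combined with $\E_{Z\sim P}[Z]>\alpha$ this gives $\sum_{\lam=1}^{\Lam}\E[Z\cdot\One[\alpha_\lam < Z\le\alpha_{\lam-1}]] > \alpha/2$. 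Since this is a sum of $\Lam$ nonnegative terms, there is some $\lam\in[\Lam]$ for which $\E[Z\cdot\One[\alpha_\lam < Z\le\alpha_{\lam-1}]] > \frac{\alpha}{2\Lam}$.

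Finally, for this choice of $\lam$, on the event $\{\alpha_\lam < Z\le\alpha_{\lam-1}\}$ we have $Z\le\alpha_{\lam-1}=2\alpha_\lam$, so
$$\frac{\alpha}{2\Lam} < \E[Z\cdot\One[\alpha_\lam < Z\le\alpha_{\lam-1}]] \le 2\alpha_\lam\cdot\E[\One[\alpha_\lam < Z\le\alpha_{\lam-1}]] \le 2\alpha_\lam\cdot\E[\One[Z>\alpha_\lam]],$$
and dividing through by $2\alpha_\lam$ yields $\E_{Z\sim P}[\One[Z>\alpha_\lam]] > \frac{\alpha}{4\Lam\alpha_\lam}$, as claimed. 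There is no substantive obstacle in this argument; the only point that needs any care is the contribution of the bottom bucket $[0,\alpha_\Lam]$, where $Z$ may be positive but tiny, and it is precisely to absorb this contribution that the hypothesis $\alpha\ge2\alpha_\Lam$ is used.
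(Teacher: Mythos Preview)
Your proof is correct and uses essentially the same approach as the paper: the same dyadic decomposition of $[0,1]$ into $[0,\alpha_\Lam]$ and the intervals $(\alpha_\lam,\alpha_{\lam-1}]$, the same bound $Z\le 2\alpha_\lam$ on each bucket, and the same use of $\alpha\ge 2\alpha_\Lam$ to absorb the bottom bucket. The only cosmetic difference is that the paper argues by contradiction (assuming all tail probabilities are small and deriving $\E[Z]\le\alpha$), whereas you run the same inequalities directly and apply pigeonhole.
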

\begin{proof}
  Suppose for the purpose of contradiction that for all $\lam \in [\Lam]$, $\E[\One[Z > \alpha_\lam]] \leq \frac{\alpha}{4\Lam \alpha_\lam}$. Then
  \begin{align}
    \E[Z] \leq & \sum_{\lam=1}^\Lam \p[\alpha_\lam < Z \leq \alpha_{\lam-1}] \cdot \alpha_{\lam-1} + \alpha_\Lam\nonumber\\
    \leq & \alpha_\Lam + \sum_{\lam=1}^\Lam \frac{\alpha \alpha_{\lam-1}}{4\Lam \alpha_\lam}\nonumber\\
    \leq & \alpha_\Lam + \alpha/2\leq \alpha \nonumber,
  \end{align}
  a contradiction.
\end{proof}

Lemma \ref{lem:perturb-hagg} shows that if two length-$\Lambda$ sequences of real numbers have their first $\lam_0$ positions equal, then their hierarchical aggregations (Definition \ref{def:hagg}) differ by only $O(\alpha_{\lam_0})$.
\begin{lemma}
  \label{lem:perturb-hagg}
Suppose $\Lam \in \BN$, and $g_1, \ldots, g_\Lam, g_1', \ldots, g_\Lam' \in [0,1]$. Suppose $\lam_0 \leq \Lam$ is such that for all $\lam \leq \lam_0$, $g_\lam = g_\lam'$. Then $|\hagg{g_1, \ldots, g_\Lam} - \hagg{g_1', \ldots, g_\Lam'}| \leq 8\alpha_{\lam_0}$. 
\end{lemma}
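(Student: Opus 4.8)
The plan is to prove, for an arbitrary length-$\Lam$ sequence $(g_1,\ldots,g_\Lam)$, the bound $|\hagg{g_1,\ldots,g_\Lam}-g_m|\le 4\alpha_{\lam_0}$, where $m:=\min\{\cutoff{g_1,\ldots,g_\Lam},\lam_0\}$, together with the fact that $m$ and the value $g_m$ depend only on the first $\lam_0$ coordinates of the sequence; combining these with the triangle inequality then yields $|\hagg{g_1,\ldots,g_\Lam}-\hagg{g_1',\ldots,g_\Lam'}|\le 8\alpha_{\lam_0}$.

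First I would record how the cutoff index behaves locally. Write $\bar\lam:=\cutoff{g_1,\ldots,g_\Lam}$ and $\bar\lam':=\cutoff{g_1',\ldots,g_\Lam'}$; recall from Definition~\ref{def:hagg} that $\bar\lam$ is the least index $\lam$ at which a ``break'' $|g_\lam-g_{\lam+1}|>2\alpha_\lam$ occurs, or $\bar\lam=\Lam$ if no break occurs (and likewise for $\bar\lam'$). Since $g_\lam=g_\lam'$ for all $\lam\le\lam_0$, for every index $\lam\le\lam_0-1$ a break of $g$ at $\lam$ is equivalent to a break of $g'$ at $\lam$, since both entries $g_\lam,g_{\lam+1}$ are among the shared coordinates. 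Hence either both sequences have a break at some index $\le\lam_0-1$ — in which case their least such breaks coincide, so $\bar\lam=\bar\lam'\le\lam_0-1$ — or neither does, in which case $\bar\lam,\bar\lam'\ge\lam_0$. In both cases $\min\{\bar\lam,\lam_0\}=\min\{\bar\lam',\lam_0\}=:m$, and since $m\le\lam_0$ we get $g_m=g_m'$.

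Next I would bound $|\hagg{g_1,\ldots,g_\Lam}-g_m|$. If $\bar\lam\le\lam_0$ then $m=\bar\lam$ and $\hagg{g_1,\ldots,g_\Lam}=g_{\bar\lam}=g_m$ by definition, so this difference is $0$. If $\bar\lam>\lam_0$ then $m=\lam_0$; by definition of the cutoff, no break occurs at any index strictly below $\bar\lam$ (this covers both the case $\bar\lam<\Lam$ and the ``no break'' case $\bar\lam=\Lam$), so $|g_\lam-g_{\lam+1}|\le 2\alpha_\lam$ for every $\lam_0\le\lam\le\bar\lam-1$. Telescoping and summing a geometric series,
\[
|\hagg{g_1,\ldots,g_\Lam}-g_{\lam_0}|=|g_{\bar\lam}-g_{\lam_0}|\le\sum_{\lam=\lam_0}^{\bar\lam-1}2\alpha_\lam\le\sum_{\lam\ge\lam_0}2\cdot 2^{-\lam}=4\alpha_{\lam_0}.
\]
Applying this to both $g$ and $g'$ and invoking $g_m=g_m'$ from the previous step gives $|\hagg{g_1,\ldots,g_\Lam}-\hagg{g_1',\ldots,g_\Lam'}|\le 4\alpha_{\lam_0}+0+4\alpha_{\lam_0}=8\alpha_{\lam_0}$.

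I do not expect a genuine obstacle here; the only points requiring care are the edge cases of the cutoff definition — namely the case $\bar\lam=\Lam$ coming from ``no break'', and the split in the first step according to whether the first break of the sequences lies before or after position $\lam_0$. The constant $8$ is precisely the cost of applying the estimate $\sum_{\lam\ge\lam_0}2\cdot 2^{-\lam}=4\alpha_{\lam_0}$ once for each of the two sequences.
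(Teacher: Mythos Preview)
Your proof is correct and follows essentially the same approach as the paper: both arguments reduce to the case $\lam_0\le\bar\lam,\bar\lam'$ and then telescope $|g_{\bar\lam}-g_{\lam_0}|\le\sum_{\lam\ge\lam_0}2\alpha_\lam\le 4\alpha_{\lam_0}$ for each sequence. Your organization via $m=\min\{\bar\lam,\lam_0\}$ and the direct observation that breaks at indices $\le\lam_0-1$ are shared is slightly cleaner than the paper's case split on $\bar\lam$ versus $\bar\lam'$, but the substance is identical.
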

\begin{proof}
  Set  $\bar \lam := \cutoff{g_1, \ldots, g_\Lam}$ and $\bar \lam' := \cutoff{g_1', \ldots, g_\Lam'}$. 
  In particular, we have $\hagg{g_1, \ldots, g_\Lam} = g_{\bar \lam}$ and $\hagg{g_1', \ldots, g_\Lam'} = g_{\bar \lam'}'$.

  By symmetry, we may assume without loss of generality that $\bar \lam \geq \bar \lam'$. If $\bar \lam = \bar \lam' \leq \lam_0$, then $\hagg{g_1, \ldots, g_\Lam} = g_{\bar \lam} = g_{\bar \lam'} = \hagg{g_1', \ldots, g_\Lam'}$, and the claim of the lemma is immediate.  If $\bar \lam > \bar \lam'$, then $\bar \lam' < \Lam$ and 
  the definition of $\bar \lam, \bar \lam'$ gives that $|g_{\bar \lam'}' - g_{\bar \lam'+1}'| > 2 \alpha_{\bar \lam'} > 2 \alpha_{\bar \lam} \geq |g_{\bar \lam'} - g_{\bar \lam'+1}|$, and in particular $|g_{\bar \lam'}' - g_{\bar \lam'+1}'| \neq |g_{\bar \lam'} - g_{\bar \lam'+1}|$. Hence $\lam_0 \leq \bar \lam'$. Thus, from here on, we may assume that $\lam_0 \leq \bar \lam' \leq \bar \lam$. 

  Now note that
  \begin{align}
|g_{\bar \lam} - g_{\lam_0}| \leq \sum_{\lam=\lam_0}^{\bar \lam-1} |g_\lam - g_{\lam+1}| \leq \sum_{\lam=\lam_0}^{\bar \lam-1} 2 \alpha_\lam \leq 4 \alpha_{\lam_0} \nonumber
  \end{align}
  and
  \begin{align}
|g_{\bar \lam'}' - g_{\lam_0}'| \leq \sum_{\lam=\lam_0}^{\bar \lam'-1} |g_\lam' - g_{\lam+1}'| \leq \sum_{\lam=\lam_0}^{\bar \lam'-1} 2 \alpha_\lam \leq 4 \alpha_{\lam_0} \nonumber.
  \end{align}
  Using that $g_{\lam_0} = g_{\lam_0}'$, we get that $|\hagg{g_1, \ldots, g_\Lam} - \hagg{g_1', \ldots, g_\Lam'}| = |g_{\bar \lam} - g_{\bar \lam'}'| \leq 8 \alpha_{\lam_0}$. 
\end{proof}

  For multisets $\MS, \MS'$ of tuples of the form $(\MG^i, w^i)$ for $\MG^i \subset \MF$, $w^i \geq 0$, define
\begin{equation}
  \label{eq:ttprime-tvd}
\tvd{\MS}{\MS'} = \frac 12 \sum_{\MH \subset \MF} \left| \frac{\sum_{(\MG^i, w^i) \in \MS} w^i \cdot \One[\MG^i = \MH]}{\sum_{(\MG^i, w^i) \in \MS} w^i} -   \frac{\sum_{(\MG^{i\prime}, w^{i\prime}) \in \MS'} w^{i\prime} \cdot \One[\MG^{i\prime} = \MH]}{\sum_{(\MG^{i\prime}, w^{i\prime}) \in \MS'} w^{i\prime}}\right|.
\end{equation}
Note that $\tvd{\MS}{\MS'}$ is the total variation distance between the distributions on subclasses of $\MF$ induced by the weights $w_i$ and $w_i'$. 

Lemma \ref{lem:single-level-wchange} shows a sensitivity-type result for the voting rule $\Vote{\MT}{\cdot}$ and for the truncated error $\Terr{\MT, \cdot}$: if two weighted subclass collections $\MT, \MT'$ are such that their components $\MT_\lam, \MT_\lam'$ are close in the sense of (\ref{eq:ttprime-tvd}) for each $\lam$, then the voting rules and truncated errors for $\MT, \MT'$ are close.
\begin{lemma}
  \label{lem:single-level-wchange}
  Fix any $\lam \in [\Lam]$ and consider weighted subclass collections $\MT = \wsc{\MG}{w}, \MT' = \wsc{\MG'}{w'}$. %
  Then for any $x \in \MX$,
  $$
|\Vote{\MT}{x} - \Vote{\MT'}{x}|  \leq 8 \cdot \sum_{\lam=1}^\Lam \alpha_\lam \cdot \tvd{\MT_\lam}{\MT_\lam'}.
$$
Moreover, for any pair $(x,y) \in \MX \times [0,1]$,
\begin{align}
\left|\Terr{\MT, x, y} - \Terr{\MT', x,y}\right| \leq 8  \cdot \sum_{\lam=1}^\Lam \alpha_\lam \cdot \tvd{\MT_\lam}{\MT_\lam'}\label{eq:terr-ttp-diffs}.
\end{align}
\end{lemma}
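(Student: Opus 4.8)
The plan is to exploit that $P_w$ (and likewise $P_{w'}$) is a \emph{product measure} over the $\Lam$ coordinates $\lam\in[\Lam]$: since $P_w(v)=\prod_{\lam=1}^\Lam w_\lam^{v_\lam}\big/\sum_{u}w_\lam^{u}$, the marginal of coordinate $\lam$, pushed forward along $v_\lam\mapsto\MG_\lam^{v_\lam}$, is exactly the distribution on subclasses of $\MF$ appearing in the definition (\ref{eq:ttprime-tvd}) of $\tvd{\MT_\lam}{\MT_\lam'}$. Moreover both quantities to be controlled are expectations under $P_w$ of functions of $v$ that factor through the tuple $(\MG_1^{v_1},\dots,\MG_\Lam^{v_\Lam})$: writing $g_\lam:=\soal{\MG_\lam^{v_\lam}}{\alpha_\lam}{x}$, we have $\Vote{\MT}{x}=\E_{v\sim P_w}[\hagg{g_1,\dots,g_\Lam}]$ and $\Terr{\MT,x,y}=\E_{v\sim P_w}[\Terr{\MG_{1:\Lam}^v,x,y}]$ (and the same with $w'$). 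So it suffices, for each of these two functions $F$ of the subclass tuple, to bound $|\E_{P_w}[F]-\E_{P_{w'}}[F]|$.

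The key input is a coordinatewise Lipschitz estimate on $F$. Fix $j\in[\Lam]$ and two subclass tuples that agree on coordinates $1,\dots,j-1$ and differ only in coordinate $j$. For $F=\hagg{g_1,\dots,g_\Lam}$, Lemma~\ref{lem:perturb-hagg} applied with $\lam_0=j-1$ says $F$ moves by at most $8\alpha_{j-1}$. For $F=\Terr{\MG_{1:\Lam}^v,x,y}=\max\{\,|\soa{\MG_{1:\Lam}^v}{x}-y|,\ \alpha_{\cutoff{\MG_{1:\Lam}^v,x}}\,\}$ I would combine the elementary bound $|\max\{a,b\}-\max\{a',b'\}|\le\max\{|a-a'|,|b-b'|\}$ with two facts contained in the proof of Lemma~\ref{lem:perturb-hagg}: first, $\soa{\MG_{1:\Lam}^v}{x}=\hagg{g_1,\dots,g_\Lam}$ moves by at most $8\alpha_{j-1}$; second, the cutoff points of the two tuples are either equal and $\le j-1$, or both $\ge j-1$, so the truncation levels $\alpha_{\cutoff{\cdot}}$ differ by at most $\alpha_{j-1}$. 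Hence changing coordinate $j$ alone perturbs $F$ by at most $8\alpha_{j-1}$ — and by $0$ when coordinate $j$'s subclass is unchanged, since then $g_j$ is unchanged.

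Finally I would telescope via a coupling. Let $\Pi$ be the product coupling of $P_w$ and $P_{w'}$ whose $\lam$-th factor is an optimal coupling of the two subclass marginals of coordinate $\lam$, so that under $\Pi$ the $j$-th subclasses of $v$ and $v'$ disagree with probability exactly $\tvd{\MT_j}{\MT_j'}$. For $(v,v')\sim\Pi$, write $F(v)-F(v')$ as a telescoping sum over $\Lam$ intermediate tuples, the $j$-th step replacing coordinate $j$ of $v$ by that of $v'$ while keeping coordinates $1,\dots,j-1$ equal to those of $v'$ and $j+1,\dots,\Lam$ equal to those of $v$; by the previous paragraph the $j$-th step contributes at most $8\alpha_{j-1}\cdot\One[\text{$j$-th subclasses of }v,v'\text{ differ}]$. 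Taking $\E_\Pi$ and using $|\E F(v)-\E F(v')|\le\E_\Pi|F(v)-F(v')|$ gives $|\E_{P_w}[F]-\E_{P_{w'}}[F]|\le 8\sum_{\lam=1}^\Lam\alpha_{\lam-1}\tvd{\MT_\lam}{\MT_\lam'}$, which is of the claimed form $O\big(\sum_\lam\alpha_\lam\tvd{\MT_\lam}{\MT_\lam'}\big)$ (the $\lam=1$ term is at most $\tvd{\MT_1}{\MT_1'}$ anyway since $F\in[0,1]$, and $\alpha_{\lam-1}=2\alpha_\lam$ otherwise); taking $F$ the aggregation map yields the bound on $|\Vote{\MT}{x}-\Vote{\MT'}{x}|$, and $F$ the truncated-error map yields (\ref{eq:terr-ttp-diffs}). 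I expect the only delicate point to be the $\Terr$ case: Lemma~\ref{lem:perturb-hagg} is stated only for $\hagg{\cdot}$, so one must separately verify (via the cutoff comparison inside its proof) that the truncation level $\alpha_{\cutoff{\cdot}}$ is itself stable under changing a single coordinate.
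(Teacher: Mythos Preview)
Your approach is essentially the paper's: both construct the product coupling $\til Q$ from optimal couplings of the per-level subclass distributions, and both feed the resulting prefix agreement into Lemma~\ref{lem:perturb-hagg} (and, for $\Terr$, the elementary $|\max\{a,b\}-\max\{a',b'\}|\le\max\{|a-a'|,|b-b'|\}$ together with the cutoff comparison from that lemma's proof). The only difference is cosmetic: rather than telescoping coordinate by coordinate, the paper bounds $|F(v)-F(v')|$ in one shot by $8\alpha_{\lam^\st}$ where $\lam^\st=\lam^\st(\MH_{1:\Lam},\MH'_{1:\Lam})$ is the first coordinate at which the two tuples disagree, and then uses that under $\til Q$ the event $\{\lam^\st=\lam\}$ has mass at most $\tvd{\MT_\lam}{\MT_\lam'}$; this is slightly cleaner than your telescope but otherwise identical in content. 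Your observation that Lemma~\ref{lem:perturb-hagg} with $\lam_0=j-1$ yields $8\alpha_{j-1}=16\alpha_j$ rather than $8\alpha_j$ is correct (and in fact the paper's own application has the same harmless factor-of-two slip, since $\lam_0=\lam^\st-1$ there as well).
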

\begin{proof}
  First we define finite-support distributions $Q_\lam, Q_\lam'$, for each $\lam \in [\Lam]$, over the set of subclasses of $\MF$, as follows: for $\lam \in [\Lam]$ and $\MH_\lam \subset \MF$, define
  \begin{align}
Q_\lam(\MH_\lam) :=  \frac{\sum_{v_\lam=1}^{|\MT_\lam|} w_\lam^{v_\lam} \cdot \One[{\MG_\lam^{v_\lam}} = \MH_\lam]}{\sum_{v_\lam=1}^{|\MT_\lam|} w_\lam^{v_\lam}}, \qquad Q_\lam'(\MH_\lam) :=  \frac{\sum_{v_\lam=1}^{|\MT_\lam'|} w_\lam^{v_\lam\prime} \cdot \One[\MG_\lam^{v_\lam\prime} = \MH_\lam]}{\sum_{v_\lam=1}^{|\MT_\lam'|} w_\lam^{v_\lam\prime}}\nonumber
  \end{align}
  By the definition (\ref{eq:ttprime-tvd}), there is a coupling between $Q_\lam, Q_\lam'$, which we denote as $\til Q_\lam((\MH_\lam, \MH_\lam'))$, so that
  \begin{equation}
    \label{eq:coupling-tl-tpl}
\sum_{(\MH_\lam, \MH_\lam')} \til Q_\lam((\MH_\lam, \MH_\lam')) \cdot \One[\MH_\lam \neq \MH_\lam'] =\tvd{\MT_\lam}{\MT_\lam'}.
\end{equation}
Next we define a distribution $\til Q$ over pairs of sequences $\MH_{1:\Lam}, \MH_{1:\Lam}'$ of subsets of $\MF$, as follows: 
make independent draws $(\MH_1, \MH_1') \sim \til Q_1, \ldots, (\MH_\Lam, \MH_\Lam') \sim \til Q_\Lam$. %
Now $\til Q$ is the distribution of the resulting pair $(\MH_{1:\Lam}, \MH_{1:\Lam}')$; this is equivalent (up to notational differences) to defining $\til Q$ as the product of the distributions $\til Q_1, \ldots, \til Q_\Lam$. Note that the marginal (under $\til Q$) of any sequence $\MH_{1:\Lam}$ is simply $\sum_{v \in \MN_\Lam(\MT) : \MG_{1:\Lam}^v = \MH_{1:\Lam}} P_w(v)$; an analogous statement holds for the marginal of any sequence $\MH_{1:\Lam}'$.
  
Given two sequences $\MG_{1:\Lam}, \MG'_{1:\Lam}$ of subsets of $\MF$, let $\lam^\st(\MG_{1:\Lam}, \MG'_{1:\Lam})$ denote the minimum value of $\lam$ so that $\MG_\lam \not\equiv \MG'_\lam$ (and set $\lam^\st(\MG_{1:\Lam}, \MG'_{1:\Lam})=\infty$ if such $\lam$ does not exist). Now we can compute, for any $x \in \MX$,
\begin{align}
  & |\Vote{\MT}{x} - \Vote{\MT'}{x}| \nonumber\\
  = & \left| \sum_{(\MH_{1:\Lam}, \MH_{1:\lam}')} \til Q(\MH_{1:\Lam}, \MH_{1:\Lam}') \cdot (\soa{\MH_{1:\Lam}}{x} - \soa{\MH_{1:\Lam}'}{x})\right|\nonumber\\
  \leq & \sum_{(\MH_{1:\Lam}, \MH_{1:\Lam}')} \til Q(\MH_{1:\Lam}, \MH_{1:\Lam}') \cdot |\soa{\MH_{1:\Lam}}{x} - \soa{\MH'_{1:\Lam}}{x}|\nonumber\\
  \leq & \sum_{(\MH_{1:\Lam}, \MH_{1:\Lam}')} \til Q(\MH_{1:\Lam}, \MH_{1:\Lam}') \cdot 8 \cdot \alpha_{\lam^\st(\MH_{1:\Lam}, \MH'_{1:\Lam})}\label{eq:use-lamst-perturb}\\
  \leq & 8 \sum_{\lam=1}^\Lam \tvd{\MT_\lam}{\MT'_\lam} \cdot \alpha_\lam\label{eq:tl-tpl-ub},
\end{align}
where (\ref{eq:use-lamst-perturb}) uses Lemma \ref{lem:perturb-hagg}, and (\ref{eq:tl-tpl-ub}) uses the fact that by (\ref{eq:coupling-tl-tpl}), the total mass under $\til Q$ of pairs $(\soaf{\MG_{1:\Lam}}, \soaf{\MG'_{1:\Lam}})$ for which $\lam^\st(\MG_{1:\Lam}, \MG'_{1:\Lam}) = \lam$ (and thus $\soalf{\MG_\lam}{\alpha_\lam} \not\equiv \soalf{\MG'_\lam}{\alpha_\lam}$), is at most $\tvd{\MT_\lam}{\MT'_\lam}$. 
To establish (\ref{eq:terr-ttp-diffs}), we note that
\begin{align}
  & \left|\Terr{\MT, x, y} - \Terr{\MT', x,y}\right| \nonumber\\
  = & \left| \sum_{(\MH_{1:\Lam}, \MH_{1:\Lam}')} \til Q(\MH_{1:\Lam}, \MH_{1:\Lam}') \cdot (\Terr{\MH_{1:\Lam},x,y} - \Terr{\MH_{1:\Lam}',x,y}) \right|\nonumber\\
  \leq & \sum_{(\MH_{1:\Lam}, \MH_{1:\Lam}')} \til Q(\MH_{1:\Lam}, \MH_{1:\Lam}') \cdot \left| \max\{ |\soa{\MH_{1:\Lam}}{x} - y|, \alpha_{\cutoff{\MH_{1:\Lam},x}} \} - \max\{|\soa{\MH'_{1:\Lam}}{x} - y|, \alpha_{\cutoff{\MH'_{1:\Lam},x}} \} \right|\nonumber\\
  \leq & \sum_{(\MH_{1:\Lam}, \MH_{1:\Lam}')} \til Q(\MH_{1:\Lam}, \MH_{1:\Lam}') \cdot \left(\max\{ |\soa{\MH_{1:\Lam}}{x} - \soa{\MH'_{1:\Lam}}{x}|, | \alpha_{\cutoff{\MH_{1:\Lam},x}}- \alpha_{\cutoff{\MH'_{1:\Lam},x}}|\} \right)\nonumber\\
  \leq & \sum_{(\MH_{1:\Lam}, \MH_{1:\Lam}')} \til Q(\MH_{1:\Lam}, \MH_{1:\Lam}')\cdot \max \{ 8 \cdot \alpha_{\lambda^\st(\MH_{1:\Lam}, \MH_{1:\Lam}')}, 2 \cdot \alpha_{\lambda^\st(\MH_{1:\Lam}, \MH_{1:\Lam}')} \} \nonumber\\
  \leq & 8 \sum_{\lam=1}^\Lam \tvd{\MT_\lam}{\MT_\lam'} \cdot \alpha_\lam\nonumber.
\end{align}
\end{proof}

\subsection{The proper learning algorithm}
\label{sec:hpl}
Our proper (and stable) learning algorithm, \HPL, is presented in Algorithm \ref{alg:hpl}. The algorithm maintains a weighted subclass collection $\MT$; for each $n \geq 1$, let $\MT^n = (\MT_1^n, \ldots, \MT_\Lam^n)$ denote this weighted subclass collection $\MT$ directly after round $n$ of the outer while loop. For each $n \geq 1$, let $t(n)$ denote the value of $t$ maintained by the algorithm at the beginning of round $n$. If the round number $n$ is clear, we will often drop the superscript $n$. The algorithm repeatedly performs the following process: at each round $n \geq 1$ of the outer while loop, it first checks if there is some randomized predictor $\bar f$ satisfying the condition in step \ref{it:ms-voting-2}. If so, then it sets $\bar f_t = \bar f$, i.e., it uses this predictor $\bar f$ to predict the next example $(x_t, y_t)$ (step \ref{it:predict-bar-ft}); note that there $t = t(n)$, so $(x_t, y_t) = (x_{t(n)}, y_{t(n)})$. %
The algorithm then uses $(x_t, y_t)$ to update $\MT$ in step \ref{it:receive-yt-2}, replacing various classes $\MG_\lam^{v_\lam}$ in the weighted subclass collection which \emph{incorrectly} predict $(x_t, y_t)$ with appropriate restrictions at the scale $\alpha_\lam$, and downweighting the corresponding weights $w_\lam^{v_\lam}$.

The more challenging case is when the condition in step \ref{it:ms-voting-2} is not satisfied; in this case, it turns out that by the minimax theorem (Lemma \ref{lem:apply-minmax}), we can find a dataset at each scale $\lam$ which satisfies a property (step \ref{it:f-high-err-2} of \HPL) which is roughly ``dual'' to the property of step \ref{it:ms-voting-2}. This property guarantees that we can perform further restrictions on the subclasses $\MG_\lam^{v_\lam}$ (and decrease the weights $w_\lam^{v_\lam}$ accordingly); it turns out that after a bounded number steps of doing so, the property in step \ref{it:ms-voting-2} will be satisfied, and we can process the next example $(x_{t+1}, y_{t+1})$.

We next introduce some further notation. For each $n \geq 1$, and $\lambda \in [\Lambda]$, let $W_{\lambda,n}$ denote the total of all weights in $\MT_\lambda$ after round $n$, i.e.,
$
W_{\lambda, n} =\sum_{(\MG_\lam^{v_\lam}, w_\lam^{v_\lam}) \in \MT_\lam^n} w_\lam^{v_\lam}.%
$
Further set $W_{\lambda,0} = 1$ for all $\lambda \in [\Lambda]$. The quantities $W_{\lam, n}$, for $\lam \in [\Lam]$, will be used as a potential function to track the progress of \HPL over rounds $n$.

Finally, given the class $\MF \subset [0,1]^\MX$, denote by $\MFabs\subset [0,1]^{\MX \times [0,1]}$ the ``absolute loss class'' of $\MF$, namely $\{ (x,y) \mapsto |f(x) - y| \ : \ f \in \MF \}$. We will also need to consider the dual class of $\MFabs$, denoted $\MFabs^\st$:
\begin{align}
\MFabs^\st := \left\{ h : \MF \ra [0,1]: \quad \exists (x,y) \in \MX \times [0,1], \mbox{ such that } h(f) = |f(x) - y| \ \forall f \in \MF \right\}\nonumber.
\end{align}
By Lemma \ref{lem:sfat-closure} with $k = 1$, $\MZ = \MX \times [0,1]$, and $\phi(a,(x,y)) = |a-y|$ (which is 1-Lipschitz), we have that $\sfat_\alpha(\MFabs) \leq O(\sfat_\alpha(\MF) \cdot \log(1/\alpha))$ for all $\alpha > 0$. Thus $\sfat_\alpha(\MFabs)<\infty$ for all $\alpha > 0$, meaning that (by Lemma \ref{lem:sfat-dual}), $\sfat_\alpha(\MFabs^\st) < \infty$ for all $\alpha > 0$. Let $\fat_\alpha(\cdot)$ denote the (non-sequential) fat-shattering dimension (see Section \ref{sec:misc} for the definition). Since $\sfat_\alpha(\MG) \geq \fat_\alpha(\MG)$ holds for any class $\MG$, we get that $\fat_\alpha(\MFabs) < \infty$ and $\fat_\alpha(\MFabs^\st) < \infty$ for all $\alpha > 0$. %

At some points in our proof we will need to use basic uniform convergence properties for the class $\MFabs, \MFabs^\st$; for this we make the following definitions:
\begin{align}
  V := \frac{10C_0 \cdot \fat_{c_0\alpha_\Lam/10}(\MFabs) \log(10/\alpha_\Lam)}{\alpha_\Lam^2}, \qquad & V^\st := \frac{10C_0\cdot  \fat_{c_0\alpha_\Lam/10}(\MFabs^\st) \log(10/\alpha_\Lam)}{\alpha_\Lam^2},\label{eq:define-v-vstar}\\ %
  m_\lambda :=  \left\lceil \frac{C_1 V}{\alpha_\lambda}\right\rceil &  \quad \forall \lam \in [\Lam]\label{eq:define-m-lam}, %
\end{align}
where $C_0, c_0$ are the constants of Theorem \ref{thm:unif-conv}, and $C_1 > 0$ is a sufficiently large constant. As the parameters $V,V^\st$ will not show up in our rates for regret, we do not attempt to optimize their dependence on any of the relevant parameters. %

Finally, set
\begin{align}
  \mu_0 = \min_{\lam \in [\Lam]} \alpha_\lam m_\lam, \qquad \mu_1 = \max_{\lam \in [\Lam]} \alpha_\lam m_\lam.\label{eq:define-mus}
\end{align}
As long as the constant $C_1$ is sufficiently large, we have that $\mu_1 / \mu_0 \leq 2$.

Recall from Algorithm \ref{alg:hpl} that we set $\hvconst = \hvval$. The below lemma uses the minimax theorem to show that if the condition in step \ref{it:ms-voting-2} of \HPL fails at some step, then the condition in step \ref{it:f-high-err-2} does not fail; thus, this lemma establishes that the algorithm is well-defined, i.e., can run as claimed.
\begin{lemma}
  \label{lem:apply-minmax}
  Suppose the condition in step \ref{it:ms-voting-2} of \HPL (Algorithm \ref{alg:hpl}) fails at some time step, i.e., there is no $\bar f \in \Votef{\MF^{V^\st}}$ so that for each $\lambda \in [\Lamdim]$, $$\sup_{(x,y) \in \Highvote{\MT, \alpha_\lambda/\hvconst}} \E_{f \sim \bar f} \left[|f(x) - y|\right] \leq \alpha_\lambda.$$
Then for every $\lambda \in [\Lambda]$ there is a collection of tuples $(\til x_1^\lambda, \til y_1^\lambda), \ldots, (\til x_{m_{\lam}}^\lam, \til y_{m_{\lam}}^\lam) \in \Highvote{\MT, \alpha_{\lam}/\hvconst}$, so that for every $f \in \MF$, there are some $\lam \in [\Lambda],\ \lamp \in [\Lampdim]$ so that $\frac{1}{m_{\lam}} \sum_{j=1}^{m_{\lam}} \One[|f(\til x_j) - \til y_j| > \alpha_{\lam'}] > \frac{\alpha_\lam}{16 \Lam \alpha_{\lam'}}$.  
\end{lemma}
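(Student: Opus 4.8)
The plan is a minimax argument bracketed by two uniform-convergence ``discretization'' steps.

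\emph{Step 1 (strengthen the hypothesis to $\Delc(\MF)$).} The failure of step \ref{it:ms-voting-2} is stated over the finite class $\Votef{\MF^{V^\st}}$; we first upgrade it to: for every $\bar f \in \Delc(\MF)$ there is some $\lambda \in [\Lamdim]$ with $\sup_{(x,y) \in \Highvote{\MT, \alpha_\lambda/\hvconst}} \E_{f \sim \bar f}[|f(x)-y|] > \tfrac 78 \alpha_\lambda$. Indeed, given $\bar f \in \Delc(\MF)$, draw $V^\st$ i.i.d.\ samples from $\bar f$ to form $\bar f' \in \Votef{\MF^{V^\st}}$; by the uniform convergence bound (Theorem \ref{thm:unif-conv}) applied to the dual class $\MFabs^\st$ (recall $\fat_\alpha(\MFabs^\st) < \infty$), with positive probability $|\E_{f\sim\bar f'}[|f(x)-y|] - \E_{f\sim\bar f}[|f(x)-y|]| \le \alpha_\Lam/8$ for \emph{all} $(x,y)$ simultaneously --- this is precisely the regime for which $V^\st$ was chosen --- so applying the failure of step \ref{it:ms-voting-2} to $\bar f'$ and using $\alpha_\Lam \le \alpha_\lambda$ gives the claim.

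\emph{Step 2 (minimax).} Consider the zero-sum game whose minimizing player picks $\bar f \in \Delc(\MF)$, whose maximizing player picks a pair $(\lambda, (x,y))$ with $\lambda \in [\Lamdim]$ and $(x,y) \in \Highvote{\MT, \alpha_\lambda/\hvconst}$, and whose payoff is $\E_{f\sim\bar f}[|f(x)-y|]/\alpha_\lambda$. Step 1 says $\inf_{\bar f} \sup_{(\lambda,(x,y))} (\text{payoff}) \ge \tfrac 78$. Viewed as functions of the row player's choice $f$, the maximizing player's pure strategies form a finite union (over $\lambda \in [\Lamdim]$) of rescalings of $\MFabs^\st$, hence a class with finite sequential fat-shattering dimension at every scale; therefore the minimax theorem for online-learnable games (Section \ref{sec:minimax}) applies and yields a distribution $\rho$ over such pairs $(\lambda, (x,y))$ with $\E_{(\lambda,(x,y))\sim\rho}[|f(x)-y|/\alpha_\lambda] \ge \tfrac 34$ for every $f \in \MF$. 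Write $q_\lambda$ for the mass $\rho$ puts on scale $\lambda$ and $\rho_\lambda$ for the conditional law of $(x,y)$ given $\lambda$ (supported on $\Highvote{\MT,\alpha_\lambda/\hvconst}$), so $\sum_{\lambda \in [\Lamdim]} q_\lambda \cdot \E_{(x,y)\sim\rho_\lambda}[|f(x)-y|]/\alpha_\lambda \ge \tfrac 34$ for all $f$.

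\emph{Step 3 (first moment $\Rightarrow$ tail, then derandomize).} Fix $f$ and suppose, towards a contradiction, that $\Pr_{(x,y)\sim\rho_\lambda}[|f(x)-y| > \alpha_\lamp] \le \frac{\alpha_\lambda}{8\Lam\alpha_\lamp}$ for every $\lambda \in [\Lamdim]$ with $q_\lambda > 0$ and every $\lamp \in [\Lampdim]$. Bounding $\E_{\rho_\lambda}[|f(x)-y|]$ layer-by-layer over the dyadic scales $\alpha_0, \alpha_1, \ldots, \alpha_{\Lampdim}$, as in the proof of Lemma \ref{lem:find-lam}, gives $\E_{\rho_\lambda}[|f(x)-y|] \le \alpha_{\Lampdim} + \frac{\alpha_\lambda}{8\Lam}\cdot 2\Lam \le \tfrac18 \alpha_\lambda + \tfrac14 \alpha_\lambda$, using $\alpha_{\Lampdim} = 8\alpha_\Lam \le \alpha_\lambda/8$ (this is exactly why step \ref{it:ms-voting-2}'s budget runs only up to $\Lamdim$ while the tail scales run only up to $\Lampdim$). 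Dividing by $\alpha_\lambda$ and averaging against $q_\lambda$ gives $\E_\rho[|f(x)-y|/\alpha_\lambda] \le \tfrac 38 < \tfrac 34$, a contradiction. Hence for every $f$ there exist $\lambda \in [\Lamdim]$, $\lamp \in [\Lampdim]$ with $\Pr_{\rho_\lambda}[|f(x)-y| > \alpha_\lamp] > \frac{\alpha_\lambda}{8\Lam\alpha_\lamp}$. Finally, for each $\lambda$ with $q_\lambda>0$ draw $m_\lambda$ i.i.d.\ samples $(\til x_j^\lambda, \til y_j^\lambda)_{j \le m_\lambda}$ from $\rho_\lambda$ (for the remaining $\lambda$, which never serve as witnesses, take an arbitrary $m_\lambda$-tuple from $\Highvote{\MT,\alpha_\lambda/\hvconst}$). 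A uniform convergence bound for the threshold classes $\{(x,y)\mapsto \One[|f(x)-y|>\alpha_\lamp] : f \in \MF\}$ (whose complexity is governed by $\fat_{\alpha_\lamp}(\MFabs) \le \fat_{c_0\alpha_\Lam/10}(\MFabs)$), plus a union bound over the at most $\Lam^2$ pairs $(\lambda,\lamp)$, shows that with $m_\lambda = \lceil C_1 V/\alpha_\lambda\rceil$ and $C_1$ large enough there is a realization with $|\frac1{m_\lambda}\sum_j \One[|f(\til x_j^\lambda)-\til y_j^\lambda| > \alpha_\lamp] - \Pr_{\rho_\lambda}[|f(x)-y|>\alpha_\lamp]| \le \frac{\alpha_\lambda}{16\Lam\alpha_\lamp}$ simultaneously for all $f, \lambda, \lamp$. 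Combining with the displayed tail bound, for each $f$ the witnessing pair $(\lambda,\lamp)$ satisfies $\frac1{m_\lambda}\sum_j \One[|f(\til x_j^\lambda)-\til y_j^\lambda| > \alpha_\lamp] > \frac{\alpha_\lambda}{16\Lam\alpha_\lamp}$, as required.

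\emph{Main obstacle.} The crux is Step 2: realizing that --- even for binary-valued $\MF$ --- one must phrase the game with a genuinely real-valued payoff and invoke the paper's general minimax theorem for online-learnable games, which is where finiteness of $\sfat_\alpha(\MFabs^\st)$ at all scales gets used. The surrounding bookkeeping is also delicate: one has to thread the two uniform-convergence budgets ($V^\st$ for Step 1, and $V$, $m_\lambda$ for the derandomization) and keep the slack constants ($\tfrac78$, $\tfrac34$, $\tfrac38$, the factor $16\Lam$, and the truncations at $\Lamdim$ and $\Lampdim$) mutually consistent.
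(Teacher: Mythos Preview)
Your overall strategy --- upgrade from $\Votef{\MF^{V^\st}}$ to $\Delc(\MF)$ via uniform convergence on $\MFabs^\st$, apply the minimax theorem for online-learnable games to the rescaled payoff $|f(x)-y|/\alpha_\lambda$, condition on $\lambda$, and pass from expectation to a tail bound via a layer-cake argument as in Lemma~\ref{lem:find-lam} --- is exactly the paper's, and Steps~1--2 and the first half of Step~3 are correct with only cosmetic differences in constants.

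The genuine gap is in the final derandomization. You assert uniform convergence for the threshold classes $\{(x,y)\mapsto \One[|f(x)-y|>\alpha_\lamp] : f \in \MF\}$, claiming their complexity is ``governed by $\fat_{\alpha_\lamp}(\MFabs)$''. This is false in general: a real-valued class can have finite fat-shattering dimension at every scale while a single level set has unbounded VC dimension. For instance, on $\BN$ take $g_S(z) = \tfrac12 + 2^{-z}(2\cdot\One[z\in S]-1)$; then $\fat_\alpha(\{g_S\}) \leq O(\log(1/\alpha))$ for all $\alpha>0$, yet $\{\One[g_S>\tfrac12]\}_S$ shatters all of $\BN$. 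So nothing in the budget $V$ (which is defined through $\fat_{c_0\alpha_\Lam/10}(\MFabs)$) controls the sample complexity of your indicator deviations. The fix is simply to swap the order of the last two steps, as the paper does: first derandomize at the level of \emph{expectations} --- uniform convergence for the real-valued class $\MFabs$ (which is precisely what $V$ and $m_\lambda$ are calibrated for) yields, for each $\lambda$, a sample with $\frac{1}{m_\lambda}\sum_j |f(\til x_j^\lambda)-\til y_j^\lambda| > \alpha_\lambda/4$ uniformly over those $f$ with $\E_{\rho_\lambda}[|f(x)-y|] > \alpha_\lambda/3$ --- and only then apply Lemma~\ref{lem:find-lam} to the \emph{empirical} average to extract the witnessing $\lamp$. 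This avoids indicator classes entirely and makes the derandomization go through with the stated $m_\lambda$.
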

\begin{proof}
  Fix some weighted subclass collection $\MT$ so that there is no $\bar f \in \Votef{\MF^{V^\st}}$ so that for each $\lam \in [\Lamdim]$, $\sup_{(x,y) \in \Highvote{\MT, \alpha_\lam/C_\lam}} \E_{f \sim \bar f} \left[|f(x) - y|\right] \leq \alpha_\lam$. By Theorem \ref{thm:unif-conv} applied to the class $\MF^\st$ and by the definition of $V^\st$ in (\ref{eq:define-v-vstar}), for every finite support measure $Q$ on $\MF$, there is some $\lambda \in [\Lamdim]$ so that
  \begin{align}
    \sup_{(x,y) \in \Highvote{\MT, \alpha_\lambda/\hvconst}} \E_{f \sim Q} \left[|f(x) - y|\right] %
    >2\alpha_\lambda/3.\nonumber
  \end{align}
  For each $(x,y) \in \MX \times [0,1]$, let $\alpha(x,y)$ be the smallest value of $\alpha_\lambda$ (for $\lam \leq \Lamdim$) for which $(x,y) \in \Highvote{\MT, \alpha_\lambda/\hvconst}$.\footnote{Note that it could be the case that $\alpha(x,y) \geq 1$, i.e., $\lam \leq 0$.} Hence for every finite support measure $Q$ on $\MF$, there is some $(x,y) \in \MX \times [0,1]$ so that $\E_{f \sim Q}[|f(x) - y|] > \alpha(x,y)/2$. 
  Now consider the function class %
  $\MG \subset \BR^\MF$, defined by
  \begin{align}
\MG := \left\{ f \mapsto \frac{|f(x) - y|}{\alpha(x,y)} \ : \ (x,y) \in \MX \times [0,1] \right\}\nonumber.
  \end{align}
  Note that there are a finite number of possible values of $\alpha(x,y)$, namely $\alpha_\lam$ for $\Lam-6 \geq \lam \geq -O(\log(C_\Lam))$. %
  For each such possible value of $\lam$, define
  \begin{align}
\MG_\lam := \left\{ f \mapsto \frac{|f(x) - y|}{\alpha_\lam} \ : \ \alpha(x,y) = \alpha_\lam \right\}.\nonumber
  \end{align}
  Note that $\MG = \bigcup_{\Lam-6 \geq \lam \geq -O(\log C_\Lam)} \MG_\lam$. Further, since each $\MG_\lam$ is simply a subclass of $\MFabs^\st$ scaled by $1/\alpha_\lam$, it holds that $\sfat_\alpha(\MG_\lam)  \leq \sfat_{\alpha\alpha_\lam}(\MFabs^\st) < \infty$ for all $\alpha > 0$. Thus, by Corollary \ref{cor:closure-union}, $\sfat_\alpha(\MG) < \infty$ for all $\alpha > 0$. %
  Then by Theorem \ref{thm:online-minimax}, 
  \begin{align}
    1/2 \leq & \inf_{Q \in \Delta^\circ(\MF)} \sup_{P \in \Delta^\circ(\MX \times [0,1])} \E_{f \sim Q, (x,y) \sim P} \left[ \frac{|f(x) - y|}{\alpha(x,y)}\right]\nonumber\\
    =& \sup_{P \in \Delta^\circ(\MX \times [0,1])}  \inf_{Q \in \Delta^\circ(\MF)} \E_{f \sim Q, (x,y) \sim P} \left[ \frac{|f(x) - y|}{\alpha(x,y)}\right]\nonumber.
  \end{align}
  Thus we may find a finite-support measure $P^\st \in \Delta^\circ(\MX \times [0,1])$ so that for every $f \in \MF$,
  \begin{equation}
    \label{eq:allf-highgamma-2}
    \E_{(x,y) \sim P^\st} \left[ \frac{|f(x) - y|}{\alpha(x,y)} \right] > 1/3.
    \end{equation}

    For each $\lambda$ satisfying $2 \leq \lam \leq \Lamdim$, let $P_\lambda^\st \in \Delta^\circ(\MX \times [0,1])$ be the distribution of $(x,y) \sim P^\st$, conditioned on $\alpha(x,y) = \alpha_\lambda$. Further, for the case $\lam = 1$, let $P_1^\st$ be the distribution of $(x,y) \sim P^\st$, conditioned on $\alpha(x,y) \leq \alpha_1$. (If $P^\st\{(x,y) : \alpha(x,y) = \alpha_\lambda\} = 0$ for some $\lambda$, then let $P_\lambda^\st$ be an arbitrary finite-support distribution on $\MX \times [0,1]$.) By Theorem \ref{thm:unif-conv} with the function class given by $\MFabs$ and by definition of $V$ in (\ref{eq:define-v-vstar}) and of $m_\lam$ in (\ref{eq:define-m-lam}) for each $\lam$, we have the following: for each $\lambda \in [\Lamdim]$, there is a dataset $S^\lambda:= \{(\tilde x_1^\lambda, y_1^\lambda), \ldots, (\tilde x_{m_\lambda}^\lambda, \tilde y_{m_\lambda}^\lambda) \}$ of size $m_\lambda$ %
    so that for any $f \in \MF$ satisfying $\E_{(x,y) \sim P_\lambda^\st}[|f(x) - y|] > \alpha_\lambda/3$, we have $\frac{1}{m_\lambda} \sum_{j=1}^{m_\lambda} |f(\tilde x_j^\lambda) - \tilde y_j^\lambda| > \alpha_\lambda/4$.
    
    But by (\ref{eq:allf-highgamma-2}), we see that for every $f \in \MF$, there is some $\lambda \in [\Lamdim]$ so that $\E_{(x,y) \sim P_\lambda^\st} \left[|f(x) - y|\right] > \alpha_\lambda/3$. Hence, for any $f \in \MF$, there is some $\lambda \in [\Lamdim]$ so that $\frac{1}{m_\lambda} \sum_{j=1}^{m_\lambda} |f(\tilde x_j^\lambda) - \tilde y_j^\lambda| > \alpha_\lambda/4$, %
    which implies, by Lemma \ref{lem:find-lam} with the value of $\Lam$ set to $\Lampdim$ (using that $\alpha_\lam/4 \geq 2\alpha_\Lampdim$ since $\lam \leq \Lamdim$), that for some $\lam' \in [\Lampdim]$, $\frac{1}{m_\lam} \sum_{j=1}^{m_\lam} \One[|f(\til x_j^\lam) - \til y_j^\lam| > \alpha_{\lam'}] > \frac{\alpha_\lam}{16 \Lam \alpha_{\lam'}}$. (Here we have used that $| f(\til x_j^\lam) - \til y_j^\lam| \leq 1$ for all $j, \lam$.)
  \end{proof}

  \begin{algorithm}
      \caption{\bf \HPL}\label{alg:hpl}
  \KwIn{Function class $\MF \subset [0,1]^\MX$, time horizon $T \in \BN$, scale parameter $\Lambda \in \BN$, constants $C_1, C_2 > 0$, $\hvconst = \hvval$.} 
  \begin{enumerate}[leftmargin=14pt,rightmargin=20pt,itemsep=1pt,topsep=1.5pt]
\item For $\lam \in [\Lam]$, initialize $\MT = \wsc{\MG}{w}$ to be the weighted subclass collection with $\MT_\lam = \{ (\MF, 1)\}$ for each $\lam$ (i.e., $\MG_\lam^1 = \MF$ and $w_\lam^1 = 1$ for all $\lam \in [\Lam]$). %
Set $\gamma := \frac{\alpha_\Lam}{\hvconst}$, $m_\lambda := \left\lceil \frac{C_1 V}{\alpha_\lambda}\right\rceil$, $A = \frac{\mu_1}{\alpha_\Lam}$, $t \gets 1$ \emph{(recall definitions of $V$ in (\ref{eq:define-v-vstar}) and $\mu_1$ in (\ref{eq:define-mus}))}. %
\item While $t \leq T$:
  \begin{enumerate}
  \item If, there is $\bar f \in \Votef{\MF^{V^\st}}$ so that, for each $\lambda \in [\Lamdim]$, $\sup_{(x,y) \in \Highvote{\MT, \frac{\alpha_\lambda}{\hvconst}}} \E_{f \sim \bar f} \left[|f(x) - y|\right] \leq \alpha_\lambda$: \emph{(recall definition of $V^\st$ in (\ref{eq:define-v-vstar}))}\label{it:ms-voting-2}
    \begin{enumerate}
    \item Choose $\bar f_t = \bar f$. On the next example $x_t$, draw $f \sim \bar f_t$ and predict $f(x_t)$. %
      \label{it:predict-bar-ft}
    \item \label{it:receive-yt-2} Receive $y_t$, and let %
      $\delta_t := \Terr{\MT, x_t, y_t}$. %
      \begin{itemize}
      \item For each $1 \leq \lambda \leq \Lambda$,  %
   and each $v_\lambda \in [|\MT_\lambda|]$:%
        \label{it:for-fi-2}
        \begin{enumerate}
        \item If $|\soal{\MG_\lambda^{v_\lambda}}{\alpha_\lam}{x_t} - y_t| > \alpha_\lambda$, set $w_\lambda^{v_\lambda} \gets \gamma\cdot w_\lambda^{v_\lambda}$. %
          \label{it:2suc-wdec-2}
        \item If $|\soal{\MG_\lambda^{v_\lambda}}{\alpha_\lam}{x_t} - y_t| > \alpha_\lambda$, set %
          $\MG_\lambda^{v_\lambda} \gets \MG_\lambda^{v_\lambda}|^{\alpha_\lambda}_{(x_t, y_t)}$.\label{it:2suc-fres-2}
        \item If $\MG_\lambda^{v_\lambda} = \emptyset$, remove $(\MG_\lambda^{v_\lambda}, w_\lambda^{v_\lambda})$ from $\MT$.
        \end{enumerate}
      \end{itemize}
    \item Set $t \gets t+1$. 
    \end{enumerate}
  \item Else, choose $\{ (\tilde x_1^\lambda, \tilde y_1^\lambda), \ldots, (\tilde x_{m_\lambda}^\lambda, \tilde y_{m_\lambda}^\lambda) \} \subset \Highvote{\MT, \frac{\alpha_\lambda}{\hvconst}}$ for all $\lambda \in [\Lambda]$ so as to satisfy the following property: for every $f \in \MF$, there are some $\lambda \in [\Lam],\lam' \in [\Lampdim]$ so that $\frac{1}{m_\lambda} \sum_{j=1}^{m_\lambda} \One[|f(\tilde x_j^\lambda) - \tilde y_j^\lambda | > \alpha_{\lam'}] > \frac{\alpha_\lam}{16\Lam \alpha_{\lam'}}$: \label{it:f-high-err-2}%
    \begin{enumerate}
    \item For each $\lambda,\lamp \in [\Lambda]$: %
      \begin{itemize}
      \item For each $v_\lamp \in [|\MT_\lamp|]$, and each $j \in [m_{\lambda}]$:
        \begin{enumerate}
        \item \label{it:2fail-soa-close} If $|\soal{\MG_\lamp^{v_\lamp}}{\alpha_\lamp}{\tilde x_j^\lambda} - \tilde y_j^\lambda| \leq  5\alpha_\lamp$, set
          \begin{align}
            w_{\lamp,\lam}^{v_\lamp,j} \gets & \gamma \cdot w_\lamp^{v_\lamp},\nonumber\\
          \MG_{\lamp,\lam}^{v_\lamp,j,b} \gets  & \begin{cases}
            \MG_{\lamp}^{v_\lamp}|_{(\til x_j^\lam, b \alpha_\lamp)}^{\alpha_\lamp}  &:  \quad |b\alpha_\lamp - \til y_j^\lam| > 6\alpha_\lamp \\
            \emptyset  & : \quad | b\alpha_\lamp - \til y_j^\lam| \leq 6 \alpha_\lamp,
          \end{cases}\qquad \forall 0 \leq b \leq \lfloor 1/\alpha_\lamp \rfloor + 1.\nonumber
          \end{align}
      \item \label{it:2fail-soa-far} Otherwise, set %
        \begin{align}
          w_{\lamp,\lam}^{v_\lamp,j} \gets &  w_\lamp^{v_\lamp} \nonumber\\
            \MG_{\lamp,\lam}^{v_\lamp,j,0} \gets & \MG_\lamp^{v_\lamp}, \qquad \MG_{\lamp,\lam}^{v_\lamp,j,b} \gets \emptyset \ \ \  \forall 1 \leq b \leq \lfloor 1/\alpha_\lamp \rfloor + 1.\nonumber          
        \end{align}
      \end{enumerate}
      \end{itemize}
    \item For $\lamp \in [\Lambda]$, set\label{it:make-tlamp}
      $$
 \hspace{-2.4cm}     \MT_\lamp \gets A \cdot \MT_\lamp \cup \bigcup_{\lam \in [\Lam]} \{ (\MG^{v_\lamp,j,b}_{\lamp,\lam}, w^{v_\lamp,j}_{\lamp,\lam}) : \ j \in [m_\lambda], \ b \leq \left\lfloor \frac{1}{\alpha_\lamp} \right\rfloor + 1,\ v_\lamp \in [|\MT_\lamp|],\ \MG^{v_\lamp,j,b}_{\lamp,\lam} \neq \emptyset \}.
      $$
    \end{enumerate}
  \end{enumerate}
\end{enumerate}
\end{algorithm}

Lemma \ref{lem:2suc-w-dec} shows that the potentials $W_{\lam,n}$ (for $\lam \in [\Lam]$) decrease in each round $n$ of \HPL if the condition in step \ref{it:ms-voting-2} succeeds in round $n$.
\begin{lemma}
  \label{lem:2suc-w-dec}
Consider any round $n$ in the algorithm for which the condition in step \ref{it:ms-voting-2} of \HPL (Algorithm \ref{alg:hpl}) holds. Let $t = t(n)$ be the value of $t$ at step \ref{it:receive-yt-2}. If $\delta_t > 32 \alpha_\Lam$, then it holds that for some $\lambda \in [\Lambda]$, $W_{\lambda,n} \leq W_{\lambda,n-1} \cdot \left(1 - \frac{\delta_t}{64\Lam \alpha_\lam} \right)$. Further, for all $\lamp \in [\Lam]$, $W_{\lamp,n} \leq W_{\lamp,n-1}$. 
\end{lemma}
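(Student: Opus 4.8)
The plan is to invoke Lemma~\ref{lem:qdist-lb} at the example $(x_t,y_t)$ processed in step~\ref{it:receive-yt-2}, and then argue that at some scale $\lambda$ a sufficiently large (weighted) fraction of the subclasses $\MG_\lambda^{v_\lambda}$ gets downweighted by the factor $\gamma = \alpha_\Lam/\hvconst$ in step~\ref{it:2suc-wdec-2}.

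First, the ``further'' claim is immediate: in step~\ref{it:receive-yt-2} the only operations on $\MT_{\lambda'}$ are multiplying certain weights $w_{\lambda'}^{v_{\lambda'}}$ by $\gamma < 1$ (step~\ref{it:2suc-wdec-2}), restricting certain classes $\MG_{\lambda'}^{v_{\lambda'}}$ (which leaves weights unchanged), and deleting pairs whose class became empty. All of these can only decrease the total weight, so $W_{\lambda',n} \le W_{\lambda',n-1}$ for every $\lambda' \in [\Lam]$.

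For the main claim, for each $\lambda \in [\Lam]$ write
\[
\rho_\lambda := \frac{\sum_{v_\lambda=1}^{|\MT_\lambda|} w_\lambda^{v_\lambda}\cdot\One[\,|\soal{\MG_\lambda^{v_\lambda}}{\alpha_\lambda}{x_t} - y_t| > \alpha_\lambda\,]}{\sum_{v_\lambda=1}^{|\MT_\lambda|} w_\lambda^{v_\lambda}} \in [0,1],
\]
the weight-fraction of subclasses at scale $\lambda$ downweighted in step~\ref{it:2suc-wdec-2}. Applying Lemma~\ref{lem:qdist-lb} with $(x,y)=(x_t,y_t)$ and using $\delta_t = \Terr{\MT, x_t, y_t} = \sum_{v} P_w(v)\,\Terr{\MG_{1:\Lam}^v,x_t,y_t}$ gives $\alpha_\Lam + \sum_{\lambda=1}^\Lam \alpha_\lambda \rho_\lambda \ge \delta_t/16$. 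The hypothesis $\delta_t > 32\alpha_\Lam$ means $\alpha_\Lam < \delta_t/32$, so $\sum_{\lambda=1}^\Lam \alpha_\lambda \rho_\lambda > \delta_t/16 - \delta_t/32 = \delta_t/32$, and by averaging over the $\Lam$ summands there is some $\lambda \in [\Lam]$ with $\alpha_\lambda \rho_\lambda > \delta_t/(32\Lam)$, i.e.\ $\rho_\lambda > \delta_t/(32\Lam\alpha_\lambda)$. For this $\lambda$, after step~\ref{it:receive-yt-2} the total weight obeys (ignoring any additional decrease from deleting empty classes)
\[
W_{\lambda,n} \le W_{\lambda,n-1}\bigl((1-\rho_\lambda) + \gamma\rho_\lambda\bigr) = W_{\lambda,n-1}\bigl(1 - (1-\gamma)\rho_\lambda\bigr).
\]
Since $\gamma = \alpha_\Lam/\hvconst \le 1/2$ we have $1-\gamma \ge 1/2$, hence $W_{\lambda,n} \le W_{\lambda,n-1}(1 - \rho_\lambda/2) < W_{\lambda,n-1}\bigl(1 - \tfrac{\delta_t}{64\Lam\alpha_\lambda}\bigr)$, which is the desired bound. (Note $\rho_\lambda \le 1$ also forces $\tfrac{\delta_t}{64\Lam\alpha_\lambda} < 1/2$, so the right-hand side is positive and the statement is non-vacuous.)

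The argument is short once Lemma~\ref{lem:qdist-lb} is available; I do not anticipate a genuine obstacle. The one point needing care is the bookkeeping in the last paragraph: one must verify that the class-restriction step and the deletion of emptied subclasses only \emph{decrease} $W_{\lambda,n}$ beyond the multiplicative-$\gamma$ effect, so that the clean inequality $W_{\lambda,n} \le W_{\lambda,n-1}(1-(1-\gamma)\rho_\lambda)$ is valid, and the correct use of the threshold $\delta_t > 32\alpha_\Lam$ to absorb the additive $\alpha_\Lam$ term coming out of Lemma~\ref{lem:qdist-lb}.
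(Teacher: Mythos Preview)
Your proposal is correct and follows essentially the same route as the paper: invoke Lemma~\ref{lem:qdist-lb} at $(x_t,y_t)$, absorb the additive $\alpha_\Lam$ using $\delta_t > 32\alpha_\Lam$, pigeonhole over the $\Lam$ scales to find one with $\alpha_\lambda \rho_\lambda > \delta_t/(32\Lam)$, and then use $\gamma \le 1/2$ to convert the downweighting into the claimed multiplicative decrease. Your write-up is in fact slightly more explicit than the paper's about why deletions and restrictions do not interfere with the weight bound.
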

\begin{proof}
  Recall that $\delta_t = \Terr{\MT, x_t, y_t}$, where $\MT = \wsc{\MG}{w}$ is the weighted subclass collection maintained by \HPL at the beginning of round $n$ (equivalently, at the end of round $n-1$). 
  By Lemma \ref{lem:qdist-lb}, there is some $\lam \in [\Lam]$ so that
  $$
\alpha_\lam \cdot \frac{\sum_{v_\lam=1}^{|\MT_\lam|} w_\lam^{v_\lam} \cdot \One[|\soal{\MG_\lam^{v_\lam}}{\alpha_\lam}{x} - y| > \alpha_\lam]}{W_{\lam,n-1}} \geq \frac{1}{\Lambda} \left(\frac{1}{16} \delta_t - \alpha_\Lambda\right) \geq \frac{\delta_t}{32\Lam}.
$$
Since $\gamma \leq 1/2$, it follows that
$$
W_{\lam,n} \leq W_{\lam,n-1} \cdot \left(1 - \frac{\delta_t}{32 \Lam \alpha_\lam} \right) + W_{\lam,n-1} \cdot \frac{\delta_t}{32 \Lam \alpha_\lam} \cdot \gamma \leq W_{\lam,n-1} \cdot \left(1 - \frac{\delta_t}{64\Lam \alpha_\lam} \right).
$$
The fact that $W_{\lamp,n} \leq W_{\lamp,n-1}$ for all $\lamp \in [\Lam]$ is immediate.
\end{proof}

Complementing the previous Lemma \ref{lem:2suc-w-dec}, Lemma \ref{lem:2fail-w-ub} shows that the potential $W_{\lam,n}$ decreases in round $n$ of \HPL if the condition in step \ref{it:ms-voting-2} fails in round $n$.
\begin{lemma}
  \label{lem:2fail-w-ub}
  Consider any round $n$ for which the condition in step \ref{it:ms-voting-2} of \HPL (Algorithm \ref{alg:hpl}) fails. For all $\lamp \in [\Lambda]$, it holds that
  \begin{align}
    W_{\lamp,n} \leq  W_{\lamp,n-1} \cdot \left( A + \frac{3 \mu_1 \Lam}{\alpha_\lamp \cdot \hvconst}\right)\nonumber.
  \end{align}
  Further, for any $\lamp \in [\Lam]$, letting $w_{\lamp,\lam}^{v_\lamp,j}$ denote the weights constructed in step \ref{it:2fail-soa-close} at round $n$, we have
  \begin{align}
    \sum_{(\lam,j,b,v_\lamp) : \MG_{\lamp,\lam}^{v_\lamp,j,b} \neq \emptyset} w_{\lamp,\lam}^{v_\lamp,j} 
    \leq &  \frac{3 \mu_1 \Lam}{\alpha_\lamp \cdot \hvconst} \cdot W_{\lamp,n-1}\label{eq:w4ind-bnd-result},
  \end{align}
  where the summation is over $\lam \in [\Lam], j \in [m_\lam], 0 \leq b \leq \lfloor 1/\alpha_\lamp \rfloor + 1$, and $v_\lamp \in [|\MT_\lamp^{n-1}|]$. 
\end{lemma}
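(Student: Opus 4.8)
The plan is to unwind how $W_{\lamp,n}$ is formed in step \ref{it:make-tlamp} of \HPL and bound its pieces separately. Fix a round $n$ in which the condition of step \ref{it:ms-voting-2} fails, write $\MT = \MT^{n-1} = \wsc{\MG}{w}$ for the collection at the start of the round, and recall that the points chosen in step \ref{it:f-high-err-2} satisfy $(\til x_j^\lam, \til y_j^\lam) \in \Highvote{\MT, \alpha_\lam/\hvconst}$ for all $\lam \in [\Lam]$, $j \in [m_\lam]$. After step \ref{it:make-tlamp}, the multiset $\MT_\lamp^n$ is the union of $A\cdot\MT_\lamp$ (total weight $A\, W_{\lamp,n-1}$) with the nonempty pieces $(\MG_{\lamp,\lam}^{v_\lamp,j,b}, w_{\lamp,\lam}^{v_\lamp,j})$ produced, as $\lam,j,v_\lamp,b$ range over their sets, either in step \ref{it:2fail-soa-close} (the ``close'' branch, fired when $|\soal{\MG_\lamp^{v_\lamp}}{\alpha_\lamp}{\til x_j^\lam} - \til y_j^\lam| \le 5\alpha_\lamp$) or in step \ref{it:2fail-soa-far} (the ``far'' branch). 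Hence $W_{\lamp,n} = A\, W_{\lamp,n-1} + S_{\mathrm{far}} + S_{\mathrm{close}}$, where $S_{\mathrm{far}}, S_{\mathrm{close}}$ are the total weights of far- and close-branch pieces, and it remains to bound these two quantities.

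I would bound $S_{\mathrm{far}}$ with Lemma \ref{lem:indic-terr-lb}. In the far branch only the $b=0$ piece $\MG_\lamp^{v_\lamp}$ is nonempty, carrying weight $w_\lamp^{v_\lamp}$, so for each $(\lam,j)$ the far-branch pieces contribute at most $\sum_{v_\lamp}w_\lamp^{v_\lamp}\,\One[|\soal{\MG_\lamp^{v_\lamp}}{\alpha_\lamp}{\til x_j^\lam} - \til y_j^\lam| > 5\alpha_\lamp]$. Applying Lemma \ref{lem:indic-terr-lb} to $(x,y)=(\til x_j^\lam,\til y_j^\lam)$ with parameter $\alpha_\lam/\hvconst$ bounds this by $\tfrac{\alpha_\lam}{\hvconst\alpha_\lamp}\, W_{\lamp,n-1}$. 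Summing over $j\in[m_\lam]$ and $\lam\in[\Lam]$, and using $m_\lam\alpha_\lam\le\mu_1$ (immediate from $\mu_1=\max_\lam\alpha_\lam m_\lam$), gives $S_{\mathrm{far}}\le \sum_{\lam\in[\Lam]}m_\lam\cdot\tfrac{\alpha_\lam}{\hvconst\alpha_\lamp}\, W_{\lamp,n-1}\le\tfrac{\Lam\mu_1}{\alpha_\lamp\hvconst}\, W_{\lamp,n-1}$.

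For $S_{\mathrm{close}}$: in the close branch the attached weight is $w_{\lamp,\lam}^{v_\lamp,j}=\gamma\, w_\lamp^{v_\lamp}$, and it is shared across the $\lfloor 1/\alpha_\lamp\rfloor+2\le 2/\alpha_\lamp$ values of $b$, so each $(\lam,j,v_\lamp)$ in the close branch contributes at most $(2/\alpha_\lamp)\gamma\, w_\lamp^{v_\lamp}$. Bounding the ``close-branch'' indicator by $1$ and summing over $v_\lamp\in[|\MT_\lamp^{n-1}|]$ gives $(2/\alpha_\lamp)\gamma\, W_{\lamp,n-1}$ per $(\lam,j)$; summing over $j\in[m_\lam]$ and $\lam\in[\Lam]$, using $\sum_{\lam\in[\Lam]}m_\lam\le\mu_1\sum_{\lam=1}^\Lam 2^\lam< 2\mu_1/\alpha_\Lam=2A$ together with $\gamma=\alpha_\Lam/\hvconst$, yields $S_{\mathrm{close}}\le (2/\alpha_\lamp)\gamma\, W_{\lamp,n-1}\cdot(2\mu_1/\alpha_\Lam)=\tfrac{4\mu_1}{\alpha_\lamp\hvconst}\, W_{\lamp,n-1}$. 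Since $S_{\mathrm{close}}$ is exactly the left-hand side of (\ref{eq:w4ind-bnd-result}), (\ref{eq:w4ind-bnd-result}) follows from $4\le 3\Lam$; and $W_{\lamp,n}=A\, W_{\lamp,n-1}+S_{\mathrm{far}}+S_{\mathrm{close}}\le W_{\lamp,n-1}\bigl(A+\tfrac{(\Lam+4)\mu_1}{\alpha_\lamp\hvconst}\bigr)\le W_{\lamp,n-1}\bigl(A+\tfrac{3\Lam\mu_1}{\alpha_\lamp\hvconst}\bigr)$, using $\Lam+4\le 3\Lam$ (i.e. $\Lam\ge 2$, which holds here).

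The only real subtlety is the bookkeeping in step \ref{it:make-tlamp}: correctly identifying, for each $(\lam,j,v_\lamp)$, which branch fires, how many nonempty $b$-pieces it spawns, and with which weight, and then recognizing that Lemma \ref{lem:indic-terr-lb} is precisely the tool controlling the undiscounted weight flowing through the far branch --- the dangerous term, since those pieces keep the full weight $w_\lamp^{v_\lamp}$ rather than $\gamma w_\lamp^{v_\lamp}$. Once that is in place, everything else is summing geometric series in $\alpha_\lam$ and applying the elementary bounds $m_\lam\alpha_\lam\le\mu_1$ and $\sum_\lam m_\lam<2A$.
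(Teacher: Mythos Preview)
Your proof is correct and follows essentially the same approach as the paper: split the new weight into the $A\cdot\MT_\lamp$ term plus the far- and close-branch contributions, control the far branch via Lemma~\ref{lem:indic-terr-lb} (since $(\til x_j^\lam,\til y_j^\lam)\in\Highvote{\MT,\alpha_\lam/\hvconst}$), and control the close branch using the smallness of $\gamma=\alpha_\Lam/\hvconst$. The paper combines the two branches into a single per-$(\lam,j)$ estimate $\tfrac{3\alpha_\lam}{\alpha_\lamp\hvconst}W_{\lamp,n-1}$ before summing, whereas you sum them separately; both work.

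One small point: the left-hand side of \eqref{eq:w4ind-bnd-result} is not just $S_{\mathrm{close}}$ --- the sum ranges over \emph{all} nonempty $\MG_{\lamp,\lam}^{v_\lamp,j,b}$, including the far-branch pieces (where $w_{\lamp,\lam}^{v_\lamp,j}=w_\lamp^{v_\lamp}$). The paper's own proof makes this explicit: it says \eqref{eq:w4ind-bnd-result} ``follows in an identical manner, except with the leading term $W_{\lamp,n-1}\cdot A$ deleted,'' i.e., it equals $S_{\mathrm{far}}+S_{\mathrm{close}}$. This does not affect your argument, since you already established $S_{\mathrm{far}}+S_{\mathrm{close}}\le\tfrac{(\Lam+4)\mu_1}{\alpha_\lamp\hvconst}W_{\lamp,n-1}\le\tfrac{3\Lam\mu_1}{\alpha_\lamp\hvconst}W_{\lamp,n-1}$ en route to the first claim; just cite that combined bound for \eqref{eq:w4ind-bnd-result} rather than the $S_{\mathrm{close}}$ bound alone.
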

\begin{proof}
  Let  $\MT = \MT^{n-1} = \wsc{\MG}{w}$ denote the weighted subclass collection at the end of round $n-1$ (i.e., at the beginning of round $n$). 
  Consider the datasets $\{ (\til x_1^\lam, \til y_1^\lam), \ldots, (\til x_{m_\lam}^\lam, \til y_{m_\lam}^\lam)\} \subset \Highvote{\MT, \alpha_\lam/C_\Lam}$ constructed in step \ref{it:f-high-err-2}. 
  Fix any $\lam,\lamp \in [\Lam]$, %
  and $j \in [m_{\lam}]$. Since $(\til x_j^\lam, \til y_j^\lam) \in \Highvote{\MT, \frac{\alpha_\lam}{\hvconst}}$, by Lemma \ref{lem:indic-terr-lb} with $\alpha = \alpha_\lam/C_\Lam$, at most a fraction $\frac{ \alpha_\lam}{\alpha_\lamp\cdot \hvconst}$ of the weight $w_\lamp^{v_\lamp}$, for $1 \leq v_\lamp \leq |\MT_\lamp|$, satisfies $\One[|\soal{\MG_\lamp^{v_\lamp}}{\alpha_{\lamp}}{\til x_j^\lam} - \til y_j^\lam | > 5 \alpha_\lamp$. Let $\MS_{\lam,0}$ be the set of such indices $v_\lamp$, and let $\MS_{\lam,1} = [|\MT_\lamp|] \backslash \MS_{\lam,0}$. Thus, letting $w_{\lamp,\lam}^{v_\lamp,j}$ denote the weights constructed in step \ref{it:2fail-soa-close} at round $n$,
  \begin{align}
    \sum_{v_\lamp \in \MS_{\lam,0}} w_{\lamp,\lam}^{v_\lamp,j} + \frac{2}{\alpha_\lamp} \sum_{v_\lamp \in \MS_{\lam,1}} w_{\lamp,\lam}^{v_\lamp, j}
    \leq &  \left( \frac{ \alpha_\lam}{\alpha_\lamp \cdot \hvconst} + \frac{2\gamma}{\alpha_\lamp} \right) \cdot W_{\lamp,n-1}\leq \frac{3 \alpha_\lam}{\alpha_\lamp \cdot \hvconst} \cdot W_{\lamp,n-1}\label{eq:w4ind-bnd}.
  \end{align}
Using that $\lfloor 1/\alpha_\lamp \rfloor + 2 \leq 2/\alpha_\lamp$ for each $\lamp \in [\Lam]$, it follows that
  \begin{align} %
    W_{\lamp,n} \leq & A \cdot W_{\lamp,n-1} + 2 \sum_{\lam =1}^\Lam \sum_{j=1}^{m_\lam} \left( \sum_{v_\lamp \in \MS_{\lam,0}} w_{\lamp,\lam}^{v_\lamp,j} + \frac{2}{\alpha_\lamp} \sum_{v_\lamp \in \MS_{\lam,1}} w_{\lamp,\lam}^{v_\lamp, j}\right) \nonumber\\
    \leq & W_{\lamp,n-1}\cdot \left( A + \sum_{\lam=1}^\Lam \frac{3 \alpha_\lam m_\lam}{\alpha_\lamp \cdot \hvconst}\right)\nonumber\\
    \leq & W_{\lamp,n-1} \cdot \left( A + \frac{3 \mu_1 \Lam}{\alpha_\lamp \cdot \hvconst}\right)\nonumber,
  \end{align}
  as desired. The second claim (\ref{eq:w4ind-bnd-result}) of the lemma follows in an identical manner, except with the leading term $W_{\lamp,n-1} \cdot A$ above deleted.
\end{proof}

Lemma \ref{lem:vote-slow-change-2} forms a crucial part of the \cumls bound proof for \HPL. For each step $n+1$ of \HPL for which the condition in step \ref{it:ms-voting-2} holds, Lemma \ref{lem:vote-slow-change-2} upper bounds the expected error $\E_{f \sim \bar f_{t(n+1)}} \left[|\bar f_{t(n+1)}(x) - y|\right]$ of $\bar f_{t(n+1)}$ on any point $(x,y)$ as the sum of 3 terms: the third term is the truncated error of $\MT$ on $(x,y)$ at some later step (namely, step $n+n_0-1$ for some $n_0 \geq 1$), and the first two terms depend on the behavior of the algorithm between steps $n$ and $n+n_0-1$ (in particular, the first two terms are 0 if $n_0 = 0$, i.e., $n = n+n_0-1$). 
  \begin{lemma}
    \label{lem:vote-slow-change-2}
    Fix integers $n \geq 0$ and $n_0 \geq 1$. %
    Let $\MS_0 \subset \{ n+1, n+2, \ldots, n+n_0-1 \}$ be the subset of rounds $n$ in which the condition on step \ref{it:ms-voting-2} of \HPL (Algorithm \ref{alg:hpl}) holds, and $\MS_1 = \{ n+1, n+2, \ldots, n+n_0-1\} \backslash \MS_0$. Suppose further the condition on step \ref{it:ms-voting-2} holds at step $n+1$. For any point $(x,y) \in \MX \times \MY$, it holds that %
   \begin{equation}
     \label{eq:barft-slowchanging-2}
     \E_{f \sim \bar f_{t(n+1)}} \left[| \bar f(x) - y |\right] \leq  \sum_{n' :\ n+n' \in \MS_0}80\cdot \hvconst \sum_{\lam=1}^\Lam \alpha_\lam \cdot \ln \left( \frac{W_{\lam,n+n'-1}}{W_{\lam,n+n'}}\right) + \frac{|\MS_1|}{A} \cdot 150\hvconst \mu_1 \Lam^2+ 5\cdot \hvconst \cdot \Terr{\MT^{n+n_0-1}, x, y}.
   \end{equation}
   In particular, for any round $n+1$ on which the condition in step \ref{it:ms-voting-2} holds, $\MS_0 = \MS_1 = \emptyset$ and so %
   \begin{align}
\E_{f \sim \bar f_{t(n+1)}} \left[| \bar f(x) - y|\right] \leq 2 \cdot\hvconst \cdot \Terr{\MT^{n}, x, y}\nonumber.
   \end{align}
\end{lemma}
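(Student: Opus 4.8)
The plan is to prove the general bound (\ref{eq:barft-slowchanging-2}) in two stages: (i) control $\E_{f\sim\bar f_{t(n+1)}}[|f(x)-y|]$ purely in terms of $\Terr{\MT^{n},x,y}$ (the truncated error of the weighted subclass collection at the \emph{start} of round $n+1$, i.e.\ the collection against which $\bar f_{t(n+1)}$ was chosen via step \ref{it:ms-voting-2}); and (ii) bound $\Terr{\MT^{n},x,y}$ by $\Terr{\MT^{n+n_0-1},x,y}$ plus a drift term accumulated over rounds $n+1,\dots,n+n_0-1$, treating the success rounds $\MS_0$ and failure rounds $\MS_1$ separately. The ``in particular'' statement is exactly stage (i) specialized to $n_0=1$, where $\MS_0=\MS_1=\emptyset$ and $\MT^{n+n_0-1}=\MT^{n}$, so stage (ii) contributes nothing.

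For stage (i): since step \ref{it:ms-voting-2} holds at round $n+1$ with collection $\MT^{n}$, the predictor $\bar f:=\bar f_{t(n+1)}$ satisfies $\sup_{(x',y')\in\Highvote{\MT^{n},\alpha_\lam/\hvconst}}\E_{f\sim\bar f}[|f(x')-y'|]\le\alpha_\lam$ for every $\lam\in[\Lamdim]$. Setting $y_1:=\Vote{\MT^{n}}{x}$, I would write $\E_{f\sim\bar f}[|f(x)-y|]\le|y_1-y|+\E_{f\sim\bar f}[|f(x)-y_1|]$, bound $|y_1-y|\le\Terr{\MT^{n},x,y}$ by Jensen (the vote is the $P_w$-average of the quantities $\soa{\MG_{1:\Lam}^v}{x}$, each of which lies within $\Terr{\MG_{1:\Lam}^v,x,y}$ of $y$), and for the second term observe that $(x,y_1)\in\Highvote{\MT^{n},\ep_0}$ with $\ep_0:=\Terr{\MT^{n},x,y_1}\le 2\,\Terr{\MT^{n},x,y}$ by Lemma \ref{lem:mean-2approx}. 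Rounding $\ep_0$ up to the smallest scale $\alpha_{\lam_0}/\hvconst$ with $\lam_0\in[\Lamdim]$ (taking $\lam_0=\Lamdim$ when $\ep_0$ lies below all such scales, and using that $\Terr{\MT^{n},x,y}\ge\alpha_\Lam$ always, so the resulting floor $\alpha_\Lamdim=64\alpha_\Lam$ is still at most $\hvconst\cdot\Terr{\MT^{n},x,y}$) yields $\E_{f\sim\bar f}[|f(x)-y_1|]\le\alpha_{\lam_0}\le 4\hvconst\,\Terr{\MT^{n},x,y}$, and hence $\E_{f\sim\bar f}[|f(x)-y|]\le 5\hvconst\,\Terr{\MT^{n},x,y}$.

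For stage (ii): Lemma \ref{lem:single-level-wchange} applied to $\MT^{n}$ and $\MT^{n+n_0-1}$, together with the triangle inequality for $\tvd{\cdot}{\cdot}$, gives $\Terr{\MT^{n},x,y}\le\Terr{\MT^{n+n_0-1},x,y}+8\sum_{\lam=1}^{\Lam}\alpha_\lam\sum_{m=n+1}^{n+n_0-1}\tvd{\MT_\lam^{m-1}}{\MT_\lam^{m}}$. I then bound the per-round drift. For a success round $m\in\MS_0$, the update in step \ref{it:receive-yt-2} only downweights (by $\gamma\le\tfrac12$) and restricts the subclasses $\MG_\lam^{v_\lam}$ with $|\soal{\MG_\lam^{v_\lam}}{\alpha_\lam}{x_{t(m)}}-y_{t(m)}|>\alpha_\lam$, leaving all other subclasses and their relative weights intact; coupling every untouched subclass to itself shows $\tvd{\MT_\lam^{m-1}}{\MT_\lam^{m}}$ is at most the $\MT_\lam^{m-1}$-weight fraction carried by touched subclasses, which is at most $\tfrac{2(W_{\lam,m-1}-W_{\lam,m})}{W_{\lam,m-1}}\le 2\ln\!\big(W_{\lam,m-1}/W_{\lam,m}\big)$. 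For a failure round $m\in\MS_1$, step \ref{it:make-tlamp} replaces $\MT_\lam$ by $A\cdot\MT_\lam$ (which preserves the induced distribution) together with newly added tuples of total weight at most $\tfrac{3\mu_1\Lam}{\alpha_\lam\hvconst}W_{\lam,m-1}$ by (\ref{eq:w4ind-bnd-result}); coupling old subclasses to themselves gives $\tvd{\MT_\lam^{m-1}}{\MT_\lam^{m}}\le\tfrac{3\mu_1\Lam}{\alpha_\lam\hvconst}W_{\lam,m-1}/(A\,W_{\lam,m-1})=\tfrac{3\mu_1\Lam}{\alpha_\lam\hvconst A}$.

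Combining, the $\MS_0$-rounds contribute $8\sum_{m\in\MS_0}\sum_\lam\alpha_\lam\cdot 2\ln(W_{\lam,m-1}/W_{\lam,m})$ to the drift, and in the $\MS_1$-rounds the $\alpha_\lam$ cancels, contributing $8|\MS_1|\sum_\lam\tfrac{3\mu_1\Lam}{\hvconst A}=\tfrac{24\mu_1\Lam^2|\MS_1|}{\hvconst A}$; multiplying the resulting estimate for $\Terr{\MT^{n},x,y}$ by the factor $5\hvconst$ from stage (i) produces exactly (\ref{eq:barft-slowchanging-2}), the two copies of $\hvconst$ cancelling in the $\MS_1$-term to leave $\tfrac{120\mu_1\Lam^2|\MS_1|}{A}\le\tfrac{|\MS_1|}{A}\cdot 150\hvconst\mu_1\Lam^2$. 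The main obstacle is precisely the total-variation bound for success rounds: one must show that downweighting and restricting the touched subclasses simultaneously (including deleting those whose restriction is empty, or merging them into subclasses already present) moves total variation by at most the touched-weight fraction of $W_{\lam,m-1}$ — the coupling ``untouched $\mapsto$ itself'' is what makes this go through and is what yields the clean logarithmic ratio $\ln(W_{\lam,m-1}/W_{\lam,m})$ in the statement, whereas the naive estimate carrying the new normalizer $W_{\lam,m}$ in the denominator does not reduce to a logarithm.
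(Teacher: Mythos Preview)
Your proof is correct and uses the same ingredients as the paper (Lemma~\ref{lem:mean-2approx}, Lemma~\ref{lem:single-level-wchange}, the per-round TV bounds derived from the weight updates and from (\ref{eq:w4ind-bnd-result}), and the step~\ref{it:ms-voting-2} guarantee), just organized slightly differently: you apply Lemma~\ref{lem:mean-2approx} at $\MT^{n}$ and then drift only $\Terr{\cdot}$ forward to $\MT^{n+n_0-1}$, whereas the paper applies Lemma~\ref{lem:mean-2approx} at $\MT^{n+n_0-1}$ and drifts both $\Vote{\cdot}{x}$ and $\Terr{\cdot}$ back to $\MT^{n}$. Your decomposition is marginally cleaner in that it only needs the $\Terr{\cdot}$ half of Lemma~\ref{lem:single-level-wchange}, but the two arguments are otherwise the same and yield the same constants.
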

\begin{proof}
  Fix any $1 \leq n' \leq n_0-1$. We consider two cases:
  \begin{itemize}
  \item The condition in \ref{it:ms-voting-2} holds at round $n+n'$, i.e., $n+n' \in \MS_0$. For $\lam \in [\Lam]$, write
    \begin{align}
\zeta_\lam := \frac{\sum_{(\MG_\lam^{v_\lam}, w_\lam^{v_\lam}) \in \MT^{n+n'-1}_\lam} w_\lam^{v_\lam} \cdot \One[|\soal{\MG_\lam^{v_\lam}}{\alpha_\lam}{x_{t(n+n')}} - y_{t(n+n')}| > \alpha_\lam]}{\sum_{(\MG_\lam^{v_\lam}, w_\lam^{v_\lam}) \in \MT^{n+n'-1}_\lam} w_\lam^{v_\lam}}\nonumber.
    \end{align}
Then by the update in step \ref{it:2suc-wdec-2},
    \begin{align}
      \frac{W_{\lam,n+n'}}{W_{\lam,n+n'-1}} \leq 1 - \zeta_\lam \cdot (1-\gamma)   \leq 1-\zeta_\lam/2 \leq \exp(-\zeta_\lam/2).\nonumber
    \end{align}
    Therefore, %
    \begin{align}
      \tvd{\MT_\lam^{n+n'}}{\MT_\lam^{n+n'-1}} \leq & \zeta_\lam \leq 2 \ln \left( \frac{W_{\lam,n+n'-1}}{W_{\lam,n+n'}}\right)\nonumber.
    \end{align}
Therefore, by Lemma \ref{lem:single-level-wchange}, it follows that for any $x \in \MX$,
    \begin{align}
\left| \Vote{\MT^{n+n'}}{x} - \Vote{\MT^{n+n'-1}}{x} \right| \leq 16 \sum_{\lam=1}^\Lam \alpha_\lam \cdot \ln \left( \frac{W_{\lam, n+n'-1}}{W_{\lam, n+n'}} \right)\label{eq:vote-slow-change-0},
    \end{align}
    and for any $(x,y) \in \MX \times [0,1]$,
    \begin{align}
|\Terr{\MT^{n+n'}, x, y} - \Terr{\MT^{n+n'-1},x,y}| \leq 16 \sum_{\lam=1}^\Lam \alpha_\lam \cdot \ln \left( \frac{W_{\lam, n+n'-1}}{W_{\lam, n+n'}} \right)\label{eq:terr-slow-change-0}.
    \end{align}
  \item The condition in step \ref{it:ms-voting-2} does not hold at round $n+n'$, i.e., $n+n' \in \MS_1$.
        By (\ref{eq:w4ind-bnd-result}) of Lemma \ref{lem:2fail-w-ub}, for any $\lam, \lamp \in [\Lam]$, %
        \begin{align}
          \sum_{(\lam, j, b, v_\lamp) : \MG_{\lamp,\lam}^{v_\lamp,j,b} \neq \emptyset} w_{\lamp,\lam}^{v_\lamp,j} \leq 
      \frac{3 \mu_1 \Lam}{\alpha_\lamp \cdot \hvconst} \cdot W_{\lamp, n+n'-1} \leq \frac{3 \mu_1 \Lam}{\alpha_\lamp \cdot \hvconst} \cdot \frac{W_{\lamp,n+n'}}{A}, \nonumber
        \end{align}
        where the summation on the left-hand side is over $\lam \in [\Lam], j \in [m_\lam], 0 \leq b \leq \lfloor 1/\alpha_\lamp \rfloor + 1$, and $v_\lamp \in [|\MT_\lamp^{n-1}|]$. 
Then for each $\lamp \in [\Lam]$,
    \begin{align}
      \tvd{\MT_\lamp^{n+n'}}{\MT_\lamp^{n+n'-1}} \leq &\frac{   \sum_{(\lam, j, b, v_\lamp) : \MG_{\lamp,\lam}^{v_\lamp,j,b} \neq \emptyset} w_{\lamp,\lam}^{v_\lamp,j}}{W_{\lamp,n+n'}}\nonumber\\ %
      \leq & \frac{1}{A} \cdot \frac{3 \mu_1 \Lam}{\alpha_\lamp \cdot \hvconst}\nonumber.
    \end{align}
    Therefore, by Lemma \ref{lem:single-level-wchange}, it follows that for any $x \in \MX$,
    \begin{align}
      |\Vote{\MT^{n+n'}}{x} - \Vote{\MT^{n+n'-1}}{x}| \leq & \frac{8}{A} \sum_{\lamp=1}^\Lam \alpha_\lamp \cdot \frac{3 \mu_1 \Lam}{\alpha_\lamp \cdot \hvconst} \leq \frac{1}{A} \cdot \frac{30 \mu_1 \Lam^2}{\hvconst}\label{eq:vote-slow-change}
    \end{align}
    and for any $(x,y) \in \MX \times [0,1]$,
    \begin{align}
|\Terr{\MT^{n+n'}, x,y} - \Terr{\MT^{n+n'-1},x,y}| \leq & \frac{1}{A} \cdot \frac{30 \mu_1 \Lam^2}{\hvconst}\label{eq:terr-slow-change}.
    \end{align}
  \end{itemize}
Fix any pair $(x,y) \in \MX \times [0,1]$. By Lemma \ref{lem:mean-2approx}, we have
    \begin{align}
      \Terr{\MT^{n+n_0-1},x, \Vote{\MT^{n+n_0-1}}{x}} \leq &  2 \Terr{\MT^{n+n_0-1}, x, y}\label{eq:terr-factor2},
\end{align}
and so, using the triangle inequality,
\begin{align}
  & \Terr{\MT^{n}, x, \Vote{\MT^{n}}{x}} \nonumber\\
  \leq & |\Vote{\MT^{n+n_0-1}}{x} - \Vote{\MT^{n}}{x}| + \Terr{\MT^{n}, x, \Vote{\MT^{n+n_0-1}}{x}} \nonumber\\
  \leq & |\Vote{\MT^{n+n_0-1}}{x} - \Vote{\MT^{n}}{x}| + \left|  \Terr{\MT^{n+n_0-1}, x, \Vote{\MT^{n+n_0-1}}{x}} -  \Terr{\MT^{n}, x, \Vote{\MT^{n+n_0-1}}{x}}\right| \nonumber\\
  & +  \Terr{\MT^{n+n_0-1}, x, \Vote{\MT^{n+n_0-1}}{x}} \nonumber\\
  \leq & \sum_{n' :\ n+n' \in \MS_0}32 \sum_{\lam=1}^\Lam \alpha_\lam \cdot \ln \left( \frac{W_{\lam,n+n'-1}}{W_{\lam,n+n'}}\right) + \frac{|\MS_1|}{A} \cdot \frac{60 \mu_1 \Lam^2}{\hvconst}+ 2 \Terr{\MT^{n+n_0-1}, x, y} =: \delta\nonumber,
\end{align}
where the final inequality uses (\ref{eq:vote-slow-change-0}), (\ref{eq:terr-slow-change-0}),  (\ref{eq:vote-slow-change}), (\ref{eq:terr-slow-change}), and (\ref{eq:terr-factor2}). 
Thus, $(x, \Vote{\MT^{n}}{x}) \in \Highvote{\MT^{n}, \delta}$. Since the condition on step \ref{it:ms-voting-2} holds in round $n+1$ (by assumption), it follows that $\bar f_{t(n+1)}$ satisfies
\begin{align}
\E_{f \sim \bar f_{t(n+1)}} \left[|f(x) - \Vote{\MT^n}{x}|\right] \leq \max \left\{ 64\cdot\alpha_\Lam, 2\hvconst \cdot \delta \right\} = 2\hvconst \cdot \delta\label{eq:ftn-votex-bound},
\end{align}
where the equality above follows since $\hvconst \cdot \delta \geq \hvconst \cdot \Terr{\MT^{n+n_0-1}, x, \Vote{\MT^{n+n_0-1}}{x}} \geq 32 \cdot \alpha_\Lam$, since $\hvconst \geq 32$. It then follows that, writing $\MT^n = [\MG, w]$,
\begin{align}
  & \E_{f \sim \bar f_{t(n+1)}} \left[ | f(x) - y| \right] \nonumber\\
  \leq & \E_{f \sim \bar f_{t(n+1)}}\left[ |f(x) -\Vote{\MT^n}{x} | \right] + |y - \Vote{\MT^n}{x}|\nonumber\\
  \leq & \E_{f \sim \bar f_{t(n+1)}} \left[ | f(x) - \Vote{\MT^n}{x}| \right] + \sum_{v \in \MN_\Lam(\MT^n)} P_w(v) \cdot \left|y - \soa{\MG_{1:\Lam}^v}{x}\right|  \nonumber\\
  \leq &  \E_{f \sim \bar f_{t(n+1)}} \left[ | f(x) - \Vote{\MT^n}{x}| \right] + \sum_{v \in \MN_\Lam(\MT^n)} P_w(v) \cdot \Terr{\MG_{1:\Lam}^v,x,y} \nonumber\\
  = & \E_{f \sim \bar f_{t(n+1)}} \left[ |f(x) - \Vote{\MT^n}{x}| \right] + \Terr{\MT^n, x,y}\nonumber\\
  \leq &  \E_{f \sim \bar f_{t(n+1)}} \left[ |f(x) - \Vote{\MT^n}{x}| \right] + \Terr{\MT^{n+n_0-1}, x,y} \nonumber\\
  &  + \sum_{n' :\ n+n' \in \MS_0}16 \sum_{\lam=1}^\Lam \alpha_\lam \cdot \ln \left( \frac{W_{\lam,n+n'-1}}{W_{\lam,n+n'}}\right) + \frac{|\MS_1|}{A} \cdot \frac{30 \mu_1 \Lam^2}{\hvconst}\label{eq:ftn-y-bound}.
\end{align}
The claim of the lemma then follows from (\ref{eq:ftn-votex-bound}) and (\ref{eq:ftn-y-bound}).

\end{proof}

Recall that for each $\lam \in [\Lam]$, we use the parameter $W_{\lam, n} = \sum_{(\MG_\lam^{v_\lam}, w_\lam^{v_\lam}) \in \MT_\lam^n} w_\lam^{v_\lam}$ as a potential function; in particular, Lemmas \ref{lem:2suc-w-dec} and \ref{lem:2fail-w-ub} show an upper bound for the values of $W_{\lam, n}$ in each round $n$. In order to bound the total number of rounds $n$ (thus showing that the algorithm converges), as well as the total error, it is necessary to have a lower bound for the weights $w_\lam^{v_\lam}$ as well; such a lower bound is provided by Lemma \ref{lem:w-sfat-lb} below.
\begin{lemma}
  \label{lem:w-sfat-lb}
For all rounds $n$, and all $\lam \in [\Lam]$, it holds that for each pair $(\MG_\lam^{v_\lam}, w_\lam^{v_\lam}) \in \MT_\lam$, $w_\lam^{v_\lam} \geq \gamma^{\sfat_{\alpha_\lam}(\MF)}$. 
  \end{lemma}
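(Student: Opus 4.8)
The plan is to prove the lower bound $w_\lam^{v_\lam} \geq \gamma^{\sfat_{\alpha_\lam}(\MF)}$ by induction on the rounds $n$, tracking for each weighted pair $(\MG_\lam^{v_\lam}, w_\lam^{v_\lam}) \in \MT_\lam$ the relationship between its weight and the sequential fat-shattering dimension $\sfat_{\alpha_\lam}(\MG_\lam^{v_\lam})$ of its subclass. The key invariant I would maintain is the stronger statement: for every round $n$, every $\lam \in [\Lam]$, and every pair $(\MG_\lam^{v_\lam}, w_\lam^{v_\lam}) \in \MT_\lam^n$, we have $w_\lam^{v_\lam} \geq \gamma^{\sfat_{\alpha_\lam}(\MF) - \sfat_{\alpha_\lam}(\MG_\lam^{v_\lam})}$. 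Since $\sfat_{\alpha_\lam}(\MG_\lam^{v_\lam}) \geq 0$ (or, if $\MG_\lam^{v_\lam} = \emptyset$ it has been removed from $\MT$), this immediately gives the stated bound $w_\lam^{v_\lam} \geq \gamma^{\sfat_{\alpha_\lam}(\MF)}$. The base case is round $n=0$ (initialization): $\MT_\lam = \{(\MF, 1)\}$, and indeed $1 = \gamma^0 = \gamma^{\sfat_{\alpha_\lam}(\MF) - \sfat_{\alpha_\lam}(\MF)}$.

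For the inductive step I would case on which branch of \HPL executes in round $n$. \textbf{Case 1: step \ref{it:ms-voting-2} holds.} Here a pair $(\MG_\lam^{v_\lam}, w_\lam^{v_\lam})$ is modified only in steps \ref{it:2suc-wdec-2}--\ref{it:2suc-fres-2}, and only when $|\soal{\MG_\lam^{v_\lam}}{\alpha_\lam}{x_t} - y_t| > \alpha_\lam$; in that event the weight is multiplied by $\gamma$ and the subclass is replaced by $\MG_\lam^{v_\lam}|^{\alpha_\lam}_{(x_t,y_t)}$. By Lemma \ref{lem:far-sfat-dec} (cited in the excerpt), the condition $|\soal{\MG_\lam^{v_\lam}}{\alpha_\lam}{x_t} - y_t| > \alpha_\lam$ implies $\sfat_{\alpha_\lam}(\MG_\lam^{v_\lam}|^{\alpha_\lam}_{(x_t,y_t)}) < \sfat_{\alpha_\lam}(\MG_\lam^{v_\lam})$, i.e., the sequential fat-shattering dimension drops by at least $1$. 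So both sides of the invariant change consistently: the left side picks up a factor $\gamma$, and the exponent on the right side increases by at least $1$, so the inequality is preserved (if $\sfat$ drops by more than $1$ the bound only becomes slacker, since $\gamma < 1$). \textbf{Case 2: step \ref{it:ms-voting-2} fails.} Here $\MT_\lamp$ is rebuilt in step \ref{it:make-tlamp} from two parts. The part $A \cdot \MT_\lamp$ just rescales all weights by the common factor $A > 1$, which can only help the invariant. The new pairs $(\MG_{\lamp,\lam}^{v_\lamp,j,b}, w_{\lamp,\lam}^{v_\lamp,j})$ come from steps \ref{it:2fail-soa-close}--\ref{it:2fail-soa-far}; here I would again compare the weight update to the $\sfat$ change. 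In step \ref{it:2fail-soa-far} (SOA hypothesis far from $\til y_j^\lam$), the weight is unchanged ($w_{\lamp,\lam}^{v_\lamp,j} = w_\lamp^{v_\lamp}$) and the only surviving subclass is $\MG_{\lamp,\lam}^{v_\lamp,j,0} = \MG_\lamp^{v_\lamp}$ itself, so nothing changes and the invariant is inherited. In step \ref{it:2fail-soa-close}, the weight is multiplied by $\gamma$ and each surviving subclass is $\MG_\lamp^{v_\lamp}|^{\alpha_\lamp}_{(\til x_j^\lam, b\alpha_\lamp)}$ with $|b\alpha_\lamp - \til y_j^\lam| > 6\alpha_\lamp$; combined with the hypothesis $|\soal{\MG_\lamp^{v_\lamp}}{\alpha_\lamp}{\til x_j^\lam} - \til y_j^\lam| \leq 5\alpha_\lamp$ one gets $|\soal{\MG_\lamp^{v_\lamp}}{\alpha_\lamp}{\til x_j^\lam} - b\alpha_\lamp| > \alpha_\lamp$, so again Lemma \ref{lem:far-sfat-dec} applies (with the ``label'' $b\alpha_\lamp$) and $\sfat_{\alpha_\lamp}$ strictly decreases, matching the factor of $\gamma$. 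Empty subclasses are discarded and the invariant is vacuous for them, while the factor $A$ multiplying the retained part of $\MT_\lamp$ can only increase weights, so the invariant is preserved in all cases.

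The main obstacle I anticipate is making the application of Lemma \ref{lem:far-sfat-dec} fully rigorous in step \ref{it:2fail-soa-close}: one must carefully verify that the combination of ``SOA hypothesis within $5\alpha_\lamp$ of $\til y_j^\lam$'' and ``the bucket center $b\alpha_\lamp$ is more than $6\alpha_\lamp$ from $\til y_j^\lam$'' forces $|\soal{\MG_\lamp^{v_\lamp}}{\alpha_\lamp}{\til x_j^\lam} - b\alpha_\lamp| > \alpha_\lamp$ (this is just the triangle inequality, $6\alpha_\lamp - 5\alpha_\lamp = \alpha_\lamp$), and moreover that $b\alpha_\lamp$ is a valid ``label'' in $[0,1]$ so that the restriction $\MG_\lamp^{v_\lamp}|^{\alpha_\lamp}_{(\til x_j^\lam, b\alpha_\lamp)}$ is genuinely of the form to which Lemma \ref{lem:far-sfat-dec} applies — note that $\soal{\MG_\lamp^{v_\lamp}}{\alpha_\lamp}{\til x_j^\lam}$ lies in the bucket $[\lfloor\soal{\MG_\lamp^{v_\lamp}}{\alpha_\lamp}{\til x_j^\lam}/\alpha_\lamp\rfloor \alpha_\lamp, (\lfloor\cdot\rfloor+1)\alpha_\lamp)$, and one needs that $b\alpha_\lamp$ picks out a different bucket than the SOA prediction does. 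Beyond that, the bookkeeping — confirming that $A \cdot \MT_\lamp$ never hurts the invariant, and that removed (empty) subclasses need no attention — is routine. I would also remark at the end that since the algorithm only ever runs for a bounded number of rounds (because each potential $W_{\lam,n}$ is controlled by Lemmas \ref{lem:2suc-w-dec} and \ref{lem:2fail-w-ub} from above and $W_{\lam,n} \geq w_\lam^{v_\lam} \geq \gamma^{\sfat_{\alpha_\lam}(\MF)}$ from below), the induction is over finitely many rounds and there is no subtlety about the process not terminating.
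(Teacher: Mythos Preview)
Your proof is correct and follows essentially the same approach as the paper's: prove the stronger invariant $w_\lam^{v_\lam} \geq \gamma^{\sfat_{\alpha_\lam}(\MF) - \sfat_{\alpha_\lam}(\MG_\lam^{v_\lam})}$, and at each point where a weight is multiplied by $\gamma$ verify via Lemma~\ref{lem:far-sfat-dec} (using the triangle-inequality step $6\alpha_\lamp - 5\alpha_\lamp = \alpha_\lamp$ in the \ref{it:2fail-soa-close} branch) that $\sfat_{\alpha_\lamp}$ strictly drops. One minor correction: in step~\ref{it:make-tlamp}, the notation $A \cdot \MT_\lamp$ denotes $A$ copies of the multiset rather than a rescaling of each weight by $A$ --- but since copying leaves each pair unchanged, the invariant is preserved trivially, so your conclusion is unaffected.
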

  \begin{proof}
    We will prove the stronger statement that at any step of the algorithm, for all pairs $(\MG_\lam^{v_\lam}, w_\lam^{v_\lam}) \in \MT_\lam$, it holds that $w_\lam^{v_\lam} \geq \gamma^{\sfat_{\alpha_\lam}(\MF) - \sfat_{\alpha_\lam}(\MG_\lam^{v_\lam})}$. 
    Note that the only two points in the algorithm where any pair $(\MG_\lam^{v_\lam}, w_\lam^{v_\lam}) \in \MT_\lam$ is changed, for any $\lam \in [\Lam]$, are steps \ref{it:2suc-wdec-2} and \ref{it:make-tlamp}. %
    Moreover, note that when any weight $w_\lam^{v_\lam}$ is changed to a new value $w_\lam^{v_\lam\prime}$, we always have $w_\lam^{v_\lam\prime}/w_\lam^{v_\lam} \geq \gamma$. Thus, it suffices to show that whenever any weight $w_\lam^{v_\lam}$ is changed, the sequential fat-shattering dimension of $\MG_\lam^{v_\lam}$ (at the scale $\alpha_\lam$) always decreases by at least 1. 
    To do so, we consider each of the two possibilities in turn:
    \begin{itemize}
    \item If we decrease $w$ by a factor of $\gamma$ in step \ref{it:2suc-wdec-2}, then we also replace $\MG$ with $\MG|^{\alpha_\lam}_{(x_t, y_t)}$ (and it must be the case that $|\soal{\MG}{\alpha_\lam}{x_t} - y_t| > \alpha_\lam$). By Lemma \ref{lem:far-sfat-dec}, it holds that $\sfat_{\alpha_\lam}(\MG|^{\alpha_\lam}_{(x_t, y_t)}) < \sfat_{\alpha_\lam}(\MG)$, i.e., the $\alpha_\lam$-sequential fat shattering dimension of $\MG$ must strictly decrease.
    \item Now suppose we are at step \ref{it:2fail-soa-close}, where we will add $(\MG_{\lamp,\lam}^{v_\lamp, j, b}, w_{\lamp, \lam}^{v_\lamp, j, b})$ to $\MT_\lamp$ (in step \ref{it:make-tlamp}), for some choices of $\lam, \lam' \in [\Lam], v_\lamp \in [|\MT_\lamp|], j \in [m_\lam], b \in \{0, 1, \ldots, \lfloor 1/\alpha_\lamp \rfloor + 1\}$. %
      Then for the pair $(\MG_\lamp^{v_\lamp}, w_\lamp^{v_\lamp})$ which was previously in $\MT_\lamp$, we have $w_{\lamp, \lam}^{v_\lamp, j, b} = \gamma \cdot w_\lamp^{v_\lamp}$, and $\MG_{\lamp, \lam}^{v_\lamp, j, b} = \MG_\lamp^{v_\lamp}|^{\alpha_\lamp}_{(\til x_j^\lam, b \alpha_\lamp)}$, where $|\til y_j^\lam - b \alpha_\lamp| > 6 \alpha_\lamp$. %
      We have $|\soal{\MG}{\alpha_\lamp}{\til x_j^\lam} - \til y_j^\lam| \leq 5 \alpha_\lamp$, and therefore, $|\soal{\MG}{\alpha_\lamp}{\til x_j^\lam} - b \alpha_\lamp| > \alpha_\lamp$, so by Lemma \ref{lem:far-sfat-dec}, we have $\sfat_{\alpha_\lamp}(\MG_{\lamp,\lam}^{v_\lamp,j,b}) < \sfat_{\alpha_\lamp}(\MG)$. 
    \end{itemize}
  \end{proof}

  Lemma \ref{lem:w-sfat-lb} above shows a lower bound on the size of the individual weights $w_\lam^{v_\lam}$ in the weighted subclass collections $\MT^n$; in order to lower bound the weights $W_{\lam,n}$, we need a lower bound on the number of pairs $(\MG_\lam^{v_\lam}, w_\lam^{v_\lam})$ remaining in the multisets $\MT_\lam^n$; Lemma \ref{lem:2fail-q-grow} below aids in obtaining such a lower bound.
    \begin{lemma}
    \label{lem:2fail-q-grow}
Consider any round $n$ for which the condition in step \ref{it:ms-voting-2} of \HPL (Algorithm \ref{alg:hpl}) fails. Suppose that before this round, for each $\lambda \in [\Lambda]$, there are $q_\lambda \in \BN$ tuples $(\MG_{\lambda}^{v_\lambda}, w_\lambda^{v_\lambda}) \in \MT_\lambda$ with $f^\st \in \MG_\lambda^{v_\lambda}$. Then after round $n$, for some $\lampp \in [\Lambda]$, there are at least $q_\lampp \cdot \left( A + \frac{\mu_0}{128\Lam \alpha_\lampp} \right)$ tuples $((\MG_\lampp^{v_\lampp})', (w_\lampp^{v_\lampp})') \in \MT_\lampp$ with $f^\st \in (\MG_\lampp^{v_\lampp})'$. 
\end{lemma}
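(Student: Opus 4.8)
Proof plan. The plan is to feed the realizing hypothesis $f^\st$ into the defining property of the datasets $\{(\til x_j^\lam,\til y_j^\lam)\}$ chosen in step \ref{it:f-high-err-2} of \HPL, and then bound below how many of the subclasses manufactured in step \ref{it:make-tlamp} still contain $f^\st$. Since the condition in step \ref{it:ms-voting-2} fails at round $n$, Lemma \ref{lem:apply-minmax} guarantees that the chosen datasets satisfy: for every $f\in\MF$ there are $\lam\in[\Lam]$, $\lam'\in[\Lampdim]$ with $\frac{1}{m_{\lam}}\sum_{j}\One[\,|f(\til x_j^\lam)-\til y_j^\lam|>\alpha_{\lam'}\,]>\frac{\alpha_\lam}{16\Lam\alpha_{\lam'}}$. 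First I would apply this with $f=f^\st$, obtaining indices $\lambda_0\in[\Lam]$ and $\lambda_0'\in[\Lampdim]$. The witnessing index in the conclusion will be $\lampp:=\lambda_0'+3$; it lies in $[\Lam]$ precisely because $\lambda_0'\le\Lam-3$ (this is exactly why step \ref{it:f-high-err-2} restricts $\lam'\in[\Lampdim]$), and it satisfies $\alpha_{\lambda_0'}=8\alpha_\lampp$.

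The core of the argument is a per-tuple bookkeeping at scale $\lampp$. Let $J\subseteq[m_{\lambda_0}]$ be the set of \emph{good} indices, i.e., those $j$ with $|f^\st(\til x_j^{\lambda_0})-\til y_j^{\lambda_0}|>\alpha_{\lambda_0'}=8\alpha_\lampp$; the displayed property above gives $|J|>\frac{\alpha_{\lambda_0}m_{\lambda_0}}{16\Lam\alpha_{\lambda_0'}}=\frac{\alpha_{\lambda_0}m_{\lambda_0}}{128\Lam\alpha_\lampp}\ge\frac{\mu_0}{128\Lam\alpha_\lampp}$, using $\alpha_{\lambda_0}m_{\lambda_0}\ge\mu_0$. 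Now fix any of the $q_\lampp$ pre-round tuples $(\MG_\lampp^{v_\lampp},w_\lampp^{v_\lampp})\in\MT_\lampp$ with $f^\st\in\MG_\lampp^{v_\lampp}$, and any good $j\in J$; I would show that step \ref{it:make-tlamp}, run with multiset index $\lampp$ and dataset index $\lambda_0$, produces a nonempty subclass still containing $f^\st$. If $|\soal{\MG_\lampp^{v_\lampp}}{\alpha_\lampp}{\til x_j^{\lambda_0}}-\til y_j^{\lambda_0}|>5\alpha_\lampp$, step \ref{it:2fail-soa-far} sets $\MG_{\lampp,\lambda_0}^{v_\lampp,j,0}=\MG_\lampp^{v_\lampp}\ni f^\st$, so $b=0$ works. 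Otherwise step \ref{it:2fail-soa-close} applies; taking $b:=\lfloor f^\st(\til x_j^{\lambda_0})/\alpha_\lampp\rfloor$ (so $0\le f^\st(\til x_j^{\lambda_0})-b\alpha_\lampp<\alpha_\lampp$ and $0\le b\le\lfloor 1/\alpha_\lampp\rfloor$) I would check $|b\alpha_\lampp-\til y_j^{\lambda_0}|>8\alpha_\lampp-\alpha_\lampp=7\alpha_\lampp>6\alpha_\lampp$, so the subclass is \emph{not} zeroed out and equals $\MG_\lampp^{v_\lampp}|^{\alpha_\lampp}_{(\til x_j^{\lambda_0},b\alpha_\lampp)}$, which contains $f^\st$ since $\lfloor f^\st(\til x_j^{\lambda_0})/\alpha_\lampp\rfloor=b$. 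In either branch exactly one value of $b$ yields an $f^\st$-containing subclass for this $(j,v_\lampp)$.

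To finish, I would note that as $(v_\lampp,j)$ ranges over the $q_\lampp$ $f^\st$-containing pre-round tuples and the indices $j\in J$, these yield (with multiplicity, indexed by the distinct 4-tuples $(\lambda_0,j,b,v_\lampp)$) at least $q_\lampp\cdot|J|$ new tuples in $\MT_\lampp$ containing $f^\st$; together with the $A\cdot\MT_\lampp$ term in step \ref{it:make-tlamp}, which is $A$ verbatim copies of the pre-round $\MT_\lampp$ (here $A=\mu_1/\alpha_\Lam=2^{\Lam-\lam^\st}m_{\lam^\st}$ is a positive integer, so it contributes $Aq_\lampp$ $f^\st$-tuples), the post-round multiset $\MT_\lampp$ contains at least $Aq_\lampp+q_\lampp|J|\ge q_\lampp\bigl(A+\tfrac{\mu_0}{128\Lam\alpha_\lampp}\bigr)$ tuples with $f^\st$, which is the claim. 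The part requiring the most care — the main obstacle — is the two-branch case analysis in the middle paragraph, in particular verifying that in the step \ref{it:2fail-soa-close} branch the restriction point $b\alpha_\lampp$ lands more than $6\alpha_\lampp$ away from $\til y_j^{\lambda_0}$ so that the subclass is not discarded; this quantitative slack is exactly what forces $\lampp=\lambda_0'+3$ (and dovetails with $\lambda_0'\le\Lam-3$). A secondary point to pin down is the multiset semantics of step \ref{it:make-tlamp}, namely that ``$A\cdot\MT_\lampp$'' means $A$ copies and that the union over $(\lam,j,b,v_\lampp)$ adds one tuple per valid index (consistent with the weight sums in Lemma \ref{lem:2fail-w-ub}), so the counts genuinely add.
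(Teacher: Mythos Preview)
Your proposal is correct and follows essentially the same argument as the paper's proof: apply the defining property of step \ref{it:f-high-err-2} to $f^\st$ to obtain indices $(\lambda_0,\lambda_0')$, set $\lampp=\lambda_0'+3$, and for each $f^\st$-containing tuple at scale $\lampp$ and each ``good'' index $j$ do the two-branch case analysis (steps \ref{it:2fail-soa-close} vs.\ \ref{it:2fail-soa-far}) to produce a new $f^\st$-containing subclass; then add the $A$ copies from $A\cdot\MT_\lampp$. Your arithmetic (in particular $|b\alpha_\lampp-\til y_j^{\lambda_0}|>8\alpha_\lampp-\alpha_\lampp>6\alpha_\lampp$, the identity $\alpha_{\lambda_0'}=8\alpha_\lampp$, and the observation that $A=\max_\lam 2^{\Lam-\lam}m_\lam$ is a positive integer) matches the paper's, and you are slightly more explicit than the paper about the multiset semantics of $A\cdot\MT_\lampp$.
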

\begin{proof}
  Set $\lambda \in [\Lam],\lamp \in [\Lampdim]$ to be so that $\frac{1}{m_\lambda} \sum_{j=1}^{m_\lambda} \One[|f^\st(\tilde x_j^\lambda) -\tilde y_j^\lambda| > \alpha_\lamp] > \frac{\alpha_\lam}{16\Lam \alpha_\lamp}$. (This is possible by the property in step \ref{it:f-high-err-2} of \HPL.) Set $\lampp = \lamp + 3\in [\Lam]$. Then $\frac{1}{m_\lam} \sum_{j=1}^{m_\lam} \One[|f^\st(\til x_j^\lam) - \til y_j^\lam| > 7 \alpha_\lampp] > \frac{1}{8} \cdot \frac{\alpha_\lam}{16\Lam \alpha_\lampp}$. Note that if $|f^\st(\til x_j^\lam) - \til y_j^\lam| > 7 \alpha_\lampp$, then $f^\st(\til x_j^\lam) \in [b\alpha_\lampp, (b+1)\alpha_\lampp)$ for some $b$ satisfying $|b\alpha_\lampp - \til y_j^\lam| > 6 \alpha_\lampp$. 
  
  Therefore,  for any $v_\lampp$ such that $f^\st \in \MG_\lampp^{v_\lampp}$, there are at least $\frac{\alpha_\lam m_\lam}{128 \Lam \alpha_\lampp}$ tuples $(j,b) \in [m_\lam] \times \{0, 1, \ldots, \lfloor \alpha_\lampp \rfloor + 1\}$ so that $f^\st \in \MG_{\lampp,\lam}^{v_\lampp,j,b}$: in particular, these correspond to the (at least) $\frac{\alpha_\lam m_\lam}{128 \Lam \alpha_\lampp}$ values of $j$ for which $|f^\st(\til x_j^\lam) - \til y_j^\lam| > 7\alpha_\lampp$, each of which is handled as follows:
  \begin{itemize}
  \item If $|\soal{\MG_\lampp^{v_\lampp}}{\alpha_\lampp}{\til x_j^\lampp} - \til y_j^\lampp| \leq 5 \alpha_\lampp$, then as we have remarked above, there is some $b$ satisfying $|b\alpha_\lampp - \til y_j^\lam | > 6\alpha_\lampp$ so that $f^\st \in \MG_{\lampp,\lam}^{v_\lampp,j,b} = \MG_\lampp^{v_\lampp}|_{(\til x_j^\lam, b\alpha_\lampp)}^{\alpha_\lampp}$ (this corresponds to step \ref{it:2fail-soa-close} of \HPL).
  \item Otherwise, we have $f^\st \in \MG_{\lampp,\lam}^{v_\lampp,j,0} = \MG_\lampp^{v_\lampp}$ (this corresponds to step \ref{it:2fail-soa-far} of \HPL). 
  \end{itemize}
  Since $\MT_\lampp$ after step $n$ contains $A$ copies of the collection $\MT_\lampp$ before step $n$ in addition to the tuples $(\MG_{\lampp,\lam}^{v_\lampp,j,b}, w_{\lampp,\lam}^{v_\lampp,j})$ for each $\lam \in [\Lam]$, $j \in [m_\lam], b \in \{0, 1, \ldots, \lfloor 1/\alpha_\lampp \rfloor + 1\}, v_\lampp \in [|\MT_\lampp|]$ (so that $\MG_{\lampp,\lam}^{v_\lampp,j,b} \neq \emptyset$), it follows that the number of tuples $((\MG_\lampp^{v_\lampp})', (w_\lampp^{v_\lampp})') \in \MT_\lampp$ so that $f^\st \in (\MG_\lampp^{v_\lampp})'$ is at least $$q_\lampp \cdot \left( A + \frac{\alpha_\lam m_\lam}{128 \Lam \alpha_\lampp}\right) \geq q_\lampp \cdot \left( A + \frac{\mu_0}{128 \Lam \alpha_\lampp} \right).$$
\end{proof}

  For $n \in \BN$, let $N_n$ be the number of rounds, up to and including round $n$, for which the condition in step \ref{it:ms-voting-2} fails. Also let $T_n$ be the value of $t$ immediately before executing round $n$ of the algorithm. Finally, let $\MS_n$ denote the set of rounds $n' \leq n$ for which the condition in step \ref{it:ms-voting-2} holds. Note that $|\MS_n| = T_n$.

The following lemma uses the previous lemmas of this section to bound various parameters that show up in our error bounds. The first two bounds (on $\delta_t$ and $\log(W_{\lam,n'-1}/W_{\lam,n'})$) will be used together with Lemma \ref{lem:vote-slow-change-2} to bound the error of the predictors $\bar f_t$, and the final bound (on $N_n$) shows that \HPL terminates after a finite number of steps. %
\begin{lemma}
  \label{lem:bound-potentials}
  For each $n \in \BN$, it holds that
  \begin{align}
    \sum_{t=1}^{T_n} \delta_t \leq & 32 \alpha_\Lam T_n + 64 \Lam \log(1/\gamma) \cdot \sum_{\lam \in [\Lam]} \alpha_\lam \cdot \sfat_{\alpha_\lam}(\MF)\nonumber\\
    \sum_{n' \in \MS_n}\sum_{\lam \in [\Lam]}\alpha_\lam \cdot \log \left( \frac{W_{\lam,n'-1}}{W_{\lam,n'}} \right) \leq & \log(1/\gamma) \cdot \sum_{\lam \in [\Lam]} \alpha_\lam \cdot \sfat_{\alpha_\lam}(\MF)\nonumber\\    
    N_n \leq & 1024\Lam \cdot \frac{A}{\mu_1} \cdot \log(1/\gamma) \sum_{\lam\in[\Lam]} \alpha_\lam \cdot \sfat_{\alpha_\lam}(\MF)\nonumber.
  \end{align}
\end{lemma}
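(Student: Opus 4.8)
The plan is to prove the three inequalities in the reverse of the order stated, since the bound on $N_n$ feeds the log‑ratio bound, which in turn feeds the $\delta_t$ bound. (Throughout, call a round \emph{failing} if the condition in step~\ref{it:ms-voting-2} fails at it.) Everything rests on two estimates for the potentials $W_{\lam,n}$. Write $q_{\lam,n}$ for the number of tuples $(\MG_\lam^{v_\lam},w_\lam^{v_\lam})\in\MT_\lam^n$ containing $f^\st$ (such an $f^\st$ exists by realizability; $q_{\lam,0}=1$ since $\MT_\lam^0=\{(\MF,1)\}$). The lower estimate is
\[ W_{\lam,n}\ \ge\ q_{\lam,n}\cdot\gamma^{\sfat_{\alpha_\lam}(\MF)}\ \ge\ A^{N_n}\cdot\gamma^{\sfat_{\alpha_\lam}(\MF)}, \]
where the first step applies Lemma~\ref{lem:w-sfat-lb} to the $q_{\lam,n}$ many $f^\st$‑tuples, and the second holds because $q_{\lam,\cdot}$ does not decrease on a round of $\MS_n$ (an $f^\st$‑tuple is only restricted in step~\ref{it:2suc-fres-2}, and $f^\st$ survives $\MG|^{\alpha_\lam}_{(x_t,y_t)}$ since $y_t=f^\st(x_t)$, so it is never emptied and never deleted) and is multiplied by at least $A$ on a failing round, since step~\ref{it:make-tlamp} retains $A$ copies of $\MT_\lam$. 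The upper estimate $W_{\lam,n}\le\bigl(A+\tfrac{3\mu_1\Lam}{\alpha_\lam\hvconst}\bigr)^{N_n}$ follows by iterating Lemma~\ref{lem:2fail-w-ub} over the failing rounds, using that $W_{\lam,\cdot}$ is non‑increasing on rounds of $\MS_n$ (weights are only scaled by $\gamma\le1$ in step~\ref{it:2suc-wdec-2}) and that $W_{\lam,0}=1$.

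For the bound on $N_n$: on each failing round Lemma~\ref{lem:2fail-q-grow} supplies some $\lampp$ whose count is multiplied not merely by $A$ but by $A\bigl(1+\tfrac{\mu_0}{128\Lam\alpha_\lampp A}\bigr)$; writing $N_n^{(\lam)}$ for the number of failing rounds ``charged'' to $\lam$, so $\sum_\lam N_n^{(\lam)}=N_n$, this refines the lower estimate to $W_{\lam,n}\ge A^{N_n}\bigl(1+\tfrac{\mu_0}{128\Lam\alpha_\lam A}\bigr)^{N_n^{(\lam)}}\gamma^{\sfat_{\alpha_\lam}(\MF)}$. Comparing against the upper estimate, cancelling $A^{N_n}$, taking logarithms, and using $\tfrac x2\le\log(1+x)\le x$ on the (small, since $A$ is large) arguments yields
\[ N_n^{(\lam)}\cdot\frac{\mu_0}{256\Lam\alpha_\lam A}\ \le\ N_n\cdot\frac{3\mu_1\Lam}{\alpha_\lam\hvconst A}+\sfat_{\alpha_\lam}(\MF)\log(1/\gamma). \]
Multiplying through by $\tfrac{256\Lam\alpha_\lam A}{\mu_0}$ and substituting $\hvconst=\hvval$, $A\alpha_\Lam=\mu_1$, and $\mu_1/\mu_0\le2$ makes the first term on the right at most $N_n/(2\Lam)$; summing over $\lam\in[\Lam]$ and absorbing the resulting $N_n/2$ into the left side gives precisely $N_n\le \tfrac{1024\Lam A}{\mu_1}\log(1/\gamma)\sum_\lam\alpha_\lam\sfat_{\alpha_\lam}(\MF)$.

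For the log‑ratio bound, telescope $\log(W_{\lam,n'-1}/W_{\lam,n'})$ over all $n'\le n$ and split according to whether $n'\in\MS_n$:
\[ \sum_{n'\in\MS_n}\log\frac{W_{\lam,n'-1}}{W_{\lam,n'}}=\log\frac{1}{W_{\lam,n}}+\sum_{n'\text{ failing}}\log\frac{W_{\lam,n'}}{W_{\lam,n'-1}}\le\Bigl(\sfat_{\alpha_\lam}(\MF)\log\tfrac1\gamma-N_n\log A\Bigr)+\Bigl(N_n\log A+\tfrac{3\mu_1\Lam N_n}{\alpha_\lam\hvconst A}\Bigr), \]
where the two potential estimates are used and the $N_n\log A$ terms cancel; multiplying by $\alpha_\lam$, summing over $\lam$, and inserting the $N_n$ bound just proved gives the second claim up to an absolute constant (which harmlessly inflates the constant $64\Lam$ in the first claim). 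Finally, for the $\delta_t$ bound: on any round $n'\in\MS_n$ with $\delta_{t(n')}>32\alpha_\Lam$, Lemma~\ref{lem:2suc-w-dec} produces some $\lam$ with $\log(W_{\lam,n'-1}/W_{\lam,n'})\ge\delta_{t(n')}/(64\Lam\alpha_\lam)$ while all $W_{\mu,\cdot}$ are non‑increasing, so $\delta_{t(n')}\le64\Lam\sum_\mu\alpha_\mu\log(W_{\mu,n'-1}/W_{\mu,n'})$; summing over $\MS_n$, bounding the $\delta_{t(n')}\le32\alpha_\Lam$ rounds trivially by $32\alpha_\Lam T_n$, and applying the log‑ratio bound completes the proof.

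The main obstacle is establishing the clean pair of potential estimates, and in particular the lower bound $W_{\lam,n}\ge A^{N_n}\gamma^{\sfat_{\alpha_\lam}(\MF)}$: this requires checking, via a case analysis over steps~\ref{it:2suc-fres-2}, \ref{it:2fail-soa-close} and~\ref{it:make-tlamp} and using realizability, that $f^\st$‑containing tuples are never destroyed, together with the weight lower bound of Lemma~\ref{lem:w-sfat-lb}. The rest is constant‑chasing: one must use the precise choices $\hvconst=\hvval$, $A=\mu_1/\alpha_\Lam$, and the relation $\mu_1/\mu_0\le2$ to ensure that the $\mu_0$‑scale boost to $q_{\lampp,\cdot}$ guaranteed by Lemma~\ref{lem:2fail-q-grow} on every failing round strictly dominates the $\hvconst^{-1}$‑scale error terms produced by Lemma~\ref{lem:2fail-w-ub}.
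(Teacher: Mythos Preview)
Your approach is essentially the same as the paper's: both proofs rest on the same lower and upper potential estimates for $W_{\lam,n}$, assembled from Lemmas~\ref{lem:w-sfat-lb}, \ref{lem:2fail-q-grow}, \ref{lem:2suc-w-dec}, and \ref{lem:2fail-w-ub}, and both extract the three inequalities by comparing these estimates and summing $\alpha_\lam\cdot(\,\cdot\,)$ over $\lam$.

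The one organizational difference is worth noting. You derive the three claims \emph{sequentially} (bound $N_n$ first, substitute it into the log-ratio bound, then feed that into the $\delta_t$ bound), whereas the paper derives all three \emph{simultaneously} from a single master inequality: combining the refined lower bound $W_{\lam,n}\ge A^{N_n}(1+\tfrac{\mu_0}{128\Lam\alpha_\lam A})^{N_{\lam,n}}\gamma^{\sfat_{\alpha_\lam}(\MF)}$ with the sharper upper bound $W_{\lam,n}\le A^{N_n}(1+\tfrac{3\mu_1\Lam}{\alpha_\lam\hvconst A})^{N_n}\prod_{n'\in\MS_n}\tfrac{W_{\lam,n'}}{W_{\lam,n'-1}}$ and observing that, after multiplying by $\alpha_\lam$ and summing over $\lam$, the $N_{\lam,n}$-boost dominates the $N_n$-error term (this is exactly where $\hvconst=\hvval$ is used). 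Because the paper never substitutes the $N_n$ bound back into the other two, it obtains the exact constants stated; your sequential route picks up an extra factor of~$2$ on the second inequality (and hence inflates the $64\Lam$ in the first), as you correctly flag. This is harmless for the downstream Theorem~\ref{thm:proper-stable-main}, but it means your argument proves the lemma only up to constants, not as literally stated.
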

\begin{proof}
  The realizability assumption gives us that for each $t$, $y_t = f^\st(x_t)$. Thus, in each round $n$ in which the condition in step \ref{it:ms-voting-2} holds, for each $\lam \in [\Lam]$ and each $(\MG_\lam^{v_\lam}, w_\lam^{v_\lam}) \in \MT_\lam$ for which $f^\st \in \MG_\lam^{v_\lam}$ at the beginning of round $n$, after restricting $\MG_\lam^{v_\lam} \gets \MG_\lam^{v_\lam}|^{\alpha_\lam}_{(x_t, y_t)}$, it still holds that $f^\st \in \MG$. For each round $n$ in which the condition in step \ref{it:ms-voting-2} fails, for each $\lam \in [\Lam]$, if $\MT_\lam$ has $q_\lam$ tuples $(\MG_\lam^{v_\lam}, w_\lam^{v_\lam})$ so that $f^\st \in \MG_\lam^{v_\lam}$ at the beginning of round $n$, then step \ref{it:make-tlamp} ensures that after round $n$, $\MT_\lam$ has $A\cdot q_\lam$ tuples $(\MG_\lam^{v_\lam}, w_\lam^{v_\lam})$ so that $f^\st \in \MG_\lam^{v_\lam}$. Moreover, we have the following two facts:
  \begin{itemize}
  \item By Lemma \ref{lem:2fail-q-grow}, for each round $n$ in which the condition in step \ref{it:ms-voting-2} fails, there is some value of $\lamp = \lamp(n) \in [\Lam]$ so that after round $n$, $\MT_\lamp$ has at least $q_\lamp \cdot \left( A + \frac{\mu_0}{128\Lam \alpha_\lamp} \right)$ tuples $(\MG_\lam^{v_\lam}, w_\lam^{v_\lam})$ with $f^\st \in \MG_\lam^{v_\lam}$. (If the condition in step \ref{it:ms-voting-2} holds in round $n$, $\lamp(n)$ is not defined; we write $\lamp(n) = \perp$.) For each $\lam \in [\Lam]$, and $n \geq 1$, let $N_{\lam, n}$ denote the number of rounds $n' \leq n$, for which $\lamp(n') = \lam$. Note that $\sum_{\lam \in [\Lam]} N_{\lam, n} = N_n$.
  \item By Lemma \ref{lem:2suc-w-dec}, for each round $n$ in which the condition in step \ref{it:ms-voting-2} holds, if we let $t$ be the value of $t = t(n)$ at step \ref{it:receive-yt-2}, then the following holds: if $\delta_t > 32 \alpha_\Lam$, then for some $\lampp  = \lampp(n) \in [\Lam]$, $W_{\lampp,n} \leq W_{\lampp,n-1} \cdot \left( 1 - \frac{\delta_t}{64\Lam \alpha_\lampp} \right) \leq W_{\lampp,n-1} \cdot \exp \left( \frac{-\delta_t}{64 \Lam \alpha_\lampp} \right)$. (If $\delta_t \leq 32 \alpha_\Lam$ or the condition in step \ref{it:ms-voting-2} fails, then $\lampp(n)$ is not defined; we write $\lampp(n) = \perp$ for such $n$.) For each $\lam \in [\Lam]$ and $n \geq 1$, let $\MS_{\lam, n}$ denote the set of rounds $n' \leq n$ for which $\lampp(n') = \lam$. %
  \end{itemize}
 By definition of $N_{\lam,n}$, $\MT_\lam$ has at least $A^{N_n} \cdot \left(1 + \frac{\mu_0}{128 \Lam \alpha_\lam A} \right)^{N_{\lam,n}}$ tuples $(\MG_\lam^{v_\lam}, w_\lam^{v_\lam})$ with $f^\st \in \MG$. Combining this fact with Lemma \ref{lem:w-sfat-lb}, we get that the total weight of tuples in $\MT_\lam$ is lower bounded as follows:
  \begin{align}
W_{\lam,n} \geq A^{N_n} \cdot \left(1 + \frac{\mu_0}{128 \Lam \alpha_\lam A} \right)^{N_{\lam,n}} \cdot \gamma^{\sfat_{\alpha_\lam}(\MF)}.\label{eq:reg-wn-lb}
  \end{align}
  We next proceed to compute an upper bound on $W_{\lam,n}$. By Lemma \ref{lem:2suc-w-dec}, for all rounds $n'$ in which the condition in step \ref{it:ms-voting-2} holds, we have $W_{\lam,n} \leq W_{\lam,n-1}$. Further, by Lemma \ref{lem:2fail-w-ub}, for all rounds $n'$ in which the condition in step \ref{it:ms-voting-2} fails, we have $W_{\lam, n} \leq W_{\lam,n-1} \cdot \left( A+ \frac{3\mu_1 \Lam}{\alpha_\lam \hvconst}\right)$ for all $\lam \in [\Lam]$. Combining these facts with the definition of $\MS_{\lam,n}$ above, we get that
  \begin{align}
    W_{\lam,n} \leq &A^{N_n} \cdot \left(1 + \frac{3 \mu_1 \Lam}{\alpha_\lam \hvconst A} \right)^{N_n} \cdot \prod_{n' \in \MS_n} \frac{W_{\lam,n'}}{W_{\lam,n'-1}}  \leq  A^{N_n} \cdot \left(1 + \frac{3 \mu_1 \Lam}{\alpha_\lam \hvconst A} \right)^{N_n} \cdot \prod_{n' \in \MS_{\lam,n}} \exp \left( \frac{-\delta_{t(n')}}{64\Lam \alpha_\lam}\right)\label{eq:reg-wn-ub}
  \end{align}
  Combining (\ref{eq:reg-wn-lb}) and (\ref{eq:reg-wn-ub}), we obtain that, for each $\lam \in [\Lam]$,
  \begin{align}
    \sfat_{\alpha_\lam}(\MF) \cdot \ln(1/\gamma) \geq & \sum_{n' \in \MS_{n}} \ln \left( \frac{W_{\lam,n'-1}}{W_{\lam,n'}}\right)  + N_{\lam, n} \cdot \ln \left( 1 + \frac{\mu_0}{128\Lam \alpha_\lam A}\right) - N_n \cdot \ln \left( 1 + \frac{3\mu_1 \Lam}{\alpha_\lam \hvconst A} \right) \nonumber\\
    \geq & \sum_{n' \in \MS_{n}} \ln \left( \frac{W_{\lam,n'-1}}{W_{\lam,n'}}\right) + N_{\lam,n} \cdot \frac{\mu_1}{512 \Lam \alpha_\lam A} - N_n \cdot \frac{3 \mu_1 \Lam}{\alpha_\lam \hvconst A}\label{eq:bound-lns-0}\\
    \geq & \sum_{n' \in \MS_{\lam,n}} \frac{\delta_{t(n')}}{64\Lam \alpha_\lam} + N_{\lam,n} \cdot \frac{\mu_1}{512 \Lam \alpha_\lam A} - N_n \cdot \frac{3 \mu_1 \Lam}{\alpha_\lam \hvconst A}\label{eq:bound-lns},
  \end{align}
  where the inequality (\ref{eq:bound-lns-0}) uses that $A \geq  \frac{\mu_0}{\alpha_\lam}$ for all $\lam \in [\Lam]$ and $\mu_0 \geq \mu_1/2$. Note that our choice of $\hvconst = \hvval$ gives
  \begin{align}
\sum_{\lam \in [\Lam]} \left( N_{\lam, n} \cdot \frac{\mu_1}{512\Lam  A_\lam} - N_n \cdot \frac{3 \mu_1\Lam}{ \hvconst A_\lam}\right) = N_n \cdot \left( \frac{\mu_1}{512\Lam A} - \frac{3 \mu_1 \Lam^2}{\hvconst A} \right) \geq N_n \cdot \frac{\mu_1}{1024 \Lam A} > 0. 
  \end{align}
  Thus, multiplying (\ref{eq:bound-lns}) by $\alpha_\lam$ and summing it over $\lam \in [\Lam]$ gives that the following hold: 
  \begin{align}
    \sum_{t=1}^{T_n} \delta_t \leq &  32 \alpha_\Lam T_n + \sum_{\lam \in [\Lam]} \sum_{n' \in \MS_{\lam,n}} \delta_{t(n')}\leq 32 \alpha_\Lam T_n + 64 \Lam \ln(1/\gamma) \cdot \sum_{\lam \in [\Lam]} \alpha_\lam \cdot \sfat_{\alpha_\lam}(\MF)\nonumber \\
    \sum_{n' \in \MS_n}\sum_{\lam \in [\Lam]} \alpha_\lam \cdot \ln \left( \frac{W_{\lam,n'-1}}{W_{\lam,n'}} \right) \leq & \ln(1/\gamma) \cdot \sum_{\lam \in [\Lam]} \alpha_\lam \cdot \sfat_{\alpha_\lam}(\MF)\nonumber\\
    N_n \leq & 1024\Lam\cdot \frac{A}{\mu_1} \cdot \ln(1/\gamma) \sum_{\lam\in[\Lam]} \alpha_\lam \cdot \sfat_{\alpha_\lam}(\MF)\nonumber.
  \end{align}
\end{proof}

Fix any smoothing parameter $\eta > 0$ so that $1/\eta \in \BN$, and write $\xi := 1/\eta$. Consider the distributions $\bar f_1, \ldots, \bar f_T$ output by \HPL. Fix arbitrary distributions $\bar f_0, \ldots, \bar f_{1-\xi} \in \Delc(\MF)$. %
We define the \emph{$\eta$-stabilized distributions} $\bar h_1, \ldots, \bar h_T$ as follows: for $t \in [T]$, we define
\begin{align}
  \label{eq:def-eta-smoothed}
\bar h_t = \eta \sum_{s=0}^{\xi-1} \bar f_{t-s}.
\end{align}
Clearly it holds that $\tvnorm{ \bar h_t - \bar h_{t+1}}\leq 2\eta$ for all $T \in [T-1]$, and $\bar h_t \in \Delc(\MF)$ for each $t \in [T]$. The below lemma gives a bound on the \cumls for the sequence $\bar h_t$:
\begin{theorem}[Near-optimal stable proper learning]
  \label{thm:proper-stable-main}
Fix any $\eta > 0$. In the realizable setting, the $\eta$-stabilized distributions $\bar h_t$ defined in (\ref{eq:def-eta-smoothed}) from the output of \HPL (Algorithm \ref{alg:hpl}) satisfy:
  \begin{align}
\sum_{t=1}^T \E_{h \sim \bar h_t} \left[|h(x_t) - y_t|\right] \leq \frac{1}{\eta} \cdot O \left( \log^6(T) \cdot \min_{\alpha \in [0,1]} \left\{ \alpha T + \int_\alpha^1  \sfat_\eta(\MF) d\eta \right\}\right)\label{eq:proper-learner-integral}.
  \end{align}
  Further, $\tvnorm{\bar h_t - \bar h_{t+1}} \leq 2\eta$ for all $t \in [T-1]$ and $\bar h_t \in \Delc(\MF)$ for all $T \in [T]$.
\end{theorem}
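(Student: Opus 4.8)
The plan is to relate the loss of the window-averaged predictors $\bar h_t$ to that of the raw predictors $\bar f_t$, using Lemma~\ref{lem:vote-slow-change-2} to control how the error of a ``stale'' predictor $\bar f_{t-s}$ degrades when it is evaluated on a later example $(x_t,y_t)$, and then bounding the resulting sum via Lemma~\ref{lem:bound-potentials}. First I would note that \HPL terminates: by the third bound of Lemma~\ref{lem:bound-potentials} the number of outer-loop rounds on which the condition in step~\ref{it:ms-voting-2} fails is finite, while on every other round $t$ increments, so all $T$ examples are processed and $\bar f_1,\ldots,\bar f_T\in\Delc(\MF)$ are well defined. As in the proof of Proposition~\ref{prop:reg-realizable} I fix a minimizer $\alpha^\st\in[1/T,1]$ of $\alpha T+\int_\alpha^1\sfat_\eta(\MF)\,d\eta$ and set $\Lambda=\lfloor\log(1/(2\alpha^\st))\rfloor$, so $\Lambda=O(\log T)$, $\alpha_\Lambda\asymp\alpha^\st$, and $\sum_{\lambda=1}^{\Lambda}\alpha_\lambda\sfat_{\alpha_\lambda}(\MF)\le\int_{\alpha_\Lambda}^1\sfat_\eta(\MF)\,d\eta\le\min_\alpha\{\alpha T+\int_\alpha^1\sfat_\eta(\MF)\,d\eta\}$.

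Next I would expand $\sum_{t=1}^T\E_{h\sim\bar h_t}[|h(x_t)-y_t|]=\eta\sum_{t=1}^T\sum_{s=0}^{\xi-1}\E_{f\sim\bar f_{t-s}}[|f(x_t)-y_t|]$, discard the $O(\xi^2)$ boundary pairs with $t-s\le0$ (each contributing at most $1$, for a total $O(1/\eta)$), and for each remaining pair write $n+1$ for the success round producing $\bar f_{t-s}$ (so $t(n+1)=t-s$) and $n+n_0$, with $n_0\ge1$, for the success round processing $(x_t,y_t)$ (so $t(n+n_0)=t$). The central step is to apply Lemma~\ref{lem:vote-slow-change-2} with this $n$ and $n_0$ at the point $(x_t,y_t)$, observing: (a) its ``residual'' term $5\hvconst\cdot\Terr{\MT^{n+n_0-1},x_t,y_t}$ is exactly $5\hvconst\,\delta_t$, since $\MT^{n+n_0-1}$ is precisely the weighted subclass collection from which $\delta_t$ is computed in step~\ref{it:receive-yt-2} of the round processing $(x_t,y_t)$; and (b) when $s=0$ (so $n_0=1$) the other two terms vanish and one simply gets $\E_{f\sim\bar f_t}[|f(x_t)-y_t|]\le2\hvconst\,\delta_t$.

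The remaining work is bookkeeping: sum the three terms of Lemma~\ref{lem:vote-slow-change-2} over all valid pairs $(t,s)$ and multiply by $\eta$. The key combinatorial estimate is that any fixed round of the outer loop lies in the window $\{n+1,\ldots,n+n_0-1\}$ of only $O(\xi^2)$ pairs $(t,s)$, and a fixed example index $t$ appears in the $s$-sum for only $\xi$ values of the outer index; since $\eta\xi^2=\xi=1/\eta$, each of the three contributions picks up exactly one factor $1/\eta$. Concretely, after reordering the sums: the $\delta_t$-terms sum to $O(\hvconst)\sum_t\delta_t$, bounded by the first inequality of Lemma~\ref{lem:bound-potentials}; the weight-log-ratio terms sum to $O(\hvconst/\eta)\sum_{n'}\sum_\lambda\alpha_\lambda\ln(W_{\lambda,n'-1}/W_{\lambda,n'})$, bounded by the second inequality; and the $|\MS_1|$-terms sum to $O(\hvconst\Lambda^2\alpha_\Lambda/\eta)\cdot N$, where $N$ is the total number of failure rounds, bounded by the third inequality together with $A/\mu_1=1/\alpha_\Lambda$. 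Feeding in $\hvconst=3072\Lambda^3$, $\log(1/\gamma)=\log(\hvconst/\alpha_\Lambda)=O(\Lambda)$, $\Lambda=O(\log T)$, and the estimate on $\sum_\lambda\alpha_\lambda\sfat_{\alpha_\lambda}(\MF)$ above, a careful accounting of the polylogarithmic factors yields the claimed bound $\frac1\eta\cdot O(\log^6 T)\cdot\min_{\alpha\in[0,1]}\{\alpha T+\int_\alpha^1\sfat_\eta(\MF)\,d\eta\}$. Finally $\bar h_t\in\Delc(\MF)$ because it is an average of $\xi$ finite-support distributions, and $\tvnorm{\bar h_t-\bar h_{t+1}}=\eta\,\tvnorm{\bar f_{t+1}-\bar f_{t+2-\xi}}\le2\eta$ since consecutive windows differ in a single summand.

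I expect the main obstacle to be the last two steps: correctly matching, for each stale predictor and future example, the rounds $(n,n_0)$ to which Lemma~\ref{lem:vote-slow-change-2} is applied, recognizing its residual term as exactly $\delta_t$, and then verifying that the double-counting across pairs $(t,s)$ inflates the bound by precisely $1/\eta$ and no more. Once this is set up, everything reduces to substituting the three estimates of Lemma~\ref{lem:bound-potentials}.
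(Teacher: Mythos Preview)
Your proposal is correct and follows essentially the same route as the paper: expand the window average, apply Lemma~\ref{lem:vote-slow-change-2} to each stale predictor (identifying the residual as $5\hvconst\,\delta_t$), use the $O(\xi^2)$ double-counting to exchange sums, and plug in the three estimates of Lemma~\ref{lem:bound-potentials}. The only cosmetic difference is that the paper reindexes first (writing $\bar f_t$ on $(x_{t+s},y_{t+s})$ rather than $\bar f_{t-s}$ on $(x_t,y_t)$), and one small slip: the $\delta_t$-contribution does \emph{not} pick up a $1/\eta$ factor (as your concrete breakdown correctly has it), so your earlier sentence that ``each of the three contributions picks up exactly one factor $1/\eta$'' should be amended.
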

\begin{proof}
As in the proof of Proposition \ref{prop:reg-realizable}, choose $\alpha \in [1/T,1]$ minimizing the expression on the right-hand side of (\ref{eq:proper-learner-integral}), and set $\Lam = \lfloor 1/(2\alpha) \rfloor \leq \log T$. Since $\hvconst = \hvval = O(\log^3 T)$ and $\log(1/\gamma) = O(\log T)$, it suffices to show
    \begin{align}
\sum_{t=1}^T \E_{h \sim \bar h_t} \left[ |h(x_t) - y_t|\right] \leq O \left( \frac{\hvconst \cdot \Lam^2}{\eta} \right) \cdot \left( \alpha_\Lam T + \log(1/\gamma) \sum_{\lam=1}^\Lam \alpha_\lam \sfat_{\alpha_\lam}(\MF) \right),\nonumber
    \end{align}
    when \HPL is run with the chosen scale parameter $\Lam$.

  For $t \in [T]$ and $0 \leq s < \xi$, let $\MM_{0,t,s}$ be the set of rounds $n$ starting at the round when $x_t$ is observed and up to (but not including) the round where $x_{t+s}$ is observed, so that the condition in step \ref{it:ms-voting-2} holds in round $n$. Let $\MM_{1,t,s}$ be the set of such rounds $n$ (i.e., starting at $x_t$, and up to but not including $x_{t+s}$) for which the condition in step \ref{it:ms-voting-2} fails in round $n$. Let $\MM_0$ be the set of all rounds $n$ (up to, and including, the round that $x_T$ is observed) for which the condition in step \ref{it:ms-voting-2} holds in round $n$, and let $\MM_1$ be the set of all rounds $n$ for which the condition in step \ref{it:ms-voting-2} fails in round $n$. Furthermore, recall the definition of $\delta_t$ in step \ref{it:receive-yt-2} of Algorithm \ref{alg:hpl}. 
  By the definition of $\bar h_t$, we have 
  \begin{align}
    & \sum_{t=1}^T \E_{h \sim \bar h_t} \left[|h(x_t) - y_t|\right] \nonumber\\
    \leq & \eta \sum_{t=1}^T\sum_{s=0}^{\xi-1} \E_{f \sim \bar f_{t-s}} \left[| f(x_t) - y_t |\right] \nonumber\\
    \leq & \xi + \eta \sum_{t=1}^{T-\xi+1}\sum_{s=0}^{\xi-1} \E_{f \sim \bar f_t} \left[| f(x_{t+s}) - y_{t+s} |\right] \nonumber \\
    \leq & \xi + \eta \sum_{t=1}^{T-\xi+1} \sum_{s=0}^{\xi-1} \left[ 80 \cdot \hvconst \sum_{n' \in \MM_{0,t,s}}\sum_{\lam \in [\Lam]} \alpha_\lam \cdot \ln \left( \frac{W_{\lam,n'-1}}{W_{\lam,n'}}\right) + \frac{|\MM_{1,t,s}|}{A} \cdot 150 \hvconst\mu_1 \Lam^2 + 5 \hvconst \cdot \delta_{t+s}\right]\label{eq:apply-lazy-lemma}\\
    \leq & \xi + \frac{80 \hvconst}{\eta} \cdot \sum_{n' \in \MM_0} \sum_{\lam \in [\Lam]} \alpha_\lam \ln \left( \frac{W_{\lam,n'-1}}{W_{\lam,n'}}\right) + \frac{|\MM_1| \cdot 150 \hvconst \mu_1 \Lam^2}{A\eta} + 5\hvconst \sum_{t=1}^T \delta_t\label{eq:ts-exchange}\\
    \leq & \xi + \frac{80 \hvconst}{\eta} \cdot \log(1/\gamma) \sum_{\lam=1}^\Lam \alpha_\lam \sfat_{\alpha_\lam}(\MF) + 1024\Lam \log(1/\gamma) \cdot \frac{150\hvconst\Lam^2}{\eta}\sum_{\lam=1}^\Lam \alpha_\lam \sfat_{\alpha_\lam}(\MF)\label{eq:bound-by-sfat}\\
    &+ 5\hvconst \cdot \left(32 \alpha_\Lam T + 64 \Lam \log(1/\gamma) \sum_{\lam=1}^\Lam \alpha_\lam \sfat_{\alpha_\lam}(\MF)\right)\nonumber\\
    \leq & O \left( \frac{\hvconst \cdot \Lam^2}{\eta} \right) \cdot \left( \alpha_\Lam T + \log(1/\gamma) \sum_{\lam=1}^\Lam \alpha_\lam \sfat_{\alpha_\lam}(\MF) \right)\nonumber,
  \end{align}
  where:
  \begin{itemize}
  \item (\ref{eq:apply-lazy-lemma}) follows from Lemma \ref{lem:vote-slow-change-2};
  \item (\ref{eq:ts-exchange}) follows from exchanging the order of summation and noting that for each $n' \in \MM_0$, there are at most $\xi^2$ values of $(t,s)$ so that $n' \in \MM_{0,t,s}$ and for each $n' \in \MM_1$, there are at most $\xi^2$ values of $(t,s)$ so that $n' \in \MM_{1,t,s}$;
  \item (\ref{eq:bound-by-sfat}) follows from Lemma \ref{lem:bound-potentials}; notice that we have used here that $|\MM_1| = N_n$, where $n$ is total number of iterations of the outer while loop of \HPL. 
  \end{itemize}
\end{proof}

\section{Path-length regret bound for a stable proper learner}
\label{sec:stable-proper}
In this section we prove Theorem \ref{thm:path-stable}, obtaining a proper agnostic learner that gets a path-length regret bound. As we do throughout the paper, we assume that the given function class $\MF$ has finite sequential fat-shattering dimension at all scales.
\begin{theorem}[Path-length regret bound for a stable online learner]
  \label{thm:path-stable}
  Suppose that $\alpha$ is chosen so that $1 \leq \alpha T \leq \sfat_\alpha(\MF)$ and $\alpha \leq \kappa$. Moreover suppose that for all $t < T$, the examples $(x_t, y_t)$ satisfy $\infnorms{ x_t - x_{t+1}}{\MF} \leq \kappa$ and $|y_t - y_{t+1}| \leq \kappa$. Then, for any $\Gamma \geq 1$ \OSE with step size $\etaOH = \etaPSR = \kappa/\Gamma$ obtains a regret of
  \begin{align}
    \sum_{t=1}^T \E_{f_t \sim \bar f_t}[|f_t(x_t) - y_t|] - \sum_{t=1}^T |f_\st(x_t) - y_t| \leq &  O \left( \frac{\Gamma \cdot \sfat_\alpha(\MF) \cdot \log^6 T}{\kappa} + \frac{\kappa^3 \cdot T}{\Gamma} \right)\label{eq:kappa-regret}
  \end{align}
  Further, for any choices of $\etaOH, \etaPSR > 0$ (and without restriction on the $(x_t, y_t)$), the iterates $\bar f_t$ of \OSE belong to $\Delc(\MF)$ and are stable in the following sense: for all $t < T$,
  \begin{align}
\tvnorm{ \bar f_t - \bar f_{t+1}} \leq & 5\etaOH + 3 \etaPSR \nonumber. %
  \end{align}
  
  In particular, if we have $\Gamma \etaOH = \Gamma \etaPSR = \kappa = (\sfat_\alpha(\MF)/T)^{1/4} \cdot \log^{3/4}(T)$, then the regret is bounded above by $\tilde O \left( \Gamma \cdot \sfat_\alpha(\MF)^{3/4} \cdot T^{1/4} \right)$.
\end{theorem}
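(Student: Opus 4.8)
The plan is to analyze \OSE, which runs Optimistic Hedge (step size $\etaOH$) over a family of \emph{SOA-experts}, each of which is an independent instance of the stable proper realizable learner \HPL/\PSR of Theorem~\ref{thm:proper-stable-main}, stabilized with parameter $\etaPSR$. First I would set up the expert family via the online cover / SOA-experts construction (\cite[Theorem 13.7]{rakhlin_statistical_2014}; see also \cite{ben-david_agnostic_2009,rakhlin_online_2015}) instantiated at scale $\alpha$: there is a family of $N \le (CT/\alpha)^{\sfat_\alpha(\MF)}$ experts, maintained online alongside $x_1,\dots,x_T$, such that for the optimal $f_\st\in\MF$ some expert $i^\st$ feeds its \HPL instance a label sequence $(x_t,\hat y_t^{(i^\st)})$ that is exactly realized by a function $f_{i^\st}\in\MF$ with $\infnorms{f_{i^\st}(x_t)-f_\st(x_t)}{} \le \alpha$ at every observed point. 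Since $\alpha T\ge 1$, this gives $\log N = O(\sfat_\alpha(\MF)\log(T/\alpha)) = O(\sfat_\alpha(\MF)\log T)$. Each expert $i$ thus produces $\bar f^{(i)}_t\in\Delc(\MF)$, and \OSE plays the mixture $\bar f_t=\sum_i p_t^{(i)}\bar f^{(i)}_t\in\Delc(\MF)$, which is proper.

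Next I would decompose the regret as (Optimistic Hedge regret against expert $i^\st$) plus (expert $i^\st$'s loss $-\sum_t |f_\st(x_t)-y_t|$). For the second piece, Theorem~\ref{thm:proper-stable-main} applied to the realizable sequence fed to expert $i^\st$, combined with the elementary bound $\int_\alpha^1\sfat_\delta(\MF)\,d\delta \le 2\,\sfat_\alpha(\MF)$ (sum the dyadic pieces and use monotonicity of $\sfat$) and the hypothesis $\alpha T\le\sfat_\alpha(\MF)$, yields $\sum_t \E_{f\sim\bar f^{(i^\st)}_t}[|f(x_t)-\hat y_t^{(i^\st)}|] \le \frac{1}{\etaPSR}\,O(\log^6 T\cdot\sfat_\alpha(\MF))$; a triangle inequality with $|\hat y_t^{(i^\st)}-y_t|\le |f_\st(x_t)-y_t|+\alpha$ then gives expert $i^\st$'s excess loss $\le \frac{1}{\etaPSR}O(\log^6 T\cdot\sfat_\alpha(\MF)) + \alpha T$. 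For the first piece I use the path-length (RVU-type) regret bound of \cite{syrgkanis_fast_2015} as a black box: the Hedge regret is $\le \frac{\log N}{\etaOH} + \etaOH\sum_{t=2}^T\|\ell_t-\ell_{t-1}\|_\infty^2$ with $\ell^{(i)}_t = \E_{f\sim\bar f^{(i)}_t}[|f(x_t)-y_t|]$. The crucial input is that the loss vectors change slowly: for each expert $i$, $|\ell^{(i)}_t-\ell^{(i)}_{t-1}| \le \tfrac12\tvnorm{\bar f^{(i)}_t-\bar f^{(i)}_{t-1}} + \infnorms{x_{t-1}-x_t}{\MF} + |y_{t-1}-y_t| \le O(\etaPSR) + \kappa + \kappa = O(\kappa)$, using the stability guarantee of Theorem~\ref{thm:proper-stable-main} for the first term, the movement hypotheses for the other two, and $\etaPSR=\kappa/\Gamma\le\kappa$. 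Hence the Hedge regret is $O(\log N/\etaOH + \etaOH\kappa^2 T)$.

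Combining the two pieces with $\etaOH=\etaPSR=\kappa/\Gamma$ gives total regret $O\!\big(\frac{\Gamma\,\sfat_\alpha(\MF)\log^6 T}{\kappa}\big) + O\!\big(\frac{\kappa^3 T}{\Gamma}\big)$, where $\alpha T\le\sfat_\alpha(\MF)$ is absorbed into the first summand since $\Gamma\ge1$ and $\kappa\le1$; this is exactly (\ref{eq:kappa-regret}). For the stability claim I would write $\tvnorm{\bar f_t-\bar f_{t+1}} \le \sum_i p_t^{(i)}\tvnorm{\bar f^{(i)}_t-\bar f^{(i)}_{t+1}} + \|p_t-p_{t+1}\|_1 \le 3\etaPSR + 5\etaOH$, the first term by Theorem~\ref{thm:proper-stable-main} and the second from the standard multiplicative-weights stability of the Optimistic Hedge update with $[0,1]$-bounded loss estimates. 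The ``in particular'' consequence follows by plugging $\kappa=(\sfat_\alpha(\MF)/T)^{1/4}\log^{3/4}T$ into (\ref{eq:kappa-regret}) and simplifying. The main obstacle I anticipate is the online-cover step: verifying that the SOA-expert family has the stated size, can be generated online before seeing $x_{1:T}$, and has the property that expert $i^\st$'s feed is \emph{exactly} realizable by a function $\alpha$-close to $f_\st$ on the observed points — i.e., matching the interface required by Theorem~\ref{thm:proper-stable-main} to the output of the cover construction. A secondary point needing care is invoking the Optimistic Hedge path-length bound with predictions equal to the previous loss vector, so that the $\|\ell_t-\ell_{t-1}\|_\infty^2$ term has no hidden $N$-dependence.
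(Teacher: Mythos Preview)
Your proposal follows essentially the same structure as the paper's proof: SOA-experts at scale $\alpha$, each running a stabilized \HPL instance, aggregated by Optimistic Hedge with the path-length bound of \cite{syrgkanis_fast_2015}; the regret decomposition, the use of Theorem~\ref{thm:proper-stable-main} for the best expert, the $|\ell_t(E)-\ell_{t+1}(E)|\le O(\kappa)$ computation, and the stability triangle inequality all match.

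The one point where your sketch is not quite right is precisely the obstacle you flag. The expert $E_\st$'s labels $\hat y_t^{(i^\st)}=E_\st(x_{1:t})=\soal{\MF(t)}{\alpha}{x_t}$ are \emph{not} in general realized by any single $f\in\MF$; Lemma~\ref{lem:cover-experts} only guarantees $|f_\st(x_t)-E_\st(x_{1:t})|\le\alpha$ pointwise. The paper fixes this by running each expert's \HPL on the $\alpha$-augmented class $\MF^\alpha$ (Definition~\ref{def:augmented}), so that the expert's feed \emph{is} exactly realizable in $\MF^\alpha$; Lemma~\ref{lem:augmented-fs} controls $\sfat_{4\alpha}(\MF^\alpha)\le\sfat_{2\alpha}(\MF)$, and a deterministic ``canonical'' map $\can{\cdot}{\alpha}:\MF^\alpha\to\MF$ (applied coordinatewise to the \HPL output) returns the prediction to $\Delc(\MF)$ at the cost of an additive $\alpha T$ in loss and no loss in TV stability (data processing). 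With this device in place your argument goes through verbatim.
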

The main ingredient in the proof is Theorem \ref{thm:proper-stable-main} from the previous section which gives an (optimal) stable and proper learner for the setting of realizable online regression. Given this result, the proof of Theorem \ref{thm:proper-stable-main} is mostly standard, using results of \cite{ben-david_agnostic_2009,rakhlin_online_2015} and \cite{syrgkanis_fast_2015}. %

\subsection{Defining the experts} As discussed in Section \ref{sec:techniques}, the general idea of the proof is to use the SOA-experts framework of \cite{ben-david_agnostic_2009,rakhlin_online_2015}.\footnote{For references in this section to \cite{rakhlin_online_2015}, see in particular the version at \url{https://arxiv.org/pdf/1006.1138v1.pdf}.} We begin by defining the experts in this setting in Definition \ref{def:experts} below. Let $\MX^\st$ be the set of all finite sequences of elements of $\MX$. Each expert is a function $E : \MX^\st \ra [0,1]$; $E(x_1, \ldots, x_t)$ should be interpreted as the label that the expert $E$ predicts for $x_t$ given that it has already seen $x_1, \ldots, x_{t-1}$. 
\begin{defn}[\cite{rakhlin_online_2015}]
  \label{def:experts}
  Fix $T \in \BN$, any $\MF \subset [0,1]^\MX$ and $\alpha \in (0,1)$, and set $d_\alpha := \sfat_\alpha(\MF)$. For each tuple $(I, \sigma)$, where $I$ is a subset $I \subset [T]$ of size $|I| \leq d_\alpha$, and $\sigma \in \{0, 1, \ldots, \lceil 1/\alpha \rceil - 1 \}^{|I|}$, define the expert $E_{(I,\sigma)} : \MX^\st \ra [0,1]$ by
  $$
E_{(I,\sigma)}(x_1, \ldots, x_t) = \soal{\MF(t)}{\alpha}{x_t},
$$
where $\MF(t)$ is defined inductively via $\MF(1) = \MF$ and
\begin{align}
  \MF(t+1) = \begin{cases}
    \MF(t) \qquad : t \not \in I \\
    \MF(t)|^\alpha_{(x_t, \sigma_{i_t} \cdot \alpha)} \qquad : t \in I,
    \end{cases}\nonumber
\end{align}
where for $t \in I$, $i_t \in \{ 1, \ldots, |I| \}$ is defined so that $t$ is the $i_t$th smallest element of $I$. We denote the set of experts $E_{(I,\sigma)}$ given $T, \alpha$ by $\CE_{T,\alpha}$ (the class $\MF$ is implicit in our notation).
\end{defn}
The set of all experts $E_{(I,\sigma)}$ of Definition \ref{def:experts} can be seen as an algorithmic version of a sequential cover \cite[Definition 4]{rakhlin_sequential_2015}. Lemma \ref{lem:count-experts} bounds the number of experts in $\CE_{T,\alpha}$. 
\begin{lemma}
  \label{lem:count-experts}
Given $T \in \BN$, $\MF \subset [0,1]^\MX$ and $\alpha \in (0,1)$, the number of experts in the set $\CE_{T,\alpha}$ of Definition \ref{def:experts} is at most $\left( \frac{2eT}{\alpha} \right)^{\sfat_\alpha(\MF)}$.
\end{lemma}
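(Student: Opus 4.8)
The plan is to simply count the number of tuples $(I,\sigma)$ that index the experts in $\CE_{T,\alpha}$, since each expert $E_{(I,\sigma)}$ is determined by such a tuple. Recall that $I$ ranges over subsets of $[T]$ of size at most $d_\alpha := \sfat_\alpha(\MF)$, and for each such $I$, $\sigma$ ranges over $\{0,1,\ldots,\lceil 1/\alpha\rceil -1\}^{|I|}$. So the total count is $\sum_{k=0}^{d_\alpha} \binom{T}{k} \lceil 1/\alpha \rceil^{k}$.

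First I would bound $\sum_{k=0}^{d_\alpha}\binom{T}{k} \le \left(\frac{eT}{d_\alpha}\right)^{d_\alpha}$ — this is the standard estimate for partial sums of binomial coefficients (valid as long as $d_\alpha \le T$, which we may assume, as otherwise the bound is vacuous since $\sfat_\alpha(\MF)$ would exceed $T$ and the claimed bound exceeds the trivial count anyway). Since each term $\binom{T}{k}\lceil 1/\alpha\rceil^k$ is increasing in $k$ (for the regime of interest) and $\lceil 1/\alpha\rceil \le 2/\alpha$, I would bound $\sum_{k=0}^{d_\alpha}\binom{T}{k}\lceil 1/\alpha\rceil^k \le \lceil 1/\alpha\rceil^{d_\alpha}\sum_{k=0}^{d_\alpha}\binom{T}{k} \le (2/\alpha)^{d_\alpha}\cdot (eT/d_\alpha)^{d_\alpha} \le \left(\frac{2eT}{\alpha d_\alpha}\right)^{d_\alpha} \le \left(\frac{2eT}{\alpha}\right)^{d_\alpha}$, using $d_\alpha \ge 1$ in the last step.

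There is no real obstacle here; the only mild subtlety is making sure the crude bound $\sum_k \binom{T}{k} \le (eT/d_\alpha)^{d_\alpha}$ and the replacement $\lceil 1/\alpha\rceil \le 2/\alpha$ are applied in the right order so that the constants come out to exactly $2e$ as stated, and handling the degenerate case $d_\alpha > T$ (or $d_\alpha$ infinite) separately — but in the latter case the lemma is vacuous or the class is not online learnable, which is excluded by our standing assumption. The whole argument is a two- or three-line computation.
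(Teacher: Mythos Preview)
Your proposal is correct and takes essentially the same approach as the paper: count the tuples $(I,\sigma)$, yielding $\sum_{k} \binom{T}{k}\lceil 1/\alpha\rceil^{k}$, and bound this by $(2eT/\alpha)^{\sfat_\alpha(\MF)}$ using $\lceil 1/\alpha\rceil \le 2/\alpha$ and the standard binomial-sum estimate. The only quibble is that your aside ``each term $\binom{T}{k}\lceil 1/\alpha\rceil^k$ is increasing in $k$'' is neither needed nor always true; what you actually use is just $\lceil 1/\alpha\rceil^k \le \lceil 1/\alpha\rceil^{d_\alpha}$, which is fine.
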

\begin{proof}
  The number of experts $(I,\sigma)$ is at most
  $$
\sum_{s=1}^{\sfat_\alpha(\MF)} {T \choose s} \cdot \left \lceil \frac{1}{\alpha} \right\rceil^s \leq \left( \frac{2eT}{\alpha} \right)^{\sfat_\alpha(\MF)}.
$$
\end{proof}

Lemma \ref{lem:cover-experts} shows that the set of experts covers the class $\MF$ in an online sense.
\begin{lemma}[Lemma 15, \cite{rakhlin_online_2015}]
  \label{lem:cover-experts}
  Given $T \in \BN$, $\MF \subset [0,1]^\MX$, and $\alpha \in (0,1)$, for each $f \in \MF$ and any sequence $(x_1, \ldots, x_T)$, there exists some expert $E \in \CE_{T,\alpha}$ so that for all $t \in [T]$,
  \begin{align}
 | f(x_t) - E(x_1, \ldots, x_t)| \leq \alpha\nonumber.
  \end{align}
\end{lemma}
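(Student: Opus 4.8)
The plan is to run the scale-$\alpha$ analogue of Littlestone's Standard Optimal Algorithm against the realizable sequence generated by $f$, to record the rounds on which it performs a restriction, and to check that the associated expert in $\CE_{T,\alpha}$ agrees with $f$ up to $\alpha$ at every round. Concretely: fix $f\in\MF$ and a sequence $(x_1,\dots,x_T)$, and put $y_t:=f(x_t)$. The first step is to build a nonincreasing chain $\MF=\MF(1)\supseteq\MF(2)\supseteq\cdots$ together with the index set $I\subseteq[T]$ and the string $\sigma$ as follows: given $\MF(t)$ (which will always contain $f$, hence be nonempty), set $\hat y_t:=\soal{\MF(t)}{\alpha}{x_t}$; if $|\hat y_t-y_t|\le\alpha$ declare $t\notin I$ and $\MF(t+1):=\MF(t)$, and otherwise declare $t\in I$, let $\sigma_{i_t}:=\lfloor y_t/\alpha\rfloor$ be the index of the $\alpha$-bucket containing $y_t$ (with $i_t$ the rank of $t$ in $I$), and set $\MF(t+1):=\MF(t)|^{\alpha}_{(x_t,y_t)}$, which by Definition~\ref{def:ss-res} equals $\MF(t)|^{\alpha}_{(x_t,\sigma_{i_t}\alpha)}$. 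Since $y_t=f(x_t)$, each restriction keeps $f$, so every $\MF(t)$ is nonempty.

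The second step is to argue $|I|\le\sfat_\alpha(\MF)=d_\alpha$. Whenever $t\in I$ we have $|\soal{\MF(t)}{\alpha}{x_t}-y_t|>\alpha$, so Lemma~\ref{lem:far-sfat-dec} gives $\sfat_\alpha(\MF(t+1))<\sfat_\alpha(\MF(t))$; as $t\mapsto\sfat_\alpha(\MF(t))$ is a nonincreasing sequence of nonnegative integers starting at $d_\alpha$ which drops by at least one on each round of $I$, there can be at most $d_\alpha$ such rounds. Consequently $(I,\sigma)$ indexes a legitimate expert $E_{(I,\sigma)}\in\CE_{T,\alpha}$, and by construction the auxiliary classes appearing in Definition~\ref{def:experts} for this expert are exactly the $\MF(t)$ above; hence $E_{(I,\sigma)}(x_1,\dots,x_t)$ is precisely the prediction of the simulated algorithm on round $t$, i.e.\ the SOA hypothesis of the current class after the round-$t$ restriction is applied when $t\in I$.

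The third step is to verify the pointwise bound. For $t\notin I$ it is immediate from the definition of a non-update round, $|\hat y_t-y_t|\le\alpha$. For $t\in I$, the class used for the round-$t$ prediction has been restricted to the bucket $[\sigma_{i_t}\alpha,(\sigma_{i_t}+1)\alpha)$ containing $y_t$, so every member $g$ of it has $\lfloor g(x_t)/\alpha\rfloor=\sigma_{i_t}$; then in Definition~\ref{def:soa-hypothesis} the unique nonempty (hence maximizing) bucket at $x_t$ is $j^\st=\sigma_{i_t}$, so the SOA hypothesis equals $(\sigma_{i_t}+1)\alpha$, which is within $\alpha$ of $y_t\in[\sigma_{i_t}\alpha,(\sigma_{i_t}+1)\alpha)$. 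This yields $|E_{(I,\sigma)}(x_1,\dots,x_t)-f(x_t)|\le\alpha$ for all $t$, as required.

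I expect the only genuinely delicate point to be the behaviour on the update rounds $t\in I$: unlike the classical (regret-style) SOA cover, here we need the expert to stay $\alpha$-accurate even on the rounds where it restricts, and this works exactly because the bucket stored in $\sigma$ is the one containing $y_t$ and the SOA hypothesis of a bucket-restricted class is pinned to that bucket. Everything else is the standard mistake-counting argument via Lemma~\ref{lem:far-sfat-dec}, with realizability ($f\in\MF(t)$ throughout) used to keep each $\MF(t)$ nonempty so that $\soal{\MF(t)}{\alpha}{\cdot}$ is well-defined.
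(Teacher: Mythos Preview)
There is a genuine gap in Step~3 for rounds $t\in I$. By Definition~\ref{def:experts}, the expert predicts $\soal{\MF(t)}{\alpha}{x_t}$, where $\MF(t)$ reflects restrictions only at rounds $1,\dots,t-1$; the round-$t$ restriction (when $t\in I$) produces $\MF(t+1)$ and first affects the prediction at round $t+1$. Your own Step~2 correctly identifies $E_{(I,\sigma)}(x_1,\dots,x_t)$ with $\hat y_t=\soal{\MF(t)}{\alpha}{x_t}$, but the appended clause ``i.e.\ the SOA hypothesis \dots\ after the round-$t$ restriction is applied'' contradicts this, and it is that latter (incorrect) reading that drives your Step~3. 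For any round $t$ you place into $I$, you did so precisely because $|\hat y_t-y_t|>\alpha$; since the expert's round-$t$ output is exactly $\hat y_t$, its error there exceeds~$\alpha$.

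This is not a slip that a different choice of $I$ can repair: with Definition~\ref{def:experts} as written, every expert has $\MF(1)=\MF$, so at $t=1$ all experts output the same value $\soal{\MF}{\alpha}{x_1}$, which need not lie within $\alpha$ of $f(x_1)$ (for instance take $\MF$ to be all four $\{0,1\}$-valued functions on a two-point domain $\{a,b\}$, $\alpha=0.4$, $x_1=a$, and $f\equiv 1$; then the SOA tie-breaks to $j^\st=0$ and predicts $0.4$, giving error $0.6>\alpha$). What makes the covering statement go through is to have the expert apply the round-$t$ restriction \emph{before} predicting at round $t$, i.e.\ $E_{(I,\sigma)}(x_1,\dots,x_t)=\soal{\MF(t+1)}{\alpha}{x_t}$; under that convention your Step~3 argument is exactly right, since $\MF(t+1)$ is then restricted at $x_t$ to the bucket containing $y_t$, forcing $j^\st=\sigma_{i_t}$ and hence error at most~$\alpha$. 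Your Steps~1--2 and the mistake counting via Lemma~\ref{lem:far-sfat-dec} are correct regardless.
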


Since Lemma \ref{lem:cover-experts} only proimses that some expert has error $\alpha$ with respect to any given hypothesis $f$, yet Lemma Theorem \ref{thm:proper-stable-main} (for proper learning) requires that its input sequence be \emph{exactly} realizable, we need to work with the $\alpha$-augmented class for a given class $\MF$, defined below. 
\begin{defn}[$\alpha$-augmented class]
  \label{def:augmented}
  For a real-valued class $\MF \subset [0,1]^\MX$ and $\alpha > 0$, define the \emph{$\alpha$-augmented class} $\MF^\alpha$ by
  \begin{align}
\MF^\alpha := \left\{ f' \in [0,1]^\MX : \ \exists f \in \MF \mbox{ such that } \infnorms{ f' - f }{\MX} \leq \alpha \right\}.\nonumber
  \end{align}
\end{defn}
For each element $f' \in \MF^\alpha$, fix some element $\can{f'}{\alpha} \in \MF$ (a ``canonical element'') so that $\infnorms{f' - f}{\MX} \leq \alpha$. We may extend this definition to elements of $\Delc(\MF^\alpha)$ as follows: for $\bar f' = \sum_{i=1}^K w_i \cdot \delta_{f_i'}$, $f_i' \in \MF^\alpha$, set $\can{\bar f'}{\alpha} := \sum_{i=1}^K w_i \cdot \delta_{\can{f_i'}{\alpha}}$. This definition will be used to ensure stability of the learner \OSE.

Lemma \ref{lem:augmented-fs} bounds the $\alpha$-sequential fat-shattering dimension of an augmented class in terms of that of the original class. 
\begin{lemma}
  \label{lem:augmented-fs}
For any class $\MF \subset [0,1]^\MX$, and any $\alpha > 0$, it holds that, for all $\alpha' \geq 4\alpha$, $\sfat_{\alpha'}(\MF^\alpha) \leq \sfat_{\alpha'/2}(\MF)$.
\end{lemma}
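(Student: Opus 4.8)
The plan is to argue by contradiction via a direct transfer of the shattering tree. Suppose $\sfat_{\alpha'}(\MF^\alpha) = d$, witnessed by an $\MX$-valued tree $\bx$ of depth $d$ and a $[0,1]$-valued witness tree $\bs$, so that for every sign pattern $k_{1:d} \in \{-1,1\}^d$ there is $f' \in \MF^\alpha$ with $k_t \cdot (f'(\bx_t(k_{1:t-1})) - \bs_t(k_{1:t-1})) \geq \alpha'/2$ for all $t \in [d]$. The claim is that \emph{the same} tree $\bx$ is $(\alpha'/2)$-shattered by $\MF$, using a shifted witness tree $\bs'$ defined by $\bs'_t(k_{1:t-1}) := \bs_t(k_{1:t-1})$ (so in fact the same witness works, up to the weaker margin). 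Indeed, fix $k_{1:d}$ and let $f' \in \MF^\alpha$ be the witnessing function; by Definition \ref{def:augmented} there is $f \in \MF$ with $\infnorms{f' - f}{\MX} \leq \alpha$, so for every $t$,
\begin{align*}
k_t \cdot (f(\bx_t(k_{1:t-1})) - \bs_t(k_{1:t-1})) \geq k_t \cdot (f'(\bx_t(k_{1:t-1})) - \bs_t(k_{1:t-1})) - \alpha \geq \alpha'/2 - \alpha \geq \alpha'/4,
\end{align*}
where the last step uses $\alpha' \geq 4\alpha$, i.e. $\alpha \leq \alpha'/4$. Since $\alpha'/4 = (\alpha'/2)/2$, this says exactly that $f$ witnesses the required margin condition for the tree $\bx$ at scale $\alpha'/2$. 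As $k_{1:d}$ was arbitrary, $\MF$ $(\alpha'/2)$-shatters $\bx$, so $\sfat_{\alpha'/2}(\MF) \geq d = \sfat_{\alpha'}(\MF^\alpha)$.

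There is essentially no main obstacle here — the only point requiring a small amount of care is the bookkeeping with margins in Definition \ref{def:sfat}: the definition requires a margin of (scale)$/2$, so at scale $\alpha'$ we start with margin $\alpha'/2$, lose $\alpha$ to the $\ell_\infty$ approximation, and need the remainder $\alpha'/2 - \alpha$ to be at least (scale')$/2 = \alpha'/4$, which is precisely the hypothesis $\alpha' \geq 4\alpha$. (The factor of $4$ is not tight — any constant giving a nonnegative remaining margin would do for a qualitative statement — but this choice makes the scale drop cleanly from $\alpha'$ to $\alpha'/2$.) The witness tree and the feature tree are reused verbatim, so no tree surgery or depth manipulation is needed; the argument is a one-line robustness observation about fat-shattering under uniform perturbations of the class.
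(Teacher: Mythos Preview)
Your proof is correct and takes essentially the same approach as the paper: take a depth-$d$ tree $\alpha'$-shattered by $\MF^\alpha$, replace each witnessing $f' \in \MF^\alpha$ by a nearby $f \in \MF$, and observe the remaining margin $\alpha'/2 - \alpha \geq \alpha'/4$ certifies $(\alpha'/2)$-shattering of $\MF$ by the same tree. The paper phrases this as a contradiction (assume depth $d > \sfat_{\alpha'/2}(\MF)$) rather than as a direct inequality, but the substance is identical.
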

\begin{proof}
  Set $d_0 := \sfat_{\alpha'/2}(\MF)$. 
Suppose for the purpose of contradiction that there were some trees $\bx, \bs$ of depth $d > d_0$ so that for all $k_{1:d} \in \{-1,1\}^d$, there is some $f \in \MF^\alpha$ so that $k_t \cdot (f(\bx_t(k_{1:t-1})) - \bs_t(k_{1:t-1})) \geq \alpha'/2$ for all $t \in [d]$. Then there is some $f' \in \MF$ so that $k_t \cdot (f'(\bx_t(k_{1:t-1})) - \bs_t(k_{1:t-1})) \geq \alpha'/2 - \alpha \geq \alpha'/4$ for all $t \in [d]$, i.e., the trees $\bx,\bs$ witness an $\alpha'/2$-shattering of $\MF$, which is a contradiction to $d_0 < d$.
\end{proof}

\begin{algorithm}[!htp]
  \caption{\bf \OSE}\label{alg:ose}
  \KwIn{Function class $\MF \subset [0,1]^\MX$, time horizon $T \in \BN$, scale $\alpha > 0$, step size $\eta > 0$.} 
  \begin{enumerate}[leftmargin=14pt,rightmargin=20pt,itemsep=1pt,topsep=1.5pt]
  \item Set $\MF^\alpha := \{ f \in [0,1]^\MX : \ \exists f' \in \MF \mbox{ such that } \infnorms{ f' - f }{\MX} \leq \alpha \}$.
  \item Initialize $\omega_{E,1} = 1/|\CE_{T,\alpha}|$ for all $E \in \CE_{T,\alpha}$.
  \item For $1 \leq t \leq T$:
    \begin{enumerate}
    \item For each $E \in \CE_{T,\alpha}$, define $\bar h_t(E) \in \Delc(\MF^\alpha)$ to be the $\etaPSR/2$-smoothed hypotheses (as defined in (\ref{eq:def-eta-smoothed})) of the output of  \HPL (Algorithm \ref{alg:hpl}) %
      given the class $\MF^\alpha$, the parameter $\Lam = \lfloor \log 1/(4\alpha) \rfloor$, and the input sequence $(x_1, E(x_1)), (x_2, E(x_{1:2})), \ldots, (x_{t-1}, E(x_{1:t-1}))$.
    \item For each $E \in \CE_{T,\alpha}$, set $\bar g_t(E) := \can{\bar h_t(E)}{\alpha}\in \Delc(\MF)$ to be canonical randomized hypothesis for $\bar h_t(E)$. %
    \item Predict the hypothesis $\bar f_t := \sum_{E \in \CE_{T,\alpha}} \omega_{E,t} \cdot \bar g_t(E) \in \Delc(\MF)$.
    \item Receive $(x_t, y_t)$, draw $f_t \sim \bar f_t$ and suffer loss $|f_t(x_t) - y_t|$.
    \item For each expert $E \in \CE_{T,\alpha}$, compute the loss $\ell_t(E) := \E_{g \sim \bar g_t(E)} \left[|g(x_t) - y_t|\right]$.
    \item \label{it:omwu-omega} Update the weights $\{ \omega_{E,t} \}_{E \in \CE_{T,\alpha}}$ using Optimistic Exponential Weights, i.e.,
      \begin{align}
\omega_{E,t+1} := \frac{\omega_{E,t} \cdot \exp \left( -\eta \cdot (2\ell_t(E) - \ell_{t-1}(E)) \right)}{\sum_{E' \in \CE_{T,\alpha}}  \omega_{E',t}\cdot \exp \left( -\eta \cdot (2\ell_t(E') - \ell_{t-1}(E')) \right)}.\nonumber
      \end{align}
    \end{enumerate}
  \end{enumerate}
\end{algorithm}

\begin{lemma}
  \label{lem:losses-close}
  For any $t < T$, the following hold:
  \begin{itemize}
  \item For all $E \in \CE_{T,\alpha}$, $\tvnorm{ \bar g_t(E) - \bar g_{t+1}(E) } \leq \etaPSR$;%
  \item Suppose that $\infnorms{ x_t - x_{t+1}}{\MF} \leq \kappa$ and $|y_t - y_{t+1}| \leq \kappa$, and that \PSR is run with step size $\etaPSR$. Then, for all $E \in \CE_{T,\alpha}$, $| \ell_t(E) - \ell_{t+1}(E) | \leq 2\kappa + \etaPSR$. %
  \end{itemize}
\end{lemma}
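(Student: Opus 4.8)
\emph{Proof sketch.} Both claims will follow from Theorem~\ref{thm:proper-stable-main} combined with two elementary facts about total variation distance: that it does not increase under a deterministic pushforward, and that $|\E_\mu \phi - \E_\nu \phi| \le \|\phi\|_\infty \cdot \tvnorm{\mu - \nu}$ for bounded $\phi$.

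For the first item, fix an expert $E$. By construction (step 3(a) of \OSE), $\bar h_t(E)$ is the $\etaPSR/2$-smoothed output, in the sense of (\ref{eq:def-eta-smoothed}), of \HPL run on the input sequence $(x_1, E(x_1)), \ldots, (x_{t-1}, E(x_{1:t-1}))$ with class $\MF^\alpha$ and scale parameter $\Lam = \lfloor \log 1/(4\alpha) \rfloor$, while $\bar h_{t+1}(E)$ is the same quantity after appending the one further example $(x_t, E(x_{1:t}))$. Hence $\bar h_t(E)$ and $\bar h_{t+1}(E)$ are the $\etaPSR/2$-stabilized predictions of \HPL on two consecutive rounds, and the stability conclusion of Theorem~\ref{thm:proper-stable-main} (applied with smoothing parameter $\eta = \etaPSR/2$; the input sequence is realizable with respect to $\MF^\alpha$ by the very design of the augmented class, cf.\ Lemma~\ref{lem:cover-experts}) gives $\tvnorm{\bar h_t(E) - \bar h_{t+1}(E)} \le 2 \cdot \etaPSR/2 = \etaPSR$. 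Since $\bar g_t(E) = \can{\bar h_t(E)}{\alpha}$ is the image of $\bar h_t(E)$ under the fixed deterministic map $f' \mapsto \can{f'}{\alpha}$ from $\MF^\alpha$ to $\MF$, and total variation distance cannot increase under such a map, $\tvnorm{\bar g_t(E) - \bar g_{t+1}(E)} \le \tvnorm{\bar h_t(E) - \bar h_{t+1}(E)} \le \etaPSR$; in particular $\bar g_t(E), \bar g_{t+1}(E) \in \Delc(\MF)$.

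For the second item, write $\phi(g) := |g(x_t) - y_t| \in [0,1]$ and split
\begin{align*}
\ell_t(E) - \ell_{t+1}(E) = \left( \E_{g \sim \bar g_t(E)}[\phi(g)] - \E_{g \sim \bar g_{t+1}(E)}[\phi(g)] \right) + \E_{g \sim \bar g_{t+1}(E)}\!\left[ |g(x_t) - y_t| - |g(x_{t+1}) - y_{t+1}| \right].
\end{align*}
The first bracket is at most $\|\phi\|_\infty \cdot \tvnorm{\bar g_t(E) - \bar g_{t+1}(E)} \le \etaPSR$, using $\|\phi\|_\infty \le 1$ and the first item. For the second bracket, I would bound it by $\E_{g \sim \bar g_{t+1}(E)}\big[\, \big| |g(x_t) - y_t| - |g(x_{t+1}) - y_{t+1}| \big| \,\big]$ and apply the reverse triangle inequality to the integrand with $a = g(x_t) - y_t$, $b = g(x_{t+1}) - y_{t+1}$, obtaining the per-$g$ bound $|g(x_t) - g(x_{t+1})| + |y_t - y_{t+1}|$. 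Since $\bar g_{t+1}(E)$ is supported on $\MF$, every $g$ in its support satisfies $|g(x_t) - g(x_{t+1})| \le \sup_{g' \in \MF} |g'(x_t) - g'(x_{t+1})| = \infnorms{x_t - x_{t+1}}{\MF} \le \kappa$; together with $|y_t - y_{t+1}| \le \kappa$ this makes the second bracket at most $2\kappa$. Summing, $|\ell_t(E) - \ell_{t+1}(E)| \le \etaPSR + 2\kappa$.

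The only point requiring care is the well-definedness of $\bar h_t(E)$ — i.e., that \HPL actually produces an output on the given input — which needs the input sequence to be realizable with respect to $\MF^\alpha$; this is precisely what motivates running the inner learner on the augmented class (Definition~\ref{def:augmented}, Lemma~\ref{lem:cover-experts}) rather than on $\MF$. I do not expect any other obstacle; the argument is otherwise a direct bookkeeping exercise.
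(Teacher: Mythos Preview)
Your proof is correct and follows essentially the same approach as the paper: the first item via Theorem~\ref{thm:proper-stable-main} (stability of the smoothed \HPL iterates) together with the data-processing inequality for the deterministic map $\can{\cdot}{\alpha}$, and the second item via the same split into a TV-distance term and a term handled by the reverse triangle inequality plus $\infnorms{x_t-x_{t+1}}{\MF}\le\kappa$.

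One small correction to your closing remark: Lemma~\ref{lem:cover-experts} only guarantees realizability in $\MF^\alpha$ for the \emph{particular} expert $E_\st$ that covers $f_\st$, not for an arbitrary $E\in\CE_{T,\alpha}$, so it does not by itself establish well-definedness of $\bar h_t(E)$ for every expert. Fortunately this does not matter for the present lemma: the stability conclusion $\tvnorm{\bar h_t-\bar h_{t+1}}\le 2\eta$ in Theorem~\ref{thm:proper-stable-main} is a purely structural consequence of the sliding-window definition (\ref{eq:def-eta-smoothed}) and holds whenever the underlying $\bar f_s$'s are defined, independent of realizability. The paper's proof invokes Theorem~\ref{thm:proper-stable-main} for stability in exactly the same way, without appealing to realizability for general $E$.
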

\begin{proof}
By Theorem \ref{thm:proper-stable-main}, we have that $\tvnorm{ \bar h_t(E) - \bar h_{t+1}(E)} \leq \etaPSR$ for all experts $E$. Thus, by the definition of $\can{\cdot}{\alpha}$ and the data processing inequality, $\tvnorm{ \bar g_t(E) - \bar g_{t+1}(E) }\leq \etaPSR$ for all experts $E$. %
  
  We may now compute 
  \begin{align}
    | \ell_t(E) - \ell_{t+1}(E) | =& \left|\ \E_{g \sim \bar g_t(E)}[|g(x_t) - y_t|] - \E_{g \sim \bar g_{t+1}(E)}[|g(x_{t+1}) - y_{t+1}|]\ \right|\nonumber\\
    \leq & \tvnorm{\bar g_t(E) - \bar g_{t+1}(E) } + \left| \ \E_{g \sim \bar g_{t+1}(E)} \left[ |g(x_t) - y_t| - |g(x_{t+1}) - y_{t+1}| \right] \ \right|\nonumber\\
    \leq &  \tvnorm{\bar g_t(E) - \bar g_{t+1}(E) } + |y_t - y_{t+1}| + \E_{g \sim \bar g_{t+1}(E)} \left[ |g(x_t) - g(x_{t+1}) | \right]\nonumber\\
    \leq &  \tvnorm{\bar g_t(E) - \bar g_{t+1}(E) } + |y_t - y_{t+1}| + \infnorms{x_t - x_{t+1}}{\MF}\nonumber\\
    \leq & 2\kappa + \etaPSR\nonumber,
  \end{align}
  where the final inequality uses that $\tvnorm{ \bar g_t(E) - \bar g_{t+1}(E) } \leq \etaPSR$, $|y_t - y_{t+1}| \leq \kappa$, and $\infnorms{x_t - x_{t+1}}{\MF} \leq \kappa$.
\end{proof}

Finally, we are ready to prove Theorem \ref{thm:path-stable}.
\begin{proof}[Proof of Theorem \ref{thm:path-stable}]
  Without loss of generality we may assume that $\kappa \geq (\sfat_\alpha(\MF)/T)^{1/4} \cdot \log^{3/4}T$ (since the expression on the right-hand side of (\ref{eq:kappa-regret}) is minimized at $\kappa = (\sfat_\alpha(\MF)/T)^{1/4} \cdot \log^{3/4}T$, meaning that we can make $\kappa$ larger if it is less than $(\sfat_\alpha(\MF)/T)^{1/4} \cdot \log^{3/4}T$).
  
  By Lemma \ref{lem:losses-close} and the fact that $\max\{\alpha ,\etaOH, \etaPSR \}\leq \kappa$, we have that for each $t < T$ and each expert $E \in \CE_{T,\alpha}$, $|\ell_t(E) - \ell_{t+1}(E)| \leq 5\kappa$.
  Set
\begin{align}
  f_\st = \argmin_{f \in \MF} \sum_{t=1}^T |f(x_t) - y_t |.\nonumber
\end{align}
By Lemma \ref{lem:cover-experts}, there is some expert $E_\st \in \CE_{T,\alpha}$ so that for all $t \in [T]$, $|f_\st(x_t) - E_\st(x_1, \ldots, x_t)| \leq \alpha$. Thus, there is some $f_\st^\alpha \in \MF^\alpha$ so that for all $t \in [T]$, $f_\st^\alpha(x_t) = E_\st(x_1, \ldots, x_t)$. By Theorem \ref{thm:proper-stable-main} with $\eta = \etaPSR/2$, it follows that
\begin{align}
  \sum_{t=1}^T \E_{g \sim \bar g_t(E_\st)} \left[ \left| g(x_t) - E_\st(x_1, \ldots, x_t) \right|\right]   \leq & \alpha T +  \sum_{t=1}^T \E_{h \sim \bar h_t(E_\st)} \left[\left| h(x_t) - E_\st(x_1, \ldots, x_t) \right|\right] \nonumber\\
  \leq & O \left( \frac{\log^6 T}{\etaPSR} \cdot \left( \alpha T + \sfat_{4\alpha}(\MF^\alpha) \right)\right)\nonumber\\
  \leq & O \left( \frac{\log^6 T}{\etaPSR} \cdot \left( \alpha T + \sfat_{\alpha}(\MF) \right)\right)\label{eq:best-expert-mb},
\end{align}
where the final inequality above follows from Lemma \ref{lem:augmented-fs}.

By \cite[Theorem 11]{syrgkanis_fast_2015}, we have that
\begin{align}
  \sum_{t=1}^T \E_{f_t \sim \bar f_t} \left[|f_t(x_t) - y_t|\right] = & \sum_{t=1}^T \sum_{E \in \CE_{T,\alpha}} \omega_{E,t} \cdot \E_{g \sim \bar g_t(E)} [|g(x_t) - y_t|]\nonumber\\
  \leq & \min_{E \in \CE_{T,\alpha}} \left\{ \sum_{t=1}^T \E_{g \sim \bar g_t(E)}[|g(x_t) - y_t|] \right\} + \frac{\log |\CE_{T,\alpha}|}{\etaOH} + \etaOH \cdot (5\kappa)^2 \cdot T \nonumber\\
  \leq & \sum_{t=1}^T \E_{g \sim \bar g_t(E_\st)}[|g(x_t) - y_t|] + \frac{\log |\CE_{T,\alpha}|}{\etaOH} + \etaOH \cdot (5\kappa)^2 \cdot T \nonumber\\
  \leq & \sum_{t=1}^T \E_{g \sim \bar g_t(E_\st)}[|g(x_t) - E_\st(x_1, \ldots, x_t)|] + \sum_{t=1}^T |E_\st(x_1, \ldots, x_t) - f_\st(x_t)| \label{eq:triangle-inequality-3}\\
  &+ \sum_{t=1}^T |y_t - f_\st(x_t)| + \frac{\log |\CE_{T,\alpha}|}{\etaOH} +  \etaOH \cdot (5\kappa)^2 \cdot T \nonumber\\
  \leq & \sum_{t=1}^T |y_t - f_\st(x_t)| + O \left( \frac{\log^6 T}{\etaPSR} \cdot \left( \alpha T + \sfat_{\alpha}(\MF) \right)\right) \label{eq:realizable-plugin}\\
  & + O \left(\frac{\sfat_\alpha(\MF) \cdot \log(T/\alpha) }{\etaOH}\right) +  \etaOH \cdot (5\kappa)^2 \cdot T \nonumber,
\end{align}
where (\ref{eq:triangle-inequality-3}) uses the triangle inequality and (\ref{eq:realizable-plugin}) uses (\ref{eq:best-expert-mb}) and Lemma \ref{lem:count-experts} (which bounds $|\CE_{T,\alpha}|$). 

By choosing $\etaOH = \etaPSR = \kappa/\Gamma \leq \kappa$ and using that $\alpha \geq 1/T$ and $\alpha T \leq \sfat_\alpha(\MF)$, we obtain
\begin{align}
  \sum_{t=1}^T \E_{f_t \sim \bar f_t}[|f_t(x_t) - y_t|] - \sum_{t=1}^T |f_\st(x_t) - y_t| \leq & O \left( \frac{\Gamma \cdot \log^6 T}{\kappa} \cdot (\alpha T + \sfat_\alpha(\MF)) + \frac{\kappa^3 \cdot T}{\Gamma} \right)\nonumber\\
  \leq &  O \left( \frac{\Gamma \cdot \sfat_\alpha(\MF) \cdot \log^6 T}{\kappa} + \frac{\kappa^3 \cdot T}{\Gamma} \right)\nonumber.
\end{align}
Finally, when $\kappa = \frac{1}{\Gamma} \cdot (\sfat_\alpha(\MF)/T)^{1/4} \cdot \log^{3/4}T$, we obtain
\begin{align}
  \sum_{t=1}^T \E_{f_t \sim \bar f_t}[|f_t(x_t) - y_t|] - \sum_{t=1}^T |f_\st(x_t) - y_t| \leq & \tilde O \left(\Gamma \cdot  \sfat_\alpha(\MF)^{3/4} \cdot T^{1/4} \right)\nonumber.
\end{align}

Since each $\bar f_t$ is a finite convex combination of the collection of $\bar g_t(E)$, each of which is an element of $\Delc(\MF)$, it holds that $\bar f_t \in \Delc(\MF)$ as well. 
Finally we bound the stability of the iterates $\bar f_t$: for any $t < T$,
\begin{align}
  \tvnorm{ \bar f_t -\bar f_{t+1} } =& \tvnorm{ \sum_{E \in \CE_{T,\alpha}} \left( \omega_{E,t} \cdot \bar g_t(E) - \omega_{E,t+1} \cdot \bar g_{t+1}(E) \right) } \nonumber\\
  \leq &  \tvnorm{\sum_{E \in \CE_{T,\alpha}} \left( \omega_{E,t} \cdot \bar g_t(E) - \omega_{E,t+1} \cdot \bar g_{t}(E) \right) } +  \tvnorm{ \sum_{E \in \CE_{T,\alpha}} \left( \omega_{E,t+1} \cdot \bar g_t(E) - \omega_{E,t+1} \cdot \bar g_{t+1}(E) \right) }\nonumber\\
  \leq & \sum_{E \in \CE_{T,\alpha}} \left|  \omega_{E,t} - \omega_{E,t+1} \right| + \sum_{E \in \CE_{T,\alpha}} \omega_{E,t+1} \cdot \tvnorm{ \bar g_t(E) - \bar g_{t+1}(E) } \nonumber\\
  \leq & (\exp(2\etaOH) - 1) \cdot \frac{1}{\exp(-2\etaOH)} +  \max_{E \in \CE_{T,\alpha}} \tvnorm{ \bar g_t(E) - \bar g_{t+1}(E)} \label{eq:use-omwu-close}\\
  \leq & 5\etaOH + 3\etaPSR \label{eq:final-stability},
\end{align}
where (\ref{eq:use-omwu-close}) follows from the Optimistic Exponential Weights updates in step \ref{it:omwu-omega} of Algorithm \ref{alg:ose}, and (\ref{eq:final-stability}) follows from the fact that $\exp(4\eta) - \exp(2\eta) \leq 5\eta$ for $0 < \eta \leq 1/4$, and Lemma \ref{lem:losses-close}. %

\end{proof}

\section{Fast rates for learning in games}
\label{sec:games}
In this section we present a key application of the stable proper learner \OSE in Section \ref{sec:stable-proper}: we show that when multiple agents in a game each run the algorithm \OSE, then they can converge to equilibrium at faster rates than the typical $1/\sqrt{T}$ ones.

\subsection{Problem setting: Littlestone games}
We begin by defining the notion of games we consider, which generalizes finite-action normal form games to the case of extremely large or infinite action spaces. Further, we focus on the case of games for which the payoff for each player is in $\{0,1\}$ under any pure strategy profile; our setup generalizes that of \cite{hanneke_online_2021}, which considered the special case of 2-player 0-sum Littlestone games.
\begin{defn}[General-sum Littlestone games]
  \label{def:lgame}
  Consider any integer $K \in \BN$, denoting the number of players, and sets $\MF_1, \ldots, \MF_K$. Write $\MF_{-k} := \prod_{j \in [K] \backslash \{ k\}} \MF_j$. A function $\ell : \MF_1 \times \cdots \MF_K \ra \{0,1\}$ is said to define a \emph{Littlestone payoff function} if the following holds: for each $k \in [K]$, the class
  \begin{align}
    \label{eq:fk-define}
\lgame{\MF}{k}{\ell}:= \left\{ f_{-k} \mapsto \ell(f_k, f_{-k}) \ : \ f_k \in \MF_k \right\} \subset \{0,1\}^{\MF_{-k}}
  \end{align}
  has finite Littlestone dimension. %
A \emph{Littlestone (general-sum)} game is a $K$-tuple of Littlestone payoff functions, namely a tuple $\ell = (\ell_1, \ldots, \ell_K)$. We say that \emph{Littlestone dimension of the game} is $\max_{k \in [K]} \{\Ldim(\lgame{\MF}{k}{\ell_k})\}$.
\end{defn}
For $k \in [K]$, the payoff function $\ell_k$ in Definition \ref{def:lgame} denotes the payoff function for player $k$ in the Littlestone game. %
It is immediate that all finite-action normal form games are Littlestone games.

Before proceeding, we make the following definition: for a class $\MF \subset \{0,1\}^\MX$, define a class $\mix{\MF} \subset [0,1]^{\Delc(\MX)}$, in bijection with $\MF$, as follows: for each $f \in \MF$, the corresponding $f \in \mix{\MF}$ is defined by, for $P \in \Delc(\MX)$, $f(P) := \E_{x \sim P}[f(x)]$.

To help describe how we apply \OSE in the context of learning in Littlestone games, we need to make an additional definition: for a Littlestone game $\ell = (\ell_1, \ldots, \ell_K)$ with action sets $\MF_1, \ldots, \MF_K$, for each $k \in [K]$, define the \emph{loss set} $\lgame{\ML}{k}{\ell_k}$ of player $k$ as  follows:
\begin{align}
\lgame{\ML}{k}{\ell_k} := \{ f_k \mapsto \ell(f_k, f_{-k}) \ : \ f_{-k} \in \MF_{-k}\} \subset \{0,1\}^{\MF_k}\nonumber.
\end{align}
In words, $\lgame{\ML}{k}{\ell_k}$ is the set of mappings from $f_k$ to $\{0,1\}$ which may be realized as the loss of player $k$ given some valid actions of all other players. To avoid confusion, we denote elements of $\lgame{\ML}{k}{\ell_k}$ with a capital $L$. It is evident that $\lgame{\ML}{k}{\ell_k}$ is the dual class of $\lgame{\MF}{k}{\ell_k}$; thus we may view $\lgame{\MF}{k}{\ell_k}$ as a set of mappings from $\lgame{\ML}{k}{\ell_k}$ to $\{0,1\}$, i.e., for $f_k \in \lgame{\MF}{k}{\ell_k}$ and $L_k \in \lgame{\ML}{k}{\ell_k}$, we have $f_k(L_k) := L_k(f_k)$. 

We consider the following \emph{independent learning setting} in Littlestone games, which directly generalizes the setting of independent learning in normal-form games. Consider a Littlestone game $\ell = (\ell_1, \ldots, \ell_K)$, with action sets $\MF_1, \ldots, \MF_K$:
\begin{itemize}
\item For each time step $1 \leq t \leq T$:
  \begin{enumerate}
  \item Each player $k$ plays a distribution over actions $\bar f_k^t \in \Delc(\MF_k)$. 
  \item Each player $k$ observes %
    its loss function $L_k^t \in \Delc(\lgame{\ML}{k}{\ell_k})$ at time step $t$, namely the mapping $L_k^t(f_k) := \E_{f_{-k} \sim \bar f_{-k}^t} \left[ \ell_k(f_k, f_{-k})\right]$. %
  \item Each player $k$ suffers loss $\ell_k(\bar f_k^t, \bar f_{-k}^t)$; notice that this loss value may also be written as $\bar f_k^t(L_k^t)$, by viewing $\bar f_k^t$ as an element of $\Delc(\mix{\lgame{\MF}{k}{\ell_k}})$. %
  \end{enumerate}
\end{itemize}

\subsection{Independent learning algorithm for fast rates in games}
\begin{lemma}
  \label{lem:sfat-mix}
  Given a class $\MF \subset \{0,1\}^\MX$, %
  it holds that $\sfat_\alpha(\mix{\MF}) \leq O \left(\Ldim(\MF) \cdot \log(\Ldim(\MF)/\alpha)\right)$.
\end{lemma}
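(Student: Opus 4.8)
The plan is to bound $\sfat_\alpha(\mix{\MF})$ by exhibiting, from any $\alpha$-shattered tree for $\mix{\MF}$, a combinatorial structure that forces the Littlestone dimension of $\MF$ to be large — essentially a "fat-shattering implies Littlestone shattering after discretization" argument combined with a Sauer--Shelah-type counting. The key observation is that elements of $\mix{\MF}$ are affine (expectation) functionals: for $f \in \MF$ and $P \in \Delc(\MX)$, $f(P) = \E_{x \sim P}[f(x)] \in [0,1]$, and by finite-supportedness every $P$ is a convex combination of point masses. So $\mix{\MF}$ is precisely the closed convex hull (restricted to finitely supported mixtures) of the Boolean class $\MF$, viewed over the domain $\Delc(\MX)$.

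First I would set $d := \sfat_\alpha(\mix{\MF})$ and take an $\MX'$-valued tree $\bx$ (where $\MX' = \Delc(\MX)$) together with a witnessing $[0,1]$-valued tree $\bs$, both of depth $d$, so that for every leaf $k_{1:d} \in \{-1,1\}^d$ there is $f \in \MF$ with $k_t(f(\bx_t(k_{1:t-1})) - \bs_t(k_{1:t-1})) \ge \alpha/2$ for all $t$. Now I would invoke the online Sauer--Shelah lemma (the paper cites \cite[Theorem 13.7]{rakhlin_statistical_2014}, or equivalently Lemma~\ref{lem:count-experts}/\ref{lem:cover-experts}): along any path $P_1, \ldots, P_d$ of distributions down the tree $\bx$, the Boolean class $\MF$ admits an online cover at scale $\beta := \alpha/8$ (say) of size at most $(2ed/\beta)^{\Ldim(\MF)}$, where the cover elements are themselves $[0,1]$-valued "expert" predictions. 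The point is that the mixture class $\mix{\MF}$ at scale $\alpha$ is "captured" by a bounded number of deterministic experts derived from $\MF$: for each $f \in \MF$, its sequence of expectations $(\E_{x \sim P_t}[f(x)])_t$ is determined by the SOA-restriction process on $\MF$ run against the realized point labels, hence lies within $\beta$ (in $\ell_\infty$ along the path) of one of at most $(2ed/\beta)^{\Ldim(\MF)}$ experts.

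The heart of the argument is then a counting contradiction. The shattered tree $\bx$ of depth $d$ forces $2^d$ "behaviorally distinct" hypotheses (one per leaf, each staying $\alpha/2$-far on the correct side of $\bs$ at every node along its path). But traversing the tree, each such hypothesis must, at scale $\beta = \alpha/8 < \alpha/2$, be $\beta$-approximated along its path by one of the at most $N := (2ed/\beta)^{\Ldim(\MF)}$ experts, and since $2\beta < \alpha/2$ an expert that $\beta$-approximates a hypothesis at a node on the correct side of $\bs$ by margin $\alpha/2$ cannot also $\beta$-approximate one on the wrong side — so distinct leaves that disagree in which direction they sit relative to $\bs$ at some node get assigned distinct experts along the relevant prefix. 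A careful version of this (tracking that the tree structure is binary and the cover is "online", i.e. prefix-consistent) yields $2^d \le N^{O(1)} = (2ed/\beta)^{O(\Ldim(\MF))}$, hence $d \le O(\Ldim(\MF) \cdot \log(d/\alpha))$, and solving this (using $d \le$ roughly $\Ldim(\MF)\log(1/\alpha) \cdot \poly$) gives $d \le O(\Ldim(\MF) \cdot \log(\Ldim(\MF)/\alpha))$, as claimed. Alternatively — and this is probably the cleaner route — I would phrase it directly: if $\mix{\MF}$ $\alpha$-shatters a tree of depth $d$, then running the SOA-experts machinery of Definition~\ref{def:experts} with scale $\beta = \alpha/8$ on the sequence of point labels underlying $\bx$ shows $\MF$ (over the point domain, unwound along the tree) must itself $\Omega(\alpha)$-shatter a tree of comparable depth after rescaling, forcing $d \le \Ldim(\MF) \cdot O(\log(d/\alpha))$.

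The main obstacle I anticipate is the bookkeeping in passing from the distributional domain $\Delc(\MX)$ back to the point domain $\MX$: a single node label $\bx_t(k_{1:t-1}) = P_t$ is a distribution, and I need to argue that the "witness" behavior of the $f \in \MF$ (being $\alpha/2$-far from $\bs_t$ under $\E_{x\sim P_t}$) constrains $f$'s restriction under the SOA process in a way that is compatible with the binary-tree / online-cover structure — in other words, that the online cover for $\MF$ along the \emph{realized sequence of distributions} has the claimed size and that the prefix-consistency needed for the counting survives. Concretely I'd want a lemma: for any finite sequence $P_1, \ldots, P_d \in \Delc(\MX)$, the class $\{(\E_{x\sim P_1}[f(x)], \ldots, \E_{x\sim P_d}[f(x)]) : f \in \MF\} \subset [0,1]^d$ can be covered at scale $\beta$ in $\ell_\infty$ by $(Cd/\beta)^{\Ldim(\MF)}$ points — which follows from Lemma~\ref{lem:count-experts} and Lemma~\ref{lem:cover-experts} applied with the $P_i$'s as the "domain points" (legitimate since $\mix{\MF}$ as a real-valued class has these covering properties inherited from $\Ldim(\MF)$ via the Boolean structure) — and then the fat-shattering lower bound $2^d \le (\text{cover size})$ at scale $\alpha \gg \beta$ closes the loop. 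Everything else (the algebra to solve $d \lesssim \Ldim(\MF)\log(d/\alpha)$ for $d$) is routine.
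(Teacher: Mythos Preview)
Your high-level plan (cover-and-count: a shattered tree of depth $d$ forces $2^d$ distinguishable behaviors, while a sequential cover at scale $\beta<\alpha/2$ has size at most $N$, hence $2^d\le N$) is exactly the right skeleton, and it is also the skeleton of the paper's proof. The gap is in the step you yourself flag as ``the main obstacle'': converting a bound on $\Ldim(\MF)$ (a class on domain $\MX$) into a sequential covering bound for $\mix{\MF}$ on a $\Delc(\MX)$-valued tree. Your proposed resolution is circular: Lemmas~\ref{lem:count-experts} and~\ref{lem:cover-experts} (and the underlying Lemma~\ref{lem:seq-cover-ub}) bound the cover size in terms of $\sfat_\alpha$ of the class being covered; applying them ``with the $P_i$'s as the domain points'' means the relevant class is $\mix{\MF}$ itself, so the exponent becomes $\sfat_\alpha(\mix{\MF})$, which is the very quantity you are trying to bound. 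Moreover, the single-path covering lemma you write down (which \emph{is} provable via VC uniform convergence and the ordinary Sauer--Shelah lemma, with exponent $\VCdim(\MF)$) is not enough by itself: a cover along each fixed path does not assemble into a sequential cover (a set of $\BR$-valued \emph{trees}) of the same size, because the cover vectors you get for different paths need not be prefix-consistent.

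The paper closes this gap by a tree-expansion trick that you never reach. Each node $v$ of the shattered $\Delc(\MX)$-valued tree $\bp$, labeled by a distribution $P_v$, is replaced by a depth-$m$ subtree whose $m=O(\VCdim(\MF)/\alpha^2)$ levels are all labeled by sample points $x_v^1,\ldots,x_v^m\in\MX$ satisfying $\sup_{f\in\MF}\bigl|\E_{P_v}[f]-\tfrac1m\sum_i f(x_v^i)\bigr|\le\alpha/4$ (these exist by standard VC uniform convergence). The leaves of each local subtree are attached to the left/right child of $v$ according to whether the empirical average along that local path is $\ge s_v$ or $<s_v$. This produces a genuine $\MX$-valued tree $\bp'$ of depth $dm$, and one checks that two functions $f,f'$ reaching distinct leaves of $\bp$ (hence $\alpha$-separated at some common-prefix node $v$) must also reach distinct leaves of the local subtree $\bt_v$ and therefore of $\bp'$. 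Now the sequential Sauer--Shelah lemma for the \emph{binary} class $\MF$ on the $\MX$-valued tree $\bp'$ applies legitimately and gives $2^d\le(e\,dm)^{\Ldim(\MF)}$, which rearranges to the stated bound. The crucial idea you are missing is this replacement of distribution-labeled nodes by sample-labeled subtrees; without it, the appeal to the online Sauer--Shelah bound in terms of $\Ldim(\MF)$ has no anchor.
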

\begin{proof}
  Denote $L = \Ldim(\MF), V = \VCdim(\MF)$. 
  Suppose $\bp$ is an $\alpha$-shattered $\Delc(\MX)$-valued tree of depth $d$ for $\mix{\MF}$, witnessed by $\bs$. For some constant $C > 1$, let us form a new $\MX$-valued tree, $\bp'$, by replacing each node $v$ of $\bp$, labeled by $P_v$, with a new tree $\bt_v$ of depth $m := \lceil C \cdot V/\alpha^2\rceil$. For $i \in [m]$, each node on the $i$th level of $\bt_v$ is labeled by $x_v^i$, where the points $x_v^1, \ldots, x_v^m \in \MX$ satisfy the following: for all $f \in \MF$, %
  \begin{align}
    \label{eq:tree-unif-conv}
\left| \E_{x \sim P_v}[f(x)] - \frac 1m \sum_{i=1}^m f(x_v^i) \right| \leq \frac{\alpha}{4}.
\end{align}
By classic uniform convergence bounds \cite{talagrand_sharper_1994,van_der_vaart_weak_1996} 
such points $x_v^1, \ldots, x_v^m \in \MX$ exist as long as $C$ is sufficiently large (this holds even in the absence of additional measurability assumptions on $\MX$ since $P_v$ is finite-support). Let $s_v \in [0,1]$ be the label of the node of $\bs$ corresponding to node $v$ of $\bp$. 
For each of the $2^m$ leaves of the tree $\bt_v$, indexed by $(\delta_1, \ldots, \delta_m) \in \{-1,1\}^m$, we will assign to each such leaf the subbtree rooted by either the left ($-1$) or right ($+1$) child of $v$, as follows: if $\frac{1}{m} \cdot \sum_{i=1}^m \left(\frac{1+ \delta_i}{2} \right) \geq s_v$, then use the subtree rooted by the right child of $v$, and otherwise use the subtree rooted by the left child of $v$. %
Formally, we have the following: for any sequence $\delta_1, \ldots, \delta_{dm} \in \{-1,1\}^{dm}$, and any $i \in [dm]$, writing $i = mt + j$ for $1 \leq j \leq m$, then
$ \bp_i'(\delta_{1:i-1}) = x_v^j,$
where $v$ is the node of $\bp$ corresponding to the sequence $(\ep_1, \ldots, \ep_{t})$, where
\begin{align}
  \label{eq:eps-sign}
\ep_\ell = {\rm sign}\left( \frac 1m \cdot \sum_{i=1}^m \left( \frac{1+\delta_{(\ell-1)m+i}}{2} \right) - s_{(\ep_1, \ldots, \ep_{\ell-1})} \right)\qquad \forall \ell \in [t].
\end{align}

The depth of the new tree $\bp'$ we have constructed is $dm$. By (\ref{eq:tree-unif-conv}) and (\ref{eq:eps-sign}), $\bp'$ satisfies the following property: for any $t \in [d]$, %
and $\ep \in \{-1,1\}^d$, consider any function $f \in \MF$ so that for $i < t$, $\ep_i \cdot (f(\bp_i(\ep_{1:i-1})) - \bs_i(\ep_{1:i-1})) > \alpha/2$.
Let $v$ be the node of $\bp$ corresponding to the sequence $\ep_1, \ldots, \ep_{t-1}$, $P_v = \bp_t(\ep_{1:t-1})$ (as above), and consider the sequence $x_v^1, \ldots, x_v^m$. Now define the sequence $\delta \in \{-1,1\}^{dm}$ inductively via $\delta_i = 2 \cdot f(\bp_i'(\delta_{1:i-1})) - 1$ for $i \geq 1$. Then the sequence $\bp_{tm+1}'(\delta_{1:tm}), \ldots, \bp_{tm+m}'(\delta_{1:tm+m-1})$ is exactly the sequence $x_v^1, \ldots, x_v^m$; we will say that $f,f'$ \emph{encounter the sequence $x_v^1, \ldots, x_v^m$ in the tree $\bp'$.}

Now consider $f,f' \in \mix{\MF}$ which lead to different leaves of the tree $\bp$, in the sense that there are $\ep \neq \ep' \in \{-1,1\}^d$ so that, for each $t \in [d]$, $\ep_t \cdot (f(\bp_t(\ep_{1:t-1})) - \bs_t(\ep_{1:t-1})) > \alpha/2$ and $\ep'_t \cdot (f'(\bp_t(\ep'_{1:t-1})) - \bs_t(\ep'_{1:t-1})) > \alpha/2$. Let $t_0 \in [d]$ be as small as possible so that $\ep_{t_0} \neq \ep_{t_0}'$, and let $v$ be the node of $\bp$ corresponding to the sequence $\ep_1, \ldots, \ep_{t_0-1}$; let $P_v = \bp_{t_0}(\ep_{1:t_0-1}) \in \Delta(\MX)$ be the label of $v$ and $s_v = \bs_{t_0}(\ep_{1:t_0-1}) \in [0,1]$ be the label of the corresponding node of $\bs$, and 
$\bt_v$ be the tree constructed in place of $v$ (as above). By the choice of $v$, it holds that %
$f(P_v) > s_v + \alpha/2$ and $f'(P_v) < s_v-\alpha/2$. Thus, letting $x_v^1, \ldots, x_v^m$ be the sequence constructed as above for the node $v$, we have $\sum_{i=1}^m f(x_v^i) >s >  \sum_{i=1}^m f'(x_v^i)$. %

Thus $f,f'$ lead to different leaves of the tree $\bt_v$, and hence (since $f,f'$ both encounter the sequence $x_v^1, \ldots, x_v^m$ in the tree $\bp'$) also to different leaves of the tree $\bp'$. Thus the sequential 0-covering number of the tree $\bp'$ (see \cite[Definition 13.2]{rakhlin_statistical_2014}) is at least $2^d$.\footnote{In more detail, what we have directly shown is that the \emph{thicket shatter function} of the tree $\bp'$ is at least $2^d$; then \cite[Lemma 2.7]{ghazi_near-tight_2021} implies that the sequential 0-covering number of the tree $\bp'$ is at least $2^d$.} On the other hand, by the Sauer-Shelah lemma for trees \cite[Theorem 13.7]{rakhlin_statistical_2014}, the sequential 0-covering number of the (depth-$dm$) tree $\bp'$ is at most $(e dm)^L$.

Summarizing, we have that $2^d \leq (edm)^L$, i.e., 
$d \leq L \log(edm)$, meaning that $d \leq O(L \log(Lm)) \leq O(L \log(LV/\alpha^2)) \leq O(L \log (L/\alpha))$. 
\end{proof}

\begin{algorithm}[!htp]
  \caption{\bf \OSEgame}\label{alg:ose-game}
  \KwIn{Littlestone game $\ell = (\ell_1, \ldots, \ell_K)$ with action sets $\MF_1, \ldots, \MF_K$, time horizon $T \in \BN$.

{\bf Input to each player:}  Each player $k \in [K]$ only knows its action set $\MF_k$ and its loss class $\lgame{\ML}{k}{\ell_k}$, as well as the horizon $T$. } 
\begin{enumerate}[leftmargin=14pt,rightmargin=20pt,itemsep=1pt,topsep=1.5pt]
\item Each player $k \in [K]$ initializes some online proper learning algorithm $\CA_k$ (e.g., \OSE, Algorithm \ref{alg:ose}) with function class $\MF = \mix{\lgame{\MF}{k}{\ell_k}}$ and feature space $\MX = \Delc(\lgame{\ML}{k}{\ell_k})$. 
\item For $1 \leq t \leq T$:
  \begin{enumerate}
  \item Each player $k \in [K]$ plays a distribution $\bar f_k^t \in \Delc(\MF_k)$ according to their respective algorithm $\CA_k$. 
  \item Each player $k \in [K]$ observes the loss function $L_k^t \in \Delc(\lgame{\ML}{k}{\ell_k}) = \MX$ (defined as $L_k^t(f_k) = \E_{f_{-k} \sim \bar f_{-k}^t}[\ell_k(f_k, f_{-k})]$), and feeds the example $(L_k^t, 0)$ to its algorithm $\CA_k$. 
  \item Each player $k$ suffers loss $\ell_k(\bar f^t) = \E_{f \sim \bar f_k^t}[f (L_k^t)]$. 
  \end{enumerate}
\end{enumerate}
\end{algorithm}

\begin{theorem}
  \label{thm:games-formal}
  Fix a Littlestone game with $K$ players and a time horizon $T$. %
  If the players play according to Algorithm \ref{alg:ose-game} with each player using the algorithm \OSE (Algorithm \ref{alg:ose}) with step sizes $\etaPSR, \etaOH$ as in (\ref{eq:step-sizes}) below and scale $\alpha = 1/T$, then each player $k \in [K]$ suffers regret $\tilde O(\Ldim(\lgame{\MF}{k}{\ell_k})^{3/4} \cdot \sqrt{K} \cdot T^{1/4})$, where the $\tilde O(\cdot)$ hides logarithmic factors in $T$ and $\Ldim(\lgame{\MF}{k}{\ell_k})$.
\end{theorem}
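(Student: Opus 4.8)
The plan is to reduce each player's game regret to the online-regression regret of the copy of \OSE{} it runs, and then invoke the path-length bound of Theorem \ref{thm:path-stable}. The starting observation is that in Algorithm \ref{alg:ose-game}, when player $k$ runs $\CA_k = $\OSE{} with function class $\mix{\lgame{\MF}{k}{\ell_k}}$, feature space $\Delc(\lgame{\ML}{k}{\ell_k})$, and the example sequence $(L_k^t, 0)$, the loss $\CA_k$ incurs on round $t$ is $\E_{f_k \sim \bar f_k^t}[|f_k(L_k^t) - 0|] = \E_{f_k \sim \bar f_k^t,\ f_{-k} \sim \bar f_{-k}^t}[\ell_k(f_k, f_{-k})] = \ell_k(\bar f^t)$, while the comparator $\inf_{f_k \in \lgame{\MF}{k}{\ell_k}} \sum_t f_k(L_k^t)$ is exactly the best-fixed-action loss. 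Hence the online regret of $\CA_k$ on this sequence equals player $k$'s game regret, and it suffices to bound the former.

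Next I would show the example sequence fed to $\CA_k$ varies slowly, using \emph{only} the stability half of Theorem \ref{thm:path-stable} — which holds for arbitrary step sizes and arbitrary input sequences, so there is no circularity (the stability conclusion does not presuppose slow variation of its own inputs). Since $\ell_k$ is $\{0,1\}$-valued and $\bar f_{-k}^t = \prod_{j\ne k} \bar f_j^t$ is a product measure, tensorization of total variation gives, for every $f_k$, $|L_k^t(f_k) - L_k^{t+1}(f_k)| \le \sum_{j \ne k} \tfrac12 \tvnorm{\bar f_j^t - \bar f_j^{t+1}}$; Theorem \ref{thm:path-stable} bounds each $\tvnorm{\bar f_j^t - \bar f_j^{t+1}}$ by $5\etaOH + 3\etaPSR$, so with $\etaOH = \etaPSR = \eta$ one gets $\infnorms{L_k^t - L_k^{t+1}}{\mix{\lgame{\MF}{k}{\ell_k}}} \le 4(K-1)\eta$, and trivially $|0-0|=0$. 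Now set $\alpha := 1/T$, $\kappa := 4K\eta$, and $\Gamma := 4K$, so that $\etaOH = \etaPSR = \kappa/\Gamma = \eta$; the hypotheses of Theorem \ref{thm:path-stable} hold ($\alpha T = 1 \le \sfat_\alpha(\mix{\lgame{\MF}{k}{\ell_k}})$, $\alpha \le \kappa$, and the two slow-variation bounds just established). It yields game regret for player $k$ at most $O\!\left(\tfrac{\Gamma \cdot \sfat_\alpha(\mix{\lgame{\MF}{k}{\ell_k}}) \cdot \log^6 T}{\kappa} + \tfrac{\kappa^3 T}{\Gamma}\right) = O\!\left(\tfrac{\sfat_\alpha(\mix{\lgame{\MF}{k}{\ell_k}}) \log^6 T}{\eta} + K^2 \eta^3 T\right)$. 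By Lemma \ref{lem:sfat-mix}, $\sfat_{1/T}(\mix{\lgame{\MF}{k}{\ell_k}}) = \tilde O(\Ldim(\lgame{\MF}{k}{\ell_k}))$, and choosing $\eta$ to balance the two terms — this is the content of (\ref{eq:step-sizes}), roughly $\eta \asymp (\sfat_\alpha(\mix{\lgame{\MF}{k}{\ell_k}}) \log^6 T / (K^2 T))^{1/4}$, taken with the worst-case class over the players so that a single common $\eta$ can be used by all of them — gives regret $\tilde O(\sfat_\alpha(\mix{\lgame{\MF}{k}{\ell_k}})^{3/4} (K^2 T)^{1/4}) = \tilde O(\Ldim(\lgame{\MF}{k}{\ell_k})^{3/4} \cdot \sqrt K \cdot T^{1/4})$.

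The step that requires care — and the reason for the $\sqrt K$ rather than $K$ — is the interplay between the per-step drift of the examples and the form of the variation penalty. The drift of player $k$'s feature sequence is unavoidably of order $\kappa \asymp K\eta$, since it aggregates the drifts of all $K-1$ opponents' iterates, and matching $\etaOH = \etaPSR = \kappa/\Gamma$ forces $\Gamma \asymp K$; a path-length bound whose error term were merely quadratic in the variation would then pay a full factor of $K$. The gain comes precisely from the fact that the penalty term in Theorem \ref{thm:path-stable} is $\kappa^3 T/\Gamma$, i.e.\ $(\text{step size})\times(\text{variation})^2\times T$, which becomes \emph{cubic} in $\eta$ once $\kappa \asymp K\eta$ and $\Gamma \asymp K$; balancing a $1/\eta$ term against an $\eta^3$ term produces the $T^{1/4}$ rate and converts the $(K^2)^{1/4}$ prefactor into $\sqrt K$. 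The residual work is bookkeeping: verifying that a common step size tuned to the game's Littlestone dimension is mutually consistent across players, and that the coupling between Optimistic Hedge over the expert set $\CE_{T,\alpha}$ and the internal $\etaPSR$-drift of each \HPL{} instance introduces no additional factors — but this last point is already fully encapsulated in Theorem \ref{thm:path-stable} and Lemma \ref{lem:losses-close}. The reduction itself and the translation from $\sfat_\alpha(\mix{\cdot})$ to $\Ldim$ via Lemma \ref{lem:sfat-mix} are routine given the earlier sections.
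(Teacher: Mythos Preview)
Your proposal is correct and follows essentially the same route as the paper: identify the game regret with the online-regression regret of \OSE{} on the sequence $(L_k^t,0)$, use the unconditional stability half of Theorem~\ref{thm:path-stable} to bound $\tvnorm{\bar f_j^t-\bar f_j^{t+1}}\le 8\eta$, aggregate across opponents to get $\infnorms{L_k^t-L_k^{t+1}}{}\le O(K\eta)$, then invoke the regret half of Theorem~\ref{thm:path-stable} with $\kappa\asymp K\eta$, $\Gamma\asymp K$, and balance. Your use of tensorization of total variation is a slightly cleaner alternative to the paper's hybrid/telescoping argument for the same step (the paper writes out $\sum_{j=2}^K \tvnorm{\bar f_j^{t+1}-\bar f_j^t}$ via a one-coordinate-at-a-time decomposition), and your explicit remark that stability requires no assumption on the input sequence---hence no circularity---makes the logic clearer than the paper's presentation.
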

\begin{proof}
  Set
  \begin{align}
    \label{eq:step-sizes}
    \eta = \etaPSR = \etaOH = \frac{\Ldim(\lgame{\MF}{k}{\ell_k}) \cdot \log(\Ldim(\lgame{\MF}{k}{\ell_k})\cdot T)}{K^{1/2} \cdot T^{1/4}}
  \end{align}
  and $\alpha = 1/T$. Also write $\LL_k = \Ldim(\lgame{\MF}{k}{\ell_k})$. In Algorithm \ref{alg:ose-game}, each player $k$ applies \OSE with function class $\MF = \mix{\lgame{\MF}{k}{\ell_k}}$ with feature space $\MX = \Delc(\lgame{\ML}{k}{\ell_k})$; by Lemma \ref{lem:sfat-mix}, it holds that $\sfat_\alpha(\mix{\lgame{\MF}{k}{\ell_k}}) \leq O(\LL_k \cdot \log(\LL_k T))$. 
  
  By Theorem \ref{thm:path-stable}, the hypotheses $\bar f_k^t \in \Delc(\lgame{\MF}{k}{\ell_k})$ output by each player $k$ satisfy $\tvnorm{ \bar f_k^t - \bar f_k^{t+1} } \leq 8 \eta$. %

  Now let us consider any player $k \in [K]$; by symmetry we may assume $k = 1$; then for any $t < T$ and any $f_1 \in \MF_1$, abbreviating $f = (f_1, \ldots, f_K)$, we have
  \begin{align}
    & |L_1^{t+1}(f_1) - L_1^t(f_1)|\nonumber\\
    \leq & \left| \E_{f_2 \sim \bar f_2^{t+1}, \ldots, f_K \sim \bar f_K^{t+1}} \left[ \ell_1(f) \right] - \E_{f_2 \sim \bar f_2^t, \ldots, f_K \sim \bar f_K^t} \left[ \ell_1(f) \right]\right|\nonumber\\
    \leq & \sum_{j=2}^K \left| \E_{f_2 \sim \bar f_2^{t+1}, \ldots, f_j \sim \bar f_j^{t+1}, f_{j+1} \sim \bar f_{j+1}^t, \ldots, f_K \sim \bar f_K^t} \left[ \ell_1(f) \right] - \E_{f_2 \sim \bar f_2^{t+1}, \ldots, f_{j-1} \sim \bar f_{j-1}^{t+1}, f_{j} \sim \bar f_j^t, \ldots, f_K \sim \bar f_K^t} \left[ \ell_1(f) \right] \right|\nonumber\\
    \leq & \sum_{j=2}^K \left|  \E_{f_2 \sim \bar f_2^{t+1}, \ldots, f_{j-1} \sim \bar f_j^{t+1}, f_{j+1} \sim \bar f_{j+1}^{t}, \ldots, f_K \sim \bar f_K^t} \left[(\E_{f_j \sim \bar f_j^{t+1}} - \E_{f_j \sim \bar f_j^t})[\ell_1(f)] \right] \right|\nonumber\\
    \leq & \sum_{j=2}^K \tvnorm{ \bar f_j^{t+1} - \bar f_j^t } \nonumber\\
    \leq & 8 \eta K \nonumber.
  \end{align}
  It follows that $\infnorms{L_1^{t+1} - L_1^t}{\lgame{\MF}{1}{\ell_1}} \leq 8 \eta K$ for all $t < T$. Since the choice of player $k = 1$ here is arbitrary, we have in a similar manner that for all $k \in [K]$, $\infnorms{L_k^{t+1} - L_k^t}{\lgame{\MF}{k}{\ell_k}} \leq 8 \eta K$. Since, by assumption, each player runs \OSE with step size $\etaOH = \etaPSR = \eta$, we may apply Theorem \ref{thm:path-stable} with $\kappa = 8 \eta K$ and $\Gamma = 8K$ to obtain that each player's regret is bounded above by
  \begin{align}
O \left( \frac{K \cdot \sfat_\alpha(\lgame{\MF}{k}{\ell_k}) \cdot \log^3 T}{\eta K} + \eta^3 K^2 T \right) \leq O \left( \sqrt{K} \cdot T^{1/4} \cdot \LL_k^{3/4} \cdot \log^3(\LL_kT) \right)\nonumber.
  \end{align}
\end{proof}

\section{On real-valued games satisfying the minimax theorem}
\label{sec:minimax}
In this section we show that all online learnable (real-valued) classes satisfy the minimax theorem, in the absense of any topological assumptions on the class $\MF$ or the space $\MX$, thus generalizing a corresponding result from \cite{hanneke_online_2021} which treated the binary setting.

\subsection{Additional preliminaries}
We first introduce some additional preliminaries. 
We begin by describing a way to discretize a hypothesis class $\MF \subset [0,1]^\MX$ at some scale $\eta > 0$. Roughly speaking, this is done by subdividing the interval $[0,1]$ into $\lceil 1/\eta \rceil$ intervals each of length $1/\lceil 1/\eta \rceil \leq \eta$, and rounding the output of each hypothesis to its interval. Formally, we make the following definitions: For a real number $y \in [0,1]$, define the discretiztion of $y$ at scale $\eta$, denoted $\disc{y}{\eta} \in \{ 1/\lceil 1/\eta \rceil, 2/\lceil 1/\eta \rceil, \ldots, 1 \}$, as follows: $\disc{y}{\eta} := \frac{1}{\lceil 1/\eta \rceil} \cdot \left(1 + \lfloor y \cdot \lceil 1/\eta \rceil \rfloor\right)$ for $0 \leq y < 1$ and $\disc{y}{\eta} =1$ for $y = 1$. %
It is straightforward from this definition that for all $y \in [0,1]$,
\begin{align}
| y - \disc{y}{\eta} | \leq 1/\lceil 1/\eta \rceil \leq \eta\nonumber.
\end{align}

\nc{\tmarg}{10}
\nc{\ttmarg}{5}
\begin{defn}[Thresholds with margin; similar to \cite{jung_equivalence_2020}, Definition 7]
  \label{def:thresholds-margin}
  Consider a hypothesis class $\MF \subset [0,1]^\MX$, $\alpha > 2\beta > 0$, and $d \in \BN$. $\MF$ is said to contain \emph{$d$ thresholds with margin $\alpha$ and tightness $\beta$} (respectively, \emph{infinitely many thresholds with margin $\alpha$} and tightness $\beta$) if  there are $x_1, \ldots, x_d \in \MX$ and $f_1, \ldots, f_d \in \MF$ (respectively, $x_1, x_2, \ldots \in \MX$ and $f_1, f_2, \ldots \in \MF$) as well as $u,u' \in [0,1]$ so that:
  \begin{itemize}
  \item $|u-u'| \geq \alpha$;
  \item $|f_i(x_j) - u| \leq \beta$ for $i \leq j$ and $|f_i(x_j) - u'| \leq \beta$ for $i > j$. 
  \end{itemize}
  We further say that $\MF$ contains $d$ (or infinitely) many \emph{ordered} thresholds with margin $\alpha$ and tightness $\beta$ if the above conditions hold and furthermore $u' > u$. 
\end{defn}

The following lemma, which gives a lower bound on the sequential fat-shattering dimension for a class with many thresholds, is standard, but we include a proof for completeness.
\begin{lemma}
  \label{lem:sfat-thresholds}
Suppose that $\MF$ contains $d$ thresholds with margin $\alpha$ and tightness $\beta$. Then $\sfat_{\alpha-2\beta}(\MF) \geq \lfloor \log d \rfloor$. 
\end{lemma}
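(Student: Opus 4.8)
The plan is to reduce to the binary case by a standard ``threshold extraction'' argument. Suppose $\MF$ contains $d$ thresholds with margin $\alpha$ and tightness $\beta$, witnessed by points $x_1,\ldots,x_d \in \MX$, hypotheses $f_1,\ldots,f_d\in\MF$, and values $u,u'$ with $|u-u'|\ge\alpha$ and $|f_i(x_j)-u|\le\beta$ for $i\le j$, $|f_i(x_j)-u'|\le\beta$ for $i>j$. Without loss of generality assume $u' > u$ (if not, relabel the indices in reverse order, which swaps the roles of $u$ and $u'$; the ordering of the $x_j$'s along a tree path does not matter for shattering). Set $w := (u+u')/2$, so that $w - u \ge \alpha/2$ and $u' - w \ge \alpha/2$. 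Then $f_i(x_j) \le u + \beta \le w - (\alpha/2 - \beta)$ for $i \le j$ and $f_i(x_j) \ge u' - \beta \ge w + (\alpha/2 - \beta)$ for $i > j$, i.e. each $f_i$, evaluated on $x_j$, is at least $(\alpha-2\beta)/2$ below $w$ when $j \ge i$ and at least $(\alpha-2\beta)/2$ above $w$ when $j < i$.

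Next I would build a shattered tree. Let $D := \lfloor \log d \rfloor$, so $2^D \le d$. Following the standard construction showing $\Ldim$ of $d$ thresholds is $\lfloor \log d\rfloor$: build a complete $\MX$-valued binary tree $\bx$ of depth $D$ whose nodes are labeled by points among $x_1,\ldots,x_d$ via a binary-search pattern — the root is labeled $x_m$ for $m \approx d/2$, its left child handles the index range $\{1,\ldots,m-1\}$ and its right child the range $\{m+1,\ldots,d\}$, recursively, so that at depth $D$ one has carved out $2^D$ distinct indices. Let $\bs$ be the constant tree with every node labeled $w$. For any sign pattern $k_{1:D}\in\{-1,1\}^D$, the binary search picks out (a range containing) some index $i = i(k_{1:D})$, and the hypothesis $f_i$ satisfies $k_t\cdot(f_i(\bx_t(k_{1:t-1})) - w) \ge (\alpha-2\beta)/2$ for all $t\in[D]$: at each node along the path, the label $x_j$ has $j$ on one side of $i$ determined precisely by whether we branched left or right (since $j \ge i \Leftrightarrow f_i(x_j)$ is below $w$, matching $k_t = -1$, and $j < i \Leftrightarrow f_i(x_j)$ above $w$, matching $k_t = +1$), provided the branching convention is set up consistently. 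This is exactly the witness required by Definition~\ref{def:sfat} for scale $\alpha - 2\beta$, giving $\sfat_{\alpha-2\beta}(\MF) \ge D = \lfloor \log d\rfloor$.

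The main obstacle — really just bookkeeping — is getting the branching convention in the tree exactly right so that the sign conditions line up, in particular handling the boundary case $j = i$ (the node labeled $x_i$ itself) consistently with the ``$\le$'' in the definition of thresholds. One clean way around this is to note that in the binary-search tree one never needs to place $x_i$ itself on the path to the leaf corresponding to index $i$: at each internal node we test against a \emph{pivot} index strictly between the endpoints of the current range, and the leaf's index is the unique surviving element, so every comparison on the path is strict ($j > i$ or $j < i$), and the ``$\le$''/``$>$'' asymmetry in Definition~\ref{def:thresholds-margin} is never invoked. With that observation the inequalities are clean and the whole argument is a few lines. I would present it roughly as: extract $w$ and the margin $(\alpha-2\beta)/2$; describe the depth-$D$ binary-search tree on indices; verify the shattering witness; conclude.
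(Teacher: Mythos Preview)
Your proposal is correct and takes essentially the same approach as the paper: build a depth-$\lfloor \log d\rfloor$ binary-search tree over the threshold indices with constant witness tree $\bs\equiv(u+u')/2$, and verify the margin $(\alpha-2\beta)/2$. Your concern about the boundary $j=i$ is unnecessary (and your proposed workaround of keeping $x_i$ off the path to leaf $i$ would need more care to implement): you already showed $f_i(x_j)\le w-(\alpha-2\beta)/2$ for all $i\le j$ including equality, so with the convention ``$j\ge i\Rightarrow k_t=-1$'' the shattering inequality holds at that node too---this is exactly how the paper handles it, without comment.
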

\begin{proof}
  The proof closely follows the analogous result for Littlestone dimension (see \cite{shelah_classification_1978,hodges_shorter_1997,alon_private_2019}). Set $m = \lfloor \log d \rfloor$, and suppose that $x_1, \ldots, x_{2^m}$ and $f_1, \ldots, f_{2^m}$ are a collection of $2^m$ thresholds with margin $\alpha$ and tightness $\beta$, together with the values $u,u'$ as in Definition \ref{def:thresholds-margin}; we may assume $u' > u$ without loss of generality (otherwise we can reverse the order of the thresholds). We construct a tree $\bx$ of depth $m$ that is shattered (together with the witness tree $\bs$) as follows: the labels of the tree $\bx$ correspond to the binary search process on $[2^m]$, so that $\bx_t(\ep_1, \ldots, \ep_{t-1}) = x_{2^{m-1} + \ep_1 \cdot 2^{m-2} + \cdots + \ep_{t-1} \cdot 2^{m-t}}$. All nodes of the tree $\bs$ are labeled by $(u+u')/2$. It is straightforward to see that the function $f_i$ leads to the leaf which is $i$ spots from the left (viewing $-1$ as the left child and $1$ as the right child for each node).

  The lower bound on the sequential fat-shattering dimension then follows from the fact that for $i > j$, we have $f_i(x_j) \geq (u+u')/2 + \alpha/2 - \beta = (u+u')/2 + (\alpha-2\beta)/2$ and for $i \leq j$, we have $f_i(x_j) \leq (u+u')/2 - (\alpha-2\beta)/2$. 
\end{proof}

Lemma \ref{lem:thresholds-sfat} below provides a sort of converse to Lemma \ref{lem:sfat-thresholds}, giving a lower bound on the number of thresholds in a real-valued class of large sequential fat-shattering dimension. 
\begin{lemma}
  \label{lem:thresholds-sfat}
For some constant $c > 0$ the following holds.  Suppose that $\alpha \geq 4\eta > 0, d \in \BN$ are so that $\sfat_\alpha(\MF) \geq d$. Then $\MF$ contains $c \cdot \frac{\eta \log(\eta \log d)}{\log 1/\eta}$ thresholds with margin $\alpha/4$ and tightness $\eta$.
\end{lemma}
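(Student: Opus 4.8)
\textbf{Proof proposal for Lemma \ref{lem:thresholds-sfat}.}

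The plan is to start from the fact that $\sfat_\alpha(\MF) \geq d$ and produce a large set of ordered thresholds with the claimed margin and tightness. First I would invoke the definition of sequential fat-shattering (Definition \ref{def:sfat}) to obtain an $\MX$-valued tree $\bx$ and a $[0,1]$-valued witness tree $\bs$, both of depth $d$, so that every leaf $k_{1:d}$ is realized by some $f \in \MF$ with $k_t \cdot (f(\bx_t(k_{1:t-1})) - \bs_t(k_{1:t-1})) \geq \alpha/2$ for all $t$. The goal is to extract from this tree a single path (or a bounded family of paths) along which the witness values $\bs_t$ are all roughly equal to some common value, so that the realizing functions behave like thresholds with respect to a fixed ``decision level''. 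The obstacle is that the witness labels $\bs_t(\cdot)$ along a root-to-leaf path may vary; this is exactly why the statement loses a factor involving $\eta$ and $\log(1/\eta)$ relative to the clean bound $\sfat_{\alpha-2\beta} \geq \lfloor \log d\rfloor$ of Lemma \ref{lem:sfat-thresholds}.

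The key step is a pigeonholing argument on the discretized witness values. Discretize $[0,1]$ at scale $\eta$ into $\lceil 1/\eta\rceil$ buckets; then at each internal node of $\bx$ the witness value $\bs_t(\cdot)$ falls into one of $\lceil 1/\eta\rceil$ buckets. I would argue that there is a sub-binary-tree of $\bx$ of depth $d' = \Omega\big(\eta \cdot d / \log(1/\eta)\big)$ — more precisely, along some deep enough path, or in some large enough embedded subtree — on which all the witness labels lie in a common bucket, i.e.\ are within $\eta$ of some fixed value $u_0$. (One clean way: repeatedly restrict to the child subtree, or use a counting/averaging argument showing that a random root-to-leaf path of length $\ell$ has probability at least $(1/\lceil 1/\eta\rceil)^{\ell}$ of being monochromatic in witness-bucket, and choose $\ell$ so that $2^{d}\cdot(1/\lceil1/\eta\rceil)^{-\ell}$-type counting still leaves a shattered monochromatic subtree of depth $\asymp \eta d/\log(1/\eta)$.) On this monochromatic subtree of depth $d'$, the class $\MF$ now $\alpha$-shatters with a (nearly) constant witness tree, so up to an additive $\eta$ slack it $(\alpha/2 - \eta) \geq \alpha/4$-shatters a depth-$d'$ tree with the constant witness $u_0$. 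By the standard argument in the proof of Lemma \ref{lem:sfat-thresholds} (binary search along one distinguished path of this constant-witness shattered tree), a depth-$d'$ shattered tree with constant witness yields $2^{d'}$-many ordered thresholds with margin $\alpha/4$ and tightness $\eta$: pick $x_1, \ldots, x_{2^{d'}}$ to be the node labels along the binary-search spine and $f_1, \ldots, f_{2^{d'}}$ to be the leaf-realizing functions, with $u = u_0 - \alpha/8$ and $u' = u_0 + \alpha/8$.

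Finally I would collect the quantitative bookkeeping: the number of thresholds produced is $2^{d'}$ where $d' = \Omega(\eta d/\log(1/\eta))$, hence the number of thresholds is at least $\exp\big(\Omega(\eta d/\log(1/\eta))\big)$, and therefore $\MF$ contains at least $c \cdot \frac{\eta \log d}{\log(1/\eta)} \geq c \cdot \frac{\eta \log(\eta \log d)}{\log(1/\eta)}$ ordered thresholds with margin $\alpha/4$ and tightness $\eta$ (the logarithm of the threshold count gives the $\eta \log d/\log(1/\eta)$ term, and $\log(\eta \log d) \leq \log\log d + \log\eta \leq \log d$ handles the weaker form actually stated). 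I would double-check the constraint $\alpha \geq 4\eta$ is used exactly to guarantee $\alpha/2 - \eta \geq \alpha/4$ and that $\alpha/4 > 2\eta$ so Definition \ref{def:thresholds-margin} applies. The main obstacle, as noted, is making the monochromatic-subtree extraction rigorous with the right depth loss of $\Theta(\eta/\log(1/\eta))$; I expect this to require a careful averaging over paths rather than a naive greedy descent, since greedy descent into a fixed child could lose the shattering property too quickly.
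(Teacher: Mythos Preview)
Your proposal has a genuine gap at the step where you claim that a depth-$d'$ shattered tree with (approximately) constant witness yields $2^{d'}$ thresholds ``by the standard argument in the proof of Lemma \ref{lem:sfat-thresholds}.'' That lemma runs in the opposite direction: it takes $2^m$ thresholds and \emph{builds} a depth-$m$ shattered tree via binary search. It does not, and cannot, extract thresholds from a shattered tree. The obstruction is structural: a threshold system $f_1,\ldots,f_p,\ x_1,\ldots,x_p$ requires control of $f_i(x_j)$ for \emph{every} pair $(i,j)$, whereas the shattering property only constrains $f(\bx_t(\ep_{1:t-1}))$ for nodes on the root-to-leaf path determined by $f$. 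If you fix any path with nodes $x_1,\ldots,x_{d'}$ and take realizing functions $f_i$ for various leaves, then once $f_i$'s path diverges from the fixed path at some level $t$, the values $f_i(x_{t+1}),\ldots,f_i(x_{d'})$ are completely uncontrolled. A constant witness does nothing to fix this. A diagnostic that should have flagged the error: your argument would produce $\exp(\Omega(\eta d))$ thresholds, which via Lemma \ref{lem:sfat-thresholds} would force $\sfat_{\alpha/8}(\MF)\gtrsim \eta d$, i.e.\ essentially no loss at all --- far stronger than the doubly-logarithmic bound actually stated (and the paper explicitly flags a gap in prior work claiming even $\Omega_\eta(\log d)$).

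The paper's proof is quite different and worth understanding. It does not try to make the witness constant. Instead it proves an intermediate Claim: if $\sfat_\alpha(\MF)\geq (C/\eta)^m$ then one can find $f_1,\ldots,f_m$ and $x_1,\ldots,x_m$ with a ``weak threshold'' property, namely $\disc{f_i(x_j)}\eta=\nu_i$ for all $j\geq i$ and $\disc{f_j(x_i)}\eta$ on the far side of $\nu_i$ for all $j>i$. The induction step picks an arbitrary $f\in\MF$, colors the tree by $\disc{f(\bx_t(\cdot))}\eta$, extracts a monochromatic subtree of depth $\approx \eta d$ (this is where the $\eta$ loss per step enters), restricts $\MX$ to the points where $f$ takes that constant discretized value, and then recurses on $\MF_+$ or $\MF_-$ depending on whether $f$'s value lies below or above the root witness. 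The point of coloring by the \emph{function value} rather than the witness is exactly to gain control of $f_1(x_j)$ for all subsequent $j$, since all later $x_j$ live in the restricted domain. This yields $m\approx \log d/\log(1/\eta)$ weak thresholds. These are then upgraded to genuine thresholds by first pigeonholing the $\nu_i$ to a common value (losing a factor $\lceil 1/\eta\rceil$), and then applying multi-color Ramsey to the off-diagonal values $\disc{g_j(w_i)}\eta$ for $i<j$, which costs another logarithm and produces the stated $\Omega\big(\eta\log(\eta\log d)/\log(1/\eta)\big)$ bound.
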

A similar result to Lemma \ref{lem:thresholds-sfat} was claimed in \cite[Theorem 8]{jung_equivalence_2020}, though with a stronger quantitative bound (namely, the lower bound on the number of thresholds was $\Omega_\eta(\log d)$, not $\Omega_\eta(\log \log d)$, as we show). Unfortunately, there appears to be a gap in the proof \cite[Theorem 8]{jung_equivalence_2020}: in particular, the proof of Proposition 5 in \cite{jung_equivalence_2020} (which is used to prove Theorem 8) begins with the following claim: ``Since $\sfat_\eta(\MF) \geq d$, in the online learning setting an adversary can force any deterministic learner to suffer $\eta/2$ loss for $d$ rounds.'' This sentence is incorrect, even if the adversary only reveals the discretized labels to the learner: in particular, fix $\eta > 0$, $X := \log(\lfloor 1/\eta \rfloor / 2)$, and set $\MX = \{1, 2, \ldots,X\}$. Consider the following class $\MF$ which  consists of $2^X$ hypotheses: for each $(\ep_1, \ldots, \ep_X) \in \{-1,1\}^X$, let $n(\ep) \in \{1, 2, \ldots, \lfloor 1/\eta \rfloor / 2\}$ be the integer corresponding to $\ep$ in base 2. Then there is a hypothesis $f_\ep \in \MF$ so that $f_\ep(i) = \frac 12 + \ep_i \cdot \eta \cdot n(\ep)$ for each $i \in \MX$. It is evident that $\sfat_\eta(\MF) \geq \Omega(\log 1/\eta )$, yet no matter which point $x_1 \in \MX$ which the adversary first  reveals to the learner, the value $\disc{f^\st(x_1)}{\eta}$ reveals the identity of $f^\st$, meaning that the learner will always make at most 1 mistake. This gap is filled in our proof of Lemma \ref{lem:thresholds-sfat}, at the cost of a weaker quantitative bound; the question of whether $\MF$ contains $\Omega_\eta(\log d)$ thresholds (in the context of Lemma \ref{lem:thresholds-sfat}) is left open for future work.

\begin{proof}[Proof of Lemma \ref{lem:thresholds-sfat}]
  We will first construct a weaker notion of a collection of thresholds: in particular, we will first prove the following claim:
  \begin{claim}
    \label{clm:weak-thresholds}
    For some constant $C_1 > 0$, the following holds. 
    Fix $\alpha \geq 4\eta > 0$ and $m \in \BN$. %
    If $\sfat_\alpha(\MF) \geq \left( C_1/\eta\right)^m$, then there is a collection of hypotheses $f_1, \ldots, f_m \in \MF$ and points $x_1, \ldots, x_m \in \MX$ so that, for each $i \in [m]$, the following holds: there are some $\nu_i, \mu_i \in \MD_\eta$ satisfying $|\nu_i - \mu_i| \geq \alpha/4$, so that one of the below options holds:
  \begin{itemize}
  \item $\nu_i > \mu_i$; moreover, for $j \geq i$, we have $\disc{f_i(x_j)}{\eta} = \nu_i$ and for $j > i$ we have $\disc{f_j(x_i)}{\eta} \leq \mu_i$; or
  \item $\nu_i < \mu_i$; moreover, for $j \geq i$, we have $\disc{f_i(x_j)}{\eta} = \nu_i$, and for $j > i$, we have $\disc{f_j(x_i)}{\eta} \geq \mu_i$. %
  \end{itemize}
\end{claim}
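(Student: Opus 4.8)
The plan is to prove Claim~\ref{clm:weak-thresholds} by induction on $m$, peeling off one ``weak threshold'' at a time. The base case $m\le 1$ is immediate (it only asks for one point, one hypothesis, and values $\nu_1,\mu_1\in\MD_\eta$ at distance $\ge\alpha/4$, which exist since $\MX,\MF\neq\emptyset$ and $\mathrm{diam}(\MD_\eta)\le 1$). For the inductive step the workhorse is a \emph{peeling lemma}: if $\sfat_\alpha(\MF)\ge (C_1/\eta)^m$ with $C_1$ a large enough absolute constant, then there are a point $x_1\in\MX$, a hypothesis $f_1\in\MF$, a value $\nu_1\in\MD_\eta$, a choice of one of the two listed options, and a subclass $\MF'\subseteq\MF$ such that (i) for every $g\in\MF'$ the discretized value $\disc{g(x_1)}{\eta}$ lies on the prescribed side of some $\mu_1\in\MD_\eta$ with $|\nu_1-\mu_1|\ge\alpha/4$; (ii) $f_1$ is discretized-constant, equal to $\nu_1$, on $X_1:=\{x\in\MX:\disc{f_1(x)}{\eta}=\nu_1\}$ (automatic from the definition of $X_1$); and (iii) $\sfat_\alpha(\MF'|_{X_1})\ge (C_1/\eta)^{m-1}$, where $\MF'|_{X_1}$ denotes the domain restriction to $X_1$. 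Granting this, one applies the peeling lemma to $\MF$ and then recurses on $\MF'|_{X_1}$: its domain is already contained in $X_1$, so every subsequently chosen point $x_2,\dots,x_m$ automatically lies in $X_1$ and hence $\disc{f_1(x_j)}{\eta}=\nu_1$ for all $j\ge 1$; lifting $f_2,\dots,f_m$ from $\MF'|_{X_1}$ back to witnesses in $\MF'\subseteq\MF$, property (i) gives $\disc{f_j(x_1)}{\eta}$ on the correct side of $\mu_1$ for all $j>1$; and the recursive application supplies all the conditions internal to $x_2,\dots,x_m$ and $f_2,\dots,f_m$. After $m$ steps this produces the full weak-threshold structure.

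To prove the peeling lemma I would start from an $\alpha$-shattered $\MX$-valued tree $\bx$ of depth $d=(C_1/\eta)^m$ witnessed by $\bs$. Since $\alpha\ge 4\eta$, at every node of the tree the realizers of the left-subtree leaves have discretized value $\le \bs_t-\alpha/2+\eta$ there, while the realizers of the right-subtree leaves have discretized value $>\bs_t+\alpha/2$ there; the two discretized ``sides'' are thus separated by more than $\alpha-\eta\ge 3\alpha/4$, so whenever $\nu_1,\mu_1$ are picked from opposite sides of a single node's threshold we automatically get $|\nu_1-\mu_1|\ge\alpha/4$. The real content is arranging (iii): we must locate inside $\bx$ a shattered subtree of depth $\ge d\eta/C_1$ all of whose node labels lie in a single level set $X_1=\{x:\disc{f_1(x)}{\eta}=\nu_1\}$ of a fixed hypothesis $f_1$ that sits on the opposite discretized side from $\MF'$ at the chosen node $x_1$. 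My plan is to obtain this in two moves: (a) extract from $\bx$ a ``comb with repeated labels'' --- a root-to-leaf path through nodes $v_0,v_1,\dots$ at which branching still leaves large-depth shattered subtrees on both children, and such that the labels encountered between consecutive $v_i$ recur on both children --- which is the real-valued analogue of the classical fact that large Littlestone dimension forces long threshold chains, proved by a Ramsey/pigeonhole selection argument on the shattered tree; and (b) a two-fold pigeonhole: first over the $\le 1/\eta+1$ discretized threshold values $\disc{\bs_t}{\eta}$ occurring along the comb, then over the $\le 1/\eta+1$ discretized values that a fixed realizer $f_1$ takes along it, which upgrades the left/right dichotomy into an exact value $\nu_1$ and pins down $X_1$ while retaining a subtree of depth $\ge d\eta/C_1$. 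Each pigeonhole costs a $\Theta(1/\eta)$ factor and the comb extraction costs an extra constant factor per level, which together account for the $(C_1/\eta)^m$ hypothesis; at the very end one chooses between the two listed options according to which side of $x_1$'s threshold $f_1$ versus $\MF'$ landed on.

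I expect move (a), the comb-with-repeated-labels extraction, to be the main obstacle. The naive attempt to place $x_1,\dots,x_m$ along a single root-to-leaf path fails: a hypothesis that branches ``off the path'' at the node labeled $x_i$ --- which is exactly what makes it lie on the opposite discretized side from the recursion class there --- thereby loses visibility of the deeper special points $x_{i+1},\dots,x_m$, so one cannot force it to be discretized-constant on them, and the requirement $\disc{f_i(x_j)}{\eta}=\nu_i$ for $j>i$ breaks. The fix is genuinely to arrange that the later special points are labels occurring on \emph{both} children of each earlier special node, and the nontrivial part of the proof is showing that such a doubly-branching, label-repeating structure can always be pulled out of a sufficiently deep shattered tree while paying only polynomially in $1/\eta$ and exponentially in $m$. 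Once that structural step is in place, the discretization bookkeeping, the two pigeonholes, the case split between the two options, and the assembly of the induction should all be routine.
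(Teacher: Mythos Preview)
Your inductive framework and the statement of the peeling lemma with properties (i)--(iii) match the paper exactly, and your assembly of the induction from the peeling lemma is correct. The divergence is entirely in how to prove the peeling lemma, and there you are making life much harder than necessary.

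The paper's move is this: pick $f_1\in\MF$ \emph{completely arbitrarily} at the outset, before choosing $x_1$. Then $k$-color every node of the depth-$d$ shattered tree $\bx$ by the value $\disc{f_1(\bx_t(\ep_{1:t-1}))}{\eta}\in\MD_\eta$, where $k=\lceil 1/\eta\rceil$. A standard tree-Ramsey fact (any $k$-coloring of a complete binary tree of depth $d$ contains a monochromatic complete subtree of depth $\ge\lceil(d+1)/k\rceil$; this is \cite[Lemma~16]{jung_equivalence_2020}) yields a subtree $\bx'$ of depth $\ge d/k$ on which $f_1$ has constant discretized value $\nu^\star$. Now set $x_1:=\bx'_1$, $X_1:=\{x:\disc{f_1(x)}{\eta}=\nu^\star\}$, and take $\MF'$ to be whichever of $\{g:g(x_1)\ge\bs'_1+\alpha/2\}$ or $\{g:g(x_1)\le\bs'_1-\alpha/2\}$ lies on the side of $\bs'_1$ opposite to $\nu^\star$. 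All three properties follow immediately: (ii) is by construction of $X_1$; (i) because every $g\in\MF'$ is $\ge\alpha/2$ away from $\bs'_1$ on the far side from $\nu^\star$; and (iii) because the relevant child-subtree of $\bx'$ is already $X_1$-valued and $\alpha$-shattered by $\MF'$, of depth $\ge d/k-1\ge(C_1/\eta)^{m-1}$. One $O(1/\eta)$ pigeonhole, no comb.

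Your difficulty stems from implicitly treating $f_1$ as a \emph{shattering witness}---a function that must branch at $x_1$ and therefore ``loses visibility'' of deeper nodes. Once you drop that constraint and let $f_1$ be arbitrary, there is nothing to lose visibility of: the monochromatic-subtree lemma \emph{forces} the later $x_j$ into $X_1$ by construction, not by any shattering relation. Your proposed route via a comb-with-repeated-labels is both unnecessary and, as written, has a gap: pigeonholing ``along the comb'' controls $f_1$ only at path nodes, whereas property (iii) requires the entire recursed subtree to sit inside $X_1$, and your description does not secure that.
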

    \begin{proof}[Proof of Claim \ref{clm:weak-thresholds}]
    We use induction on $m$. In the case $m = 1$, then since $\sfat_\alpha(\MF) \geq 1$, $\MF, \MX$ are nonempty so the proof is completed by choosing any $x \in \MX$ and $f \in \MF$.%

    Now suppose $m > 1$ and the claim statement holds for all values $m' < m$.  Set $d = \left(C_1/\eta \right)^m$, and write $\MD_\eta := \{ 1/\lceil 1/\eta \rceil, 2/\lceil 1/\eta \rceil, \ldots, 1\}$ to denote the set of discretized points with discretization of $\eta$, and $k = \lceil 1/\eta \rceil = |\MD_\eta|$. Before continuing, we need to introduce the notion of \emph{subtree}: given a tree $\bx$, a subtree of $\bx$ of depth $t$ is defined inductively as follows. Any node of $\bx$ is a subtree of depth 0. A subtree of depth $t$ is obtained by taking any internal node $v$ of $\bx$ together with a subtree of the trees rooted at the left and right children of $v$. Note that if the tree $\bx$ is $\alpha$-shattered by a hypothesis class $\MF$, then so is any subtree of $\bx$.  

    Let $\bx$ be an $\MX$-valued tree of depth $d$ shattered by $\MF$, witnessed by a $[0,1]$-valued tree $\bs$.
    Let $f$ be an arbitrary hypothesis in $\MF$, and define a $k$-coloring of the nodes of $\bx$ as follows: color a node corresponding to the sequence $\ep_{1:t-1}$ by the element $\disc{f(\bx_t(\ep_{1:t-1}))}{\eta} \in \MD_\eta$. By \cite[Lemma 16]{jung_equivalence_2020}, there is a subtree $\bx'$ of $\bx$ of depth $d' := \lceil (d+1)/k \rceil \geq d/k$ so that all nodes are colored by some color $\nu^\st \in \MD_\eta$. Denote the corresponding subtree of $\bs$ by $\bs'$.
    Set $\MX' := \{x \in \MX : \disc{f(x)}{\eta} = \nu^\st \}$, so that $\bx'$ is $\MX'$-valued and is shattered by $\MF$, as witnessed by $\bs'$. 

    Define the following subclasses of $\MF$, viewed as classes of hypotheses on the restricted set $\MX'$:
    \begin{align}
      \MF_+ := \left\{ f \in \MF \ : \ f(\bx'_1) \geq \bs'_1 + \alpha/2 \right\} \subset [0,1]^{\MX'} \nonumber\\
      \MF_- := \left\{ f \in \MF \ : \ f(\bx'_1) \leq \bs'_1 - \alpha/2 \right\} \subset [0,1]^{\MX'}\nonumber.
    \end{align}
    It is immediate that $\sfat_{\alpha}(\MF_+) \geq d'-1$ and $\sfat_\alpha(\MF_-) \geq d'-1$ (in particular, $\MX'$-valued trees shattering $\MF_+, \MF_-$ are obtained by taking the subtrees of $\bx'$ rooted at the right and left children, respectively, of its root).
        Set $\nu_+ = \disc{\bs_1' + \alpha/2}{\eta} \geq \bs_1'+\alpha/2-\eta$ and $\nu_- = \disc{\bs_1' - \alpha/2}{\eta} \leq \bs_1' - \alpha/2 + \eta$. 

    We must have either $\nu^\st \geq \bs_1'$ or $\nu^\st \leq \bs_1'$. 
    We consider each of the cases in turn:
    \begin{itemize}
    \item %
      If $\nu^\st \geq \bs_1'$, 
      then we apply the inductive hypothesis on the class $\MF_-$ and the data (feature) space $\MX'$. We have that $\sfat_\alpha(\MF_-) \geq d'-1 \geq \frac{d}{2k} \geq \left( C_1/\eta\right)^{m-1}$ (as long as $C_1$ is chosen sufficiently large), meaning that, by the inductive hypothesis with the value $m-1$ (and the same values of $\alpha,\eta$), we can find $f_2, \ldots, f_m \in \MF_-$, $x_2, \ldots, x_m \in \MX'$ so that the constraints of the claim statement are staisfied. Now we add $f_1 = f,\ x_1 = \bx_1'$ to this collection. Note that, for $i \geq 1$, we have $\disc{f_1(x_i)}{\eta} = \disc{f(x_i)}{\eta} = \nu^\st$ since all $x_i$ (including $\bx_1'$) lie in $\MX'$. Further, for $i > 1$, we have $\disc{f_i(x_1)}{\eta} = \disc{f_i(\bx_1')}{\eta} \leq \nu_-$ by definition of $\MF_-$. Since $|\nu^\st - \nu_-| \geq \alpha/2-\eta \geq \alpha/4$, we have verified the inductive step in this case; in particular, we may set $\nu_1 = \nu^\st$ and $\mu_1 = \nu_-$ ($f_1, x_1$ correspond to the first case in the claim statement).
    \item If $\nu^\st \leq \bs_1'$, then we apply exactly the same argument except with $\MF_+$ replacing $\MF_-$. Again setting $f_1 = f, x_1 = \bx_1'$, we have, for $i \geq 1$, $\disc{f_1(x_i)}{\eta} = \disc{f(x_i)}{\eta} = \nu^\st$, while for $i > 1$, we have $\disc{f_i(x_i)}{\eta} = \disc{f_i(\bx_1')}{\eta} \geq \nu_+$. Since $|\nu^\st - \nu_+| \geq \alpha/2-\eta \geq \alpha/4$, we have verified the inductive step; in particular, we may set $\nu_1 = \nu^\st$ and $\mu_1 = \nu_+$ ($f_1, x_1$ now correspond to the second case in the claim statement). 
    \end{itemize}
      \end{proof}
  Given Claim \ref{clm:weak-thresholds}, we may now complete the proof of Lemma \ref{lem:thresholds-sfat}, as follows. Given that $\sfat_\alpha(\MF) \geq d$, set $m = \left \lfloor \frac{\log d}{\log C_1/\eta} \right\rfloor$, wheree $C_1$ is the constant of Claim \ref{clm:weak-thresholds}. Then we may consider a collection $f_1, \ldots, f_m \in \MF$ and $x_1, \ldots, x_m \in \MX$ satisfying the guarantee of Claim \ref{clm:weak-thresholds}. Since there are $\lceil 1/\eta\rceil$ possibilities for the value $\nu_i$, for $\ell := m / \lceil 1/\eta \rceil$, we can extract a subset  $g_1 := f_{i_1}, \ldots, g_{\ell} := f_{i_{\ell}}, w_1 := x_{i_1}, \ldots, w_{\ell} := x_{i_{\ell}}$ so that, for some fixed $\nu \in \MD_\eta$, $\nu_{i_j} = \nu$ for all $j \in [\ell]$; in particular, for $1 \leq i \leq j \leq m$, it holds that $\disc{g_i(w_j)}{\eta} = \nu$. 

  Now we color each tuple $(i,j)$ with $1 \leq i < j \leq \ell$ with the value $\disc{g_j(w_i)}{\eta} \in \MD_\eta$; note that for all such $i,j$, by our choice of $\nu$, we must have that $|\disc{g_j(w_i)}{\eta}-\nu| \geq \alpha/4$. By Ramsey's theorem,\footnote{In particular, we use the following estimate on the multi-color Ramsey numbers \cite{greenwood_combinatorial_1955}: for $N \geq c^{rc}$, if the edges of the complete graph on $N$ vertices are colored with $c$ colors, there is a monochromatic clique of size $r$.} there is some $\mu \in \MD_\eta$ and a sub-collection $h_1 := g_{i_1}, \ldots, h_p := g_{i_p}, v_1 = w_{i_1}, \ldots, v_p := w_{i_p}$ for some $p \geq \frac{\log \ell}{\lceil 1/\eta \rceil \log \lceil 1/\eta \rceil}$, so that for all $1 \leq i < j \leq p$, $\disc{h_j(v_i)}{\eta} = \mu$. Further, it must be the case that $|\nu - \mu| \geq \alpha/4$.

  Summarizing, we have found a collection of thresholds (namely, $h_1, \ldots, h_p, v_1, \ldots, v_p$) with margin $\alpha/4$, tightness $\eta$, and of size
  \begin{align}
p \geq \Omega \left( \frac{\eta \log \ell}{\log 1/\eta} \right) \geq \Omega \left( \frac{\eta \log (\eta m)}{\log 1/\eta} \right) \geq \Omega \left( \frac{\eta \log(\eta \log(d))}{\log 1/\eta} \right)\nonumber.
  \end{align}
\end{proof}

Finally, we may combine Lemmas \ref{lem:sfat-thresholds} and \ref{lem:thresholds-sfat} to show that the sequential fat-shattering dimension of a class is finite if and only if the sequential fat-shattering dimension of the dual class is finite.
\begin{lemma}
  \label{lem:sfat-dual}
Suppose $\MF \subset [0,1]^\MX$. Then for any $\alpha > 0$, the dual class $\MF^\st$ satisfies $\sfat_{\alpha/8}(\MF^\st) \geq \Omega(\log (\alpha \cdot \log \log (\sfat_\alpha(\MF))))$. 
\end{lemma}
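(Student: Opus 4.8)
The plan is to chain together Lemma~\ref{lem:thresholds-sfat} and Lemma~\ref{lem:sfat-thresholds}, applied first to $\MF$ and then to the dual class $\MF^\st$. First I would observe that a collection of $d$ thresholds in $\MF$ with margin $\alpha$ and tightness $\beta$ is, essentially by definition, also a collection of $d$ thresholds in the dual class $\MF^\st$ (the roles of ``points'' $x_i$ and ``hypotheses'' $f_i$ are swapped, since $\MF^\st = \{f \mapsto f(x) : x \in \MX\}$ and the defining inequalities $|f_i(x_j) - u| \le \beta$, $|f_i(x_j) - u'| \le \beta$ are symmetric in $i$ and $j$). Thus containing many thresholds is a self-dual property, up to relabeling, which is the key structural fact making the argument go through.

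Concretely, I would proceed as follows. Fix $\alpha > 0$ and set $\eta := \alpha/4$ (so $\alpha \ge 4\eta$). Apply Lemma~\ref{lem:thresholds-sfat} with $d := \sfat_\alpha(\MF)$: it yields that $\MF$ contains roughly $c \cdot \frac{\eta \log(\eta \log d)}{\log(1/\eta)}$ thresholds with margin $\alpha/4$ and tightness $\eta = \alpha/4$. Call this number of thresholds $d'$. By the symmetry observation above, $\MF^\st$ also contains $d'$ thresholds with margin $\alpha/4$ and tightness $\alpha/4$. Now apply Lemma~\ref{lem:sfat-thresholds} to $\MF^\st$ with margin $\alpha/4$ and tightness $\alpha/4$ (noting $\alpha/4 > 2 \cdot (\alpha/8)$, so the hypothesis $\alpha > 2\beta$ of Definition~\ref{def:thresholds-margin} is not literally what we need — instead we need margin minus twice tightness to be positive; here $\alpha/4 - 2(\alpha/8) = 0$, which is a boundary issue I would fix by taking tightness slightly smaller, say $\eta = \alpha/8$ in the application of Lemma~\ref{lem:thresholds-sfat}, which only changes constants). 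With tightness $\alpha/8$ and margin $\alpha/4$, Lemma~\ref{lem:sfat-thresholds} gives $\sfat_{\alpha/4 - \alpha/4}(\MF^\st) \ge \lfloor \log d' \rfloor$; again adjusting the tightness to $\alpha/16$ in Lemma~\ref{lem:thresholds-sfat} to get a genuine positive gap, we obtain $\sfat_{\alpha/8}(\MF^\st) \ge \lfloor \log d' \rfloor$ where $d' = \Omega\!\left(\frac{\alpha \log(\alpha \log d)}{\log(1/\alpha)}\right)$. Taking logarithms, $\log d' = \Omega(\log\log d) = \Omega(\log\log(\sfat_\alpha(\MF)))$ up to additive terms involving $\log(\alpha)$ and $\log\log(1/\alpha)$, which matches the claimed bound $\sfat_{\alpha/8}(\MF^\st) \ge \Omega(\log(\alpha \cdot \log\log(\sfat_\alpha(\MF))))$.

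The main obstacle I anticipate is purely bookkeeping with the various constants and the precise scales: the margin/tightness parameters degrade at each of the three invocations (Lemma~\ref{lem:thresholds-sfat}, the self-duality step, Lemma~\ref{lem:sfat-thresholds}), and one must choose the intermediate tightness parameters carefully so that (a) the hypothesis $\alpha \ge 4\eta$ of Lemma~\ref{lem:thresholds-sfat} is satisfied, and (b) the final ``margin minus twice tightness'' quantity in Lemma~\ref{lem:sfat-thresholds} is a genuine constant multiple of $\alpha$ (here, $\alpha/8$), not zero. I would handle this by running Lemma~\ref{lem:thresholds-sfat} at tightness $\eta = \alpha/16$ (say), which still satisfies $\alpha \ge 4\eta$, produces $d'$ thresholds with margin $\alpha/4$ and tightness $\alpha/16$, and then $\alpha/4 - 2 \cdot \alpha/16 = \alpha/8 > 0$ so Lemma~\ref{lem:sfat-thresholds} gives $\sfat_{\alpha/8}(\MF^\st) \ge \lfloor \log d' \rfloor$ as desired. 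The only other minor subtlety is verifying explicitly that the threshold structure transfers verbatim to the dual class — but since $(\MF^\st)$ is literally $\{f \mapsto f(x) : x \in \MX\}$ in bijection with $\MX$, the point $x_i \in \MX$ becomes a ``hypothesis'' $\hat{x}_i \in \MF^\st$, the hypothesis $f_j \in \MF$ becomes a ``point'' in the domain $\MF$ of $\MF^\st$, and the inequality $|\hat{x}_i(f_j) - u| = |f_j(x_i) - u| \le \beta$ reproduces Definition~\ref{def:thresholds-margin} with the indices $i$ and $j$ interchanged; since that definition's second bullet is symmetric under simultaneously swapping the roles of the two index families and of $u \leftrightarrow u'$, the transfer is immediate (with $u, u'$ possibly swapped, which is harmless as the definition is symmetric in $u,u'$).
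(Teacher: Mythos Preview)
Your proposal is correct and follows essentially the same approach as the paper: apply Lemma~\ref{lem:thresholds-sfat} with $\eta = \alpha/16$ to extract thresholds in $\MF$, transfer them to $\MF^\st$ by duality, and then apply Lemma~\ref{lem:sfat-thresholds}. The only cosmetic difference is that the paper makes the duality step precise by explicitly reversing the order of the sequences (using $x_m,\ldots,x_1$ and $f_m,\ldots,f_1$) rather than invoking a symmetry of Definition~\ref{def:thresholds-margin}, which cleanly handles the boundary case $i=j$ that your $u\leftrightarrow u'$ swap alone does not quite resolve.
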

For binary valued classes, it is known (see \cite{bhaskar_thicket_2021}) that $\Ldim(\MF^\st) \geq \Omega(\log \log(\Ldim(\MF)))$. The additional logarithm in Lemma \ref{lem:sfat-dual} is due to the double logarithm in the lower bound of Lemma \ref{lem:thresholds-sfat}; we leave the question of improving the quantitative bound in Lemma \ref{lem:sfat-dual} to future work.
\begin{proof}[Proof of Lemma \ref{lem:sfat-dual}]
  Set $\eta = \alpha/16$. 
  Write $d = \sfat_\alpha(\MF)$. By Lemma \ref{lem:thresholds-sfat}, for some constant $c > 0$, $\MF$ contains $m := c \cdot \frac{\alpha \log (\alpha \log d)}{\log 1/\alpha}$ thresholds with margin $\alpha/4$ and tightness $\eta$, which we denote $f_1, \ldots, f_m \in \MF,\ x_1, \ldots, x_m \in \MX$. Thus, the functions in $\MF^\st$ corresponding to $x_m, \ldots, x_1$ furnish $m$ thresholds in the dual class on the points $f_m,\ldots, f_m$, with margin $\alpha/4$ and tightness $\eta$. Then by Lemma \ref{lem:sfat-thresholds}, we have that
  \begin{align}
    \sfat_{\alpha/8}(\MF^\st) \geq \lfloor \log m \rfloor \geq \left \lfloor \log \left( c \cdot \frac{\alpha \log(\alpha \log d)}{\log 1/\alpha} \right) \right\rfloor \geq \Omega \left( \log(\alpha \cdot  \log \log d) \right)\nonumber.
  \end{align}
\end{proof}

\subsection{A minimax theorem for online learnable games}
In this section we will consider infinite two-player zero-sum games: in particular, fix sets $\MX, \MF$ and a loss function $\ell : \MX \times \MF \ra [0,1]$. The loss $\ell$ defines a function class in bijection with $\MF$, namely the class $\MF^\ell := \{ x \mapsto \ell(x,f) \ : \ f \in \MF \}$, as well as its dual class $\MX^\ell$, namely the class $\{ f \mapsto \ell(x,f) \ : \ x \in \MX \}$. We say that the game $(\MX, \MF, \ell)$ is a \emph{GC game} if $\fat_\alpha(\MF^\ell) < \infty$ for all $\alpha > 0$ (here ``GC''  stands for ``Glivenko-Cantelli'', refelcting the fact that the hypothesis class $\MF^\ell$ is a Glivenko-Cantelli class). It is folklore (see \cite[Corollary 3.8]{kleer_primal_2021}) that  for any real-valued hypothesis class $\MG \subset [0,1]^\MX$, its dual class $\MG^\st$ satisfies $\fat_{\alpha/2}(\MG^\st) \geq \Omega(\log(\alpha \cdot \fat_\alpha(\MG)))$. Thus, for a GC game $(\MX, \MF, \ell)$, we have $\fat_\alpha(\MX^\ell) < \infty$ for all $\alpha$. 
 For $\alpha > 0$, we say that $(\MX, \MF, \ell)$ is an \emph{$\alpha$-GC game} if $\max\left\{\fat_\alpha(\MF^\ell), \fat_\alpha(\MX^\ell) \right\} < \infty$.

We further define sequential analogues of the above notions: $(\MX, \MF, \ell)$ is defined to be an \emph{SGC game} (``Sequential Glivenko-Cantelli'') if $\sfat_\alpha(\MF^\ell) < \infty$ for all $\alpha > 0$; by Lemma \ref{lem:sfat-dual} this implies that $\sfat_\alpha(\MX^\ell) < \infty$ for all $\alpha > 0$. Further, the game $(\MX, \MF, \ell)$ is said to be an \emph{$\alpha$-SGC game} if $\max \left\{\sfat_\alpha(\MF^\ell), \sfat_\alpha(\MX^\ell) \right\} < \infty$.
\begin{lemma}
  \label{lem:use-sugc}
There is a constant $C_0 > 2$ so that the following holds. Fix any $\alpha > 0$, and suppose that $(\MX, \MF, \ell_0)$ is a $[0,1]$-valued $(\alpha/C_0)$-GC game that does not contain infinitely many ordered thresholds with margin $\alpha$ and tightness $\alpha/C_0$. Then
  \begin{align}
\inf_{P_X \in \Delta(\MX)} \sup_{P_F \in \Delta(\MF)} \E_{(x,f) \sim P_X \times P_F} [\ell_0(x,f)] \leq \sup_{P_F \in \Delta(\MF)} \inf_{P_X \in \Delta(\MX)} \E_{(x,f) \sim P_X \times P_F} [\ell_0(x,f)] + 4\alpha\nonumber.
  \end{align}
  Furthermore, the same statement holds if $P_X, P_F$ are restricted to $\Delc(\MX), \Delc(\MF)$, respectively. 
\end{lemma}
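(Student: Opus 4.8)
## Proof Proposal for Lemma \ref{lem:use-sugc}

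The plan is to reduce the real-valued (bounded-loss) game to a finite game by discretization, so that the classical finite minimax theorem applies, and then control the errors introduced by discretization and by passing to finitely-supported strategies using the structural hypotheses (finite fat-shattering dimension and the absence of infinitely many ordered thresholds). First I would discretize the loss: replace $\ell_0$ by $\ell_0^\eta(x,f) := \disc{\ell_0(x,f)}{\eta}$ for $\eta = \alpha/C_0'$ with a suitable constant, so that $|\ell_0 - \ell_0^\eta| \le \eta$ pointwise; this changes both the $\inf\sup$ and the $\sup\inf$ values by at most $\eta$. The discretized loss takes only $\lceil 1/\eta\rceil$ values, so the associated function classes $(\MF^{\ell_0^\eta})$ and $(\MX^{\ell_0^\eta})$ are ``essentially finite'' in the sense that they are $\{1/\lceil1/\eta\rceil,\ldots,1\}$-valued. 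The key point is that a $\{0,\ldots,k\}$-valued class of finite fat-shattering dimension at a fixed scale has bounded Littlestone-type complexity; in particular, using the hypothesis that $(\MX,\MF,\ell_0)$ is $(\alpha/C_0)$-GC and does \emph{not} contain infinitely many ordered thresholds with margin $\alpha$ and tightness $\alpha/C_0$, I would argue (via Lemma \ref{lem:sfat-thresholds}, or rather its finite-$k$ analogue for discretized losses) that the discretized dual class $\MX^{\ell_0^\eta}$ is \emph{finite} up to equivalence: if it had infinitely many distinct columns, a pigeonhole/Ramsey argument on the $k^2$ possible pairs of discretized values would extract infinitely many ordered thresholds of margin $\ge \alpha$ and tightness $\le \eta \le \alpha/C_0$, contradicting the hypothesis. (This is exactly why the ``no infinitely many ordered thresholds'' assumption is needed: it is what rules out the pathological ``Guess the larger number''–type games.)

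Next, having reduced to the case where $\MX$ can be taken finite (say $|\MX| = N < \infty$ after quotienting by the equivalence $x \sim x'$ iff $\ell_0^\eta(x,\cdot) \equiv \ell_0^\eta(x',\cdot)$), I would handle the $f$-side. Here $\MF$ need not be finite, but since $\MX$ is finite, the relevant data is the map $f \mapsto (\ell_0^\eta(x,f))_{x\in\MX} \in \{1/\lceil1/\eta\rceil,\ldots,1\}^N$, which also takes only finitely many values; so we may quotient $\MF$ as well and reduce to a genuinely finite two-player zero-sum game with payoff matrix $M_{x,f} = \ell_0^\eta(x,f)$. For this finite game von Neumann's minimax theorem gives exact equality of $\inf_{P_X}\sup_{P_F}$ and $\sup_{P_F}\inf_{P_X}$, with both infima/suprema attained at finitely supported (indeed rational) distributions. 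Undoing the two discretization steps and the two quotients, and adding back the $\le \eta$ error from replacing $\ell_0$ with $\ell_0^\eta$ on each of the two sides, yields
\[
\inf_{P_X \in \Delta(\MX)} \sup_{P_F \in \Delta(\MF)} \E[\ell_0(x,f)] \le \sup_{P_F \in \Delta(\MF)} \inf_{P_X \in \Delta(\MX)} \E[\ell_0(x,f)] + 2\eta \le (\text{RHS}) + 4\alpha
\]
once $C_0$ is chosen large enough that $2\eta = 2\alpha/C_0 \le 4\alpha$ (trivially) — in fact the slack $4\alpha$ leaves plenty of room, which is useful since the Ramsey/pigeonhole step may force $C_0$ to be a specific moderately large constant. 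The ``furthermore'' clause about $\Delc$ is then immediate, because the optimal strategies produced by the finite minimax theorem are finitely supported to begin with, and restricting to $\Delc$ only shrinks the feasible sets on the side where one takes a supremum while leaving the infimum side unchanged (the infimum over $\Delta$ is $\le$ the infimum over a subset... — more carefully, one checks $\inf_{\Delc}\sup_{\Delc} \le \inf_{\Delta}\sup_{\Delc}$ is false in general, so the right move is: the finite-game value equals $\inf_{\Delc}\sup_{\Delc}$ on the nose, and both the original $\Delta$-quantities are sandwiched between the finite-game value and the finite-game value plus $2\eta$).

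I expect the main obstacle to be the step establishing \emph{finiteness} of the discretized dual class from the no-infinitely-many-ordered-thresholds hypothesis. The subtlety is that ``finite fat-shattering dimension'' alone does not bound the number of distinct discretized columns (the gap in \cite{jung_equivalence_2020} discussed before Lemma \ref{lem:thresholds-sfat} is exactly this phenomenon), so one genuinely needs the threshold hypothesis, and one must be careful that the thresholds extracted have margin $\ge \alpha$ (not $\ge \alpha/C_0$) and tightness $\le \alpha/C_0$, so that they violate the \emph{stated} non-containment assumption. The cleanest route is: suppose $\MX^{\ell_0^\eta}$ has infinitely many inequivalent elements; since each takes values in a set of size $\lceil 1/\eta\rceil$, by an infinite-Ramsey argument on pairs one finds an infinite sequence $x_1,x_2,\ldots$ and $f_1,f_2,\ldots$ and two discretized values $\nu \ne \mu$ with $\disc{\ell_0(x_i,f_j)}{\eta}$ equal to $\nu$ for $i \le j$ and to $\mu$ for $i > j$ (or symmetrically); this is a collection of infinitely many ordered thresholds for $\ell_0$ with margin $|\nu-\mu| - 2\eta$... — here one must double-check the margin/tightness bookkeeping, choosing $C_0$ so that $|\nu-\mu| \ge 1/\lceil 1/\eta\rceil$ is leveraged correctly; if $|\nu-\mu|$ can be as small as one discretization step this does \emph{not} immediately give margin $\alpha$, so one additionally restricts to the sub-collection where $|\nu-\mu|$ is large, using that some pair of values at distance $\ge$ (something like $\alpha$) must recur infinitely often because, by the GC assumption, the game value cannot be pinned down too finely — this is the delicate point and where I would spend the most care. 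The remaining steps (discretization error, finite minimax, quotienting, $\Delc$ clause) are routine.
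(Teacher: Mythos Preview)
Your approach has a genuine gap at precisely the point you yourself flag as ``the delicate point.'' The intermediate claim you rely on --- that under the hypotheses the discretized game has only finitely many equivalence classes of columns (and of rows), so one can quotient to a finite matrix game --- is simply false. Take $\MX = \MF = \mathbb{N}$ and $\ell_0(n,m) = \tfrac12 + \tfrac{\eta}{2}$ if $n = m$ and $\tfrac12 - \tfrac{\eta}{2}$ otherwise. After discretization at scale $\eta$ the columns are pairwise distinct (the $m$-th column is the ``point mass'' at $m$), yet $\fat_\eta$ of both the primal and dual class equals $1$, and there is no infinite ordered threshold structure with any margin: any such structure would require $\ell_0(x_j, f_i)$ to be constant on $\{i > j\}$ and on $\{i \le j\}$ with distinct constants, which forces $x_j = f_i$ for all $i > j$ (impossible) or for all $i \le j$ (also impossible). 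So the quotient is infinite while all your hypotheses hold. Your proposed Ramsey extraction on pairs also fails here: coloring $\{i,j\}$ by $(\ell_0^\eta(x_i,f_j), \ell_0^\eta(x_j,f_i))$ yields the constant color $(v_1,v_1)$ on the off-diagonal, so $\nu = \mu$ and no threshold emerges; the diagonal is constant $v_2$, so you cannot even contradict distinctness of columns by passing to a subsequence.

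The deeper issue is that your argument never uses the assumed \emph{quantitative} minimax gap, and it is exactly this gap that must supply the margin $\alpha$. The paper's proof proceeds by contradiction: assume $\omega - \theta > 4\alpha - 2\eta$, then iteratively build finite-support distributions $P_X^t, P_F^t$ (each uniform on $V$ points, where $V$ comes from uniform convergence for the $(\alpha/C_0)$-GC hypothesis) such that $P_X^t$ holds the ``$\sup\inf$'' side below $\theta + \tfrac{4\alpha-2\eta}{3}$ against all previous $P_F$'s, while $P_F^t$ holds the ``$\inf\sup$'' side above $\theta + 2\cdot\tfrac{4\alpha-2\eta}{3}$ against all previous $P_X$'s (the finite von Neumann theorem is applied to finite subgames at each step of the construction, not once at the end). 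One then colors each pair $i < j$ by the entire $V\times V$ matrix pair $(A^{ij}, B^{ij})$ with $A^{ij}_{km} = \ell(x_{i,k}, f_{j,m})$ and $B^{ij}_{km} = \ell(x_{j,k}, f_{i,m})$; infinite Ramsey gives a monochromatic subsequence with fixed $(A^\star, B^\star)$, and the maintained inequalities force the game value of $A^\star$ to exceed that of $B^\star$ by at least $\tfrac{4\alpha-2\eta}{3}$, hence some entry satisfies $A^\star_{k^\star m^\star} - B^\star_{k^\star m^\star} \ge \tfrac{4\alpha-2\eta}{3} \ge \alpha$. Reading off the $(k^\star, m^\star)$ coordinates along the subsequence then produces infinitely many ordered thresholds with margin $\alpha$ and tightness $\eta$. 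The margin comes from the minimax gap, not from any a priori structural finiteness.
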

\begin{proof}
  The proof closely follows the technique of \cite[Proposition 9]{hanneke_online_2021}.
  Fix a $[0,1]$-valued GC game $(\MX, \MF, \ell_0)$, set $\eta := \alpha/C_0$, and define the discretization $(\MX, \MF, \ell)$ as follows: for $(x,f) \in \MX \times \MF$,
  \begin{align}
\ell(x,f) := \disc{\ell_0(x,f)}{\eta}\nonumber.
  \end{align}
  Since $\infnorms{\ell - \ell_0}{\MX \times \MF} \leq \eta$, it follows that
  \begin{align}
    \left| \inf_{P_X \in \Delta(\MX)} \sup_{P_F \in \Delta(\MF)} \E_{(x,f) \sim P_X \times P_F} [\ell_0(x,f)] - \inf_{P_X \in \Delta(\MX)} \sup_{P_F \in \Delta(\MF)} \E_{(x,f) \sim P_X \times P_F} [\ell(x,f)] \right| \leq & \eta \label{eq:infnorm-1}\\
    \left|  \sup_{P_F \in \Delta(\MF)} \inf_{P_X \in \Delta(\MX)} \E_{(x,f) \sim P_X \times P_F} [\ell_0(x,f)] -  \sup_{P_F \in \Delta(\MF)} \inf_{P_X \in \Delta(\MX)} \E_{(x,f) \sim P_X \times P_F} [\ell(x,f)] \right| \leq & \eta\label{eq:infnorm-2}.
  \end{align}
  It is furthermore straightforward to see that $(\MX, \MF, \ell)$ is a $3\eta$-GC game; one may see this by noting, for instance, that $\fat_{3\eta}(\{ x \mapsto \ell(x,f) \ : \ f \in \MF \}) \leq \fat_\eta(\{ x \mapsto \ell_0(x,f) \ : \ f \in \MF\})$, and similarly for the dual class. Thus, by uniform convergence (i.e., Theorem \ref{thm:unif-conv}), there is a universal constant $C$ so that, for all $\eta, \MX, \MF, \ell$, there is an integer $V$\footnote{In particular, we may take $V =  O \left( \frac{\max\{ \fat_\eta (\{ x \mapsto \ell_0(x,f) : f \in \MF \}), \fat_\eta(\{ f \mapsto \ell_0(x,f) : x \in \MX \})\} \cdot \log 1/\eta}{\eta^2}\right)$, which is finite, by assumption.}%
, so that for all finite-support distributions $P_X \in \Delc(\MX), P_F \in \Delc(\MF)$, there are elements $x_1, \ldots, x_V \in \MX, f_1, \ldots, f_V \in \MF$, so that
  \begin{align}
    \sup_{f \in \MF} \left| \E_{x \sim P_X} [\ell(x,f)] - \frac 1V \sum_{i=1}^V \ell(x_i, f) \right| \leq & C \eta \label{eq:minmax-unifconv-x}\\
    \sup_{x \in \MX} \left| \E_{f \sim P_F}[\ell(x,f)] - \frac 1V \sum_{i=1}^V \ell(x, f_i) \right| \leq & C \eta\label{eq:minmax-unifconv-f}.
  \end{align}

Now set
  \begin{align}
    \theta =& \sup_{P_F \in \Delta(\MF)} \inf_{P_X \in \Delta(\MX)} \E_{(x,f) \sim P_X \times P_F} [\ell(x,f)] \nonumber\\
    \omega =& \inf_{P_X \in \Delta(\MX)} \sup_{P_F \in \Delta(\MF)} \E_{(x,f) \sim P_X \times P_F} [\ell(x,f)]\nonumber.
  \end{align}
  Suppose for the purpose of contradiction that $\omega > \theta + 4\alpha - 2\eta$ (if this is not the case, then by (\ref{eq:infnorm-1}) and (\ref{eq:infnorm-2}) the proof of the lemma is complete). We next construct two sequences of finite-support distributions $P^t_X, P^t_F$, $t \in \BN$, where each of $P^t_X, P_F^t$ is a uniform distribution over exactly $V$ elements of $\MX, \MF$, respectively (possibly with some elements being duplicates), and so that the following inequalities hold: %
  \begin{align}
    \sup_{P_F \in \Delta(\bigcup_{i < t} \supp(P_F^i))} \E_{(x,f) \sim P_X^t \times P_F} [\ell(x,f)] \leq & \theta + \frac{4\alpha - 2\eta}{3} \qquad \forall t > 1 \label{eq:pf-minmax}\\
    \inf_{P_X \in \Delta(\bigcup_{i \leq t} \supp(P_X^i))} \E_{(x,f) \sim P_X \times P_F^t} [\ell(x,f)] \geq & \theta + 2 \cdot \frac{4\alpha - 2\eta}{3} \qquad \forall t \geq 1\label{eq:px-minmax}.
  \end{align}
  We construct $P_X^t, P_F^t$ inductively as follows: 
  \begin{itemize}
  \item Suppose we have constructed $P_X^1, \ldots, P_X^{t-1}, P_F^1, \ldots, P_F^{t-1}$ satisfying (\ref{eq:pf-minmax}) and (\ref{eq:px-minmax}) up to step $t-1$. To construct $P_X^t$ so as to satisfy (\ref{eq:pf-minmax}) at step $t$, we argue as follows: if $t = 1$, set $P_X^1 = \delta_x$ for any $x \in \MX$ (i.e., the point mass at $x$), which suffices because (\ref{eq:pf-minmax}) is vacuous for $t = 1$. Otherwise, let $F_{<t} = \bigcup_{i < t} \supp(P_F^i)$, which is a finite set. Since $\ell(x,f)$ takes values in a finite set, there are a finite number of distinct sequences $\{ \ell(x,f) \}_{f \in F_{<t}}$ given by elements $x \in \MX$. Thus, there is a finite subset $\MX' \subset \MX$ so that for each $x \in \MX$, there is some $x' \in \MX'$ so that $\ell(x,f) = \ell(x',f)$ for all $f \in F_{<t}$. Thus, by the von Neumann minimax theorem applied to the finite game $(\MX', F_{<t}, \ell)$\footnote{With a slight abuse of notation, the loss function $\ell$ is restricted to $\MX' \times F_{<t}$.}, there is a distribution $P_X^\st \in \Delta(\MX')$ so that
    \begin{align}
      & \sup_{P_F \in \Delta(F_{<t})} \E_{(x,f) \sim P_X^\st \times P_F}[\ell(x,f)] = \sup_{P_F \in \Delta(F_{<t})} \inf_{P_X \in \Delta(\MX')} \E_{(x,f) \sim P_X \times P_F}[\ell(x,f)] \nonumber\\
      &= \sup_{P_F \in \Delta(F_{<t})} \inf_{P_X \in \Delta(\MX)} \E_{(x,f) \sim P_X \times P_F} [\ell(x,f)] \leq \sup_{P_F \in \Delta(\MF)} \inf_{P_X \in \Delta(\MX)} \E_{(x,f) \sim P_X \times P_F}[\ell(x,f)] = \theta\label{eq:use-theta},
    \end{align}
    where the first equality follows from the choice of $P_X^\st$ as a minimax strategy for the $X$-player in the finite game, the second equality follows from the defining property of $\MX'$, and the inequality follows from the fact that points on $\MF$ are measurable, meaning that every measure $P_F \in \Delta(F_{<t})$ may be realized as the corresponding distribution on $\MF$ restricted to $F_{<t}$.

    By (\ref{eq:minmax-unifconv-x}) and using the fact that $P_X^\st$ is a finite-support measure, there is a sequence $x_1, \ldots, x_V \in \MX$ so that, setting $P_X^t$ to be the empirical measure $P_X^t(S) := \frac 1V \sum_{i=1}^V \One[x_i \in S]$, every $f \in F_{<t} \subset \MF$ satisfies
    \begin{align}
\E_{x \sim P_X^t} [\ell(x,f)] - \E_{x \sim P_X^\st}[\ell(x,f)] \leq C \eta \leq \frac{4\alpha - 2\eta}{3}\nonumber,
    \end{align}
    where the final inequality may be ensured by choosing $C_0$ so that $C_0 \geq 3C+ 2$ (which implies that $C\eta \leq \frac{\alpha-2\eta}{3}$). 
    Thus
    \begin{align}
\sup_{P_F \in \Delta(F_{<t})} \E_{(x,f) \sim P_X^t \times P_F}[\ell(x,f)] \leq \sup_{P_F \in \Delta(F_{<t})} \E_{(x,f) \sim P_X^\st \times P_F} [\ell(x,f)] + \frac{4\alpha - 2\eta}{3}  \leq \theta + \frac{4\alpha - 2\eta}{3}\nonumber,
    \end{align}
    showing that (\ref{eq:pf-minmax}) holds at step $t$. 
  \item Next suppose  we have constructed $P_X^1, \ldots, P_X^t, P_F^1, \ldots, P_F^{t-1}$ satisfying (\ref{eq:pf-minmax}) up to step $t$ and satisfying (\ref{eq:px-minmax}) up to step $t-1$. We then construct $P_F^{t+1}$ so as to satisfy (\ref{eq:px-minmax}) in a very similar manner as to the previous case: setting $X_{\leq t} := \bigcup_{i \leq t} \supp(P_X^i)$, we get that there is a finite set $\MF' \subset \MF$ and a distribution $P_F^\st \in \Delta(\MF')$ so that
    \begin{align}
      & \inf_{P_X \in \Delta(X_{\leq t})} \E_{(x,f) \sim P_X \times P_F^\st}[\ell(x,f)] = \inf_{P_X \in \Delta(X_{\leq t})} \sup_{P_F \in \Delta(\MF')} \E_{(x,f) \sim P_X \times P_F}[\ell(x,f)] \nonumber\\
      &= \inf_{P_X \in \Delta(X_{\leq t})} \sup_{P_F \in \Delta(\MF)} \E_{(x,f) \sim P_X \times P_F}[\ell(x,f)] \geq \inf_{P_X \in \Delta(\MX)} \sup_{P_F \in \Delta(\MF)} \E_{(x,f) \sim P_X \times P_F} [\ell(x,f)] = \omega\label{eq:use-omega}.
    \end{align}
    By (\ref{eq:minmax-unifconv-f}) and the fact that $P_F^\st$ is a finite support measure, there is a sequence $f_1, \ldots, f_V \in \MF$ so that, setting $P_F^{t+1}$ to be the empirical measure $P_F^{t+1}(S) := \frac 1V \sum_{i=1}^V \One[f_i \in S]$, every $x \in X_{\leq t} \subset \MX$ satisfies $\E_{f \sim P_F^{t+1}} [\ell(x,f)] \geq \E_{f \sim P_F^\st} [\ell(x,f)] - \frac{4\alpha-2\eta}{3}$. Thus
    \begin{align}
\inf_{P_X \in \Delta(X_{\leq t})} \E_{(x,f) \sim P_X \times P_F^{t+1}} \geq \omega - \frac{4\alpha - 2\eta}{3} \geq \theta + 2 \cdot \frac{4\alpha - 2\eta}{3},\nonumber
    \end{align}
    thus verifying (\ref{eq:px-minmax}) since we have assume $\omega - \theta \geq 4\alpha - 2\eta$.
  \end{itemize}

  As we have constructed each of $P_X^i, P_F^i$ to be a uniform distribution over $V$ elements of $\MX, \MF$, respectively, we may denote these elements as $x_{i,1}, \ldots, x_{i,V}$ and $f_{i,1}, \ldots, f_{i,V}$, for each $i \in \BN$. For each $i,j \in \BN$ with $i < j$, we define matrices $A^{ij}, B^{ij} \in \{ 1/\lceil 1/\eta \rceil, 2/\lceil 1/\eta \rceil, \ldots, 1 \}^{V \times V}$, as follows: for $k,m \in [V]$, we set
  \begin{align}
A^{ij}_{km} := \ell(x_{i,k}, f_{j,m}), \qquad B^{ij}_{km} := \ell(x_{j,k}, b_{i,m})\nonumber.
  \end{align}
  The number of different possible matrix pairs $(A^{ij}, B^{ij})$ is $\lceil 1/\eta \rceil^{2V^2}$, which is finite. The infinite Ramsey theorem implies that there exists an infinite increasing sequence $i_1, i_2, \ldots, \in \BN$ and a pair of matrices $A^\st, B^\st \in \{ 1/\lceil 1/\eta \rceil, 2/\lceil 1/\eta \rceil, \ldots, 1 \}^{V \times V}$ so that for all $s,t \in \BN$ with $s < t$, we have $(A^{i_si_t}, B^{i_si_t}) = (A^\st, B^\st)$.

  We next claim that there are $k^\st, m^\st \in [V]$ so that $A^\st_{k^\st m^\st} - B^\st_{k^\st m^\st} \geq \frac{4\alpha - 2\eta}{3}$. To see this, note that, for any $s, t \in \BN$ with $s < t$, we have
  \begin{align}
    & \inf_{P_X \in \Delta(\supp(P_X^{i_s}))} \sup_{P_F \in \Delta(\supp(P_F^{i_t}))} \E_{(x,f) \sim P_X \times P_F} [\ell(x,f)] \label{eq:astar-value}\\
    \geq & \inf_{P_X \in \Delta(X_{\leq i_t})} \E_{(x,f) \sim P_X \times P_F^{i_t}} [\ell(x,f)] \geq \theta + 2 \cdot \frac{4\alpha - 2\eta}{3}\nonumber,
  \end{align}
  and
  \begin{align}
   &  \inf_{P_X \in \Delta(\supp(P_X^{i_t}))} \sup_{P_F \in \Delta(\supp(P_F^{i_s}))} \E_{(x,f) \sim P_X \times P_F} [\ell(x,f)]\label{eq:bstar-value}\\
    \leq & \sup_{P_F \in \Delta(F_{<i_t})} \E_{(x,f) \sim P_X^{i_t} \times P_F} [\ell(x,f)] \leq \theta + \frac{4\alpha- 2\eta}{3}\nonumber.
  \end{align}
  Notice that the quantity in (\ref{eq:astar-value}) is the value of the game represented by the matrix $\{ \ell(x_{i_s,k}, f_{i_t,m})\}_{k,m \in [V]}$, and this matrix is $A^{i_si_t} = A^\st$. Similarly, the quantity in (\ref{eq:bstar-value}) is the value of the game represented by the matrix $\{ \ell(x_{i_t,k}, f_{i_s,m})\}_{k,m \in [V]}$, and this matrix is $B^{i_si_t} = B^\st$. Hence the value of the game $A^\st$ is at least $\frac{4\alpha - 2\eta}{3}$ greater than the value of the game $B^\st$. Thus some entry of $A^\st$ must be at least $\frac{4\alpha- 2\eta}{3}$ greater than the corresponding entry of $B^\st$. Indeed, if this were not the case, then we would have that
  \begin{align}
\min_{p \in \Delta([V])} \max_{q \in \Delta([V])} p^\t A^\st q < \frac{4\alpha - 2\eta}{3} + \min_{p \in \Delta([V])} \max_{q \in \Delta([V])} p^\t B^\st q,\nonumber
  \end{align}
  a contradiction to the previous sentence. 
 This shows the existence of the $k^\st, m^\st$ as desired.

  Finally we may construct a collection of infinitely many thresholds with margin $\alpha$ and tightness $\beta$. For $t \geq 1$, define $x_t^\st := x_{i_{2t},k^\st}$ and $f_t^\st = f_{i_{2t-1}, m^\st}$. For $s,t \in \BN$ with $s < t$, we have $i_{2s} < i_{2t-1}$, and so $\ell(x_s^\st, f_t^\st) = \ell(x_{i_{2s},k^\st}, f_{2t-1, m^\st}) = A_{k^\st m^\st}^\st$. For $s,t \in \BN$ with $s \geq t$, we have $i_{2s} > i_{2t-1}$, and so $\ell(x_s^\st, f_t^\st) = \ell(x_{i_{2s},k^\st}, f_{2t-1,m^\st}) = B_{k^\st m^\st}^\st$.

  Recalling that $|\ell(x,f) - \ell_0(x,f)| \leq \eta$ for all $x,f$, it follows that $|\ell_0(x_s^\st, f_t^\st) - A_{k^\st m^\st}^\st| \leq \eta$ for $s \leq t$ and $|\ell_0(x_s^\st, f_t^\st) - B_{k^\st m^\st}^\st| \leq \eta$ for $s > t$. Further, since $2 \eta \leq \alpha$ (as $C_0 > 2$), we have $A^\st_{k^\st m^\st} - B^\st_{k^\st m^\st} \geq \alpha$, as desired (in particular, in the context of Definition \ref{def:thresholds-margin}, we may take $u' = A_{k^\st m^\st}^\st,\ u = B_{k^\st m^\st}^\st$). 

  Finally, to establish the statement of the lemma about finite support measures $P_X, P_F$, we note that exactly the same proof presented above works; the only difference is that in (\ref{eq:use-theta}) and (\ref{eq:use-omega}), we replace $\Delta(\MF)$ with $\Delc(\MF)$ and $\Delta(\MX)$ with $\Delc(\MF)$; it is evident that the claimed inequalities in (\ref{eq:use-theta}) and (\ref{eq:use-omega}) hold even with these substitutions.
\end{proof}

We next show a converse to Lemma \ref{lem:use-sugc}, thus obtaining a necessary and sufficient condition for the minimax theorem to hold in all subgames of a GC game. 
\begin{lemma}[Converse to Lemma \ref{lem:use-sugc}]
  \label{lem:sugc-converse}
  For any $\alpha \in (0,1)$, any $[0,1]$-valued game $(\MX, \MF, \ell_0)$ which contains infinitely many ordered thresholds with margin $\alpha$ and tightness $\beta$ satisfies, for some $\MX' \subset \MX,\ \MF' \subset \MF$,
  \begin{align}
\inf_{P_X \in \Delta(\MX')} \sup_{P_F \in \Delta(\MF')} \E_{(x,f) \sim P_X \times P_F} [\ell_0(x,f)] > \sup_{P_F \in \Delta(\MF')} \inf_{P_X \in \Delta(\MX')} \E_{(x,f) \sim P_X \times P_F} [\ell_0(x,f)] + \alpha - 2\beta.\nonumber
  \end{align}
\end{lemma}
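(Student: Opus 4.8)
The plan is to take for $\MX'$ and $\MF'$ the explicit witnesses of the infinitely many ordered thresholds, and to read off the duality gap directly from the threshold structure, with no appeal to a finite minimax theorem. So let $x_1, x_2, \ldots \in \MX$, $f_1, f_2, \ldots \in \MF$ and reals $u' > u$ with $u' - u \ge \alpha$ be as in Definition~\ref{def:thresholds-margin}, so that $\ell_0(x_j, f_i) \in [u-\beta, u+\beta]$ whenever $i \le j$ and $\ell_0(x_j, f_i) \in [u'-\beta, u'+\beta]$ whenever $i > j$; put $\MX' := \{x_i : i \ge 1\}$ and $\MF' := \{f_i : i \ge 1\}$. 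A preliminary step I would record first is that the $x_i$ are pairwise distinct, and likewise the $f_i$: if $x_i = x_{i'}$ with $i < i'$ then the single number $\ell_0(x_i, f_{i'}) = \ell_0(x_{i'}, f_{i'})$ would have to lie within $\beta$ of $u'$ (read as $\ell_0(x_i, f_{i'})$, function index $i' >$ point index $i$) and also within $\beta$ of $u$ (read as $\ell_0(x_{i'}, f_{i'})$, equal indices), contradicting $u'-u \ge \alpha > 2\beta$; the argument for the $f_i$ is symmetric, using the point $x_i$. Consequently any probability measure on the countable set $\MX'$ (resp. $\MF'$) assigns a well-defined mass to each $x_i$ (resp. $f_i$), so its tail mass $P(\{x_i : i > N\})$ tends to $0$ as $N \to \infty$ (and similarly on $\MF'$). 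This is the only place the hypothesis $\beta < \alpha/2$ is used.

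Next I would prove the two one-sided bounds. \emph{Upper bound on the max-min value:} fix any $P_F \in \Delta(\MF')$ and any $j \ge 1$, and split $\E_{f \sim P_F}[\ell_0(x_j,f)]$ over function indices $\le j$ and $> j$; the first group contributes at most $(u+\beta) P_F(\{f_i : i \le j\})$ and the second at most $P_F(\{f_i : i > j\})$, so $\E_{f \sim P_F}[\ell_0(x_j,f)] \le u + \beta + P_F(\{f_i : i > j\})$. Sending $j \to \infty$ and using the vanishing tail mass gives $\inf_j \E_{f \sim P_F}[\ell_0(x_j,f)] \le u+\beta$; since $\E_{x\sim P_X, f\sim P_F}[\ell_0(x,f)]$ is affine in $P_X$ its infimum over $\Delta(\MX')$ is attained at a point mass $\delta_{x_j}$, hence $\sup_{P_F \in \Delta(\MF')} \inf_{P_X \in \Delta(\MX')} \E[\ell_0] \le u+\beta$. \emph{Lower bound on the min-max value:} symmetrically, fix any $P_X \in \Delta(\MX')$ and any $i \ge 1$ and split $\E_{x \sim P_X}[\ell_0(x, f_i)]$ over point indices $\ge i$ and $< i$; using $\ell_0 \ge u-\beta$ on the former and $\ell_0 \ge u'-\beta$ on the latter, together with $u' \ge u$, gives $\E_{x \sim P_X}[\ell_0(x,f_i)] \ge u - \beta + (u'-u) \cdot P_X(\{x_j : j < i\})$. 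As $i \to \infty$ the mass $P_X(\{x_j : j < i\})$ increases to $1$, so $\sup_i \E_{x \sim P_X}[\ell_0(x,f_i)] \ge u'-\beta$; since the expectation is affine in $P_F$, its supremum over $\Delta(\MF')$ equals $\sup_i \E_{x\sim P_X}[\ell_0(x,f_i)] \ge u'-\beta$, and taking the infimum over $P_X$ yields $\inf_{P_X \in \Delta(\MX')} \sup_{P_F \in \Delta(\MF')} \E[\ell_0] \ge u'-\beta$.

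Combining the two bounds, $\inf_{P_X \in \Delta(\MX')} \sup_{P_F \in \Delta(\MF')} \E[\ell_0] - \sup_{P_F \in \Delta(\MF')} \inf_{P_X \in \Delta(\MX')} \E[\ell_0] \ge (u'-\beta) - (u+\beta) = (u'-u) - 2\beta \ge \alpha - 2\beta$, which is the content of the lemma; the bound is strict precisely when the thresholds are witnessed with $|u-u'| > \alpha$, and in every case the gap is at least $\alpha - 2\beta$, which is all that is needed to complement Lemma~\ref{lem:use-sugc}. The identical argument goes through with $\Delc$ in place of $\Delta$ throughout, since every measure that appears is discrete and the reductions to point masses stay within finite-support measures. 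I do not expect a substantive obstacle here; the only delicate points are (i) the justification via distinctness of the $x_i$ and $f_i$ that tail masses vanish — the single place the margin-versus-tightness assumption is invoked — and (ii) the cosmetic mismatch between the strict ``$>$'' in the statement and the ``$\ge \alpha - 2\beta$'' the computation naturally produces.
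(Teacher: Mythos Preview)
Your proof is correct and follows essentially the same approach as the paper: take $\MX',\MF'$ to be the threshold witnesses and establish the two one-sided bounds $\inf\sup \ge u'-\beta$ and $\sup\inf \le u+\beta$ by pushing the free index to infinity. The only technical difference is that the paper passes to the limit via Fatou's lemma (applied to the bounded sequence $i\mapsto \ell_0(x_j,f_i)$ and its dual), which sidesteps any need to argue distinctness of the $x_i$ or $f_i$; your explicit tail-mass argument is more elementary but requires the distinctness preliminary, which you handle correctly. Your observation about the strict ``$>$'' is also well taken: the paper's own chain of inequalities likewise yields only ``$\ge \alpha-2\beta$''.
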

\begin{proof}
  Let $x_1, x_2, \ldots \in \MX$ and $f_1, f_2, \ldots, \in \MF$ denotes a collection of infinitely many ordered thresholds with margin $\alpha$ and tightness $\beta$. Write $\MX' = \{x_1, x_2, \ldots \}$ and $\MF' = \{f_1, f_2, \ldots \}$. By definition there are $u,u' \in [0,1]$ so that $u'-u \geq \alpha$, $|f_i(x_j) - u| \leq \beta$ for $i \leq j$ and $|f_i(x_j) - u' | \leq \beta$ for $i > j$.  Then for any $P_X \in \Delta(\MX')$, we have
  \begin{align}
    & \sup_{P_F \in \Delta(\MF')} \E_{(x,f) \sim P_X \times P_F} [\ell_0(x,f)] \geq \sup_{i \geq 1} \E_{x_j \sim P_X} [\ell_0(x_j, f_i)] \geq \liminf_{i \ra \infty} \E_{x_j \sim P_X} [\ell_0(x_j, f_i)]\nonumber\\
    & \geq \E_{x_j \sim P_X} [ \liminf_{i \ra \infty} \ell_0(x_j, f_i) ] \geq u' - \beta\nonumber,
  \end{align}
  where the second-to-last inequality follows from Fatou's lemma.

  On the other hand, for any $P_F \in \Delta(\MF')$, we have
  \begin{align}
    & \inf_{P_X \in \Delta(\MX')} \E_{(x,f) \sim P_X \times P_F} [\ell_0(x,f)] \leq \inf_{j \geq 1} \E_{f_i \sim P_F} [\ell_0(x_j, f_i)] \leq \limsup_{j \ra \infty} \E_{f_i \sim P_F} [\ell_0(x_j, f_i)] \nonumber\\
    & \leq \E_{f_i \sim P_F} [\limsup_{j \ra \infty} \ell_0(x_j,f_i)] \leq u + \beta\nonumber,
  \end{align}
  where again the second-to-last inequality follows from Fatou's lemma. The two displays above complete the proof.
\end{proof}

By combining Lemmas \ref{lem:use-sugc} and \ref{lem:sugc-converse}, we are able to show the following necessary and sufficient condition for all subgames of an infinite GC game to satisfy the minimax theorem:
\begin{theorem}
Let $C_0 > 2$ be the constant of Lemma \ref{lem:use-sugc}.  A $[0,1]$-valued GC game $(\MX, \MF, \ell_0)$ satisfies
  \begin{align}
\inf_{P_X \in \Delta(\MX')} \sup_{P_F \in \Delta(\MF')} \E_{(x,f) \sim P_X \times P_F} [\ell_0(x,f)] = \sup_{P_F \in \Delta(\MF')} \inf_{P_X \in \Delta(\MX')} \E_{(x,f) \sim P_X \times P_F} [\ell_0(x,f)] \label{eq:minimax-subset}
  \end{align}
  for all $\MX' \subset \MX, \ \MF' \subset \MF$ if and only if it does not contain infinitely many ordered thresholds with margin $\alpha$ and tightness $\alpha / C_0$, for all $\alpha > 0$.
\end{theorem}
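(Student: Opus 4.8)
The plan is to obtain the theorem as a near-immediate consequence of Lemmas~\ref{lem:use-sugc} and~\ref{lem:sugc-converse}, handling the two implications separately. Beyond invoking these two lemmas, the only work is bookkeeping about subgames and one limiting argument.

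\textbf{The ``only if'' direction.} I would argue by contraposition. Suppose that for some $\alpha > 0$ the game $(\MX, \MF, \ell_0)$ contains infinitely many ordered thresholds with margin $\alpha$ and tightness $\alpha/C_0$; since $C_0 > 2$ we have $\alpha > 2\cdot(\alpha/C_0) > 0$, so this is a legitimate choice of parameters in Definition~\ref{def:thresholds-margin} (for $\alpha \geq 1$ the condition is degenerate and a trivial reparametrization reduces to the case $\alpha \in (0,1)$). Applying Lemma~\ref{lem:sugc-converse} with $\beta = \alpha/C_0$ produces subsets $\MX' \subset \MX$, $\MF' \subset \MF$ with
$$
\inf_{P_X \in \Delta(\MX')} \sup_{P_F \in \Delta(\MF')} \E_{(x,f)\sim P_X \times P_F}[\ell_0(x,f)] > \sup_{P_F \in \Delta(\MF')} \inf_{P_X \in \Delta(\MX')} \E_{(x,f)\sim P_X \times P_F}[\ell_0(x,f)] + \alpha\Bigl(1 - \frac{2}{C_0}\Bigr),
$$
and since $1 - 2/C_0 > 0$, the equality \eqref{eq:minimax-subset} fails on this pair $(\MX',\MF')$. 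Hence if \eqref{eq:minimax-subset} holds for all subsets, the game contains no such infinite threshold family for any $\alpha > 0$.

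\textbf{The ``if'' direction.} Assume the game contains no infinite family of ordered thresholds with margin $\alpha$ and tightness $\alpha/C_0$, for every $\alpha > 0$. Fix arbitrary $\MX' \subset \MX$, $\MF' \subset \MF$ and consider the restricted game $(\MX', \MF', \ell_0)$. I would first check that the subgame inherits the needed structural properties: restricting the hypothesis class only decreases the (dual) fat-shattering dimensions, so $(\MX',\MF',\ell_0)$ is still a GC game, hence an $(\alpha/C_0)$-GC game for every $\alpha>0$ (using the folklore fact, recalled just before Lemma~\ref{lem:use-sugc}, that the dual of a GC class is GC); likewise any ordered threshold family in the subgame is one in the full game, so the subgame contains no infinite such family at margin $\alpha$ and tightness $\alpha/C_0$ for any $\alpha$. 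Therefore Lemma~\ref{lem:use-sugc} applies to $(\MX',\MF',\ell_0)$ at every scale $\alpha > 0$, yielding
$$
\inf_{P_X \in \Delta(\MX')} \sup_{P_F \in \Delta(\MF')} \E_{(x,f)\sim P_X \times P_F}[\ell_0(x,f)] \le \sup_{P_F \in \Delta(\MF')} \inf_{P_X \in \Delta(\MX')} \E_{(x,f)\sim P_X \times P_F}[\ell_0(x,f)] + 4\alpha.
$$
Letting $\alpha \downarrow 0$ gives $\inf\sup \le \sup\inf$, while the reverse inequality $\sup\inf \le \inf\sup$ is the standard weak-duality (max--min) inequality, valid for any game. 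Combining the two inequalities gives \eqref{eq:minimax-subset}.

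\textbf{Where the difficulty lies.} There is no substantive obstacle: both implications are essentially corollaries of the already-established Lemmas~\ref{lem:use-sugc} and~\ref{lem:sugc-converse}, and indeed the theorem is mainly a clean packaging of them. The only points needing care are (i) the order of quantifiers --- the threshold hypothesis is ``for all $\alpha>0$'', whereas each application of Lemma~\ref{lem:use-sugc} holds only at a fixed scale $\alpha$, which is exactly why the $\alpha \downarrow 0$ limit is necessary --- and (ii) verifying that passing to a subgame preserves both the GC property (including for the dual class) and the absence of the threshold structure, which follows from monotonicity of the fat-shattering dimension under restriction of the class.
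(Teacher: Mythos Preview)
Your proposal is correct and follows essentially the same approach as the paper's proof: both directions are obtained directly from Lemmas~\ref{lem:use-sugc} and~\ref{lem:sugc-converse}, with the ``if'' direction requiring the $\alpha \downarrow 0$ limit and the ``only if'' direction being a contrapositive application of Lemma~\ref{lem:sugc-converse}. Your version is slightly more explicit about the bookkeeping (verifying that subgames inherit the GC property and the absence of thresholds, and stating weak duality), but the logical structure is identical.
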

\begin{proof}
  First suppose that (\ref{eq:minimax-subset}) holds for all $\MX', \MF'$. If the game $(\MX, \MF, \ell_0)$ contained infinitely many ordered thresholds with margin $\alpha$ and tightness $\alpha / C_0$, then by Lemma \ref{lem:sugc-converse}, there would be some $\MX', \MF'$ so that the left-hand side of (\ref{eq:minimax-subset}) is at least the sum of $\alpha - 2 \alpha / C_0 > 0$ and the right-hand isde of (\ref{eq:minimax-subset}). This is a contradiction.

  Conversely, suppose that the game $(\MX, \MF, \ell_0)$ does not contain infinitely many ordered thresholds with margin $\alpha$ and tightness $\alpha /C_0$ for all $\alpha > 0$. Since, for each $\alpha > 0$ and $\MX'\subset \MX, \ \MF' \subset \MF$, $(\MX', \MF', \ell_0)$ is a $(\alpha/C_0)$-GC game, Lemma \ref{lem:use-sugc} gives that for each $\alpha > 0$,
  \begin{align}
\inf_{P_X \in \Delta(\MX')} \sup_{P_F \in \Delta(\MF')} \E_{(x,f) \sim P_X \times P_F} [\ell_0(x,f)] \leq \sup_{P_F \in \Delta(\MF')} \inf_{P_X \in \Delta(\MX')} \E_{(x,f) \sim P_X \times P_F} [\ell_0(x,f)] + 4\alpha\nonumber.
  \end{align}
  Then (\ref{eq:minimax-subset}) follows by taking $\alpha \downarrow 0$. 
\end{proof}

It is also immediate to show the minimax theorem for online learnable games, as follows:
\begin{theorem}[Minimax theorem for online learnable games]
  \label{thm:online-minimax}
  Any SGC game $(\MX, \MF, \ell)$ satisfies the min-max theorem, i.e.,
  \begin{align}
\inf_{P_X \in \Delta(\MX)} \sup_{P_F \in \Delta(\MF)} \E_{(x,f) \sim P_X \times P_F} [\ell(x,f)] = \sup_{P_F \in \Delta(\MF)} \inf_{P_X \in \Delta(\MX)} \E_{(x,f) \sim P_X \times P_F}[\ell(x,f)]\nonumber.
  \end{align}
  Further, the above equality remains true even if $P_X, P_F$ are restricted to lie in $\Delc(\MX), \Delc(\MF)$, respectively.
\end{theorem}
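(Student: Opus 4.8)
The plan is to reduce the statement to Lemma~\ref{lem:use-sugc}, whose two hypotheses are easily checked from the definition of an SGC game. First I would note that an SGC game $(\MX,\MF,\ell)$ is automatically an $(\alpha/C_0)$-GC game for every $\alpha>0$: by definition $\sfat_\alpha(\MF^\ell)<\infty$ for all $\alpha$, and Lemma~\ref{lem:sfat-dual} then gives $\sfat_\alpha(\MX^\ell)<\infty$ for all $\alpha$ as well; since $\fat_\beta(\MG)\le\sfat_\beta(\MG)$ for every class $\MG$ and every $\beta>0$, both $\fat_{\alpha/C_0}(\MF^\ell)$ and $\fat_{\alpha/C_0}(\MX^\ell)$ are finite, which is exactly the $(\alpha/C_0)$-GC condition. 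The game is $[0,1]$-valued by assumption on $\ell$.

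Next I would verify the remaining hypothesis of Lemma~\ref{lem:use-sugc}, namely that $(\MX,\MF,\ell)$ does not contain infinitely many ordered thresholds with margin $\alpha$ and tightness $\alpha/C_0$, for any $\alpha>0$. Suppose it did. Then in particular, for every $d\in\BN$, the class $\MF^\ell$ would contain $d$ thresholds with margin $\alpha$ and tightness $\alpha/C_0$ (take the first $d$ members of the infinite family, forgetting the ordering), so Lemma~\ref{lem:sfat-thresholds} gives $\sfat_{\alpha-2\alpha/C_0}(\MF^\ell)\ge\lfloor\log d\rfloor$. Because $C_0>2$, the scale $\alpha-2\alpha/C_0$ is strictly positive, and letting $d\to\infty$ we get $\sfat_{\alpha-2\alpha/C_0}(\MF^\ell)=\infty$, contradicting the SGC assumption. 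Hence no such thresholds exist.

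With both hypotheses established, Lemma~\ref{lem:use-sugc} applies for every $\alpha>0$ and yields $\inf_{P_X}\sup_{P_F}\E_{(x,f)\sim P_X\times P_F}[\ell(x,f)]\le\sup_{P_F}\inf_{P_X}\E_{(x,f)\sim P_X\times P_F}[\ell(x,f)]+4\alpha$; the reverse inequality $\inf_{P_X}\sup_{P_F}\E[\ell]\ge\sup_{P_F}\inf_{P_X}\E[\ell]$ is the standard weak-duality bound, valid for any game. Letting $\alpha\downarrow 0$ gives the asserted equality. For the finite-support version, I would run the identical argument invoking the ``furthermore'' clause of Lemma~\ref{lem:use-sugc}, which gives the same inequality with $\Delta(\MX),\Delta(\MF)$ replaced throughout by $\Delc(\MX),\Delc(\MF)$, and again send $\alpha\downarrow0$.

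This theorem is essentially an assembly of earlier results, so I do not anticipate a genuine obstacle. The only points requiring a little care are (i) reading ``the game contains thresholds'' as a statement about the class $\MF^\ell=\{x\mapsto\ell(x,f):f\in\MF\}$, so that Lemma~\ref{lem:sfat-thresholds} can be applied, and (ii) tracking the constant $C_0>2$ to ensure the scale $\alpha-2\alpha/C_0$ stays positive, which is precisely what makes the contradiction with finiteness of $\sfat$ go through.
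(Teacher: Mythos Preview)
Your proposal is correct and matches the paper's proof essentially step for step: both verify the $(\alpha/C_0)$-GC hypothesis via $\fat\le\sfat$ and Lemma~\ref{lem:sfat-dual}, rule out infinitely many thresholds using Lemma~\ref{lem:sfat-thresholds} together with $C_0>2$, apply Lemma~\ref{lem:use-sugc} (including its finite-support clause), and send $\alpha\downarrow 0$ after invoking weak duality. The only cosmetic difference is that the paper phrases the threshold step as an explicit finite upper bound $2^{O(\sfat_{\alpha(1-2/C_0)}(\MF^\ell))}$ whereas you argue by contradiction; the content is identical.
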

\begin{proof}
  By Lemma \ref{lem:sfat-dual} and since the given game is an SGC game, we have that the sequential fat-shattering dimension of the classes $\MF^\ell := \{ x \mapsto \ell(x,f) : f \in \MF \}$ and $\MX^\ell := \{ f \mapsto \ell(x,f) : x \in \MX \}$ is finite at all scales. Thus $(\MX, \MF, \ell)$ is a GC game. 
  The inequality
  \begin{align}
    \inf_{P_X \in \Delta(\MX)} \sup_{P_F \in \Delta(\MF)} \E_{(x,f) \sim P_X \times P_F} [\ell(x,f)] \geq \sup_{P_F \in \Delta(\MF)} \inf_{P_X \in \Delta(\MX)} \E_{(x,f) \sim P_X \times P_F}[\ell(x,f)]\nonumber
  \end{align}
  is immediate. To see the opposite direction, fix any $\alpha > 0$, and let $C_0$ be the constant of Lemma \ref{lem:use-sugc}. Certainly $(\MX, \MF, \ell)$ is a $(\alpha/C_0)$-GC game. Further, by Lemma \ref{lem:sfat-thresholds}, the maximum number of thresholds in the game $(\MX, \MF, \ell)$ with margin $\alpha$ and tightness $\alpha/C_0$ is $2^{O(\sfat_{\alpha \cdot (1 - 2/C_0)}(\MF^\ell))} < \infty$. It follows from Lemma \ref{lem:use-sugc} that
  \begin{align}
    \inf_{P_X \in \Delta(\MX)} \sup_{P_F \in \Delta(\MF)} \E_{(x,f) \sim P_X \times P_F} [\ell(x,f)] \leq \sup_{P_F \in \Delta(\MF)} \inf_{P_X \in \Delta(\MX)} \E_{(x,f) \sim P_X \times P_F}[\ell(x,f)] + 4\alpha\nonumber,
  \end{align}
and even if $P_X, P_F$ are restricted to $\Delc(\MX), \Delc(\MF)$, respectively. The statement of the theorem follows since $\alpha > 0$ may be taken arbitrarily small.
\end{proof}

\appendix
\section{Miscellaneous lemmas}
\label{sec:misc}
In this section we state some miscellaneous lemmas on the fat-shattering dimension of real-valued hypothesis classes. Many of these lemmas are well-known (see for instance \cite{golowich_differentially_2021}), but we state the proofs for completeness.

\begin{lemma}
  \label{lem:diff-1}
  Fix a class $\MF \subset [0,1]^\MX$ and $\alpha \in (0,1)$. There are at most 2 integers $j$, $0 \leq j < \lfloor 1/ \alpha \rfloor + 1$ so that
  $$
\sfat_\alpha(\MF) = \sfat_\alpha \left( \left\{ f \in \MF : f(x) \in [j\alpha, (j+1) \alpha) \right\} \right).
$$
Moreover, if there are 2 such integers $j$, they differ by 1.
\end{lemma}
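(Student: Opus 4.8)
The plan is to argue by contradiction about the structure of shattered trees. Suppose there were three distinct integers $j_1 < j_2 < j_3$ in $\{0, 1, \ldots, \lfloor 1/\alpha \rfloor\}$ such that, writing $\MF^{(j)} := \{f \in \MF : f(x) \in [j\alpha, (j+1)\alpha)\}$, we have $\sfat_\alpha(\MF^{(j_i)}) = \sfat_\alpha(\MF) =: d$ for each $i \in \{1,2,3\}$. Each $\MF^{(j_i)}$ $\alpha$-shatters some $\MX$-valued tree $\bx^{(i)}$ of depth $d$, witnessed by a $[0,1]$-valued tree $\bs^{(i)}$. The first step is to build, from these three trees, a new $\MX$-valued tree of depth $d+1$ that is $\alpha$-shattered by $\MF$ itself, contradicting $\sfat_\alpha(\MF) = d$. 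The root of the new tree will be labeled by the point $x$ appearing in the definition of the restrictions, and its two children will each be (a copy of) one of the $\bx^{(i)}$'s. The witness value at the root must be chosen so that, depending on whether one branches left or right, one can certify that $f(x)$ lies above it or below it by at least $\alpha/2$, while still being able to invoke membership in the appropriate $\MF^{(j_i)}$ along the rest of the path.

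The key observation is about spacing of the buckets. Since $j_1 < j_2 < j_3$ are distinct integers, the bucket $[j_1\alpha, (j_1+1)\alpha)$ lies entirely below $(j_2+1)\alpha - \alpha = j_2\alpha$... more precisely, every $f \in \MF^{(j_1)}$ has $f(x) < (j_1+1)\alpha \le j_2 \alpha$, while every $f \in \MF^{(j_3)}$ has $f(x) \ge j_3\alpha \ge (j_2+1)\alpha$. Set the root witness $\bs_1 := (j_2 + 1/2)\alpha$ (or any value in $[(j_1+1)\alpha, j_3\alpha]$; since $j_3 \ge j_1 + 2$ this interval has length $\ge \alpha > 0$, so such a value exists and in fact one can pick it to be at distance $\ge \alpha/2$ from both $(j_1+1)\alpha$ and $j_3\alpha$). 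Then for the left child attach $\bx^{(3)}, \bs^{(3)}$ and for the right child attach $\bx^{(1)}, \bs^{(1)}$: for any leaf whose path goes left at the root, pick $f \in \MF^{(j_3)}$ witnessing the corresponding shattering of $\bx^{(3)}$; then $f(x) \ge j_3\alpha \ge \bs_1 + \alpha/2$, so the root constraint with sign $-1$ is satisfied (using the sign convention of Definition \ref{def:sfat}, $k_1 = -1$ demands $f(x) - \bs_1 \le -\alpha/2$; I would orient the children so the inequality matches — here $j_3$ is the \emph{large} bucket so it should be the $k_1 = +1$ child; I'll fix the orientation so that the large-bucket subtree sits under $k_1 = +1$ and the small-bucket subtree under $k_1 = -1$). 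Along the remaining $d$ steps, the same $f$ witnesses the shattering of the attached depth-$d$ tree by construction. Thus the depth-$(d+1)$ tree is $\alpha$-shattered by $\MF$, contradicting $\sfat_\alpha(\MF) = d$.

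This establishes that there are at most two such integers, and moreover that if $j_1 < j_2$ are two such integers then we cannot have $j_2 \ge j_1 + 2$ — for if $j_2 \ge j_1 + 2$, the same construction with just the two trees $\bx^{(1)}$ (small bucket, under $k_1 = -1$) and $\bx^{(2)}$ (large bucket, under $k_1 = +1$), and root witness chosen in the nonempty interval $[(j_1+1)\alpha, j_2\alpha]$ of length $\ge \alpha$ at distance $\ge \alpha/2$ from each endpoint, again yields a depth-$(d+1)$ shattered tree, a contradiction. Hence $j_2 = j_1 + 1$, proving the "differ by 1" claim. The main obstacle, such as it is, is purely bookkeeping: getting the sign conventions of Definition \ref{def:sfat} consistent between the root constraint and the two subtrees, and checking that the root witness value can be chosen to give margin exactly $\alpha/2$ on both sides — this works precisely because the gap between the top of bucket $j_1$ and the bottom of bucket $j_3$ (resp. $j_2$, in the two-integer case) is at least $\alpha = 2 \cdot (\alpha/2)$. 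No delicate estimates are needed beyond this interval-length count.
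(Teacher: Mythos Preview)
Your argument is correct and follows essentially the same route as the paper: assume two buckets $j_1,j_2$ with $|j_1-j_2|\ge 2$ both attain $\sfat_\alpha(\MF)=d$, place $x$ at the root with a witness value in the gap between the buckets, hang the two depth-$d$ shattered trees as children, and obtain a depth-$(d+1)$ shattered tree for $\MF$, a contradiction. Your opening detour through three integers $j_1<j_2<j_3$ is unnecessary (as you yourself note in the second paragraph); the two-integer case with $j_2\ge j_1+2$ already suffices, and that is exactly what the paper does, though the paper states the final ``which is a contradiction'' step tersely while you unpack the tree construction explicitly.
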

\begin{proof}
  Suppose for the purpose of contradiction that for some $j_1, j_2$ with $|j_2 - j_1| \geq 2$, we have
  $$
  \sfat_\alpha(\MF) = \sfat_\alpha \left( \left\{ f \in \MF : f(x) \in [j_1\alpha, (j_1+1) \alpha) \right\} \right)= \sfat_\alpha \left( \left\{ f \in \MF : f(x) \in [j_2\alpha, (j_2+1) \alpha) \right\} \right).
  $$
  Set $j' = (j_1 + j_2)/2$. We therefore have that
  $$
\sfat_\alpha(\MF) = \sfat_\alpha( \{ f \in \MF : f(x) \geq j'\alpha + \alpha/2 \}) = \sfat_\alpha( \{ f \in \MF : f(x) \leq j'\alpha - \alpha/2 \}),
$$
which is a contradiction.
\end{proof}

\begin{lemma}
  \label{lem:far-sfat-dec}
For $\MF \subset [0,1]^\MX$ and $\alpha \in (0,1)$, and $(x,y) \in \MX \times [0,1]$, if $|y - \soal{\MF}{\alpha}{x}| > \alpha$, then $\sfat_\alpha(\MF|^\alpha_{(x,y)}) < \sfat_\alpha(\MF)$.
\end{lemma}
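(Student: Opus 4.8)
The plan is to recall the definition of the SOA hypothesis at scale $\alpha$ (Definition \ref{def:soa-hypothesis}) and exploit the maximality property of the bucket chosen by the SOA. Fix $(x,y)$ with $|y - \soal{\MF}{\alpha}{x}| > \alpha$. Recall that $\soal{\MF}{\alpha}{x} = (j^\st + 1)\alpha$, where $j^\st$ is the smallest index maximizing $s_{x,j} := \sfat_\alpha(\MF|^\alpha_{(x,j\alpha)})$ over $0 \le j < \lfloor 1/\alpha\rfloor + 1$. Write $j_0 := \lfloor y/\alpha\rfloor$, so that $\MF|^\alpha_{(x,y)} = \MF|^\alpha_{(x,j_0\alpha)} = \{f \in \MF : f(x) \in [j_0\alpha, (j_0+1)\alpha)\}$, and hence $\sfat_\alpha(\MF|^\alpha_{(x,y)}) = s_{x,j_0}$.

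The key step is to argue that $j_0 \ne j^\st$, so that by the minimality of $j^\st$ we can conclude $s_{x,j_0} < s_{x,j^\st}$ (with strict inequality: if $s_{x,j_0} = s_{x,j^\st}$ then $j_0$ would also be a maximizer, and if $j_0 < j^\st$ this contradicts the choice of $j^\st$ as the \emph{smallest} maximizer, while if $j_0 > j^\st$ we invoke Lemma \ref{lem:diff-1} — at most two buckets attain the max value $\sfat_\alpha(\MF)$ and... ). Actually the cleaner route: since $y \in [j_0\alpha, (j_0+1)\alpha)$ and $\soal{\MF}{\alpha}{x} = (j^\st+1)\alpha$, the hypothesis $|y - \soal{\MF}{\alpha}{x}| > \alpha$ forces $j_0$ and $j^\st$ to be "far": either $(j^\st+1)\alpha - y > \alpha$, giving $(j^\st+1)\alpha > y + \alpha \ge j_0\alpha + \alpha$, i.e. $j^\st > j_0$, so $j^\st \ge j_0 + 1$; or $y - (j^\st+1)\alpha > \alpha$, giving $y > (j^\st+2)\alpha$, so $j_0\alpha \ge y - \alpha > (j^\st+1)\alpha$... wait, $y > (j^\st+2)\alpha$ means $\lfloor y/\alpha\rfloor \ge j^\st + 2$, i.e. $j_0 \ge j^\st + 2$. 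In the first case $j^\st \ge j_0+1 > j_0$, and since $j^\st$ is the smallest maximizer, any $j < j^\st$ (in particular $j = j_0$) satisfies $s_{x,j} < s_{x,j^\st} = \sfat_\alpha(\MF)$, so $\sfat_\alpha(\MF|^\alpha_{(x,y)}) = s_{x,j_0} < \sfat_\alpha(\MF)$. In the second case $j_0 \ge j^\st + 2$, so $|j_0 - j^\st| \ge 2$; since $s_{x,j^\st} = \sfat_\alpha(\MF)$, if $s_{x,j_0}$ were also equal to $\sfat_\alpha(\MF)$ we would have two buckets differing by at least $2$ both attaining the maximum, contradicting Lemma \ref{lem:diff-1}. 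Hence $s_{x,j_0} < \sfat_\alpha(\MF)$, which is the claim since $\sfat_\alpha(\MF|^\alpha_{(x,y)}) = s_{x,j_0}$.

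I would also double-check the degenerate cases: if $\MF|^\alpha_{(x,y)} = \emptyset$ then its sequential fat-shattering dimension is $0$ (or $-\infty$ by convention) and is automatically strictly less than $\sfat_\alpha(\MF) \ge 0$ unless $\MF$ itself is empty, which is excluded; and the boundary indices $j = 0$ or $j = \lfloor 1/\alpha\rfloor + 1$ require no special treatment since the definitions range over exactly these. The main obstacle — really the only subtle point — is handling the second case ($j_0 > j^\st$) correctly: one cannot directly use minimality of $j^\st$ there, so Lemma \ref{lem:diff-1} is essential to rule out $s_{x,j_0}$ tying the maximum when $j_0$ and $j^\st$ are separated by at least two buckets. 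Everything else is bookkeeping with floors and the inequality $|y - (j^\st+1)\alpha| > \alpha$.
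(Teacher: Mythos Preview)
Your approach is essentially the same as the paper's: both hinge on Lemma~\ref{lem:diff-1} together with the minimality of $j^\st$ in Definition~\ref{def:soa-hypothesis}. The paper packages it as a proof by contradiction (assume $s_{x,j_0} = \sfat_\alpha(\MF)$, which then forces $s_{x,j^\st} = \sfat_\alpha(\MF)$ as well, and apply Lemma~\ref{lem:diff-1}); you instead argue directly via the two sign cases, which is fine.

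There is one small slip: you assert $s_{x,j^\st} = \sfat_\alpha(\MF)$ without justification, and in general this equality can fail (restricting $\MF$ to any bucket at $x$ may already strictly lower the dimension). The fix is immediate: if $s_{x,j^\st} < \sfat_\alpha(\MF)$ then $s_{x,j_0} \le s_{x,j^\st} < \sfat_\alpha(\MF)$ and you are done before the case split; otherwise $s_{x,j^\st} = \sfat_\alpha(\MF)$ holds and both of your cases go through exactly as written. This is precisely why the paper's contradiction framing is a touch cleaner here --- assuming $s_{x,j_0} = \sfat_\alpha(\MF)$ automatically pins down $s_{x,j^\st} = \sfat_\alpha(\MF)$.
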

\begin{proof}
  Suppose for the purpose of contradiction that $\sfat_\alpha(\MF|_{(x,y)}^\alpha) = \sfat_\alpha(\MF)$. Let $j = \lfloor y/\alpha \rfloor$. Then by definition of $\MF|^\alpha_{(x,y)}$, we have that
  \begin{align}
\sfat_\alpha(\MF) = \sfat_\alpha ( \{ f \in \MF : f(x) \in [j\alpha, (j+1)\alpha) \} ).\nonumber
  \end{align}
  By definition of $\soalf{\MF}{\alpha}$, we have that $\soal{\MF}{\alpha}{x} = j^\st \alpha$ for some $0 \leq j^\st < \lfloor 1/\alpha \rfloor + 1$. It must hold that $\sfat_\alpha(\{ f \in \MF : f(x) \in [j^\st \alpha, (j^\st+1)\alpha) \}) = \sfat_\alpha(\MF)$. Since $|y - j^\st \alpha| > \alpha$, we have that $y \not \in [(j^\st-1)\alpha, (j^\st+1)\alpha)$, meaning that $j \not \in \{j^\st, j^\st - 1 \}$. By Lemma \ref{lem:diff-1} we must have $j= j^\st+1$. But the definition of $\soalf{\MF}{\alpha}$ requires that in this case that $\soal{\MF}{\alpha}{x} = (j^\st+1)\alpha$, which is a contradiction. %
\end{proof}

\paragraph{Uniform convergence.}
Next we state a uniform convergence bound for real-valued hypothesis classes. The below bound is not optimal (as it only considers the fat-shattering dimension at a single scale), but as it does not quantitatively affect our statistical rates  for online learning, it will suffice for our purposes. %

For uniform convergence (which implies learnability under i.i.d.~data), finiteness of the \emph{fat-shattering dimension} \cite{alon_scale-sensitive_1997}, which is smaller than the sequential fat-shattering dimension, is sufficient (and necessary). The fat-shattering dimension of a hypothesis class $\MF \subset [0,1]^\MX$ at scale $\alpha > 0$, denoted $\fat_\alpha(\MF)$, is defined as follows. It is the largest positive integer $d$ so that there are $x_1, \ldots, x_d \in \MX$ and $s_1, \ldots, s_d \in [0,1]$ so that for each choice of $\ep_1, \ldots, \ep_d \in \{-1,1\}$ it holds that there is some $f \in \MF$ so that, for each $i \in [d]$, $\ep_i \cdot (f(x_i) - s_i) \geq \alpha/2$. 
\begin{theorem}[Uniform convergence; \cite{mendelson_entropy_2002}\footnote{For an explanation of how the theorem follows from \cite{mendelson_entropy_2002}, see \cite[Corollary 20]{golowich_differentially_2021}.}]
  \label{thm:unif-conv}
  There are constants $C_0 \geq 1$ and $0 < c_0 \leq 1$ so that the following holds. For any $\MF \subset [0,1]^\MX$, and finite-support distribution $P$\footnote{The finite-suportedness assumption can be dropped if $\MF$ is countable.} on $\MX$, and any $\gamma \in (0,1/2), \eta \in (0,1/2)$, it holds that for any
  \begin{align}
n \geq C_0 \cdot \frac{\fat_{c_0\eta}(\MF) \log(1/\eta) + \log(1/\gamma)}{\eta^2}\nonumber,
  \end{align}
  we have
  \begin{align}
    \Pr_{x_1, \ldots, x_n \sim P} \left[
\sup_{f \in \MF} \left| \E_{x\sim P}[f(x)] - \frac 1n \sum_{i=1}^n f(x_i) \right| > \eta
    \right] \leq \gamma\nonumber.
  \end{align}
\end{theorem}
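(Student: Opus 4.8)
The plan is to prove the statement by the classical three-step recipe for uniform convergence: symmetrization, a single-scale chaining (covering) step, and a bounded-differences concentration argument, with the only nontrivial input being the scale-sensitive entropy estimate relating covering numbers to the fat-shattering dimension. Throughout, write $\Phi(x_{1:n}) := \sup_{f\in\MF}\big|\E_{x\sim P}[f(x)] - \frac1n\sum_{i=1}^n f(x_i)\big|$.

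\textbf{Step 1 (symmetrization).} A standard ghost-sample argument gives, for i.i.d.\ Rademacher signs $\sigma_1,\dots,\sigma_n$,
\[
\E_{x_{1:n}\sim P}\big[\Phi(x_{1:n})\big] \;\le\; 4\,\E_{x_{1:n},\,\sigma}\Big[\sup_{f\in\MF}\tfrac1n\textstyle\sum_{i=1}^n \sigma_i f(x_i)\Big].
\]
So it suffices to show the empirical Rademacher complexity on the right is at most $\eta/8$ once $n$ is of the claimed order.

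\textbf{Step 2 (single-scale chaining).} Condition on $x_{1:n}$ and let $P_n$ be the empirical measure. Here I would invoke the entropy-versus-combinatorial-dimension estimate of Mendelson--Vershynin: for any probability measure $\mu$ on $\MX$ and any $\ep\in(0,1/2)$, a $[0,1]$-valued class $\MF$ admits an $L_2(\mu)$-cover $\MF_0$ with $\log|\MF_0| \le C\,\fat_{c_0\ep}(\MF)\log(2/\ep)$. Apply this with $\mu = P_n$ and $\ep=\eta/16$. For each $f\in\MF$ pick $\pi(f)\in\MF_0$ with $\|f-\pi(f)\|_{L_2(P_n)}\le \ep$; Cauchy--Schwarz (using $\sigma_i^2=1$) gives $\frac1n\sum_i \sigma_i\big(f-\pi(f)\big)(x_i)\le \|f-\pi(f)\|_{L_2(P_n)}\le \ep$, hence
\[
\sup_{f\in\MF}\tfrac1n\textstyle\sum_i\sigma_i f(x_i)\;\le\;\ep+\sup_{g\in\MF_0}\tfrac1n\textstyle\sum_i\sigma_i g(x_i).
\]
By Massart's finite-class lemma (each $g\in[0,1]$ so $\|g\|_{L_2(P_n)}\le 1$), $\E_\sigma\sup_{g\in\MF_0}\frac1n\sum_i\sigma_i g(x_i)\le\sqrt{2\log|\MF_0|/n}\le\sqrt{2C\,\fat_{c_0\eta/16}(\MF)\log(32/\eta)/n}$. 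Choosing $n\ge C'\,\fat_{c_0\eta/16}(\MF)\log(1/\eta)/\eta^2$ makes this $\le\eta/16$, so the empirical Rademacher complexity is $\le\eta/8$; taking the expectation over $x_{1:n}$ and combining with Step~1, $\E[\Phi]\le\eta/2$ (the scale shift $\eta/16\rightsquigarrow\eta$ is absorbed into the final constant $c_0$).

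\textbf{Step 3 (concentration and assembly).} Altering a single coordinate $x_i$ changes $\Phi(x_{1:n})$ by at most $1/n$ since $f\in[0,1]$, so McDiarmid's bounded-differences inequality gives $\Pr_{x_{1:n}\sim P}\big[\Phi > \E[\Phi] + u\big]\le e^{-2nu^2}$. Taking $u=\eta/2$ and using $\E[\Phi]\le\eta/2$ from Step~2 makes the threshold $\le\eta$, and $e^{-n\eta^2/2}\le\gamma$ holds once $n\ge 2\log(1/\gamma)/\eta^2$. Merging the two requirements on $n$ yields $n\ge C_0\big(\fat_{c_0\eta}(\MF)\log(1/\eta)+\log(1/\gamma)\big)/\eta^2$, which is the claim. (If $\MF$ is uncountable one keeps $P$, hence $P_n$, finite-support throughout, exactly as in the hypothesis; an alternative to symmetrization would be a direct one-sided double-sample/union-bound argument over the empirical cover, but the Rademacher route is cleaner.)

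\textbf{Main obstacle.} Everything except Step~2's combinatorial input is routine. The real content is the bound $\log N_2(\ep,\MF,\mu)\le C\,\fat_{c_0\ep}(\MF)\log(2/\ep)$, uniformly over probability measures $\mu$ — this is the Mendelson--Vershynin ``entropy and combinatorial dimension'' theorem, whose proof requires the probabilistic extraction / random-coordinate-projection machinery showing that a large $L_2$-packing forces a correspondingly large fat-shattered set. (One could instead use the weaker Alon--Ben-David--Cesa-Bianchi--Haussler estimate, at the cost of an extra logarithmic factor, which is harmless for the stated bound.) In practice one therefore cites this entropy estimate, as the paper does via \cite{mendelson_entropy_2002} and \cite[Corollary 20]{golowich_differentially_2021}.
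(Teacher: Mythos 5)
Your proof is correct. The paper does not actually prove Theorem~\ref{thm:unif-conv} (it cites \cite{mendelson_entropy_2002} via \cite[Corollary 20]{golowich_differentially_2021}), but the route you sketch — symmetrization, a single-scale $L_2(P_n)$ cover whose cardinality is controlled by the Mendelson--Vershynin entropy-versus-fat-shattering estimate, Massart's finite-class lemma, and a McDiarmid bounded-differences step to pass from an in-expectation to a high-probability statement — is exactly the standard argument underlying those references, and your bookkeeping of constants, scale shifts (absorbed into $c_0$), and the finite-support caveat is all correct.
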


\paragraph{Closure bound for the sequential fat-shattering dimension.}
Next we establish a closure bound for the sequential fat-shattering dimension; the result is the real-valued analogue of \cite[Proposition 2.3]{ghazi_near-tight_2021}, and is also similar to \cite[Lemma 4]{rakhlin_online_2015}, which proves an analogue for the sequential Rademacher complexity. To begin, we establish some additional preliminaries, following \cite{rakhlin_online_2015}: for some set $\MZ$ and a function class $\MF \subset [0,1]^\MZ$, fix a $\MZ$-valued tree $\bz$ of depth $d$, and consider a set $\MV$ of $\BR$-valued trees of depth $d$. For $\alpha > 0$, the set $\MV$ is defined to be a \emph{sequential $\alpha$-cover} of $\MF$ on the tree $\bz$ if for all $f \in \MF$, and all $\ep \in \{-1,1\}^d$, there is some $\bv \in \MV$ so that
\begin{align}
\max_{t \in [d]} \left|\bv_t(\ep_{1:t-1}) - f(\bz_t(\ep_{1:t-1})) \right| \leq \alpha.\label{eq:infty-cover}
\end{align}
Given a class $\MF$, the \emph{sequential $\alpha$-covering number} (with respect to $\ell_\infty$) for the tree $\bz$ is defined as follows:
\begin{align}
\MN_\infty(\MF, \bz, \alpha) := \min \left\{ |\MV| \ : \ \text{$\MV$ is a sequential $\alpha$-cover of $\MF$ on the tree $\bz$} \right\}\nonumber.
\end{align}

Next we need a few basic lemmas that related the sequential covering numbers of classes and their sequential fat-shattering dimension.
\begin{lemma}[Theorem 14.5, \cite{rakhlin_statistical_2014}]
  \label{lem:seq-cover-ub}
  Consider a class $\MF \subset [0,1]^\MZ$. Then for any $\alpha > 0$, and $d \in \BN$, and any $\MZ$-valued tree $\bz$ of depth $d \geq \sfat_\alpha(\MF)$,
  \begin{align}
\MN_\infty(\MF, \bz, \alpha) \leq & \left( \frac{2e d}{\alpha \cdot \sfat_\alpha(\MF)} \right)^{\sfat_\alpha(\MF)}\nonumber.
  \end{align}
\end{lemma}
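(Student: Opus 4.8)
This is the sequential (online) Sauer--Shelah lemma, recorded as \cite[Theorem 14.5]{rakhlin_statistical_2014}; the plan is to sketch its proof for completeness, in three movements: discretize $\MF$ down to a finite-valued class, prove a Sauer--Shelah-type counting bound for finite-valued classes on trees by induction on the depth, and finish with binomial arithmetic. Throughout write $k := \sfat_\alpha(\MF)$. For the discretization, set $N := \lfloor 1/\alpha \rfloor + 1$ and let $\bar\MF := \{ z \mapsto \alpha \lfloor f(z)/\alpha\rfloor + \alpha/2 \ : \ f \in \MF\}$, a class taking values in a fixed $N$-point grid of spacing $\alpha$, with $\infnorms{f - \bar f}{\MZ} \le \alpha/2$ for corresponding $f,\bar f$. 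Letting $\MV$ be the set of distinct value-labelings of $\bz$ induced by members of $\bar\MF$ (one tree $\bv^{\bar f}$ per distinct labeling, $\bv^{\bar f}_t(\ep_{1:t-1}) = \bar f(\bz_t(\ep_{1:t-1}))$), for any $f \in \MF$ the tree $\bv^{\bar f}$ satisfies $|\bv^{\bar f}_t(\ep_{1:t-1}) - f(\bz_t(\ep_{1:t-1}))| \le \alpha/2 \le \alpha$ for every $\ep$, so $\MV$ is an $\alpha$-cover of $\MF$ on $\bz$ in the sense of (\ref{eq:infty-cover}). Hence $\MN_\infty(\MF, \bz, \alpha) \le |\MV| =: \tau(\bar\MF, \bz)$. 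Moreover, directly from Definition \ref{def:sfat}, if $\bar\MF$ $2\alpha$-shatters a tree $\bx$ witnessed by $\bs$, then for each leaf the corresponding $f \in \MF$ satisfies $k_t(f(\bx_t(\cdot)) - \bs_t(\cdot)) \ge \alpha - \alpha/2 = \alpha/2$, so $\MF$ $\alpha$-shatters $\bx$; therefore $\sfat_{2\alpha}(\bar\MF) \le k$.

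The core step is the claim that for any finite-valued class $\MG$ on an $N$-point value set with $\sfat_{2\alpha}(\MG) \le k$, and any tree $\bz$ of depth $d$, one has $\tau(\MG, \bz) \le \sum_{i=0}^{k} \binom{d}{i}(N-1)^i$. I would prove this by induction on $d$, lifting the Sauer--Shelah--Pajor shifting argument to trees (this generalizes the binary Littlestone-dimension version implicit in \cite{rakhlin_online_2015}). At the root one partitions $\MG$ according to the value it assigns to $\bz_1$; the distinct labelings of $\bz$ decompose as labelings of the left subtree restricted to functions lying below a chosen splitting level plus those of the right subtree restricted to functions above it, and a shifting (down-shift) operator shows that the ``doubly counted core'' has $\sfat_{2\alpha}$ at most $k-1$. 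Applying the inductive hypothesis to the two depth-$(d-1)$ subtrees, combined with Pascal's identity, closes the induction.

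For the arithmetic: since $\alpha \in (0,1]$ we have $N - 1 \le 1/\alpha \le 2/\alpha$, so the previous bound gives $\tau(\bar\MF,\bz) \le (2/\alpha)^k \sum_{i=0}^{k}\binom{d}{i} \le (2/\alpha)^k (ed/k)^k = \left(\tfrac{2ed}{\alpha k}\right)^k$, using $\sum_{i=0}^{k}\binom{d}{i} \le (ed/k)^k$ for $1 \le k \le d$. Combined with the first movement (and noting the case $k=0$ is trivial, as then $\MN_\infty(\MF,\bz,\alpha)=1$), this is exactly the asserted bound with $k=\sfat_\alpha(\MF)$.

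The main obstacle is the tree version of the Sauer--Shelah--Pajor lemma, specifically the scale bookkeeping inside it: one must verify that after the shifting operation the class whose labelings are double-counted genuinely loses exactly one level of $2\alpha$-shattering (not merely some shattering at a degraded scale), so that the dimension drops by $1$ and the induction terminates at depth $k$ rather than drifting. The discretization in the first movement is precisely what reduces the problem to this finite combinatorial form; the care needed there is to confirm that passing between $\MF$ and $\bar\MF$ costs only constants inside the base $2e/\alpha$, never in the exponent $k$ itself.
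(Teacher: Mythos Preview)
Your sketch is correct and is essentially the standard proof of \cite[Theorem~14.5]{rakhlin_statistical_2014}. The paper, however, does not reprove the result: it simply cites that theorem and adds a one-line remark to obtain the extra $\sfat_\alpha(\MF)$ in the denominator. Specifically, the paper observes that the cited argument in fact establishes $\MN_\infty(\MF,\bz,\alpha) \le (2ed/(\alpha m))^m$ for some $m \le \sfat_\alpha(\MF)$ (the parameter called $\fat_2(\MG)$ there), and then uses that $m \mapsto (2ed/(\alpha m))^m$ is non-decreasing for $m \le d$ to replace $m$ by $\sfat_\alpha(\MF)$.

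Your route recovers the same denominator by a slightly different bookkeeping: you carry the full binomial sum $\sum_{i=0}^k \binom{d}{i}(N-1)^i$ through and then bound it by $(2/\alpha)^k (ed/k)^k$ at the end, so the $k$ in the denominator falls out of the standard estimate $\sum_{i\le k}\binom{d}{i} \le (ed/k)^k$. Both are fine; the paper's remark is shorter because it leans on the cited proof as a black box, whereas you have reconstructed it.
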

We remark that in the statement of \cite[Theorem 14.5]{rakhlin_statistical_2014} the term $\sfat_\alpha(\MF)$ does not appear in the denominator in the upper bound on $\MN_\infty(\MF, \bz, \alpha)$. However, a close inspection of their proof shows that they establish $\MN_\infty(\MF, \bz, \alpha) \leq \left( \frac{2ed}{\alpha m} \right)^m$ for some $m \leq \sfat_\alpha(\MF)$ (namely, $m$ is the pararmeter called $\fat_2(\MG)$ therein). The statement of Lemma \ref{lem:seq-cover-ub} then follows by noting that the function $m \mapsto \left( \frac{2ed}{\alpha m} \right)^m$ is non-decreasing for $m \leq d$.

\begin{lemma}
  \label{lem:seq-cover-lb}
Suppose a tree $\bz$ of depth $d$ is $\alpha$-shattered by a class $\MF$. Then $\MN_\infty(\MF, \bz, \beta) \geq 2^d$ for any $\beta < \alpha/2$.
\end{lemma}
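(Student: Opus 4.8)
\textbf{Proof proposal for Lemma \ref{lem:seq-cover-lb}.}

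The plan is to argue by contradiction: suppose $\MV$ is a sequential $\beta$-cover of $\MF$ on the tree $\bz$ with $|\MV| < 2^d$, where $\beta < \alpha/2$. Since $\bz$ (together with some witness tree $\bs$) is $\alpha$-shattered by $\MF$, for each leaf $k_{1:d} \in \{-1,1\}^d$ there is a hypothesis $f_{k} \in \MF$ with $k_t \cdot (f_{k}(\bz_t(k_{1:t-1})) - \bs_t(k_{1:t-1})) \geq \alpha/2$ for all $t \in [d]$. The key observation is that any single tree $\bv \in \MV$ can $\beta$-cover (in the $\ell_\infty$-along-paths sense of (\ref{eq:infty-cover})) the hypothesis $f_k$ along \emph{at most one} branch $k_{1:d}$: indeed, if $\bv$ covered $f_k$ along the path $k_{1:d}$, then at each level $t$, $\bv_t(k_{1:t-1})$ lies within $\beta < \alpha/2$ of $f_k(\bz_t(k_{1:t-1}))$, which in turn lies on the $k_t$ side of $\bs_t(k_{1:t-1})$ by margin $\geq \alpha/2$; hence $\bv_t(k_{1:t-1})$ is strictly on the $k_t$ side of $\bs_t(k_{1:t-1})$ (i.e., $\bv_t(k_{1:t-1}) > \bs_t(k_{1:t-1})$ if $k_t = +1$ and $< \bs_t(k_{1:t-1})$ if $k_t = -1$). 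This pins down, for every node along the path, on which side of the threshold $\bv$'s value falls, which recovers $k_{1:d}$ uniquely from $\bv$ (and from $\bs$).

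More carefully, I would define, for each $\bv \in \MV$, the sign-sequence it "realizes" as follows: starting at the root, set $\epsilon_1 = \sgn(\bv_1(\emptyset) - \bs_1(\emptyset))$, then recursively $\epsilon_t = \sgn(\bv_t(\epsilon_{1:t-1}) - \bs_t(\epsilon_{1:t-1}))$. The argument in the previous paragraph shows that if $\bv$ $\beta$-covers $f_k$ along branch $k_{1:d}$, then $k_{1:d}$ must equal this realized sign-sequence of $\bv$ (with ties in $\sgn$ broken arbitrarily but consistently — one can check the margin condition forces a strict inequality so ties never actually arise on covered paths). Therefore the map sending a branch $k_{1:d}$ to a tree $\bv \in \MV$ that $\beta$-covers $f_k$ along $k_{1:d}$ (which exists by the covering property applied to $f_k$ and $\epsilon = k_{1:d}$) is injective into $\MV$. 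Since there are $2^d$ branches, $|\MV| \geq 2^d$, contradicting $|\MV| < 2^d$.

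The main obstacle — really the only subtle point — is the bookkeeping that ensures the covering tree $\bv$ determines the branch $k_{1:d}$ without any circularity: the definition (\ref{eq:infty-cover}) quantifies over $\ep \in \{-1,1\}^d$, so one must be careful that "$\bv$ covers $f_k$" is being used with $\ep$ set equal to $k_{1:d}$ itself, and that the node labels $\bz_t(k_{1:t-1})$ and $\bs_t(k_{1:t-1})$ encountered are exactly those along that branch. Once this is set up correctly, the strict-inequality argument (using $\beta < \alpha/2$, so $\beta$-error cannot flip which side of the threshold the value is on) closes the loop cleanly. I would also note that this is precisely the real-valued analogue of the standard fact that shattering forces the covering number to be at least $2^d$, and the proof mirrors the binary case where $\bs$ is replaced by the constant tree at level $1/2$.
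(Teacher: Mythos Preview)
Your proposal is correct and follows essentially the same approach as the paper's proof: both use the shattering witness tree $\bs$ and the strict inequality $\beta < \alpha/2$ to argue that a single cover element $\bv$ can serve at most one branch $k_{1:d}$, forcing $|\MV| \geq 2^d$. The only cosmetic difference is that the paper argues contrapositively (given two distinct branches $\ep \neq \ep'$, look at the first level where they diverge and derive a contradiction from $\bv = \bv'$), whereas you build an explicit inverse map $\bv \mapsto (\sgn(\bv_t(\epsilon_{1:t-1}) - \bs_t(\epsilon_{1:t-1})))_{t}$; these are two phrasings of the same margin argument.
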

\begin{proof}
  Let $\bs$ be an $\BR$-valued tree that witnesses the shattering of $\bz$. 
  Let $\MV$ be a $\beta$-cover of $\MF$ on the tree $\bz$. Consider any two leaves $\ep, \ep' \in \{-1,1\}^d$ of the tree $\bz$, and let  corresponding functions in $\MF$ be denoted $f,f'$. (For a leaf $\ep \in \{-1,1\}^d$, a corresponding function $f \in \MF$ is any function so that $\ep_t \cdot (f(\bz_t(\ep_{1:t-1})) - \bs_t(\ep_{1:t-1})) \geq \alpha/2$ for each $t \in [d]$.) Let $\bv, \bv' \in \MV$ be the elements of the cover $\MV$ as guaranteed by (\ref{eq:infty-cover}) for the leaves $\ep, \ep'$. We claim that $\bv \neq \bv'$, which would immediately complete the proof; so suppose to the contrary that $\bv = \bv'$.

  Choose $t$ as small as possible so that $\ep_t \neq \ep_t'$. Then (perhaps after interchanging the roles of $f,f'$), it holds that
  \begin{align}
  f(\bz_t(\ep_{1:t-1})) \geq \bs_t(\ep_{1:t-1}) + \alpha/2, \qquad  f'(\bz_t(\ep_{1:t-1})) \leq \bs_t(\ep_{1:t-1}) - \alpha/2.\label{eq:ffp-far}
  \end{align}
  On the other hand, since $\MV$ is a $\beta$-cover of $\MF$, we have (since $\bv = \bv'$) that 
  \begin{align}
|\bv_t(\ep_{1:t-1}) - f(\bz_t(\ep_{1:t-1}))| \leq \beta, \qquad |\bv_t(\ep_{1:t-1}) - f'(\bz_t(\ep_{1:t-1}))| \leq \beta\label{eq:ffp-close}.
  \end{align}
  Using that $\beta < \alpha/2$, we get that (\ref{eq:ffp-far}) and (\ref{eq:ffp-close}) lead to a contradiction, thus completing the proof of the lemma.
\end{proof}

Given some $k \in \BN$, some function $\phi : \BR^k \times \MZ \ra \BR$, and function classes $\MF_1, \ldots, \MF_k \subset [0,1]^\MX$, define the \emph{$\phi$-composition} of $\MF_1, \ldots, \MF_k$ as follows:
\begin{align}
\phi(\MF_1, \ldots, \MF_k) := \left\{ z \mapsto \phi(f_1(z), \ldots, f_k(z), z) \ : \ f_1 \in \MF_1, \ldots, f_k \in \MF_k \right\}\nonumber.
\end{align}
\begin{lemma}
  \label{lem:sfat-closure}
  Consider classes $\MF_1, \ldots, \MF_k \subset [0,1]^\MZ$, and consider a function $\phi : \BR^k \times \MX \ra \BR$ so that $\phi(\cdot, z)$ is $L$-Lipschitz for each $z \in \MZ$. Fix any $\alpha > 0$, and suppose that $\sfat_{\alpha/(4L)}(\MF_i) \leq d$ for each $i \in [k]$ and some $d \in \BN$. Then
  \begin{align}
    \sfat_\alpha(\phi(\MF_1, \ldots, \MF_k)) \leq & O\left( dk \log \left( \frac{Lk}{\alpha} \right) \right).\nonumber
  \end{align}
\end{lemma}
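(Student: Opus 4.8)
The plan is to bound the sequential fat-shattering dimension of the composition $\phi(\MF_1,\ldots,\MF_k)$ via a sequential covering-number argument, exactly paralleling the classical (non-sequential) closure bound of \cite[Proposition 2.3]{ghazi_near-tight_2021} but using the sequential covering machinery (Lemmas \ref{lem:seq-cover-ub} and \ref{lem:seq-cover-lb}) recorded above. First I would set $d_0 := O(dk\log(Lk/\alpha))$ to be the target bound (with the constant to be pinned down at the end), and argue by contradiction: suppose $\sfat_\alpha(\phi(\MF_1,\ldots,\MF_k)) \geq D$ for some $D > d_0$. Then there is a $\MZ$-valued tree $\bz$ of depth $D$ that is $\alpha$-shattered by $\phi(\MF_1,\ldots,\MF_k)$, witnessed by some $[0,1]$-valued tree $\bs$. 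By Lemma \ref{lem:seq-cover-lb}, any sequential $\beta$-cover of $\phi(\MF_1,\ldots,\MF_k)$ on $\bz$ with $\beta < \alpha/2$ has size at least $2^D$.

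The core of the argument is to construct a \emph{small} sequential cover of $\phi(\MF_1,\ldots,\MF_k)$ on $\bz$ out of sequential covers of the individual classes $\MF_i$. Fix $\beta := \alpha/(4L)$ (note $\beta < \alpha/2$ since $L \geq 1$, which we may assume WLOG, else the statement is only easier). By Lemma \ref{lem:seq-cover-ub} applied to each $\MF_i$ on the tree $\bz$ (using $D \geq \sfat_\beta(\MF_i)$, which holds since $D > d_0 \geq d \geq \sfat_{\alpha/(4L)}(\MF_i)$), there is a sequential $\beta$-cover $\MV_i$ of $\MF_i$ on $\bz$ with $|\MV_i| \leq \left(\frac{2eD}{\beta \sfat_\beta(\MF_i)}\right)^{\sfat_\beta(\MF_i)} \leq \left(\frac{8eLD}{\alpha}\right)^{d}$. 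Now form the product cover: for each tuple $(\bv^1,\ldots,\bv^k) \in \MV_1 \times \cdots \times \MV_k$, define the $\BR$-valued tree $\bu$ of depth $D$ by $\bu_t(\ep_{1:t-1}) := \phi(\bv^1_t(\ep_{1:t-1}),\ldots,\bv^k_t(\ep_{1:t-1}), \bz_t(\ep_{1:t-1}))$. Let $\MU$ be the collection of all such trees $\bu$; then $|\MU| \leq \left(\frac{8eLD}{\alpha}\right)^{dk}$. I claim $\MU$ is a sequential $(L\beta)$-cover of $\phi(\MF_1,\ldots,\MF_k)$ on $\bz$: given $g = \phi(f_1,\ldots,f_k) \in \phi(\MF_1,\ldots,\MF_k)$ and $\ep \in \{-1,1\}^D$, pick for each $i$ the tree $\bv^i \in \MV_i$ guaranteed by (\ref{eq:infty-cover}) for $f_i$ and $\ep$, so $|\bv^i_t(\ep_{1:t-1}) - f_i(\bz_t(\ep_{1:t-1}))| \leq \beta$ for all $t$; then the corresponding $\bu \in \MU$ satisfies, by $L$-Lipschitzness of $\phi(\cdot,z)$ in the first $k$ coordinates,
\begin{align*}
|\bu_t(\ep_{1:t-1}) - g(\bz_t(\ep_{1:t-1}))| \leq L \cdot \max_i |\bv^i_t(\ep_{1:t-1}) - f_i(\bz_t(\ep_{1:t-1}))| \leq L\beta = \alpha/4.
\end{align*}
(Here I use that the $\ell_\infty$-Lipschitz constant controls the deviation; if $\phi$ is $L$-Lipschitz in, say, the $\ell_2$ or $\ell_1$ sense, a factor of $\sqrt{k}$ or $k$ enters, which is absorbed into the final bound — I would state the Lipschitz convention precisely as $\ell_\infty$ to match the clean constant, or carry the extra $\poly(k)$ factor which does not change the stated $O(\cdot)$.)

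Since $L\beta = \alpha/4 < \alpha/2$, Lemma \ref{lem:seq-cover-lb} gives $|\MU| \geq 2^D$. Combining, $2^D \leq \left(\frac{8eLD}{\alpha}\right)^{dk}$, i.e. $D \leq dk \log_2\!\left(\frac{8eLD}{\alpha}\right)$. A standard manipulation (if $D \leq A\log(BD)$ then $D = O(A\log(AB))$) then yields $D \leq O\!\left(dk\log\frac{Lk}{\alpha}\right) \cdot (1 + o(1))$ — more precisely $D = O(dk\log(dkL/\alpha))$, and since $\log(dk L/\alpha) = O(\log(Lk/\alpha) + \log d)$ and $d \leq (L k/\alpha)^{O(1)}$ is not available in general, I would instead note that $\log d$ can be folded: either $d$ is bounded by a polynomial in $Lk/\alpha$ (trivial case) or one simply keeps $\log(dk)$; inspecting the intended statement, the cleanest route is to observe $D \leq dk\log_2(8eLD/\alpha)$ forces $D \leq 4dk\log(8eLk/\alpha)$ whenever $d \geq 1$, by checking that the right-hand side with $D$ replaced by $4dk\log(8eLk/\alpha)$ satisfies the inequality — this is the one slightly fiddly calculation. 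This contradicts $D > d_0$ for the appropriate choice of constant in $d_0$, completing the proof.

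\textbf{Main obstacle.} The argument is essentially bookkeeping; the only genuine care needed is (i) getting the Lipschitz/norm conventions consistent so that a max-deviation of $\beta$ in the inputs yields deviation $\leq L\beta$ in the output with $L\beta < \alpha/2$ — this is why $\beta = \alpha/(4L)$ is chosen — and (ii) the final self-referential inequality $D \leq dk\log(cLD/\alpha)$, where one must verify the claimed closed-form bound $D = O(dk\log(Lk/\alpha))$ actually solves it; both are routine but are the two places where constants and the hypothesis $\sfat_{\alpha/(4L)}(\MF_i) \leq d$ must be used exactly as stated.
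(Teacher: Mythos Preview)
Your proposal is correct and takes essentially the same approach as the paper: both arguments take a tree $\bz$ of depth $N=\sfat_\alpha(\phi(\MF_1,\ldots,\MF_k))$ that is $\alpha$-shattered, set $\beta=\alpha/(4L)$, build a product cover on $\bz$ from sequential $\beta$-covers of the $\MF_i$ via Lemma~\ref{lem:seq-cover-ub}, use Lipschitzness of $\phi$ to show this is a sequential $\alpha/4$-cover of the composed class, and then invoke Lemma~\ref{lem:seq-cover-lb} to obtain $2^N \leq \prod_i \left(\frac{4LeN}{\alpha\,\sfat_\beta(\MF_i)}\right)^{\sfat_\beta(\MF_i)}$ and solve the resulting self-referential inequality. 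The only cosmetic difference is that you frame it as a contradiction while the paper works directly with $N$; the paper also handles the final step slightly differently, using monotonicity of $m\mapsto m\log(4LeN/(\alpha m))$ to replace each $\sfat_\beta(\MF_i)$ by $d$ before solving, which sidesteps some of the algebra you flag.
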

\begin{proof}
  Write $\MG := \phi(\MF_1, \ldots, \MF_k)$ to denote the composed class. 
  Fix $\alpha > 0$, and write $N := \sfat_\alpha(\phi(\MF_1, \ldots, \MF_k))$. Let $\bz$ be a $\MZ$-valued binary tree of depth $N$ that is $\alpha$-shattered by $\MG$. By Lemma \ref{lem:seq-cover-ub}, for each $i \in [k]$, we have that, for $\beta = \alpha/(4L)$,
  \begin{align}
\MN_\infty(\MF_i, \bz, \beta) \leq \left( \frac{2e N}{\beta \cdot \sfat_\beta(\MF_i)} \right)^{\sfat_\beta(\MF_i)}\nonumber.
  \end{align}
  For each $i \in [k]$, let $\MV_i$ be a minimal $\beta$-cover for the class $\MF_i$ on the tree $\bz$. Now consider the set
  \begin{align}
\MV := \left\{ \bv = \phi(\bv^1, \ldots, \bv^k) \ : \ \bv^1 \in \MV_1, \ldots, \bv^k \in \MV_k \right\}\nonumber,
  \end{align}
  where $\phi(\bv^1, \ldots, \bv^k)$ denotes the $\BR$-valued tree defined by
  \begin{align}
\phi(\bv^1, \ldots, \bv^k)_t(\ep_{1:t-1}) := \phi(\bv_t^1(\ep_{1:t-1}), \ldots, \bv_t^k(\ep_{1:t-1}), \bz_t(\ep_{1:t-1}))\nonumber.
  \end{align}
  Now fix any $g \in \MG$; it can be written as $g(z) = \phi(f_1(z), \ldots, f_k(z), z)$ for some $f_1 \in \MF_1, \ldots, f_k \in \MF_k$. For each $i \in [k]$, let $\bv^i \in \MF_i$ denote a representative for $f_i$ in the sense that for each $i \in [k]$,
  \begin{align}
\max_{t \in [d]} \left| \bv_t^i(\ep_{1:t-1}) - f_i(\bz_t(\ep_{1:t-1})) \right| \leq \beta\label{eq:vi-close}.
  \end{align}
  Then
  \begin{align}
    & \max_{t \in [d]} \left| \phi(\bv^1, \ldots, \bv^k)_t(\ep_{1:t-1}) - g(\bz_t(\ep_{1:t-1})) \right| \nonumber\\
    =& \max_{t \in [d]} \left| \phi(\bv_t^1(\ep_{1:t-1}), \ldots, \bv_t^k(\ep_{1:t-1}), \bz_t(\ep_{1:t-1})) - \phi(f_1(\bz_t(\ep_{1:t-1})), \ldots, f_k(\bz_t(\ep_{1:t-1})), \bz_t(\ep_{1:t-1})) \right|\nonumber\\
    \leq & L\beta = \alpha/4 \tag{Using (\ref{eq:vi-close}) and $L$-Lipschitzness of $\phi$}.
  \end{align}
  By Lemma \ref{lem:seq-cover-lb}, since $\bz$ is $\alpha$-shattered by $\MG$, we have that $\MN_\infty(\MG, \bz, \alpha/4) \geq 2^N$. On the other hand, as we have shown above, the set $\MV$ is a sequential $\alpha/4$-cover for the class $\MG$ on the tree $\MV$. Thus,
  \begin{align}
2^N \leq \MN_\infty(\MG, \bz, \alpha/4) \leq |\MV| \leq \prod_{i=1}^k \left( \frac{4Le N}{\alpha \cdot \sfat_{\alpha/(4L)}(\MF_i)} \right)^{\sfat_{\alpha/(4L)}(\MF_i)}\nonumber,
  \end{align}
  which implies that
  \begin{align}
N \leq \sum_{i=1}^k \sfat_{\alpha/(4L)}(\MF_i) \cdot \log \left( \frac{4Le N}{\alpha \cdot \sfat_{\alpha/(4L)}(\MF_i)} \right)\nonumber.
  \end{align}
  Recalling the assumption that $\sfat_{\alpha/(4L)}(\MF_i) \leq d$ for each $i$ and using that $m \mapsto m \cdot \log \left( \frac{4LeN}{\alpha m} \right)$ is a non-decreasing function for $m \leq N$, we obtain that $N \leq O\left(dk \log \left( \frac{Lk}{\alpha}\right)\right)$. 
  
\end{proof}

\begin{corollary}
  \label{cor:closure-union}
There is a constant $C \geq 1$ so that the following holds.  Suppose $\MF_1, \ldots, \MF_k \subset [0,1]^\MX$ satisfy $\sfat_{\alpha/4}(\MF_i) \leq d$ for $i \in [k$]. Then for any $\alpha > 0$,
  \begin{align}
\sfat_\alpha(\MF_1 \cup \cdots \cup \MF_k) \leq C \cdot dk \cdot \log(k/\alpha).
  \end{align}
\end{corollary}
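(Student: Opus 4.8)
The plan is to derive Corollary~\ref{cor:closure-union} from Lemma~\ref{lem:sfat-closure} by exhibiting the union $\MF_1 \cup \cdots \cup \MF_k$ as a $\phi$-composition of suitably chosen classes. First I would introduce an auxiliary ``selector'' coordinate: let $\ME \subset [0,1]^\MX$ be the class of constant functions taking values in $\{1/k, 2/k, \ldots, 1\}$, which has $\sfat_\beta(\ME) \leq \lceil \log k \rceil$ at every scale $\beta$ (since a tree shattered by constants forces $2^d \le k$). Then define $\phi : \BR^{k+1} \times \MX \to \BR$ by $\phi(a_1, \ldots, a_k, b, x) := a_{j}$ where $j = \lceil b k \rceil$ (i.e., $b$ selects which of the first $k$ arguments to return); one checks that for each fixed value of the selector coordinate $b$, the map $(a_1,\ldots,a_k) \mapsto a_j$ is $1$-Lipschitz, but globally $\phi$ is \emph{not} Lipschitz in $b$, so a direct application of Lemma~\ref{lem:sfat-closure} as stated is not quite available.

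To get around the non-Lipschitzness in the selector coordinate, I would instead argue directly in the style of the proof of Lemma~\ref{lem:sfat-closure}: given an $\alpha$-shattered $\MX$-valued tree $\bz$ of depth $N := \sfat_\alpha(\MF_1 \cup \cdots \cup \MF_k)$ for the union, take for each $i\in[k]$ a minimal sequential $(\alpha/4)$-cover $\MV_i$ of $\MF_i$ on $\bz$, of size at most $\left(\frac{2eN}{(\alpha/4)\cdot \sfat_{\alpha/4}(\MF_i)}\right)^{\sfat_{\alpha/4}(\MF_i)}$ by Lemma~\ref{lem:seq-cover-ub} (applicable since $N \ge \sfat_{\alpha/4}(\MF_i)$ as $\sfat$ is non-increasing in scale and every class in the union is a subclass of the union). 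The union $\MV := \MV_1 \cup \cdots \cup \MV_k$ of these covers is then a sequential $(\alpha/4)$-cover of $\MF_1 \cup \cdots \cup \MF_k$ on $\bz$: any $g$ in the union lies in some $\MF_i$ and is $(\alpha/4)$-approximated by an element of $\MV_i \subset \MV$. On the other hand, by Lemma~\ref{lem:seq-cover-lb}, since $\bz$ is $\alpha$-shattered by the union we have $\MN_\infty(\MF_1 \cup \cdots \cup \MF_k, \bz, \alpha/4) \ge 2^N$.

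Combining the two bounds gives
\begin{align}
2^N \le |\MV| \le \sum_{i=1}^k \left( \frac{8eN}{\alpha \cdot \sfat_{\alpha/4}(\MF_i)} \right)^{\sfat_{\alpha/4}(\MF_i)} \le k \cdot \left( \frac{8eN}{\alpha} \right)^{d}, \nonumber
\end{align}
using $\sfat_{\alpha/4}(\MF_i) \le d$ and that $m \mapsto (8eN/(\alpha m))^m$ is non-decreasing for $m \le N$ (as in the end of the proof of Lemma~\ref{lem:sfat-closure}). Taking logarithms yields $N \le \log k + d \log(8eN/\alpha)$, and a standard argument (if $N \le 2\log k$ we are done immediately since $d \ge 1$; otherwise $\log k \le N/2$ so $N \le 2d\log(8eN/\alpha)$, and then $N = O(d\log(d/\alpha) + d\log\log k)$, which is absorbed into $O(dk\log(k/\alpha))$) gives $N \le C \cdot dk \log(k/\alpha)$ for a suitable absolute constant $C \ge 1$.

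The main obstacle is handling the union \emph{cleanly}: one is tempted to invoke Lemma~\ref{lem:sfat-closure} as a black box, but its Lipschitz hypothesis genuinely fails for the ``select one coordinate'' operation, so the cleanest route is the self-contained covering-number argument above, which mirrors the internal structure of that lemma's proof rather than its statement. A secondary point requiring care is that the cover-size bound from Lemma~\ref{lem:seq-cover-ub} needs $\bz$ to have depth at least $\sfat_{\alpha/4}(\MF_i)$; this is automatic since $\MF_i$ is a subclass of the union so $\sfat_{\alpha/4}(\MF_i) \le \sfat_{\alpha/4}(\bigcup_j \MF_j) \le \sfat_\alpha(\bigcup_j \MF_j) = N$, the last inequality because the sequential fat-shattering dimension is non-increasing in the scale. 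Finally, the slightly wasteful factor of $k$ (rather than $\log k$) in the statement of the corollary means we do not even need to be careful about the $\log\log k$ term, simplifying the final bookkeeping.
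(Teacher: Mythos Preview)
Your direct covering-number argument is a genuinely different route from the paper's. The paper applies Lemma~\ref{lem:sfat-closure} as a black box by a neat trick you did not find: set $\MF_i' := \MF_i \cup \{\mathbf{0}\}$ (so $\sfat_{\alpha/4}(\MF_i') \le d+1$), take $\phi(a_1,\ldots,a_k,z) := a_1 + \cdots + a_k$, which is $1$-Lipschitz in $\ell_\infty$, and observe that $\MF_1 \cup \cdots \cup \MF_k \subset \phi(\MF_1',\ldots,\MF_k')$ since any $f_i \in \MF_i$ arises as $\phi(\mathbf{0},\ldots,f_i,\ldots,\mathbf{0},\cdot)$. This sidesteps the non-Lipschitzness of a selector entirely. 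Your approach, by contrast, reopens the proof of Lemma~\ref{lem:sfat-closure} and takes the union of the individual covers; this is more elementary and in fact yields the sharper bound $N = O(d\log(dk/\alpha))$ (with an additive rather than multiplicative $k$), which you then relax to match the stated corollary.

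There is, however, one genuine slip in your justification. You write $\sfat_{\alpha/4}(\bigcup_j \MF_j) \le \sfat_\alpha(\bigcup_j \MF_j)$ ``because the sequential fat-shattering dimension is non-increasing in the scale,'' but that monotonicity gives exactly the reverse inequality: smaller scale means \emph{larger} dimension. The fix is simple and does not affect the rest of the argument: you may assume without loss of generality that $N \ge d$ (otherwise the claimed bound $N \le Cdk\log(k/\alpha)$ is immediate), and then $\sfat_{\alpha/4}(\MF_i) \le d \le N$ gives the depth hypothesis needed for Lemma~\ref{lem:seq-cover-ub}.
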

\begin{proof}
  For $i \in [k]$, define $\MF_i' := \MF_i \cup \{ \mathbf{0} \}$, where $\mathbf{0}$ denotes the function that is identically 0 on $\MZ$. Then clearly $\sfat_{\alpha/4}(\MF_i') \leq \sfat_{\alpha/4}(\MF_i)+1 \leq d+1$. For $a_1, \ldots, a_k \in \BR$ and $z \in \MZ$, define $\phi(a_1, \ldots, a_k, z) := a_1 + \cdots + a_k$, which is clearly 1-Lipschitz with respect to $\| \cdot \|_\infty$. Further,
  \begin{align}
\MF_1 \cup \cdots \cup \MF_k \subset \phi(\MF_1', \ldots, \MF_k')\nonumber,
  \end{align}
  since for each $i \in [k]$ and $f_i \in \MF_i$, $\phi(\MF_1', \ldots, \MF_k')$ contains the function $z \mapsto \phi(\mathbf{0}, \ldots, f_i(z), \ldots, \mathbf{0}, z) = f_i(z)$. The result now follows from Lemma \ref{lem:sfat-closure} with $L=1$.
\end{proof}

\section{Proof of Proposition \ref{prop:sfat-lb}}
In this section we prove Proposition \ref{prop:sfat-lb}, %
thus showing that the bound of Theorem \ref{thm:proper-stable-main} is optimal up to a $\poly \log T$ factor

\begin{proof}[Proof of Proposition \ref{prop:sfat-lb}]
  By compactness of $[1/T,1]$, the infimum $M:= \inf_{\alpha \in [1/T,1]} \left\{ \alpha T + \int_\alpha^1 s(\eta) d\eta \right\}$ is obtained at some $\alpha_0 \in [1/T,1]$. %

  Note also that the mapping $\alpha \mapsto \alpha T + \int_\alpha^1 s(\eta) d\eta$ is convex (its derivative is $T - s(\alpha)$, which is non-decreasing), meaning that %
  for all $\alpha > \alpha_0$, $ T - s(\alpha)\geq 0$, and for $1/T \leq \alpha < \alpha_0$, $T - s(\alpha)\leq 0$. Thus, by increasing $\alpha_0$ by a factor of $3/2$, we can ensure that $T - s(\alpha_0) \geq 0$ and for all $1/T \leq \alpha \leq \alpha_0/2$, $T - s(\alpha) \leq 0$ (further, doing so can only increase $\alpha T + \int_\alpha^1 s(\eta) d\eta$ by definition of $\alpha_0$). %

    For any $\alpha \in [0,1]$, note that
  \begin{align}
\alpha T + \int_\alpha^1 s(\eta) d\eta \leq \alpha T + \sum_{i \geq 0  : \ 2^i \alpha \leq 1} 2^{i}\alpha \cdot s(2^i \alpha)\nonumber.
  \end{align}
  Thus setting $\alpha = \alpha_0$ in the above display, one of the following possibilities holds:
  \begin{enumerate}
  \item $\alpha_0 T \geq M/2$. In this case, set $\alpha_0' = \alpha_0/2$, and set $d := T \leq s(\alpha_0')$. \label{it:adv-T}
  \item There is some $i \leq \lceil \log 1/\alpha_0 \rceil \leq \lceil \log T \rceil$ so that $2^i \alpha_0 \cdot s(2^i \alpha_0) \geq M/(2 \lceil \log T \rceil)$. In this case, set $d := s(2^i \alpha_0) \leq T$ (by definition of $\alpha_0$, and using that $s(\alpha_0) \leq T$). Now set $\alpha_0' := 2^i \cdot \alpha_0$.\label{it:adv-d}
  \end{enumerate}

  Set $\MX = \{1, 2, \ldots, d \}$, and let $\MF$ be the class of all functions on $\MX$ so that for each $x \in \MS$, $f(x) \in \{(1-\alpha_0')/2, (1+\alpha_0')/2\}$. Clearly, $\sfat_{\alpha}(\MF) = d$ for all $\alpha \leq \alpha_0'$, and $\sfat_\alpha(\MF) = 0$  for all $\alpha > \alpha_0'$. Thus $\sfat_\alpha(\MF) \leq s(\alpha)$ for all $\alpha \in [0,1]$.

  Further, the adversary can clearly force a \cumls of at least $\frac{\alpha_0'}{2} \cdot d$: simply feed each of the examples $x_1, \ldots, x_d$ (using that $d \leq T$), and set $y_t$ to be whichever of $(1-\alpha_0'/2), (1+\alpha_0')/2$ is further from the algorithm's prediction at time $t$. In case \ref{it:adv-T} above, this \cumls becomes $\alpha_0 T/2 \geq \Omega(M)$, and in case \ref{it:adv-d} above, this \cumls becomes $2^i\alpha_0 \cdot s(2^i \alpha_0) \geq \Omega(M/\log T)$. Thus, in both cases, we get a \cumls of $\Omega(M/\log T)$, as desired.
\end{proof}

\section*{Acknowledgements}
We are grateful to Sasha Rakhlin for helpful suggestions and to Steve Hanneke for a useful conversation. 

\bibliographystyle{alpha}
\bibliography{online-learning-games}

\noah{(major) TODOs:
\begin{enumerate}
\item Topological assumptions needed for min-max swap. DONE
\item Littlestone dimension game case (using path length reg bound for a stable online learner). DONE
\item Dig up the stat papers I found a while ago which claim to get some improve online learning guarantees for the realizable case but don't really...DONE
\item Check sasha's min-max thing. DONE
\item {\bf lower bound (should be easy).DONE }
\item {\bf Uniform convergence params. DONE (without checking carefully)} Checked more carefully now.
\end{enumerate}
}
\end{document}